\newtheorem{thm}{Theorem}
\newtheorem{lem}{Lemma}
\newtheorem{cor}{Corollary}
\newtheorem{defn}{Definition}
\newtheorem{assu}{Assumption}
\newtheorem{cond}{Condition}
\newcommand{\norm}[1]{\left\lVert#1\right\rVert}
\begin{document}

\title{ Faster Riemannian Newton-type Optimization \\ by Subsampling and Cubic Regularization
}
\subtitle{}


\author{Yian Deng         \and
        Tingting Mu 
}


\institute{Yian Deng, Tingting Mu  \at
              Department of Computer Science \\
              University of Manchester \\
              \email{ \{yian.deng, tingting.mu\}@manchester.ac.uk}           
}

\date{Manuscript was submitted to Machine Learning in 10/2021 and accepted in 01/2023.}

\maketitle

\begin{abstract}
This work is on constrained large-scale non-convex optimization where the constraint set implies a manifold structure. 
Solving such problems is important in a multitude of fundamental machine learning tasks. 
Recent advances on Riemannian optimization have enabled the convenient recovery of solutions by adapting unconstrained optimization algorithms over manifolds. 
However, it remains challenging to scale up and meanwhile maintain stable convergence rates and handle saddle points. 
We propose a new second-order Riemannian optimization algorithm, aiming at improving convergence rate and reducing computational cost. 
It enhances the Riemannian trust-region algorithm that explores curvature information to escape saddle points through a mixture of subsampling and cubic regularization techniques. 
We conduct rigorous  analysis to study the convergence behavior of the proposed algorithm. 
We also perform extensive experiments to evaluate it based on two general machine learning tasks using multiple datasets.  
The proposed algorithm exhibits improved computational speed and convergence behavior compared to a large set of state-of-the-art Riemannian optimization algorithms. 

\keywords{Optimization \and Cubic Regularization \and Riemannian manifolds \and Subsampling}
\end{abstract}

\section{Introduction}
In  modern machine learning, many learning tasks are  formulated as non-convex optimization problems. 
This is because, as  compared to linear or convex formulations,  they  can often capture more accurately the underlying structures within the data, and model more precisely the learning performance (or losses).
There is an important class of non-convex problems of which the constraint sets  possess manifold structures, e.g., to optimize over a set of orthogonal matrices. 
A manifold in mathematics refers to a topological space that locally behaves as an Euclidean space near each point. 
Over a $D$-dimensional manifold $\mathcal{M}$ in a $d$-dimensional ambient space ($D< d$), each local patch around each data point (a subset of $\mathcal{M}$) is homeomorphic to a local open subset of the Euclidean space $\mathbb{R}^D$. 
This special structure enables a straightforward adoption of any unconstrained optimization algorithm for solving a constrained problem over a manifold, simply by applying a systematic way to modify the  gradient and Hessian calculation. 
The modified calculations are called the Riemannian gradients and the Riemannian Hessians, which will be rigorously defined later.
Such an accessible method  for developing optimized solutions has benefited many applications and encouraged the implementation of various optimization libraries.

A representative learning task that gives rise to non-convex problems over manifolds is low-rank matrix completion, widely applied in signal denoising, recommendation systems and image recovery \cite{liu2019convolution}. 
It is formulated as optimization problems constrained on fixed-rank matrices that belong to a Grassmann manifold.
Another  example task is principal component analysis (PCA) popularly used in statistical data analysis and dimensionality reduction   \cite{shahid2015robust}.  
It seeks an optimal orthogonal projection matrix  from a Stiefel manifold. 
A more general problem setup than PCA is the subspace learning \cite{mishra2019riemannian}, where a low-dimensional space is an instance of a Grassmann manifold. 
When training a neural network, in order to reduce overfitting, the orthogonal constraint  that provides a Stiefel manifold structure is sometimes imposed over the network weights  \cite{anandkumar2016efficient}. 
Additionally, in  hand gesture recognition  \cite{nguyen2019neural}, optimizing over a symmetric definite positive (SDP)  manifold has been shown effective. 

Recent developments in optimization on Riemannian manifolds  \cite{absil2009optimization} have offered a convenient and unified solution framework for solving the aforementioned class of non-convex  problems.
The Riemannian optimization techniques  translate the constrained  problems into unconstrained ones on the manifold whilst preserving the geometric structure of the solution. 
For example, one Riemannian way to implement a PCA   is to   preserve the SDP geometric structure of the solutions without explicit constraints   \cite{horev2016geometry}. 
A simplified description of how Riemannian optimization works is that it first applies a straightforward way to modify the calculation of the first-order and second-order gradient information, then it adopts  an unconstrained optimization algorithm that uses the modified gradient information.
There are systematic ways to compute these modifications by analyzing the geometric structure of the manifold.  
Various libraries have implemented these methods and  are available to practitioners, e.g., Manopt \cite{boumal2014manopt} and Pymanopt \cite{townsend2016pymanopt}. 

Among such techniques, Riemannian gradient descent (RGD)  is the simplest. 
To handle large-scale computation with a finite-sum structure, Riemannian stochastic gradient descent (RSGD) \cite{bonnabel2013stochastic} has been proposed to estimate the gradient from a single sample (or a sample batch)  in each iteration of the optimization. 
Here, an iteration refers to the process by which an incumbent solution is updated with gradient and (or) higher-order derivative information; for example, Eq. (\ref{r_sgd_update}) in the upcoming text defines an RSGD iteration.
Convergence rates of RGD and RSGD are compared in \cite{zhang2016first} together with a global complexity analysis. 
The work concludes that  RGD can converge linearly while RSGD converges sub-linearly, but RSGD becomes computationally cheaper when there is a significant increase in the size of samples to process, also it can potentially prevent overfitting.  
By using RSGD to optimize over the Stiefel manifold, \cite{politz2016interpretable} attempts to improve interpretability of domain adaptation  and  has demonstrated  its benefits for text classification.

A major drawback of RSGD is the variance issue, where the  variance of the update direction can slow down the convergence and result in poor solutions. Typical techniques for variance reduction include the Riemannian stochastic variance reduced gradient  (RSVRG) \cite{zhang2016riemannian} and the Riemannian stochastic recursive gradient (RSRG) \cite{kasai2018riemannian}. RSVRG
reduces the gradient variance by using a momentum term, which takes into account  the gradient information obtained from both RGD and RSGD. RSRG follows a different strategy and considers only the information in the last and current iterations. This has the benefit of avoiding large cumulative errors, which can be caused by transporting the gradient vector along a distant path  when aligning two vectors at the same tangent plane. It has been shown by \cite{kasai2018riemannian} that RSRG performs better than RSVRG particularly for large-scale problems.

The RSGD variants can  suffer from oscillation across the slopes of a ravine \cite{kumar2018geometry}. 
This also happens when performing  stochastic gradient descent in Euclidean spaces.  
To address this, various adaptive algorithms have been proposed. 
The core idea is to control the learning process with adaptive learning rates in addition to the gradient momentum. 
Riemannian techniques of this kind include R-RMSProp \cite{kumar2018geometry}, R-AMSGrad \cite{cho2017riemannian}, R-AdamNC \cite{becigneul2018riemannian}, RPG \cite{huang2021riemannian} and RAGDsDR \cite{alimisis2021momentum}.

Although improvements have been made  for first-order optimization, they might still be insufficient for handling saddle points in non-convex problems \cite{mokhtari2018escaping}. 
They can only guarantee convergence to stationary points and do not have control over getting trapped at saddle points due to the lack of higher-order information. 
As an alternative, second-order algorithms are normally good at escaping saddle points by exploiting curvature information \cite{kohler2017sub,tripuraneni2018stochastic}.
Representative examples of this are the trust-region (TR) methods. 
Their capacity for handling saddle points and  improved convergence over many first-order methods has been demonstrated in \cite{weiwei2013newton}  for various non-convex problems. 
The TR technique has been extended to a Riemannian setting for the first time by \cite{absil2007trust}, referred to as  the Riemannian TR (RTR) technique.

It is well known that what prevents the wide use of the second-order Riemannian techniques in large-scale problems is the high cost of computing the exact Hessian matrices. 
Inexact techniques are therefore proposed to iteratively search for solutions without explicit Hessian computations. 
They can also handle non-positive-definite Hessian matrices and improve operational robustness. 
Two representative inexact examples are the conjugate gradient and the Lanczos methods \cite{zhu2017riemannian,xu2016matrix}.  
However, their reduced complexity is still proportional to the sample size, and they can still be computationally costly when working with large-scale problems. 
To address this issue, the subsampling technique has been proposed, and its core idea is to approximate the gradient and Hessian using a batch of samples. 
It has been proved by \cite{shen2019stochastic} that the TR method with subsampled gradient and Hessian can achieve a convergence rate of order $\mathcal{O}\left(\frac{1}{k^{2/3}}\right)$ with $k$ denoting the iteration number.
A sample-efficient stochastic TR approach is proposed by  \cite{shen2019stochastic} which finds an $(\epsilon, \sqrt{\epsilon})$-approximate local minimum within  a number of $\mathcal{O}(\sqrt{n}/\epsilon^{1.5})$ stochastic Hessian oracle queries where $n$ denotes the sample number.
The subsampling technique  has been applied  to improve the second-order Riemannian optimization for the first time by \cite{kasai2018inexact}.  
Their proposed inexact RTR algorithm employs subsampling over the Riemannian manifold and achieves faster convergence than the standard RTR method. 
Nonetheless, subsampling can be sensitive to the configured batch size.  
Overly small batch sizes can lead to poor convergence.

In the latest development of second-order unconstrained  optimization,  it has been shown that the adaptive cubic regularization technique \cite{cartis2011adaptive} can improve the standard and subsampled TR algorithms and the Newton-type methods, resulting in, for instance, improved convergence and effectiveness at escaping strict saddle points  \cite{kohler2017sub,xu2020newton}.
To improve performance,   the variance reduction techniques have been combined into cubic regularization and  extended  to cases with inexact solutions. Example works of this include  \cite{zhou2020stochastic,zhou2019stochastic}  which were the first to rigorously demonstrate the advantage of variance reduction for second-order optimization algorithms.
Recently, the potential of cubic regularization for solving non-convex problems over constraint sets with Riemannian manifold structures has been shown by \cite{zhang2018cubic,agarwal2021adaptive}.

We aim at  improving the RTR optimization by taking advantage of both the adaptive cubic regularization and subsampling techniques.  
Our problem of interest   is to find a local minimum of a non-convex finite-sum minimization problem constrained on a set endowed with a Riemannian manifold structure. 
Letting $f_i:\mathcal{M}\rightarrow\mathbb{R}$ be a  real-valued function defined on a Riemannian manifold $\mathcal{M}$, we consider a twice differentiable finite-sum objective function:
\begin{equation}
	\begin{split}
		\min_{\mathbf{x}\in\mathcal{M}}\ f(\mathbf{x})\:= \frac{1}{n}\sum_{i=1}^{n}f_i(\mathbf{x}).
	\end{split}
	\label{general_optimization_problem}
\end{equation}
In the machine learning context, $n$  denotes the sample number, and  $f_i(\mathbf{x})$  is a  smooth real-valued and twice differentiable cost (or loss) function computed for the $i$-th sample. 
The $n$ samples are assumed to be uniformly sampled, and thus $\mathbb{E}\left[f_i(\mathbf{x})\right ] = \lim_{n \to \infty} \frac{1}{n}\sum_{i=1}^{n}f_i(\mathbf{x})$.

We propose a cubic Riemannian Newton-like (RN) method  to solve more effectively the  problem in Eq. (\ref{general_optimization_problem}). Specifically, we enable two key improvements in the Riemannian space, including  (1) to approximate the Riemannian gradient and Hessian using the subsampling technique and (2) to improve the subproblem formulation by replacing the trust-region constraint with a cubic regularization term. 
The resulting algorithm is named  Inexact Sub-RN-CR\footnote{The abbreviation Sub-RN-CR comes from Sub-sampled Riemannian Newton-like Cubic Regularization. We follow the tradition of referring to a TR method enhanced by cubic regularization as a Newton-like method \cite{cartis2011adaptive}. The implementation of the Inexact Sub-RN-CR is provided in \href{https://github.com/xqdi/isrncr}{https://github.com/xqdi/isrncr}.}.
After introducing cubic regularization, it becomes more challenging to solve the subproblem,  for which we demonstrate two effective solvers based on the Lanczos and conjugate gradient methods.
We provide convergence analysis for the proposed Inexact Sub-RN-CR algorithm and present the main results in Theorems \ref{theorem2} and \ref{thm_opt_complex}.  
Additionally, we provide analysis for the subproblem solvers, regarding their solution quality, e.g., whether and how  they meet a set of desired conditions as presented in Assumptions \ref{assu_cauchy_eigen_point}-\ref{assu_cg}, and their convergence property.  
The key results are presented  in Lemma \ref{lem_krylov}, Theorem \ref{theorem1}, Lemmas \ref{lem_lanczos_sol_property} and \ref{lem_tcg_sol_property}.
Overall, our results are satisfactory.
The proposed  algorithm finds an $\left(\epsilon_g,\epsilon_H\right)$-optimal solution (defined in Section \ref{sec:Inexact Sub-RN-CR}) in fewer iterations than the state-of-the-art RTR \cite{kasai2018inexact}.  
Specifically, the required number of iterations is reduced from $\mathcal{O}\left(\max\left(\epsilon_g^{-2}\epsilon_H^{-1},\epsilon_H^{-3}\right)\right)$   to $\mathcal{O}\left(\max\left(\epsilon_g^{-2},\epsilon_H^{-3}\right)\right)$. 
When being tested by extensive experiments on  PCA and matrix completion tasks with different datasets and applications in image analysis, our algorithm shows much better performance than most state-of-the-art and popular Riemannian optimization algorithms, in terms of both the solution quality and computing time.

\section{Notations and Preliminaries}
We start by familiarizing the readers with the notations and concepts that will be used in the paper, and recommend  \cite{absil2009optimization} for a more detailed explanation of the relevant concepts.
The manifold $\mathcal{M}$ is equipped with a smooth inner product $\langle\cdot,\cdot \rangle_{\mathbf{x}}$ associated with the tangent space $T_{\mathbf{x}}\mathcal{M}$ at any $\mathbf{x}\in\mathcal{M}$, and this inner product is  referred to as the Riemannian metric.
The norm  of a tangent vector $\bm{\eta}\in T_{\mathbf{x}}\mathcal{M}$ is denoted by $\left \|\bm{\eta}  \right \|_{\mathbf{x}} $, which is computed by $\left \|\bm{\eta}  \right \|_{\mathbf{x}} =\sqrt{\langle \bm{\eta},\bm{\eta} \rangle_{\mathbf{x}}}$.
When we use the notation $\left \|\bm{\eta}  \right \|_{\mathbf{x}}$, $\bm{\eta}$ by default belongs to the tangent space $T_{\mathbf{x}}\mathcal{M}$.
We use $\mathbf{0}_{\mathbf{x}}\in T_{\mathbf x} \mathcal{M}$ to denote the zero vector  of the tangent space at $\mathbf x$.
The \textit{retraction mapping} denoted by $R_{\mathbf{x}}\left(\bm{\eta}\right):T_{\mathbf{x}}\mathcal{M}\rightarrow\mathcal{M}$  is used  to move $\mathbf{x}\in\mathcal{M}$ in the direction $\bm{\eta}\in T_{\mathbf{x}}\mathcal{M}$ while remaining on $\mathcal{M}$,  and it is an equivalent version  of  $\mathbf{x} +\bm{\eta}$ in an Euclidean space.
The  \textit{pullback} of $f$ at $\mathbf x$ is defined by $ \hat{f}(\bm\eta)  =  f(R_{\mathbf x}(\bm\eta))$, and $\hat{f}(\mathbf 0_{\mathbf x}) = f(\mathbf x)$.
The \textit{vector transport} operator $ \mathcal{T}_{\mathbf x}^{\mathbf y}(\mathbf v): T_\mathbf{x}\mathcal{M} \rightarrow T_\mathbf{x}\mathcal{M}$ moves a tangent vector $\mathbf v \in T_{\mathbf{x}}\mathcal{M}$ from a point $\mathbf x\in\mathcal{M} $ to another $\mathbf y\in\mathcal{M}$.
We also use a shorthand notation $\mathcal{T}_{\bm{\eta}}({\mathbf{v}})$ to describe $\mathcal{T}_{\mathbf x}^{\mathbf y}(\mathbf v)$ for a moving direction $\bm\eta \in T_{\mathbf{x}}\mathcal{M}$ from $\mathbf x$ to $\mathbf y$ satisfying $R_{\mathbf{x}}\left(\bm{\eta}\right) = \mathbf y$.
The \textit{parallel transport} operator $\mathcal{P}_{\bm{\eta},\gamma} (\mathbf{v})$ is a special instance of the vector transport. It moves  $\mathbf{v}\in T_{\mathbf{x}}\mathcal{M}$ in the direction of $\bm{\eta}\in T_{\mathbf{x}}\mathcal{M}$ along a geodesic $\gamma:[0,1]\rightarrow \mathcal{M}$, where $\gamma(0)=\mathbf{x}$, $\gamma(1)=\mathbf{y}$ and $\gamma^{\prime}(0) = \bm{\eta}$, and during the movement, 
it has to satisfy the parallel condition on the geodesic curve.
We simplify the notation $\mathcal{P}_{\bm{\eta},\gamma} (\mathbf{v})$ to $\mathcal{P}_{\bm{\eta}} (\mathbf{v})$.
Fig.  \ref{fig_manifold}   illustrates a manifold and  the operations over it.
Additionally, we use   $\|\cdot\|$  to denote the $l_2$-norm operation in a Euclidean space.

\begin{figure}[h]
	\centering
	\includegraphics[width=0.6\textwidth]{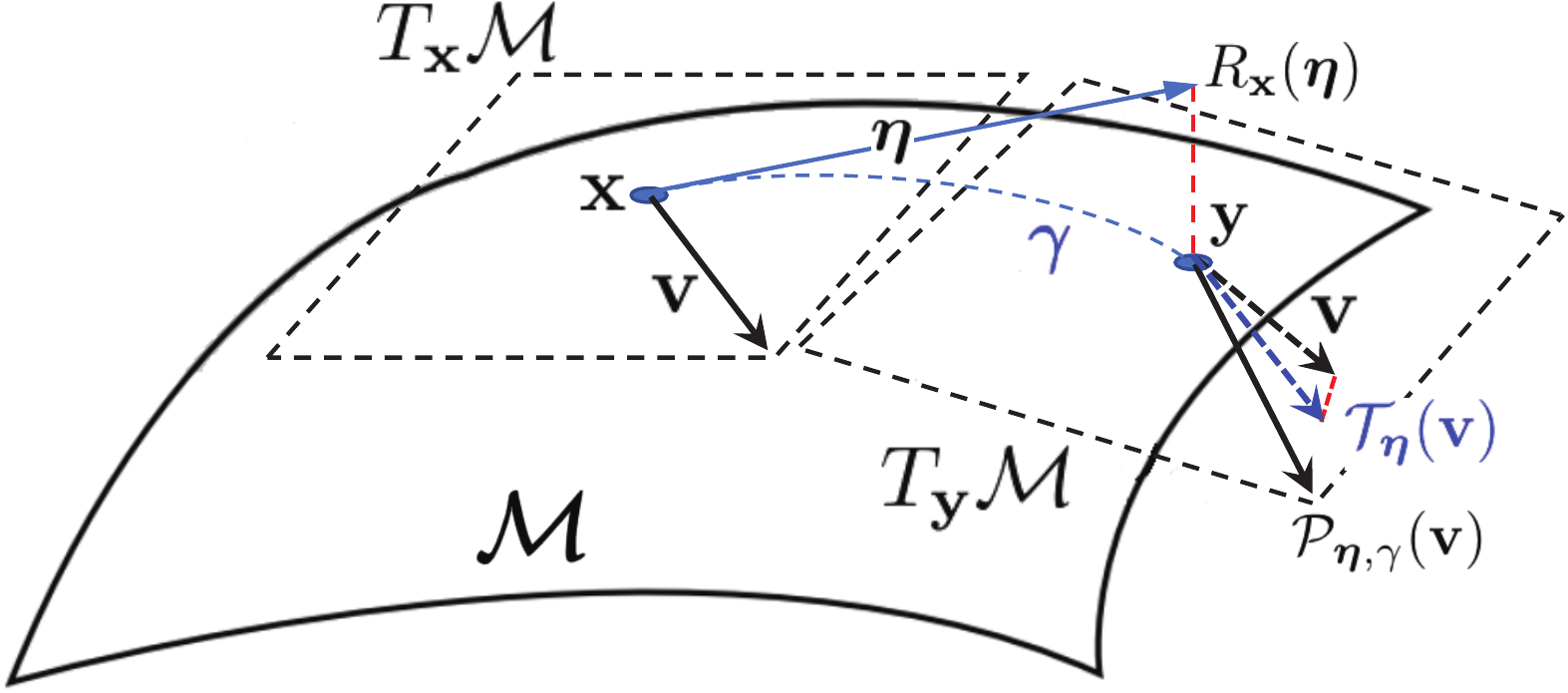}
	\caption{\label{fig_manifold}Illustration of the retraction and parallel transport operations.}
\end{figure}

The Riemannian gradient of a real-valued differentiable function $f$ at $\mathbf{x}\in\mathcal{M}$, denoted by ${\rm grad}f(\mathbf{x})$, is defined as the unique element of $\mathcal{T}_\mathbf{x}\mathcal{M}$ satisfying $\langle {\rm grad}f(\mathbf{x}),\bm\xi \rangle_\mathbf{x}=\mathcal{D}f(\mathbf{x})[\bm\xi],\ \forall \bm\xi\in T_\mathbf{x}\mathcal{M}$. 
Here, $\mathcal{D}f(\mathbf{x})[\bm\xi]$ generalizes the notion of the directional derivative to a manifold, defined  as the derivative of $f(\gamma(t))$ at $t=0$ where $\gamma(t)$ is a curve on $\mathcal{M}$ such that $\gamma(0)=\mathbf{x}$ and $\dot{\gamma}(0)=\bm\xi$.  
When operating in an Euclidean space, we  use the same notation $\mathcal{D}f(\mathbf{x})[\bm\xi]$ to denote the classical directional derivative.
The Riemannian Hessian of a real-valued differentiable function $f$ at $\mathbf{x}\in\mathcal{M}$, denoted by ${\rm Hess} f(\mathbf{x})[\bm\xi]: T_\mathbf{x}\mathcal{M} \rightarrow T_\mathbf{x}\mathcal{M}$, is a linear mapping defined based on the Riemannian connection, as ${\rm Hess} f(\mathbf{x})[\bm\xi] =\tilde{\nabla}_{\bm\xi}  {\rm grad} f(\mathbf{x})$. The Riemannian connection $\tilde{\nabla}_{\bm\xi}\bm \eta :  T_\mathbf{x}\mathcal{M}\times  T_\mathbf{x}\mathcal{M} \rightarrow  T_\mathbf{x}\mathcal{M}$, generalizes the notion of the directional derivative of a vector field.  
For a function $f$ defined over an embedded manifold, its Riemannian gradient can be computed by projecting the Euclidean gradient   $\nabla f(\mathbf{x}) $ onto the tangent space, as ${\rm grad}f(\mathbf{x})=P_{T_\mathbf{x}\mathcal{M}}\left[\nabla f(\mathbf{x})\right]$ where $P_{T_\mathbf{x}\mathcal{M}}[\cdot]$ is the orthogonal projection onto $T_\mathbf{x}\mathcal{M}$.  Similarly, its Riemannian Hessian can be computed by projecting the  classical directional derivative of ${\rm grad}f(\mathbf{x})$, defined by $\nabla ^2 f(\mathbf{x})[\bm\xi] =  \mathcal{D} {\rm grad}f(\mathbf{x}) [\bm\xi]$,  onto the tangent space, resulting in ${\rm Hess}f(\mathbf{x})[\bm\xi]=P_{T_\mathbf{x}\mathcal{M}}\left[\nabla ^2 f(\mathbf{x})[\bm\xi]\right]$.
When the function $f$ is defined over a quotient manifold, the Riemannian gradient and Hessian can be computed by projecting  $\nabla f(\mathbf{x}) $ and $\nabla^2 f(\mathbf{x})[\bm\xi]$ onto the horizontal space of the manifold.

Taking the PCA problem as an example (see Eqs (\ref{eq_pca_formula_0}) and (\ref{eq_pca_formula}) for its formulation), the general objective in Eq. (\ref{general_optimization_problem}) can be instantiated by  $\min_{\mathbf{U}\in {\rm Gr}\left(r,d\right)}$ $ \frac{1}{n}\sum_{i=1}^n\left\|\mathbf{z}_i-\mathbf{UU}^T\mathbf{z}_i\right\|^2$, where $\mathcal{M}={\rm Gr}\left(r,d\right)$ is a Grassmann manifold.  %
The function of interest is $f_i(\mathbf{U}) = \left\|\mathbf{z}_i-\mathbf{UU}^T\mathbf{z}_i\right\|^2$.
The Grassmann  manifold ${\rm Gr}\left(r,d\right)$ contains the set of $r$-dimensional linear subspaces of the $d$-dimensional vector space. 
Each subspace corresponds to  a point on the manifold that is an equivalence class of $d\times r$ orthogonal matrices, expressed as 
$\mathbf{x} = \left[\mathbf{U}\right]:=\left\{\mathbf{UO}:\mathbf{U}^T\mathbf{U}=\mathbf{I},\mathbf{O}\in {\rm O}\left(r\right)\right\}$ where ${\rm O}\left(r\right)$ denotes the orthogonal group in $\mathbb{R}^{r\times r}$. 
A tangent vector  $\bm\eta \in T_{\mathbf{x}} {\rm Gr}(r,d)$ of the Grassmann manifold has the form 
$\bm\eta=\mathbf{U}^\bot\mathbf{B}$ \cite{edelman1998geometry}, where $\mathbf{B}\in \mathbb{R}^{(d-r)\times r}$,  and $\mathbf{U}^\bot\in\mathbb{R}^{d\times (d-r)}$ is the orthogonal complement of $\mathbf{U}$ such that $\left[\mathbf{U},\mathbf{U}^{\bot}\right]$ is orthogonal. 
A commonly used Riemannian metric for the Grassmann manifold  is the canonical inner product 
$\langle\bm{\eta},\bm{\bm\xi}\rangle_{\mathbf{x}}={\rm tr}(\bm{\eta}^T\bm{\bm\xi})$ given  $\bm{\eta},\bm{\bm\xi}\in T_{\mathbf{x}}{\rm Gr}(r,d)$, resulting in  
$\left\|\bm\eta\right\|_{\mathbf{x}}=\|\bm\eta\|$ 
(Section 2.3.2 of \cite{edelman1998geometry}).
As we can see, the Riemannian metric and the norm here are equivalent to the Euclidean inner product and norm. The same result can be derived from another commonly used  metric  of the Grassmann manifold, i.e., 
$\langle\bm\eta, \bm\xi\rangle_{\mathbf{x}}={\rm tr}\left(\bm\eta^T\left(\mathbf{I}-\frac{1}{2}\mathbf{U}\mathbf{U}^{T}\right)\bm\xi\right)$ for $\bm\eta,\bm\xi\in T_{\mathbf{x}}{\rm Gr}(r,d)$ (Section 2.5 of \cite{edelman1998geometry}).
Expressing  two  given tangent vectors as $\bm\eta=\mathbf{U}^{\bot}\mathbf{B}_{\eta}$ and $\bm\xi =\mathbf{U}^{\bot}\mathbf{B}_{\xi}$ with $\mathbf{B}_{\eta}, \mathbf{B}_{\xi}\in \mathbb{R}^{(d-r)\times r}$, we have
\begin{align}
	\nonumber
	\langle\bm\eta,\bm\xi \rangle_\mathbf{x} =& \; {\rm tr}\left(\left(\mathbf{U}^{\bot}\mathbf{B}_{\eta}\right)^T\left(\mathbf{I}-\frac{1}{2}\mathbf{U}\mathbf{U}^{T}\right)\mathbf{U}^{\bot}\mathbf{B}_{\xi}\right)={\rm tr}\left(\left(\mathbf{U}^{\bot}\mathbf{B}_{\eta}\right)^T\mathbf{U}^{\bot}\mathbf{B}_{\xi}\right)\\
	=&\; {\rm tr}\left(\bm\eta^T\bm\xi\right).
\end{align}
Here we provide a few examples  of the key operations explained earlier on the Grassmann manifold, taken from \cite{boumal2014manopt}.
Given a data point $[\mathbf{U}]$, a moving direction $\bm\eta$ and the step size $t$, one way to construct the retraction mapping is  through performing singular value decomposition (SVD) on $\mathbf{U}+t\bm\eta$, i.e., $\mathbf{U}+t\bm\eta=\bar{\mathbf{U}}\mathbf{S}\bar{\mathbf{V}}^T$, and the new data point after moving is $\left[\bar{\mathbf{U}}\bar{\mathbf{V}}^T\right]$.
A transport operation can be implemented by projecting a given tangent vector using the orthogonal projector $\mathbf{I} - \mathbf{U}\mathbf{U}^T$.
Both Riemannian gradient and Hessian can be computed by projecting the Euclidean gradient and Hessian of $f(\mathbf{U})$ using the same  projector $\mathbf{I} - \mathbf{U}\mathbf{U}^T$.

\subsection{First-order Algorithms}

To optimize the problem in Eq. (\ref{general_optimization_problem}), the first-order Riemannian optimization algorithm RSGD updates the solution at each $k$-th iteration by using an $f_i$ instance, as
\begin{equation}
	\mathbf{x}_{k+1}=R_{\mathbf{x}_k}\left(-\beta_k\text{grad}f_i\left(\mathbf{x}_k\right)\right),
	\label{r_sgd_update}
\end{equation}
where $\beta_k$ is the step size. 
Assume that the algorithm runs for multiple epochs  referred to as the outer iterations. 
Each epoch contains multiple inner iterations, each of which corresponds to a randomly selected $f_i$ for calculating the update. 
Letting $\mathbf{x}_k^t$ be the solution at the $t$-th inner iteration of the $k$-th outer iteration and $\tilde{\mathbf{x}}_k$ be the solution at the last inner iteration of the $k$-th outer iteration, RSVRG employs a variance reduced extension \cite{zhang2016riemannian} of the update defined in Eq. (\ref{r_sgd_update}), given as
\begin{equation}
	\mathbf{x}_{k}^{t+1}=R_{\mathbf{x}_k^t}\left(-\beta_k\mathbf{v}_k^t\right),  
	\label{rsvrg_update}
\end{equation}
where
\begin{equation}
	\mathbf{v}_k^t=\text{grad}f_{i}\left(\mathbf{x}_k^t\right)-\mathcal{T}_{\tilde{\mathbf{x}}_{k-1}}^{\mathbf{x}_k^t}\left(\text{grad}f_{i}\left(\tilde{\mathbf{x}}_{k-1}\right)-\text{grad}f\left(\tilde{\mathbf{x}}_{k-1}\right)\right).
\end{equation}
Here, the full gradient  information ${\rm grad}f\left(\tilde{\mathbf{x}}_{k-1}\right)$ is used to reduce the variance in the stochastic  gradient $\mathbf{v}_k^t$. 
As a later development, RSRG \cite{kasai2018riemannian}   suggests a recursive  formulation  to improve the variance-reduced gradient $\mathbf{v}_k^t$. Starting from $\mathbf{v}_k^0={\rm grad} f\left(\tilde{\mathbf{x}}_{k-1}\right)$, it updates by 
\begin{equation}
	\mathbf{v}_k^t=\text{grad}f_{i}\left(\mathbf{x}_k^t\right)-\mathcal{T}_{\mathbf{x}_{k}^{t-1}}^{\mathbf{x}_k^t}\left(\text{grad}f_{i}\left(\mathbf{x}_{k}^{t-1}\right)\right)+ \mathcal{T}_{\mathbf{x}_{k}^{t-1}}^{\mathbf{x}_k^t}\left(\mathbf{v}_{k}^{t-1}\right).
	\label{r_sgd_update2}
\end{equation}
This formulation is designed to avoid the accumulated error caused by a distant vector transport.

\subsection{Inexact RTR} \label{inexact_rtr}

For second-order Riemannian optimization, the Inexact RTR \cite{kasai2018inexact} improves the standard RTR \cite{absil2007trust} through subsampling. 
It optimizes an approximation of the objective function formulated  using the second-order Taylor expansion within a trust region $\Delta_k$ around $\mathbf{x}_k$ at iteration $k$. 
A  moving direction $\bm{\eta}_k$ within the trust region is found by solving the subproblem at iteration $k$:
\begin{align}
	\label{eq_sub_problem_inexact}
	\bm{\eta}_k  = \mathop{\arg\min}_{\bm{\eta} \in T_{\mathbf{x}_k}\mathcal{M}}  \;\;\;& f (\mathbf x_k) +\langle\mathbf{G}_k,\bm{\eta} \rangle_{\mathbf{x}_k}+\frac{1}{2}\langle\bm{\eta},\mathbf{H}_k[\bm{\eta}]\rangle_{\mathbf{x}_k},  \\
	\nonumber
	\text{subject to} &\quad \left\|\bm{\eta} \right\|_{\mathbf{x}_k}\le\Delta_k, 
\end{align}
where $\mathbf{G}_k$ and $\mathbf{H}_k[\bm{\eta}]$ are the approximated Riemannian gradient and Hessian calculated by using the subsampling technique. 
The approximation is based on the current solution $\mathbf{x}_k$ and the moving direction $\bm\eta$, calculated as
\begin{align}
	\label{eq_subsample_gradient}
	\mathbf{G}_k=&\;\frac{1}{\left|\mathcal{S}_g\right|}\sum_{i\in\mathcal{S}_g}\text{grad}f_i(\mathbf{x}_k), \\ 
	\label{eq_subsample_hessian}
	\mathbf{H}_k[\bm{\eta}]=&\; \frac{1}{\left|\mathcal{S}_H\right|}\sum_{i\in\mathcal{S}_H}\text{Hess}f_i(\mathbf{x}_k)[\bm{\eta}],
\end{align}
where  $\mathcal{S}_g$, $\mathcal{S}_H\subset\left\{1,...,n\right\}$ are the sets of the subsampled indices used for estimating the Riemannian gradient and Hessian. 
The updated solution $\mathbf{x}_{k+1}=R_{\mathbf{x}_k}(\bm{\eta}_k)$ will be accepted and $\Delta_k$ will be increased, if the decrease of the true objective function $f$ is sufficiently large as compared to that of the approximate objective used in Eq. (\ref{eq_sub_problem_inexact}). 
Otherwise, $\Delta_k$ will be decreased because of  the poor agreement between the approximate and true objectives.

\section{Proposed Method}

\subsection{Inexact Sub-RN-CR Algorithm}
\label{sec:Inexact Sub-RN-CR}

We propose  to improve the subsampling-based construction of the RTR subproblem in Eq. (\ref{eq_sub_problem_inexact}) by cubic regularization. 
This gives rise to the minimization
\begin{equation}
	\label{eq_sub_problem}
	\bm{\eta}_k = \mathop{\arg\min}_{\bm{\eta}  \in T_{\mathbf{x}_k}\mathcal{M}} \hat m_k(\bm{\eta}),
\end{equation}
where
\begin{equation}
	\centering
	\begin{split}
		\hat{m}_k(\bm{\eta})  = \left\{ 
		\begin{aligned}
			\label{eq_m}
			&h_{\mathbf x_k}(\bm{\eta}) +\langle\mathbf{G}_k, \bm{\eta}\rangle_{\mathbf{x}_k},
			&&\text{ if }  \|\mathbf{G}_k\|_{\mathbf{x}_k}\ge\epsilon_g. \\
			&h_{\mathbf x_k}(\bm{\eta}) ,
			&&\text{ otherwise}.
		\end{aligned}
		\right.
	\end{split}
\end{equation}
Here,  $0<\epsilon_g <1$ is a user-specified parameter that plays a role in convergence analysis, which we will explain later.
The core objective component  $h_{\mathbf x_k}(\bm{\eta})$ is formulated by extending the adaptive cubic regularization technique \cite{cartis2011adaptive}, given as
\begin{equation}
	h_{\mathbf x_k}(\bm{\eta})  = f(\mathbf{x}_k)+\frac{1}{2}\langle\bm{\eta},\mathbf{H}_k[\bm{\eta}]\rangle_{\mathbf{x}_k} +\frac{\sigma_k}{3}\|\bm{\eta}\|_{\mathbf{x}_k}^3,  
	\label{eq_hx}
\end{equation}
with
\begin{equation}
	\label{eq_sigma}
	\sigma_{k}=\left\{\begin{array}{lr}\max(\frac{\sigma_{k-1}}{\gamma}, \epsilon_\sigma),\quad \text{if}\ \rho_{k-1}\ge\tau, \\ \gamma\sigma_{k-1},\quad\ \ \text{otherwise,}\end{array}\right.
\end{equation}
and
\begin{equation}
	\label{eq_success}
	\rho_k=\frac{ \hat{f}_k(\mathbf{0}_{\mathbf{x}_k})   -  \hat{f}_k(\bm{\eta}_k) }{\hat{m}_k(\mathbf{0}_{\mathbf{x}_k})-\hat{m}_k(\bm{\eta}_k)},
\end{equation}
where the subscript $k$ is used to highlight the pullback of $f$ at $\mathbf x_k$ as  $ \hat{f}_k(\cdot)$. 
Overall, there are four hyper-parameters to be set by the user, including the gradient norm threshold $0<\epsilon_{\sigma} <1$, the dynamic control parameter $\gamma>1$ to adjust the cubic regularization weight,  the model validity threshold $0 <\tau <1$, and the initial trust parameter $\sigma_0$. We will discuss the setup of  the algorithm in more detail.

\begin{algorithm}[t]
	\caption{Main Inexact Sub-RN-CR Solver  }
	\label{alg_inexact_rtr_arc}
	\begin{algorithmic}[1]
		\REQUIRE  $\epsilon_\sigma\in(0,1)$, $\gamma>1$, $0 <\tau <1$, $\sigma_0>0$, $0<\epsilon_g,\epsilon_H<1$.
		\FOR {$k = 1,2,...$}
		\STATE Sample the index sets $\mathcal{S}_g$ and $\mathcal{S}_H$
		\STATE Compute the subsampled gradient $\mathbf{G}_k$ and $\lambda_{min}(\mathbf{H}_k)$ based on Eqs. (\ref{eq_subsample_gradient})-(\ref{eq_subsample_hessian}) and (\ref{eq_minimum_eigenvalue_hessian})
		\IF {$\|\mathbf{G}_k\|_{\mathbf{x}_k}\le\epsilon_g$ and $\lambda_{min}(\mathbf{H}_k)\ge-\epsilon_H$}
		\STATE Return $\mathbf{x}_k$
		\ELSIF {$\|\mathbf{G}_k\|_{\mathbf{x}_k}\le\epsilon_g$}
		\STATE $\mathbf{G}_k=\mathbf{0}_{\mathbf{x}_k}$
		\ENDIF
		\STATE Inexactly solve $\bm\eta_k^* = \mathop{\arg\min}_{\bm\eta\in T_{\mathbf{x}_k}\mathcal{M}} \hat m_k(\bm\eta)$ by Algorithm \ref{alg_non_linear_Lan} or  Algorithm \ref{alg_non_linear_tCG} \label{sub_prob_code}
		\STATE Calculate $\rho_k=\frac{ \hat{f}_k(\mathbf{0}_{\mathbf{x}_k})   -  \hat{f}_k(\bm\eta_k^*) }{\hat{m}_k(\mathbf{0}_{\mathbf{x}_k})-\hat{m}_k(\bm\eta_k^*)}$
		\STATE Set $\mathbf{x}_{k+1}=\left\{\begin{array}{lr}R_{\mathbf{x}_k}(\bm\eta_k^*)\quad \text{if}\ \rho_k\ge\tau \\ \mathbf{x}_k\qquad\quad\ \ \text{otherwise}\end{array}\right.$
		\STATE \label{step_src_0} Set $\sigma_{k+1}=\left\{\begin{array}{lr}\max(\sigma_k/\gamma, \epsilon_\sigma)\quad \text{if}\ \rho_k\ge\tau \\ \gamma\sigma_k\quad\ \ \text{otherwise}\end{array}\right.$
		\ENDFOR
		\ENSURE $\mathbf{x}_k$
	\end{algorithmic}
\end{algorithm}

We expect the norm of the approximate gradient to approach $\epsilon_g$ with  $0<\epsilon_g <1$.
Following a similar treatment  in \cite{kasai2018inexact}, when the gradient norm is smaller than  $\epsilon_g$, the gradient-based term is ignored. This is important to the convergence analysis  shown in the next section.

The trust region radius $\Delta_k$ is no longer explicitly defined, but  replaced by the cubic regularization term $\frac{\sigma_k}{3}\|\bm{\eta}\|_{\mathbf{x}_k}^3$, where $\sigma_k$ is related to a Lagrange multiplier on a cubic trust-region constraint.
Naturally, the smaller $\sigma_k$ is, the larger a moving step is allowed. Benefits of cubic regularization have been shown in \cite{griewank1981modification,kohler2017sub}. 
It can not only accelerate the local convergence especially when the Hessian is singular, but also help escape better  strict saddle points than the TR methods, providing stronger convergence properties.

The cubic  term $\frac{\sigma_k}{3}\|\bm{\eta}\|_{\mathbf{x}_k}^3$ is equipped with a dynamic penalization control through the adaptive trust quantity $\sigma_k\geq 0$. 
The value of $\sigma_k$  is determined by examining how successful each iteration $k$ is. An iteration $k$ is considered successful if $\rho_k\ge \tau$, and unsuccessful otherwise, where
the value of $\rho_k$  quantifies  the agreement between the changes of the approximate objective $\hat{m}_k(\bm\eta)$ and the true objective $f(\mathbf x)$. 
The larger $\rho_k$ is, the more effective the approximate model is. 
Given $\gamma >1$, in an unsuccessful iteration, $\sigma_k$ is increased to $\gamma\sigma_k$ hoping to obtain a more accurate approximation in the next iteration. 
On the opposite, $\sigma_k$ is decreased to $\frac{\sigma_k}{\gamma}$, relaxing the approximation in a successful iteration, but it is still restricted within the lower bound $\epsilon_\sigma$. This bound $\epsilon_\sigma$ helps avoid solution candidates with overly large norms $\|\bm{\eta}_k\|_{\mathbf{x}_k}$ that can  cause an unstable optimization. 
Below  we formally define what an (un)successful iteration is, which will be used in our later analysis.
\begin{defn}[Successful and Unsuccessful Iterations]
	An iteration $k$ in Algorithm \ref{alg_inexact_rtr_arc} is considered successful if the agreement $\rho_k\ge \tau$, and unsuccessful if $\rho_k< \tau$. In addition, based on Step (\ref{step_src_0}) of Algorithm \ref{alg_inexact_rtr_arc}, a successful iteration has $\sigma_{k+1}\le\sigma_k$, while  an unsuccessful  one has $\sigma_{k+1}>\sigma_k$.
\end{defn}

\subsection{Optimality Examination}

The stopping condition of the algorithm follows  the  definition of $\left(\epsilon_g,\epsilon_H\right)$-optimality \cite{nocedal2006numerical},  stated as below.
\begin{defn}[$\left(\epsilon_g,\epsilon_H\right)$-optimality]
	Given $0<\epsilon_g,\epsilon_H<1$,   a solution  $\mathbf x$ satisfies   $\left(\epsilon_g,\epsilon_H\right)$-optimality if 
	\begin{equation}
		\left\|{\rm grad}f(\mathbf{x})\right\|_{\mathbf{x}}\le\epsilon_g\quad {\rm and}\quad {\rm Hess}f(\mathbf{x})[\bm{\eta}]\succeq-\epsilon_H\mathbf{I},
	\end{equation}
	for all $\bm{\eta}\in T_{\mathbf{x}}\mathcal{M}$, where $\mathbf{I}$ is  an identity matrix.
	\label{ass_optimality}
\end{defn}
This is a relaxation and a manifold extension of  the standard second-order optimality conditions $\left\|{\rm grad}f(\mathbf{x})\right\|_{\mathbf{x}}=0$ and $\text{Hess}f(\mathbf{x})\succeq0$  in Euclidean spaces \cite{mokhtari2018escaping}. 
The algorithm stops  (1) when the gradient norm is sufficiently small and  (2) when the Hessian is sufficiently close to being positive semidefinite.

To examine the Hessian, we follow  a similar approach as in \cite{han2021riemannian} by assessing the solution of the following minimization problem:
\begin{align}
	\label{eq_minimum_eigenvalue_hessian}
	\lambda_{min}(\mathbf{H}_k) := &\min_{\bm\eta\in T_\mathbf{x}\mathcal{M},\ \left\|\bm\eta\right\|_\mathbf{x}=1}\langle\bm{\eta} ,\mathbf{H}_k[\bm{\eta} ]\rangle_{\mathbf{x}_k},
\end{align}
which resembles the smallest eigenvalue of the Riemannian Hessian.
As a result, the algorithm stops when $\left\|{\rm grad}f(\mathbf{x})\right\|_{\mathbf{x}}\le\epsilon_g$ (referred to as the gradient condition)  and  when $\lambda_{min}(\mathbf{H}_k) \ge-\epsilon_H$ (referred to as the Hessian condition), where $\epsilon_g, \epsilon_H  \in(0,1)$ are the user-set stopping parameters. Note, we use the same $\epsilon_g$  for thresholding as in Eq. (\ref{eq_m}). 
Pseudo  code of the complete Inexact Sub-RN-CR  is provided in Algorithm \ref{alg_inexact_rtr_arc}.

\subsection{Subproblem Solvers}

The step (\ref{sub_prob_code}) of Algorithm \ref{alg_inexact_rtr_arc} requires to solve the subproblem in Eq. (\ref{eq_sub_problem}). 
We rewrite its objective function  $\hat{m}_k(\bm\eta)$ as below  for the convenience of explanation:
\begin{equation}
	\label{sub_obj}
	\bm\eta_{k}^*= \arg\min_{\bm\eta\in T_{\mathbf{x}_k}\mathcal{M}}    f(\mathbf{x}_k)+ \delta\langle\mathbf{G}_k, \bm{\eta}\rangle_{\mathbf{x}_k}+ \frac{1}{2}\langle\bm{\eta},\mathbf{H}_k[\bm{\eta}]\rangle_{\mathbf{x}_k} +\frac{\sigma_k}{3}\|\bm{\eta}\|_{\mathbf{x}_k}^3 ,  
\end{equation}
where $\delta=1$ if $\|\mathbf{G}_k\|_{\mathbf{x}_k}\ge\epsilon_g$, otherwise $\delta=0$. 
We demonstrate two solvers commonly used in practice.  

\subsubsection{The Lanczos Method} \label{sec_lanczos_method}

The Lanczos method \cite{1999Solving}  has been widely used to solve the cubic regularization problem in  Euclidean spaces \cite{xu2020newton,kohler2017sub,cartis2011adaptive,jia2021solving} and been recently adapted to  Riemannian spaces   \cite{agarwal2021adaptive}.  
Let $D$ denote the manifold dimension. 
Its core operation is to construct a Krylov subspace $\mathcal{K}_D$, of which the basis $\{\mathbf{q}_i\}_{i=1}^D$ spans the tangent space $\bm\eta\in T_{\mathbf{x}_k}\mathcal{M}$. 
After expressing the solution  as an element in $\mathcal{K}_D$, i.e., $ \bm\eta :=\sum_{i=1}^D y_i\mathbf{q}_i$,  the minimization problem in Eq. (\ref{sub_obj}) can be converted to one in Euclidean spaces $\mathbb{R}^D$, as
\begin{equation}
	\label{eq_lanczos_obj}
	\mathbf{y}^* =  \arg\min_{\mathbf{y}\in \mathbb{R}^D}\ y_1\delta\left\|\mathbf{G}_k\right\|_{\mathbf{x}_k}+ \frac{1}{2}\mathbf{y}^T\mathbf{T}_D \mathbf{y} + \frac{\sigma_k}{3}\left\|\mathbf{y}\right\|_2^3,
\end{equation}
where $\mathbf{T}_D \in \mathbb{R}^{D\times D}$ is a symmetric tridiagonal matrix determined by the basic construction process, e.g., Algorithm 1 of \cite{jia2021solving}. 
We provide a  detailed derivation of Eq. (\ref{eq_lanczos_obj})  in Appendix \ref{app_lan}.
The global solution of this converted problem, i.e., $\mathbf{y}^*=\left[y_1^{*},y_2^{*},\ldots,y_D^{*}\right]$,  can be found by many existing techniques, see \cite{press2007chapter}. 
We employ the Newton root finding method adopted by Section 6 of \cite{cartis2011adaptive}, which was originally proposed by \cite{agarwal2021adaptive}. 
It reduces the problem to a univariate root finding problem.
After this, the global solution of the subproblem  is computed by  $ \bm\eta_{k}^* =\sum_{i=1}^D y_i^*\mathbf{q}_i$. 

\begin{algorithm}[t]
	\caption{Subproblem Solver by Lanczos \cite{agarwal2021adaptive}}
	\label{alg_non_linear_Lan}
	\begin{algorithmic}[1]
		\REQUIRE $ \mathbf{G}_k$ and $\mathbf{H}_k[\bm\eta]$,  $\kappa_\theta\in (0,1/6]$, $\sigma_k$.
		\STATE $\mathbf{q}_1=\frac{\mathbf{G}_k}{\left\|\mathbf{G}_k\right\|_{\mathbf{x}_k}}$, $\mathbf{T}=\mathbf{0}\in \mathbb{R}^{D\times D}$, $\alpha = \langle\mathbf{q}_1, \mathbf{H}_k[\mathbf{q}_1]\rangle_{\mathbf{x}_k}$, $T_{1,1}=\alpha$
		\STATE $\mathbf{r} = \mathbf{H}_{k}[\mathbf{q}_1]-\alpha\mathbf{q}_1$
		\FOR {$l = 1,2, \ldots,D$}
		\STATE Obtain $\mathbf{y}^*$ by optimizing Eq. (\ref{eq_lanczos_obj}) with $D=l$ using Newton root finding 
		\STATE $\beta = \left\|\mathbf{r}\right\|_{\mathbf{x}_k}$
		\STATE $\mathbf{q}_{l+1}=-\frac{\mathbf{r}}{\beta}$
		\STATE $\alpha = \langle\mathbf{q}_{l+1},\mathbf{H}_{k}[\mathbf{q}_{l+1}]-\beta\mathbf{q}_l\rangle_{\mathbf{x}_k}$
		\STATE $\mathbf{r}=\mathbf{H}_{k}[\mathbf{q}_{l+1}]-\beta\mathbf{q}_l-\alpha\mathbf{q}_{l+1}$
		\STATE $T_{l,l+1}=T_{l+1,l}=\beta$, $T_{l+1,l+1}=\alpha$
		\IF { Eq. (\ref{eq_stepsize_stop_additional}) is satisfied} 
		\STATE Return $\sum_{i=1}^l y_i^*\mathbf{q}_i$
		\ENDIF
		\ENDFOR
	\end{algorithmic}
\end{algorithm}

In practice, when the manifold dimension $D$  is  large, it is more practical to find a  good solution rather than the global solution.  
By  working with a lower-dimensional Krylov subspace $\mathcal{K}_l$ with $l<D$, one can derive Eq. (\ref{eq_lanczos_obj}) in $\mathbb{R}^l$, and its  solution  $\mathbf{y}^{*l} $ results in a subproblem solution  $\bm\eta_{k}^{*l} =  \sum_{i=1}^l y_i^{*l}\mathbf{q}_i$. 
Both the global  solution $\bm\eta_{k}^* $  and the approximate solution $\bm\eta_{k}^{*l}$ are always guaranteed to be at least as good as the solution obtained by performing a line search along the gradient direction, i.e.,
\begin{equation}
	\label{eq_lanczos_cauchy}
	\hat{m}_k\left(\bm\eta_{k}^* \right) \leq  \hat{m}_k\left(\bm\eta_{k}^{*l}\right) \le	\min_{\alpha\in\mathbb{R}} \hat{m}_k(\alpha \mathbf{G}_k), 
\end{equation}
because  $\alpha \mathbf{G}_k$ is a common basis vector shared by all the constructed Krylov subspaces $\{\mathcal{K}_l\}_{i=1}^D$.  
We provide pseudo  code for the  Lanczos  subproblem solver   in Algorithm \ref{alg_non_linear_Lan}.

To benefit practitioners and improve understanding of the Lanczos solver, we analyse the gap between a practical solution $\bm\eta_{k}^{*l}$ and the global minimizer $\bm\eta_{k}^* $.
Firstly, we define $\lambda_{max}(\mathbf{H}_k) $ in a similar manner to $\lambda_{min}(\mathbf{H}_k) $  as in Eq. (\ref{eq_minimum_eigenvalue_hessian}). 
It resembles the largest eigenvalue of the Riemannian Hessian,  as 
\begin{align}
	\label{eq_maximum_eigenvalue_hessian}
	\lambda_{max}(\mathbf{H}_k) := &\max_{\bm\eta\in T_\mathbf{x}\mathcal{M},\ \left\|\bm\eta\right\|_\mathbf{x}=1}\langle\bm{\eta} ,\mathbf{H}_k[\bm{\eta} ]\rangle_{\mathbf{x}_k}.
\end{align}
We denote a degree-$l$ polynomial  evaluated at $\mathbf{H}_k[\bm\eta]$ by $p_{l}\left(\mathbf{H}_k\right)[\bm\eta]$, such that
\begin{equation}
	p_{l}\left(\mathbf{H}_k\right)[\bm\eta]:=c_l\mathbf{H}_k^l[\bm\eta]+c_{l-1}\mathbf{H}_k^{l-1}[\bm\eta]+\cdots+c_1 \mathbf{H}_k[\bm\eta]+c_0\bm\eta,
\end{equation}
for some coefficients  $c_0, \; c_1, \;\ldots, \; c_l \in \mathbb{R}$. The quantity $\mathbf{H}_k^l[\bm\eta]$ is recursively defined by $\mathbf{H}_k\left[\mathbf{H}_k^{l-1}\left[\bm\eta\right]\right]$ for $l=2,3,\ldots$
We define below an induced norm, as 
\begin{equation}
	\label{eq_induced_pH_norm}
	\left\|p_{l+1}\left(\tilde{\mathbf{H}}_k\right)-{\rm Id}\right\|_{\mathbf{x}_k} =\sup_{\substack{\bm{\eta}\in T_{\mathbf{x}_k}\mathcal{M} \\ \|\bm{\eta}\|_{\mathbf{x}_k}\ne0}}\frac{\left\|\left(p_{l+1}\left(\tilde{\mathbf{H}}_k\right)-{\rm Id}\right)[\bm{\eta}]\right\|_{\mathbf{x}_k}}{\left\|\bm\eta\right\|_{\mathbf{x}_k}},
\end{equation}
where the identity mapping operator works as ${\rm Id}[\bm{\eta}] = \bm\eta$.
Now we are ready to present our result in the following lemma.
\begin{lem}[Lanczos Solution Gap]
	Let  $\bm\eta_{k}^* $ be the global minimizer  of the subproblem $\hat{m}_k$ in Eq. (\ref{eq_sub_problem}). Denote the  subproblem without cubic regularization in Eq. (\ref{eq_sub_problem_inexact}) by $\bar{m}_k$ and  let $\bar{\bm\eta}_k^*$ be its global minimizer.
	For each $l>0$,  the solution $\bm\eta_{k}^{*l}$ returned by Algorithm  \ref{alg_non_linear_Lan} satisfies 
	\begin{equation}
		\hat{m}_k\left(\bm\eta_k^{*l}\right)-\hat{m}_k\left(\bm\eta_k^*\right)  \le \frac{4\lambda_{max}
			\left(\tilde{\mathbf{H}}_k\right)}{\lambda_{min}\left(\tilde{\mathbf{H}}_k\right)}\left(\bar{m}_k\left(\mathbf{0}_{\mathbf{x}_k}\right)-\bar{m}_k\left(\bar{\bm\eta}_k^*\right)\right)\phi_l\left(\tilde{\mathbf{H}}_k\right)^2,
	\end{equation}
	where $\tilde{\mathbf{H}}_k[\bm\eta]:=(\mathbf{H}_k + \sigma_k\left\|\bm\eta_{k}^*\right\|_{\mathbf{x}_k}{\rm Id})[\bm\eta]$ for a moving direction $\bm\eta$, and $\phi_l\left(\tilde{\mathbf{H}}_k\right)$ is an upper bound of the induced norm $\left\|
	p_{l+1}\left(\tilde{\mathbf{H}}_k\right)-{\rm Id}\right\|_{\mathbf{x}_k}$.  
	\label{lem_krylov}	
\end{lem}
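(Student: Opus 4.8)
The plan is to reduce the statement to a Euclidean-style Krylov approximation argument, since the tangent space $T_{\mathbf{x}_k}\mathcal{M}$ equipped with $\langle\cdot,\cdot\rangle_{\mathbf{x}_k}$ is a finite-dimensional inner-product space isometric to $\mathbb{R}^D$. First I would invoke the standard global-optimality characterization of the cubic subproblem (\ref{eq_sub_problem}) (the Nesterov--Polyak / \cite{cartis2011adaptive} conditions, which apply verbatim here): the global minimizer $\bm\eta_k^*$ satisfies $\tilde{\mathbf{H}}_k[\bm\eta_k^*]=-\mathbf{G}_k$ together with $\tilde{\mathbf{H}}_k\succeq0$, where $\tilde{\mathbf{H}}_k=\mathbf{H}_k+\sigma_k\|\bm\eta_k^*\|_{\mathbf{x}_k}{\rm Id}$. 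I treat the regime $\|\mathbf{G}_k\|_{\mathbf{x}_k}\ge\epsilon_g$ (so $\delta=1$ and $\mathbf{G}_k\neq\mathbf{0}_{\mathbf{x}_k}$), which is the one in which Algorithm \ref{alg_non_linear_Lan} is initialized by $\mathbf{q}_1=\mathbf{G}_k/\|\mathbf{G}_k\|_{\mathbf{x}_k}$. The crucial observation is Krylov invariance: because $\tilde{\mathbf{H}}_k$ and $\mathbf{H}_k$ differ only by a multiple of ${\rm Id}$, the subspace $\mathcal{K}_l={\rm span}\{\mathbf{G}_k,\mathbf{H}_k[\mathbf{G}_k],\ldots,\mathbf{H}_k^{l-1}[\mathbf{G}_k]\}$ is the same whether generated by $\mathbf{H}_k$ or $\tilde{\mathbf{H}}_k$, so every vector $p(\tilde{\mathbf{H}}_k)[\mathbf{G}_k]$ with $\deg p<l$ lies in $\mathcal{K}_l$; moreover, by construction (\ref{eq_lanczos_obj}) the iterate $\bm\eta_k^{*l}$ is the \emph{exact} minimizer of $\hat{m}_k$ over $\mathcal{K}_l$.

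Exact subspace optimality then gives $\hat{m}_k(\bm\eta_k^{*l})-\hat{m}_k(\bm\eta_k^*)\le\hat{m}_k(\bm\eta^{\rm tr})-\hat{m}_k(\bm\eta_k^*)$ for any trial point $\bm\eta^{\rm tr}\in\mathcal{K}_l$, and I would take $\bm\eta^{\rm tr}=p_{l+1}(\tilde{\mathbf{H}}_k)[\bm\eta_k^*]$, so that $\bm{\Delta}:=\bm\eta^{\rm tr}-\bm\eta_k^*=(p_{l+1}(\tilde{\mathbf{H}}_k)-{\rm Id})[\bm\eta_k^*]$ and, by the induced norm (\ref{eq_induced_pH_norm}), $\|\bm{\Delta}\|_{\mathbf{x}_k}\le\phi_l(\tilde{\mathbf{H}}_k)\|\bm\eta_k^*\|_{\mathbf{x}_k}$. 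Since $\nabla\hat{m}_k(\bm\eta_k^*)=\mathbf{0}_{\mathbf{x}_k}$, a second-order Taylor expansion with integral remainder gives $\hat{m}_k(\bm\eta^{\rm tr})-\hat{m}_k(\bm\eta_k^*)=\frac{1}{2}\langle\bm{\Delta},\nabla^2\hat{m}_k(\bm\zeta)[\bm{\Delta}]\rangle_{\mathbf{x}_k}$ for some $\bm\zeta$ on the segment $[\bm\eta_k^*,\bm\eta^{\rm tr}]$. Bounding the Hessian of the cubic term by $\nabla^2\hat{m}_k(\bm\zeta)\preceq\mathbf{H}_k+2\sigma_k\|\bm\zeta\|_{\mathbf{x}_k}{\rm Id}$, and using $\|\bm\zeta\|_{\mathbf{x}_k}\le2\|\bm\eta_k^*\|_{\mathbf{x}_k}$ (which holds once $\phi_l(\tilde{\mathbf{H}}_k)\le1$) together with $\sigma_k\|\bm\eta_k^*\|_{\mathbf{x}_k}\le\lambda_{max}(\tilde{\mathbf{H}}_k)$, I obtain $\nabla^2\hat{m}_k(\bm\zeta)\preceq4\lambda_{max}(\tilde{\mathbf{H}}_k){\rm Id}$, whence $\hat{m}_k(\bm\eta^{\rm tr})-\hat{m}_k(\bm\eta_k^*)\le2\lambda_{max}(\tilde{\mathbf{H}}_k)\phi_l(\tilde{\mathbf{H}}_k)^2\|\bm\eta_k^*\|_{\mathbf{x}_k}^2$.

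It then remains to convert $\|\bm\eta_k^*\|_{\mathbf{x}_k}^2$ into the quadratic-model decrease. From $\tilde{\mathbf{H}}_k[\bm\eta_k^*]=-\mathbf{G}_k$ and $\tilde{\mathbf{H}}_k\succeq0$ I get $\lambda_{min}(\tilde{\mathbf{H}}_k)\|\bm\eta_k^*\|_{\mathbf{x}_k}^2\le\langle\bm\eta_k^*,\tilde{\mathbf{H}}_k[\bm\eta_k^*]\rangle_{\mathbf{x}_k}=-\langle\mathbf{G}_k,\bm\eta_k^*\rangle_{\mathbf{x}_k}$, and the optimality identity rearranges to $-\langle\mathbf{G}_k,\bm\eta_k^*\rangle_{\mathbf{x}_k}=\langle\bm\eta_k^*,\mathbf{H}_k[\bm\eta_k^*]\rangle_{\mathbf{x}_k}+\sigma_k\|\bm\eta_k^*\|_{\mathbf{x}_k}^3=2(\bar{m}_k(\mathbf{0}_{\mathbf{x}_k})-\bar{m}_k(\bm\eta_k^*))-\sigma_k\|\bm\eta_k^*\|_{\mathbf{x}_k}^3\le2(\bar{m}_k(\mathbf{0}_{\mathbf{x}_k})-\bar{m}_k(\bm\eta_k^*))$. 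Global optimality of $\bar{\bm\eta}_k^*$ for $\bar{m}_k$ upgrades the right-hand side to $2(\bar{m}_k(\mathbf{0}_{\mathbf{x}_k})-\bar{m}_k(\bar{\bm\eta}_k^*))$, giving $\|\bm\eta_k^*\|_{\mathbf{x}_k}^2\le2\lambda_{min}(\tilde{\mathbf{H}}_k)^{-1}(\bar{m}_k(\mathbf{0}_{\mathbf{x}_k})-\bar{m}_k(\bar{\bm\eta}_k^*))$. Substituting this into the bound of the previous paragraph reproduces exactly the factor $\frac{4\lambda_{max}(\tilde{\mathbf{H}}_k)}{\lambda_{min}(\tilde{\mathbf{H}}_k)}$ in front of $(\bar{m}_k(\mathbf{0}_{\mathbf{x}_k})-\bar{m}_k(\bar{\bm\eta}_k^*))\phi_l(\tilde{\mathbf{H}}_k)^2$.

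The main obstacle is the Taylor step: because $\hat{m}_k$ is nonconvex, the gap cannot be controlled by a single quadratic form, so the cubic remainder must be dominated uniformly along the whole segment $[\bm\eta_k^*,\bm\eta^{\rm tr}]$. This is precisely why the global-optimality certificate $\tilde{\mathbf{H}}_k\succeq0$ is indispensable (it makes the spectrum on which $p_{l+1}$ approximates the identity nonnegative and provides a usable $\lambda_{min}(\tilde{\mathbf{H}}_k)>0$) and why I must bound $\|\bm\zeta\|_{\mathbf{x}_k}$ in terms of $\|\bm\eta_k^*\|_{\mathbf{x}_k}$. Secondary points to verify are the Krylov-invariance claim, the admissibility of comparing $\bm\eta_k^*$ against $\bar{\bm\eta}_k^*$ through global optimality of the quadratic model, and the degree bookkeeping linking the Lanczos dimension $l$ to the degree-$(l+1)$ residual polynomial; the existence of a $p_{l+1}$ making $\phi_l(\tilde{\mathbf{H}}_k)$ small is the classical Chebyshev approximation estimate and would be quantified separately.
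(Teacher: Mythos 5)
Your proposal is correct in its overall architecture and does reproduce the stated constant, but it is not the paper's proof; it shares the skeleton (shift-invariance of the Krylov subspace, the stationarity identity $\tilde{\mathbf{H}}_k[\bm\eta_k^*]=-\mathbf{G}_k$ with $\tilde{\mathbf{H}}_k\succeq 0$, a trial point of the form $p_{l+1}(\tilde{\mathbf{H}}_k)$ applied to the minimizer with $p_{l+1}(x)=x\,q_l(x)$ so that it lies in $\mathcal{K}_l$, the bound $\|(p_{l+1}(\tilde{\mathbf{H}}_k)-{\rm Id})[\bm\eta_k^*]\|_{\mathbf{x}_k}\le\phi_l(\tilde{\mathbf{H}}_k)\|\bm\eta_k^*\|_{\mathbf{x}_k}$, and the eigenvalue-ratio conversion of $\|\bm\eta_k^*\|_{\mathbf{x}_k}^2$ into quadratic-model decrease), yet the execution is genuinely different. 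The paper never expands the cubic model at all: it works entirely inside the shifted \emph{quadratic} (trust-region) model, rescales the Krylov trial point by $(1-\alpha)$ so that its norm never exceeds $\|\bar{\bm\eta}_k^*\|_{\mathbf{x}_k}$, proves the bound for $\bar{m}_k$ first, and only at the end transfers to $\hat{m}_k$ via the TR--cubic equivalence $\Delta_k=\|\bm\eta_k^*\|_{\mathbf{x}_k}$, $\lambda_*=\sigma_k\|\bm\eta_k^*\|_{\mathbf{x}_k}$, where the cubic terms drop out with the right sign. You instead stay in the cubic model throughout: exact subspace optimality of the Lanczos iterate for $\hat{m}_k$ plus a Lagrange-form second-order expansion of $\hat{m}_k$ around $\bm\eta_k^*$. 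Your route is more direct (no rescaling trick, no final transfer, and no need for the paper's claim that the cubic Krylov minimizer satisfies $\|\bm\eta_k^{*l}\|_{\mathbf{x}_k}\le\|\bm\eta_k^*\|_{\mathbf{x}_k}$), but it pays by having to control the curvature of a nonconvex function along a whole segment, and that is exactly where you import hypotheses the lemma does not grant.

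Concretely, two steps exceed the lemma's assumptions. First, $\|\bm\zeta\|_{\mathbf{x}_k}\le 2\|\bm\eta_k^*\|_{\mathbf{x}_k}$ requires $\phi_l(\tilde{\mathbf{H}}_k)\le 1$, while the lemma is asserted for \emph{every} $l>0$; for small $l$ and ill-conditioned $\tilde{\mathbf{H}}_k$ the Chebyshev-type bound quoted after the lemma approaches $2$, so $\phi_l\le1$ can fail, and then your constant degrades from $4$ to $2(1+\phi_l)$. The paper's quadratic-only route gets the factor $4$ unconditionally, using only $|\alpha|\le\phi_l(\tilde{\mathbf{H}}_k)$ and $1-\alpha\le 2$. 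Second, your inequality $\sigma_k\|\bm\eta_k^*\|_{\mathbf{x}_k}\le\lambda_{max}(\tilde{\mathbf{H}}_k)$ is equivalent to $\lambda_{max}(\mathbf{H}_k)\ge 0$, which is not guaranteed (e.g., $\mathbf{H}_k$ negative definite near a strict local maximizer); the paper never needs this, because the $\lambda_*$-dependent remainder it must absorb arrives already multiplied by $\alpha^2\le\phi_l(\tilde{\mathbf{H}}_k)^2$ and is dominated by the model-decrease term since $\lambda_{max}(\tilde{\mathbf{H}}_k)/\lambda_{min}(\tilde{\mathbf{H}}_k)\ge 1$. Both holes are patchable — the first by accepting an $l$-dependent constant, the second by absorbing the $\sigma_k\|\bm\zeta\|_{\mathbf{x}_k}$ contribution into the model decrease as the paper does — but as written your argument proves Lemma \ref{lem_krylov} only under these two additional hypotheses, so you should either state them or rework the curvature bound along the segment.
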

We provide its proof  in Appendix \ref{app_lan_gap}.  In Euclidean spaces, \cite{carmon2018analysis} has shown that $\phi_l(\mathbf{H}_k)=2e^{\frac{-2(l+1)}{\sqrt{ \lambda_{max}(\mathbf{H}_k)\lambda_{min}^{-1}(\mathbf{H}_k)} }}$. 
With the help of Lemma \ref{lem_krylov}, this could serve as a reference to gain an understanding of the solution quality for the Lanczos method in  Riemannian spaces.

\subsubsection{The Conjugate Gradient Method}

We experiment with an alternative subproblem solver by adapting the non-linear conjugate gradient   technique in Riemannian spaces.
It starts from the initialization of $\bm\eta_{k}^0=\mathbf{0}_{\mathbf{x}_k}$ and the first  conjugate direction  $\mathbf{p}_1 = -\mathbf{G}_k$ (negative gradient direction). 
At each inner iteration $i$ (as opposed to the outer iteration $k$ in the main algorithm), it solves the minimization problem with one input variable:
\begin{equation}
	\label{step_search0}
	\alpha_i^* = \arg\min_{\alpha \geq 0} \hat{m}_k \left(\bm\eta_k^{i-1}+\alpha\mathbf{p}_i\right).
\end{equation}
The global solution of this  one-input minimization problem  can be computed by zeroing the derivative of $\hat{m}_k$ with respect to $\alpha$, resulting in a polynomial equation of $\alpha$, which can then be solved by eigen-decomposition \cite{edelman1995polynomial}.  
Its  root that possesses the minimal value of $\hat{m}_k$ is retrieved. The algorithm updates the next conjugate direction $\mathbf{p}_{i+1}$ using the returned $\alpha_i^*$ and $\mathbf{p}_i$. 
Pseudo  code for  the conjugate gradient  subproblem solver  is provided in Algorithm \ref{alg_non_linear_tCG}.

\begin{algorithm}[t]
	\caption{subproblem Solver by Non-linear Conjugate Gradient }
	\label{alg_non_linear_tCG}
	\begin{algorithmic}[1]
		\REQUIRE subproblem $\hat{m}_k(\bm\eta)$, $ \mathbf{G}_k$,  $m$, $\kappa, \theta> 0$,  $\kappa_\theta\in (0,1/6]$.
		\STATE $\bm\eta_{k}^0=\mathbf{0}_{\mathbf{x}_k}$, $\mathbf{r}_0=\mathbf{G}_k$, $\mathbf{p}_1=-\mathbf{r}_0$, $\mathbf{x}_k^0 = \mathbf{x}_k$
		\FOR {$i = 1,2, \ldots,m$}
		\STATE \label{CG_opt_alpha} Solve Eq. (\ref{step_search0}) by zeroing its derivative and solving the resulting polynomial equation  \label{CG_minstep}
		\IF { $\alpha_i^* \leq  10^{-10}$} \label{CG_stopping1}
		\label{CG_return_1}
		\STATE Return $\bm\eta_{k}^*  =\bm\eta_{k}^{i-1}$
		\ENDIF		
		\STATE $\bm\eta_{k}^{i}=\bm\eta_{k}^{i-1}+\alpha_i^*\mathbf{p}_i$
		\IF { Eq. (\ref{eq_stepsize_stop_additional}) is satisfied} 
		\STATE Return $\bm\eta_{k}^*  =\bm\eta_{k}^{i}$
		\ENDIF
		\STATE \label{CG_update_r} $\mathbf{r}_{i}=\mathbf{r}_{i-1}+\alpha_i^*\mathbf{H}_k^{i-1}[\mathbf{p}_i]$
		\STATE $\mathbf{x}_k^i = R_{\mathbf{x}_k^{i-1}}(\alpha_i^*\mathbf{p}_i)$
		\IF {  Eq. (\ref{CG_ref}) is met} \label{CG_stopping2}
		\STATE Return $\bm\eta_{k}^*  =\bm\eta_{k}^{i}$
		\ENDIF 
		\STATE   \label{CG_beta}  Compute $\beta_i$ by Eq. (\ref{beta_PRP})  
		\STATE \label{CG_p_update} $\mathbf{p}_{i+1}=-\mathbf{r}_{i}+\beta_i\mathcal{P}_{\alpha_i^*\mathbf{p}_i}\mathbf{p}_{i}$
		\ENDFOR
		\ENSURE $\bm\eta_{k}^*  = \bm\eta_{k}^{m}$
	\end{algorithmic}
\end{algorithm}

Convergence of a conjugate gradient  method largely depends on how  its conjugate direction is updated. 
This is controlled by the setting of $\beta_i$ for calculating $\mathbf{p}_{i +1}= -\mathbf{r}_{i}+\beta_i\mathcal{P}_{\alpha_i^*\mathbf{p}_i}\mathbf{p}_{i}$  in Step  (\ref{CG_p_update}) of Algorithm \ref{alg_non_linear_tCG}.  
Working in Riemannian spaces under the subsampling setting,  it has been proven by \cite{sakai2021sufficient}  that, when  the Fletcher-Reeves formula  \cite{fletcher1964function}  is used, i.e.,
\begin{equation}
	\beta_i=\frac{\left\|\mathbf{G}_k^i\right\|_{\mathbf{x}_k^i}^2}{\left\|\mathbf{G}_k^{i-1}\right\|_{\mathbf{x}_{k}^{i-1}}^2},
\end{equation}
where
\begin{equation}
	\mathbf{G}_k^i=\frac{1}{|\mathcal{S}_g|}\sum_{j\in\mathcal{S}_g}{\rm grad} f_j\left(\mathbf{x}_k^i\right),
\end{equation}
a conjugate gradient  method can converge to a stationary point with $\lim_{i\to\infty}$ $\left\|{\rm grad} f\left(\mathbf{x}_k^i\right)\right\|_{\mathbf{x}_k^i}=0$.  Working in Euclidean spaces, \cite{wei2006convergence} has shown that the Polak–Ribiere–Polyak  formula, i.e.,
\begin{equation}
	\beta_i=\frac{\left\langle\nabla f \left(\mathbf{x}_k^i \right), \nabla f\left(\mathbf{x}_k^i\right) - \nabla f\left(\mathbf{x}_k^{i-1}\right)\right\rangle}{\left\|\nabla f\left(\mathbf{x}_k^{i-1}\right)\right\|^2},
\end{equation}
performs better than the Fletcher-Reeves formula. 
Building upon these, we propose to compute $\beta_i$ by a modified Polak–Ribiere–Polyak  formula in Riemannian spaces in Step (\ref{CG_beta}) of Algorithm \ref{alg_non_linear_tCG}, given as
\begin{equation}
	\label{beta_PRP}
	\beta_i=\frac{\left\langle\mathbf{r}_{i},\mathbf{r}_{i}-\frac{\left\|\mathbf{r}_{i}\right\|_{\mathbf{x}_k^i}}{\left\|\mathbf{r}_{i-1}\right\|_{\mathbf{x}_k^{i-1}}}\mathcal{P}_{\alpha_i^*\mathbf{p}_i}\mathbf{r}_{i-1}\right\rangle_{\mathbf{x}_k^i}}{2\left\langle\mathbf{r}_{i-1},\mathbf{r}_{i-1}\right\rangle_{\mathbf{x}_k^{i-1}}}.
\end{equation}
We prove that  the resulting algorithm converges to a stationary point, and present the convergence result in Theorem \ref{thm_tCG_converge}, with its proof deferred to  Appendix \ref{app_CGconvergence}.
\begin{thm}[Convergence of the Conjugate Gradient Solver]
	Assume that the step size $\alpha_i^*$ in Algorithm \ref{alg_non_linear_tCG} satisfies the strong Wolfe conditions \cite{hosseini2018line}, i.e., given a smooth function $f:\mathcal{M}\to \mathbb{R}$, it has
	\begin{align}
		\label{eq_wolfe_1}
		f\left(R_{\mathbf{x}_k^{i-1}}(\alpha_i^*\mathbf{p}_i)\right) \le&\; f \left(\mathbf{x}_k^{i-1}\right) + c_1\alpha_i^* \left\langle\mathbf{G}_k^{i-1},\mathbf{p}_i\right\rangle_{\mathbf{x}_k^{i-1}},\\
		\label{eq_wolfe_2}
		\left|\left\langle\mathbf{G}_k^{i}, \mathcal{P}_{\alpha_i^*\mathbf{p}_i}(\mathbf{p}_i)\right\rangle_{\mathbf{x}_k^i}\right| \le&\;  -c_2\left\langle\mathbf{G}_k^{i-1},\mathbf{p}_i\right\rangle_{\mathbf{x}_k^{i-1}},
	\end{align}
	with $0<c_1<c_2<1$. When Step (\ref{CG_beta})  of Algorithm \ref{alg_non_linear_tCG} computes $\beta_i$ by Eq. (\ref{beta_PRP}), 
	Algorithm \ref{alg_non_linear_tCG} converges to a stationary point, i.e., $\lim_{i\to\infty} \left\|\mathbf{G}_k^i\right\|_{\mathbf{x}_k^i}=0$.
	\label{thm_tCG_converge}
\end{thm}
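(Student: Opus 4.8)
The plan is to adapt the classical global-convergence argument for nonlinear conjugate gradient methods (as in \cite{wei2006convergence} for the Euclidean Polak--Ribi\`ere--Polyak variant and in \cite{sakai2021sufficient} for the Riemannian Fletcher--Reeves variant) to the modified Riemannian PRP update in Eq. (\ref{beta_PRP}), treating $\mathbf{G}_k^i$ as the Riemannian gradient of the (subsampled) smooth objective $f$ at the iterate $\mathbf{x}_k^i$. The argument rests on three pillars: a sufficient-descent property of the generated directions, a Riemannian Zoutendijk inequality, and a contradiction argument that bounds the growth of $\|\mathbf{p}_i\|_{\mathbf{x}_k^i}$.

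First I would establish that each $\mathbf{p}_{i+1}$ is a sufficient-descent direction, namely that there is a constant $c>0$, independent of $i$, with $\langle \mathbf{G}_k^i, \mathbf{p}_{i+1}\rangle_{\mathbf{x}_k^i} \le -c\,\|\mathbf{G}_k^i\|_{\mathbf{x}_k^i}^2$. Substituting Step (\ref{CG_p_update}) and using that the parallel transport $\mathcal{P}_{\alpha_i^*\mathbf{p}_i}$ is an isometry for the metric $\langle\cdot,\cdot\rangle$, I would expand $\langle\mathbf{G}_k^i,\mathbf{p}_{i+1}\rangle_{\mathbf{x}_k^i}$ and bound the cross term $\beta_i\langle\mathbf{r}_i,\mathcal{P}_{\alpha_i^*\mathbf{p}_i}\mathbf{p}_i\rangle_{\mathbf{x}_k^i}$ with the help of the second strong Wolfe condition Eq. (\ref{eq_wolfe_2}); the factor $\tfrac{1}{2}$ and the norm-ratio correction appearing in Eq. (\ref{beta_PRP}) are precisely what keep $\beta_i$ small enough for this bound to hold inductively, mirroring the modification of \cite{wei2006convergence}.

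Second, I would derive the Riemannian Zoutendijk condition $\sum_i \langle\mathbf{G}_k^{i-1},\mathbf{p}_i\rangle_{\mathbf{x}_k^{i-1}}^2 / \|\mathbf{p}_i\|_{\mathbf{x}_k^{i-1}}^2 < \infty$. This follows from the first strong Wolfe condition Eq. (\ref{eq_wolfe_1}), which forces a monotone decrease of $f$ along the iterates (bounded below since $f$ is continuous on the relevant region), together with the second condition Eq. (\ref{eq_wolfe_2}) and a Lipschitz-type continuity of the transported gradient along the retraction, exactly as in \cite{sakai2021sufficient,hosseini2018line}. Combining Zoutendijk with the sufficient-descent inequality of the first step then yields $\sum_i \|\mathbf{G}_k^{i-1}\|_{\mathbf{x}_k^{i-1}}^4 / \|\mathbf{p}_i\|_{\mathbf{x}_k^{i-1}}^2 < \infty$.

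Finally I would argue by contradiction: suppose $\|\mathbf{G}_k^i\|_{\mathbf{x}_k^i}$ does not tend to zero, so that $\|\mathbf{G}_k^i\|_{\mathbf{x}_k^i}\ge \gamma > 0$ for infinitely many $i$. Using Eq. (\ref{beta_PRP}) and the isometry of parallel transport, I would bound $\|\mathbf{p}_{i+1}\|_{\mathbf{x}_k^i}$ recursively in terms of $\|\mathbf{G}_k^i\|_{\mathbf{x}_k^i}$ and $\beta_i\|\mathbf{p}_i\|$, showing that $\|\mathbf{p}_i\|^2$ can grow at most linearly in $i$, so that $\sum_i 1/\|\mathbf{p}_i\|^2 = \infty$ on the subsequence where the gradient stays bounded below; this contradicts the summability established above and forces $\lim_{i\to\infty}\|\mathbf{G}_k^i\|_{\mathbf{x}_k^i}=0$. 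The main obstacle is this last control of the direction growth: unlike the Fletcher--Reeves case the PRP-type $\beta_i$ does not telescope cleanly, and in the Riemannian setting the vector transport breaks the Euclidean algebraic identities, so the crux is to exploit the specific structure of Eq. (\ref{beta_PRP}) --- the $\tfrac{1}{2}$ denominator and the norm-normalized transported residual --- to obtain a uniform bound of the form $|\beta_i|\,\|\mathcal{P}_{\alpha_i^*\mathbf{p}_i}\mathbf{p}_i\|\le \mathrm{const}\cdot\|\mathbf{G}_k^i\|_{\mathbf{x}_k^i}$ that closes the contradiction.
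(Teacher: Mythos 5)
Your proposal follows essentially the same route as the paper's proof: establish a sufficient-descent property of the directions, invoke a Zoutendijk-type condition under the strong Wolfe assumptions, bound $\beta_i$ using the specific structure of Eq. (\ref{beta_PRP}) so that $\|\mathbf{p}_i\|_{\mathbf{x}_k^{i-1}}^2$ grows at most linearly, and conclude by contradiction. The only cosmetic differences are that the paper derives sufficient descent from the exact line-search optimality condition $\langle\mathbf{G}_k^i,\mathcal{P}_{\alpha_i^*\mathbf{p}_i}\mathbf{p}_i\rangle_{\mathbf{x}_k^i}\approx 0$ rather than from Eq. (\ref{eq_wolfe_2}), and bounds $\beta_i$ by the Fletcher--Reeves-type ratio $\|\mathbf{G}_k^i\|_{\mathbf{x}_k^i}^2/\|\mathbf{G}_k^{i-1}\|_{\mathbf{x}_k^{i-1}}^2$, which is exactly what makes the recursion telescope as in the classical analysis.
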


In practice, Algorithm \ref{alg_non_linear_tCG} terminates when there is no obvious change in the solution, which is examined in Step (\ref{CG_stopping1}) by checking whether the step size is sufficiently small, i.e., whether $\alpha_i \leq10^{-10}$ (Section 9 in \cite{agarwal2021adaptive}). 
To improve the convergence rate, the algorithm also terminates when  $\mathbf{r}_{i}$ in Step  (\ref{CG_stopping2}) is sufficiently small, i.e., following a  classical criterion \cite{absil2007trust},  to check whether  
\begin{equation}
	\label{CG_ref}
	\left\|\mathbf{r}_{i}\right\|_{\mathbf{x}_k^i}\le\left\|\mathbf{r}_0\right\|_{\mathbf{x}_k}\min\left(\left\|\mathbf{r}_0\right\|_{\mathbf{x}_k}^\theta, \kappa\right),
\end{equation}
for some $\theta,\kappa>0$.

\subsubsection{Properties of the Subproblem Solutions}
\label{sec:sub_solution}

In Algorithm \ref{alg_non_linear_Lan},  the basis $\{\mathbf{q}_i\}_{i=1}^D$ is constructed successively starting from $q_1 = \mathbf{G}_k$, while the converted problem in Eq. (\ref{eq_lanczos_obj}) is solved for each $\mathcal{K}_l$ starting from $l=1$.   
This process allows up to $D$ inner iterations. The solution $ \bm\eta_{k}^{*}$ obtained in the last inner iteration where $l=D$ is the global minimizer over $\mathbb{R}^D$. 
Differently, Algorithm \ref{alg_non_linear_tCG} converges to a stationary point as proved in Theorem \ref{thm_tCG_converge}. 
In practice, a maximum inner iteration number $m$ is set in advance. Algorithm \ref{alg_non_linear_tCG}  stops when it reaches the maximum iteration number or converges to a status where the change in either the solution or the conjugate direction is very small. 

The convergence property of the main algorithm presented in Algorithm \ref{alg_inexact_rtr_arc} relies on the quality of the subproblem solution.  
Before discussing it, we first familiarize the reader with the classical TR concepts of Cauchy steps and eigensteps but defined for the Inexact  RTR problem introduced in Section \ref{inexact_rtr}.
According to  Section 3.3 of \cite{boumal2019global}, when $\hat{m}_k$ is the RTR subproblem, the closed-form Cauchy step $\hat{\bm\eta}_k^C$ is an improving direction defined by
\begin{equation}
	\hat{\bm\eta}_k^C := \min \Bigg(\frac{\left\|\mathbf{G}_k\right\|_{\mathbf{x}_k}^2}{\langle\mathbf{G}_k, \mathbf{H}_k[\mathbf{G}_k]\rangle}_{\mathbf{x}_k}, \frac{\Delta_k}{\left\|\mathbf{G}_k\right\|_{\mathbf{x}_k}}\Bigg)\mathbf{G}_k.
\end{equation}
It points towards the gradient direction with an optimal step size computed by the $\min(\cdot,\cdot)$ operation, and follows the form of the general Cauchy step defined by 
\begin{equation}
	\label{eq_cauchy_general_definition}
	\bm\eta_k^C:=\arg\min_{\alpha\in\mathbb{R}}(\hat{m}_k(\alpha\mathbf{G}_k))\mathbf{G}_k.
\end{equation}
According to Section 2.2 of \cite{kasai2018inexact}, for some $\nu\in(0,1]$, the eigenstep $\bm \eta_k^E$ satisfies
\begin{equation}
	\left \langle \bm \eta_k^E, \mathbf{H}_k\left[\bm \eta_k^E \right] \right\rangle_{\mathbf{x}_k}\le\nu\lambda_{min}(\mathbf{H}_k)\left\| \bm \eta_k^E\right\|^2_{\mathbf{x}_k}<0.
	\label{eq_eigen_condition}
\end{equation}
It is an approximation of the negative curvature direction by an eigenvector associated with the smallest negative eigenvalue.

The following three assumptions on the subproblem solution are needed by the convergence analysis later.  
We  define the induced norm for the Hessian as below:
\begin{equation}
	\label{eq_induced_hessian_norm}
	\|\mathbf{H}_k\|_{\mathbf{x}_k} =\sup_{\substack{\bm{\eta}\in T_{\mathbf{x}_k}\mathcal{M} \\ \|\bm{\eta}\|_{\mathbf{x}_k}\ne0}}\frac{\left\|\mathbf{H}_k[\bm{\eta}]\right\|_{\mathbf{x}_k}}{\left\|\bm\eta\right\|_{\mathbf{x}_k}}.
\end{equation}
\begin{assu}[Sufficient Descent Step] Given the Cauchy step $\bm\eta _k^C$ and the eigenstep $\bm\eta _k^E$ for $\nu\in(0,1]$,  assume the subproblem solution $\bm\eta_{k}^{*}$ satisfies the Cauchy condition 
	\begin{equation}
		\label{eq_cauchy_point}
		\hat{m}_k \left(\bm\eta_{k}^{*} \right)\le \hat{m}_k\left(\bm\eta _k^C\right)\le \hat{m} _k\left(\mathbf{0}_{\mathbf{x}_k}\right)- \max(a_k,b_k),
	\end{equation}
	and the eigenstep condition 
	\begin{equation}
		\label{eq_eigen_point}
		\hat{m}_k \left(\bm\eta_{k}^{*} \right)\le \hat{m}_k\left(\bm\eta _k^E\right)\le \hat{m} _k\left(\mathbf{0}_{\mathbf{x}_k}\right) -c_k,
	\end{equation}
	where
	\begin{align}
		\label{eq_a} a_k=\; &     \frac{\|\mathbf{G}_k\|_{\mathbf{x}_k}}{2\sqrt{3}}\min\Bigg (\frac{\|\mathbf{G}_k\|_{\mathbf{x}_k}}{\|\mathbf{H}_k\|_{\mathbf{x}_k}},\sqrt{\frac{\|\mathbf{G}_k\|_{\mathbf{x}_k}}{\sigma_k}}\Bigg ), \\
		\label{eq_b} b_k=\; & \frac{\left\|\bm\eta _k^C\right\|_{\mathbf{x}_k}^2}{12}\left(\sqrt{\|\mathbf{H}_k\|_{\mathbf{x}_k}^2+4\sigma_k\|\mathbf{G}_k\|_{\mathbf{x}_k}}-\|\mathbf{H}_k\|_{\mathbf{x}_k}\right) ,\\
		\label{eq_c} c_k=\; &  \frac{\nu \left |\lambda_{min}(\mathbf{H}_k) \right|}{6}\max\Bigg (\left\|\bm\eta _k^E\right\|_{\mathbf{x}_k}^2,\frac{\nu^2|\lambda_{min}(\mathbf{H}_k)|^2}{\sigma_k^2}\Bigg ).
	\end{align}
	\label{assu_cauchy_eigen_point}
\end{assu}
The two last  inequalities in Eqs. (\ref{eq_cauchy_point}) and (\ref{eq_eigen_point})  concerning the  Cauchy step and eigenstep are derived in Lemma 6 and Lemma 7 of \cite{xu2020newton}.
Assumption \ref{assu_cauchy_eigen_point} generalizes Condition 3 in \cite{xu2020newton} to the Riemannian case.
It assumes that the  subproblem solution $\bm\eta_{k}^{*}$ is better than the Cauchy step and eigenstep, decreasing  more the value of the subproblem objective function.  
The following two assumptions enable a stronger convergence result for  Algorithm \ref{alg_inexact_rtr_arc}, which will be used in the proof of Theorem \ref{theorem3}.
\begin{assu}[Sub-model Gradient Norm \cite{cartis2011adaptive,kohler2017sub}]  Assume the subproblem solution $\bm\eta_{k}^{*}$ satisfies 
	\begin{equation}
		\label{eq_assu_g}
		\left\|\nabla_{\bm\eta}\hat{m}_k\left(  \bm\eta_{k}^{*} \right)\right \|_{\mathbf{x}_k}\le \kappa_\theta\min\left(1, \left\| \bm\eta_{k}^{*}  \right\|_{\mathbf{x}_k} \right) \|\mathbf{G}_k\|_{\mathbf{x}_k},
	\end{equation}
	where $\kappa_\theta\in (0,1/6]$.
	\label{assu_g}
\end{assu}

\begin{assu}[Approximate Global Minimizer \cite{cartis2011adaptive,yao2021inexact}] Assume the subproblem solution $\bm\eta_{k}^{*}$  satisfies 
	\begin{align}
		\label{eq_sufficient_step0}
		\langle \mathbf{G}_k, \bm\eta_{k}^{*}   \rangle_{\mathbf{x}_k}+	\langle \bm\eta_{k}^{*} ,\mathbf{H}_k[  \bm\eta_{k}^{*} ]\rangle_{\mathbf{x}_k}+\sigma_k\|\bm\eta_{k}^{*} \|_{\mathbf{x}_k}^3 = 0, \\
		\langle\bm\eta_{k}^{*} ,\mathbf{H}_k[ \bm\eta_{k}^{*}  ]\rangle_{\mathbf{x}_k}+ \sigma_k\| \bm\eta_{k}^{*} \| _{\mathbf{x}_k}^3 \ge 0.
		\label{eq_sufficient_step}
	\end{align}
	\label{assu_cg}
\end{assu}

Driven by these assumptions, we characterize the subproblem solutions and present the results  in the following lemmas. Their proofs  are deferred to Appendix \ref{app_sub_solution}.

\begin{lem}[Lanczos Solution]
	The subproblem solution obtained by Algorithm \ref{alg_non_linear_Lan} when being executed $D$ (the dimension of $\mathcal{M}$) iterations satisfies Assumptions \ref{assu_cauchy_eigen_point}, \ref{assu_g}, \ref{assu_cg}. 
	When being executed $l<D$ times, the solution satisfies  the Cauchy condition in Assumption \ref{assu_cauchy_eigen_point}, also Assumptions \ref{assu_g} and \ref{assu_cg}.
	\label{lem_lanczos_sol_property}
\end{lem}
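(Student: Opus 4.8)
The plan is to exploit the single structural fact that drives everything: the solution returned by Algorithm~\ref{alg_non_linear_Lan} after $l$ inner iterations is the \emph{exact} global minimizer of the cubic model $\hat m_k$ restricted to the Krylov subspace $\mathcal{K}_l=\mathrm{span}\{\mathbf q_1,\dots,\mathbf q_l\}$, and that $\mathbf q_1=\mathbf G_k/\|\mathbf G_k\|_{\mathbf x_k}$ so the gradient direction lies in every $\mathcal{K}_l$. I would verify the three assumptions separately, treating the full-dimension case ($l=D$) and the truncated case ($l<D$) in parallel and flagging the one condition (the eigenstep inequality) that genuinely requires $\mathcal{K}_D=T_{\mathbf x_k}\mathcal M$.

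For Assumption~\ref{assu_cauchy_eigen_point}, the Cauchy half is immediate from Eq.~(\ref{eq_lanczos_cauchy}): since $\alpha\mathbf G_k\in\mathcal{K}_l$ for all $\alpha$, both $\bm\eta_k^*$ and $\bm\eta_k^{*l}$ are no worse than the best gradient-direction step, hence no worse than the general Cauchy step $\bm\eta_k^C$ of Eq.~(\ref{eq_cauchy_general_definition}); chaining this with $\hat m_k(\bm\eta_k^C)\le\hat m_k(\mathbf 0_{\mathbf x_k})-\max(a_k,b_k)$ (Lemma 6 of \cite{xu2020newton}) yields Eq.~(\ref{eq_cauchy_point}) for every $l$. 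The eigenstep half is where the two cases diverge. When $l=D$ the basis spans the whole tangent space, so $\bm\eta_k^E\in\mathcal{K}_D$ and the global minimizer satisfies $\hat m_k(\bm\eta_k^*)\le\hat m_k(\bm\eta_k^E)$; combined with $\hat m_k(\bm\eta_k^E)\le\hat m_k(\mathbf 0_{\mathbf x_k})-c_k$ (Lemma 7 of \cite{xu2020newton}) this gives Eq.~(\ref{eq_eigen_point}). For $l<D$ the negative-curvature direction need not lie in $\mathcal{K}_l$, so I would claim only the Cauchy condition there, exactly as the lemma states.

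For Assumptions~\ref{assu_g} and~\ref{assu_cg} I would pass through the reduced tridiagonal problem Eq.~(\ref{eq_lanczos_obj}). Writing $\bm\eta_k^{*l}=\sum_{i\le l}y_i^*\mathbf q_i$ with an orthonormal basis gives $\|\bm\eta_k^{*l}\|_{\mathbf x_k}=\|\mathbf y^*\|_2$, $\langle\mathbf G_k,\bm\eta_k^{*l}\rangle_{\mathbf x_k}=\|\mathbf G_k\|_{\mathbf x_k}y_1^*$, and crucially $\langle\bm\eta_k^{*l},\mathbf H_k[\bm\eta_k^{*l}]\rangle_{\mathbf x_k}=(\mathbf y^*)^\top\mathbf T_l\mathbf y^*$, the last identity holding because the out-of-subspace component $\beta_l\mathbf q_{l+1}$ of $\mathbf H_k[\mathbf q_l]$ is orthogonal to every $\mathbf q_i$ with $i\le l$. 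The first- and second-order optimality conditions of the reduced problem, namely $(\mathbf T_l+\sigma_k\|\mathbf y^*\|_2\mathbf I)\mathbf y^*=-\delta\|\mathbf G_k\|_{\mathbf x_k}\mathbf e_1$ and $\mathbf T_l+\sigma_k\|\mathbf y^*\|_2\mathbf I\succeq0$, then translate term-by-term into Eqs.~(\ref{eq_sufficient_step0}) and~(\ref{eq_sufficient_step}), establishing Assumption~\ref{assu_cg} for both $l=D$ and $l<D$. Assumption~\ref{assu_g} follows from the termination rule: at $l=D$ the reduced gradient vanishes so $\nabla_{\bm\eta}\hat m_k(\bm\eta_k^*)=\mathbf 0$ trivially, while for $l<D$ the stopping test Eq.~(\ref{eq_stepsize_stop_additional}) is precisely the inequality Eq.~(\ref{eq_assu_g}) with $\kappa_\theta\in(0,1/6]$.

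I expect the main obstacle to be the faithful transfer of the Euclidean optimality conditions of the tridiagonal surrogate back to the tangent space. The argument hinges on the orthonormality of $\{\mathbf q_i\}$ and on the Lanczos identity $\mathbf H_k[\mathbf q_l]=\beta_{l-1}\mathbf q_{l-1}+\alpha_l\mathbf q_l+\beta_l\mathbf q_{l+1}$; I would need to confirm that the leaked component $\beta_l\mathbf q_{l+1}$ never contaminates the quadratic form $\langle\bm\eta_k^{*l},\mathbf H_k[\bm\eta_k^{*l}]\rangle_{\mathbf x_k}$, and that $\mathbf T_l$ is genuinely the Gram representation $T_{ij}=\langle\mathbf q_i,\mathbf H_k[\mathbf q_j]\rangle_{\mathbf x_k}$ under the chosen Riemannian metric. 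Verifying that the truncated solution still satisfies Assumption~\ref{assu_cg}, despite being only a subspace-global rather than space-global minimizer, is the subtle point, and it succeeds exactly because these two identities reduce the in-subspace stationarity to the claimed inner-product equalities.
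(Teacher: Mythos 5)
Your proposal is correct, and for Assumption \ref{assu_cauchy_eigen_point} it coincides with the paper's own proof: the Cauchy condition follows for every $l$ from Eq.~(\ref{eq_lanczos_cauchy}) together with Eq.~(\ref{eq_cauchy_general_definition}), and the eigenstep condition for $l=D$ follows because ${\rm Span}\{\bm\eta_k^C,\bm\eta_k^E\}$ lies in the full space, so the global minimizer beats $\bm\eta_k^E$. Where you genuinely differ is in how Assumptions \ref{assu_g} and \ref{assu_cg} are established. The paper argues abstractly: it invokes Section 3.3 of \cite{cartis2011adaptive} to assert that any (subspace) minimizer of $\hat m_k$ is a stationary point, giving $\left\|\nabla_{\bm\eta}\hat m_k\left(\bm\eta_k^{*l}\right)\right\|_{\mathbf{x}_k}=0$ for all $l$; for $l=D$ it derives Eq.~(\ref{eq_sufficient_step0}) by pairing this stationarity identity with $\bm\eta_k^*$ and obtains Eq.~(\ref{eq_sufficient_step}) from the descent property $\langle\mathbf G_k,\bm\eta_k^*\rangle_{\mathbf{x}_k}\le0$, while for $l<D$ it cites Lemma 3.2 of \cite{cartis2011adaptive} wholesale. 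You instead unpack that citation: passing to the reduced problem Eq.~(\ref{eq_lanczos_obj}), using orthonormality of $\{\mathbf q_i\}$ and the Gram identity $(\mathbf T_l)_{ij}=\langle\mathbf q_i,\mathbf H_k[\mathbf q_j]\rangle_{\mathbf{x}_k}$, you translate the reduced optimality conditions $(\mathbf T_l+\sigma_k\|\mathbf y^*\|_2\mathbf I)\mathbf y^*=-\delta\|\mathbf G_k\|_{\mathbf{x}_k}\mathbf e_1$ and $\mathbf T_l+\sigma_k\|\mathbf y^*\|_2\mathbf I\succeq0$ term-by-term into Eqs.~(\ref{eq_sufficient_step0}) and (\ref{eq_sufficient_step}); this is exactly the content of the cited lemma, so your argument is a valid self-contained replacement. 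Two remarks. First, your treatment of Assumption \ref{assu_g} for $l<D$ via the termination test Eq.~(\ref{eq_stepsize_stop_additional}) is arguably more careful than the paper's: a minimizer over $\mathcal K_l$ only annihilates the component of $\nabla_{\bm\eta}\hat m_k$ lying in $\mathcal K_l$, so the paper's blanket claim $\nabla_{\bm\eta}\hat m_k\left(\bm\eta_k^{*l}\right)=\mathbf 0_{\mathbf{x}_k}$ is delicate for $l<D$, whereas the stopping rule guarantees Eq.~(\ref{eq_assu_g}) by construction. Second, your route to Eq.~(\ref{eq_sufficient_step}) requires the standard Euclidean fact that the global minimizer of the reduced cubic model satisfies the semidefiniteness condition above; the paper sidesteps this by the descent-direction argument, which is equally sound because the subspace-global minimizer must satisfy $\langle\mathbf G_k,\bm\eta_k^{*l}\rangle_{\mathbf{x}_k}\le0$ (otherwise $-\bm\eta_k^{*l}$ would yield a strictly smaller model value). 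Both routes work; yours is more explicit, the paper's is shorter.
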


\begin{lem}[Conjugate Gradient Solution]
	\label{lem_tcg_sol_property}
	The subproblem solution  obtained by Algorithm \ref{alg_non_linear_tCG} satisfies the Cauchy condition in Assumption \ref{assu_cauchy_eigen_point}. Assuming $\hat{m}_k(\bm\eta)\approx f(R_{\mathbf{x}_k}(\bm\eta))$, it also satisfies  
	\begin{equation}
		\left\|\nabla_{\bm\eta}\hat{m}_k\left(  \bm\eta_{k}^{*} \right)\right \|_{\mathbf{x}_k}\approx 0,
	\end{equation}
	and approximately the first condition of Assumption \ref{assu_cg}, as
	\begin{equation}
		\label{assump3_app}
		\langle\mathbf{G}_k, \bm\eta_k^*\rangle_{\mathbf{x}_k}+\langle\bm\eta_k^*, \mathbf{H}_k[\bm\eta_k^*]\rangle_{\mathbf{x}_k}+\sigma_k\left\|\bm\eta_{k}^{*}\right\|_{\mathbf{x}_k}^3\approx 0.
	\end{equation}
\end{lem}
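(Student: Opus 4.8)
The plan is to establish the three claims separately. \textbf{First}, for the Cauchy condition in Eq.~(\ref{eq_cauchy_point}), I would exploit the initialization of Algorithm~\ref{alg_non_linear_tCG}: since $\bm\eta_{k}^0 = \mathbf{0}_{\mathbf{x}_k}$ and $\mathbf{p}_1 = -\mathbf{r}_0 = -\mathbf{G}_k$, the very first inner iteration solves $\alpha_1^* = \arg\min_{\alpha\ge 0}\hat m_k(-\alpha\mathbf{G}_k)$, which is a line search along the negative gradient direction. Because the linear term $\langle\mathbf{G}_k,\bm\eta\rangle_{\mathbf{x}_k}$ forces the unconstrained minimizer of $\hat m_k$ along the line $\{\alpha\mathbf{G}_k:\alpha\in\mathbb{R}\}$ to lie in the descent half-line $\alpha\le 0$, this constrained search attains the same value as the general Cauchy step of Eq.~(\ref{eq_cauchy_general_definition}), giving $\hat m_k(\bm\eta_k^1)=\hat m_k(\bm\eta_k^C)$. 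Since $\alpha=0$ is always feasible in Step~(\ref{CG_minstep}), the objective is non-increasing across inner iterations, so $\hat m_k(\bm\eta_k^*)\le\hat m_k(\bm\eta_k^1)=\hat m_k(\bm\eta_k^C)$, which is the first inequality. The second inequality, $\hat m_k(\bm\eta_k^C)\le\hat m_k(\mathbf{0}_{\mathbf{x}_k})-\max(a_k,b_k)$, is exactly the Cauchy-step estimate imported from Lemmas~6 and~7 of \cite{xu2020newton}, as already noted after Assumption~\ref{assu_cauchy_eigen_point}.

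\textbf{Second}, for the near-stationarity $\|\nabla_{\bm\eta}\hat m_k(\bm\eta_k^*)\|_{\mathbf{x}_k}\approx 0$, I would first compute the model gradient explicitly from Eq.~(\ref{eq_hx}), obtaining
\begin{equation}
\nabla_{\bm\eta}\hat m_k(\bm\eta)=\delta\mathbf{G}_k+\mathbf{H}_k[\bm\eta]+\sigma_k\|\bm\eta\|_{\mathbf{x}_k}\bm\eta.
\end{equation}
Under the working approximation $\hat m_k(\bm\eta)\approx\hat f_k(\bm\eta)=f(R_{\mathbf{x}_k}(\bm\eta))$, the model gradient is approximated by the Riemannian gradient of the pullback, whose norm at $\bm\eta_k^*$ is governed by $\|\mathrm{grad}f\|$ at the corresponding iterate. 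Theorem~\ref{thm_tCG_converge} guarantees that Algorithm~\ref{alg_non_linear_tCG} drives the subsampled Riemannian gradient $\|\mathbf{G}_k^i\|_{\mathbf{x}_k^i}$ to zero at the inner iterates $\mathbf{x}_k^i$; combining this with the approximation yields $\|\nabla_{\bm\eta}\hat m_k(\bm\eta_k^*)\|_{\mathbf{x}_k}\approx 0$. This simultaneously shows Assumption~\ref{assu_g} holds approximately, since its right-hand side $\kappa_\theta\min(1,\|\bm\eta_k^*\|_{\mathbf{x}_k})\|\mathbf{G}_k\|_{\mathbf{x}_k}$ is non-negative.

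\textbf{Third}, the approximate first condition of Assumption~\ref{assu_cg} follows by pairing the solution with its own model gradient. Taking the inner product of $\bm\eta_k^*$ with the gradient expression above gives
\begin{equation}
\langle\bm\eta_k^*,\nabla_{\bm\eta}\hat m_k(\bm\eta_k^*)\rangle_{\mathbf{x}_k}=\delta\langle\mathbf{G}_k,\bm\eta_k^*\rangle_{\mathbf{x}_k}+\langle\bm\eta_k^*,\mathbf{H}_k[\bm\eta_k^*]\rangle_{\mathbf{x}_k}+\sigma_k\|\bm\eta_k^*\|_{\mathbf{x}_k}^3,
\end{equation}
whose right-hand side (with $\delta=1$) is precisely the left-hand side of Eq.~(\ref{eq_sufficient_step0}). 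Applying the Cauchy--Schwarz inequality together with the near-stationarity from the second step, $|\langle\bm\eta_k^*,\nabla_{\bm\eta}\hat m_k(\bm\eta_k^*)\rangle_{\mathbf{x}_k}|\le\|\bm\eta_k^*\|_{\mathbf{x}_k}\|\nabla_{\bm\eta}\hat m_k(\bm\eta_k^*)\|_{\mathbf{x}_k}\approx 0$, so Eq.~(\ref{assump3_app}) holds.

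\textbf{The main obstacle} I anticipate lies in the second step: rigorously bridging Theorem~\ref{thm_tCG_converge}, which asserts convergence of the \emph{subsampled Riemannian gradient of $f$ at the inner iterates $\mathbf{x}_k^i$} produced by successive retractions, to the \emph{model gradient $\nabla_{\bm\eta}\hat m_k$ evaluated at the single returned tangent vector $\bm\eta_k^*$}. These objects coincide only under the stated approximation $\hat m_k\approx\hat f_k$, which itself conflates $\mathbf{x}_k^i$ with $R_{\mathbf{x}_k}(\bm\eta_k^i)$, and only asymptotically as $i\to\infty$, whereas Algorithm~\ref{alg_non_linear_tCG} terminates at a finite inner iteration $m$ or upon the stopping tests of Steps~(\ref{CG_stopping1}) and~(\ref{CG_stopping2}). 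This layered mismatch is precisely why the conclusions must be stated as approximate rather than exact equalities. The remaining parts are routine once the model gradient is computed and the monotonicity of the line search is observed.
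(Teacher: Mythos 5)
Your proposal is correct and follows essentially the same route as the paper's own proof: (i) the first inner iteration of Algorithm \ref{alg_non_linear_tCG} realizes the Cauchy step and the line searches make $\hat m_k$ monotonically non-increasing, giving the Cauchy condition; (ii) the approximation $\hat m_k \approx \hat f_k$ combined with Theorem \ref{thm_tCG_converge} yields $\|\nabla_{\bm\eta}\hat m_k(\bm\eta_k^*)\|_{\mathbf{x}_k}\approx 0$; and (iii) pairing the model gradient with $\bm\eta_k^*$ and applying Cauchy--Schwarz gives Eq. (\ref{assump3_app}). Your added observation that the half-line constraint $\alpha\ge 0$ is non-binding (since the linear term makes $-\mathbf{G}_k$ the favorable direction) is a small refinement the paper leaves implicit, and the "main obstacle" you flag is exactly the gap the paper itself papers over with the $\approx$ statements.
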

In practice, Algorithm \ref{alg_non_linear_Lan} based on lower-dimensional Krylov subspaces with $l<D$ returns a less optimal solution, while Algorithm \ref{alg_non_linear_tCG}  returns at most a local minimum. 
They are not guaranteed to satisfy the eigenstep condition in Assumption \ref{assu_cauchy_eigen_point}. But the early-returned solutions from Algorithm \ref{alg_non_linear_Lan} still satisfy Assumptions \ref{assu_g} and \ref{assu_cg}. However, solutions from Algorithm \ref{alg_non_linear_tCG} do not satisfy these two Assumptions exactly, but they could get close in an approximate manner. 
For instance, according to Lemma \ref{lem_tcg_sol_property}, $\left\|\nabla_{\bm\eta}\hat{m}_k\left(  \bm\eta_{k}^{*} \right)\right \|_{\mathbf{x}_k}\approx 0$, and we know that $0 \leq \kappa_\theta\min\left(1, \left\| \bm\eta_k^*  \right\|_{\mathbf{x}_k} \right) \|\mathbf{G}_k\|_{\mathbf{x}_k}$; thus, there is a fair chance for Eq. (\ref{eq_assu_g}) in Assumption \ref{assu_g}  to be met by the solution from Algorithm \ref{alg_non_linear_tCG}.  
Also, given that $\bm\eta_k^*$ is a descent direction, it has $\langle\mathbf{G}_k, \bm\eta_k^*\rangle_{\mathbf{x}_k}\le 0$. Combining this with Eq. (\ref{assump3_app}) in Lemma \ref{lem_tcg_sol_property},  there is a fair chance for  Eq. (\ref{eq_sufficient_step0}) in Assumption \ref{assu_g}  to be met. 
We present experimental results in Section \ref{sec_assum_satisfaction}, showing empirically to what extent  the different solutions  satisfy or are close to the eigenstep condition in Assumption \ref{assu_cauchy_eigen_point}, Assumptions \ref{assu_g} and \ref{assu_cg}.

\subsection{Practical Early Stopping} 
\label{sec:early}

In practice, it is often more efficient to stop the optimization earlier before meeting the optimality conditions and obtain a reasonably good solution much faster. 
We employ  a practical and simple early stopping mechanism to accommodate this need. 
Algorithm \ref{alg_inexact_rtr_arc} is allowed to terminate  earlier when: (1) the norm of the approximate gradient continually fails to decrease for $K$ times, and (2) when the percentage of the function decrement is lower than a given threshold, i.e., 
\begin{equation}
	\label{eq_early_stopping_1}
	\frac{f(\mathbf{x}_{k-1})-f(\mathbf{x}_k)}{|f(\mathbf{x}_{k-1})|} \le\tau_f,
\end{equation}
for a consecutive number of $K$ times, with $K$ and $\tau_f>0$ being user-defined.

For the subproblem, both Algorithms \ref{alg_non_linear_Lan} and \ref{alg_non_linear_tCG} are allowed to terminate when the current solution $\bm\eta_k$, i.e., $\bm\eta_k = \bm\eta_{k}^{*l}$ for Algorithm \ref{alg_non_linear_Lan} and $\bm\eta_k = \bm\eta_{k}^{i}$ for  Algorithm \ref{alg_non_linear_tCG}, satisfies
\begin{equation}
	\left\|\nabla_{\bm\eta}\hat{m}_k\left(  \bm\eta_k \right)\right \|_{\mathbf{x}_k}\le \kappa_\theta\min\left(1, \left\| \bm\eta_k  \right\|_{\mathbf{x}_k} \right) \|\mathbf{G}_k\|_{\mathbf{x}_k}.
	\label{eq_stepsize_stop_additional}
\end{equation}
This implements an examination of Assumption \ref{assu_g}.
Regarding  Assumption \ref{assu_cauchy_eigen_point}, both Algorithms \ref{alg_non_linear_Lan} and \ref{alg_non_linear_tCG} optimize along the direction of the Cauchy step in their first iteration and thus satisfy the Cauchy condition. Therefore there is no need to examine it. As for the eigenstep condition,  it is costly to compute and compare with the eigenstep in each inner iteration, so we do not use it as a stopping criterion in practice. 
Regarding Assumption \ref{assu_cg},  according to Lemma \ref{lem_lanczos_sol_property}, it is always satisfied by the solution from Algorithm \ref{alg_non_linear_Lan}. 
Therefore, there is no need to  examine it in Algorithm \ref{alg_non_linear_Lan}.
As for Algorithm \ref{alg_non_linear_tCG}, the examination by Eq. (\ref{eq_stepsize_stop_additional}) also plays a role in checking  Assumption \ref{assu_cg}. 
For the first condition in  Assumption \ref{assu_cg}, Eq. (\ref{eq_sufficient_step0}) is equivalent to $\langle\nabla_{\bm\eta}\hat{m}_k\left(  \bm\eta_{k}^{*} \right),\bm\eta_{k}^{*}\rangle_{\mathbf{x}_k}=0$ (this results from  Eq. (\ref{appD_eq1}) in Appendix \ref{cite_proof}). 
In practice, when Eq. (\ref{eq_stepsize_stop_additional}) is satisfied with a small value of $\left\|\nabla_{\bm\eta}\hat{m}_k\left(  \bm\eta_k^* \right)\right \|_{\mathbf{x}_k}$,  it has $\left\langle\nabla_{\bm\eta}\hat{m}_k\left(  \bm\eta_{k}^* \right),\bm\eta_{k}^*\right\rangle_{\mathbf{x}_k}\approx 0$, indicating that the first  condition of Assumption \ref{assu_cg} is met approximately.  
Also, since  $\langle\mathbf{G}_k,\bm\eta_k^*\rangle_{\mathbf{x}_k}\le0$ due to the descent direction $\bm\eta_k^*$, the second condition of  Assumption \ref{assu_cg} has a fairly high chance to be met.

\section{Convergence Analysis}

\subsection{Preliminaries}
We start from listing those assumptions and conditions from existing literature that are adopted to support  our analysis. 
Given a function $f$, the Hessian of its pullback  $\nabla^2\hat f\left(\mathbf{x}\right)[\mathbf{\bm\eta}]$ and its Riemannian Hessian ${\rm Hess} f\left(\mathbf{x}\right)[\bm{\eta}]$   are identical when a second-order  retraction is  used \cite{boumal2019global}, and this serves as an assumption to ease the analysis.
\begin{assu}[Second-order Retraction \cite{boumal2020introduction}]
	The retraction mapping is assumed to be a second-order retraction. That is, for all $\mathbf{x}\in\mathcal{M}$ and all $\bm\eta\in T_\mathbf{x}\mathcal{M}$, the curve $\gamma(t):=R_\mathbf{x}(t\bm\eta)$ has zero acceleration at $t=0$, i.e., $\gamma''(0)=\frac{\mathcal{D}^2}{dt^2}R_\mathbf{x}(t\bm\eta)\big|_{t=0}=0$.   		
	\label{assu_second_order_retr}
\end{assu}

The following  two assumptions originate from the assumptions required by the convergence analysis of the standard RTR algorithm \cite{boumal2019global,ferreira2002kantorovich}, and are adopted here to support the inexact analysis.
\begin{assu}[Restricted Lipschitz Hessian] There exists $L_H>0$ such that for all $\mathbf{x}_k$ generated by Algorithm \ref{alg_inexact_rtr_arc} and all $\bm{\eta}_k\in T_{\mathbf{x}_k}\mathcal{M}$, $\hat{f}_k$ satisfies
	\begin{equation}
		\left|\hat{f}_k(\bm{\eta}_k)-f(\mathbf{x}_k)-\langle{\rm grad} f(\mathbf{x}_k),\bm{\eta}_k\rangle_{\mathbf{x}_k}- \frac{1}{2}\left\langle\bm{\eta}_k,\nabla^2\hat{f}_k(\mathbf{0}_{\mathbf{x}_k})[\bm{\eta}_k]\right\rangle_{\mathbf{x}_k}\right|\le\frac{L_H}{6}\|\bm{\eta}_k\|_{\mathbf{x}_k}^3,
		\label{eq_restricted_lipschitz_hessian_1}
	\end{equation}
	and	
	\begin{equation}
		\left\|\mathcal{P}_{\bm{\eta}_k}^{-1}\left({\rm grad}\hat{f}_k(\bm{\eta}_k) \right)-{\rm grad}f(\mathbf{x}_k)- \nabla^2\hat{f}_k(\mathbf{0}_{\mathbf{x}_k})[\bm{\eta}_k] \right\|_{\mathbf{x}_k} \le\frac{L_H}{2}\|\bm{\eta}_k\|_{\mathbf{x}_k}^2,
		\label{eq_restricted_lipschitz_hessian_2}
	\end{equation}
	where $\mathcal{P}^{-1}$ denotes the inverse process  of the parallel transport operator.
	\label{ass_restricted_lipschitz_hessian}
\end{assu}
\begin{assu}[Norm Bound on Hessian] \textit{For all $\mathbf{x}_k$, there exists $K_H\ge0$ so that the inexact Hessian $\mathbf{H}_k$  satisfies}
	\begin{equation}
		\begin{split}
			\|\mathbf{H}_k\|_{\mathbf{x}_k}\le K_H.
		\end{split}
		\label{hessian_bound}
	\end{equation}
	\label{assu_hessian_norm_bound}
\end{assu}

The following key conditions on the inexact gradient and Hessian approximations are developed in Euclidean spaces by \cite{roosta2019sub} (Section 2.2) and \cite{xu2020newton} (Section 1.3), respectively. We make use of these in Riemannian spaces.
\begin{cond}[Approximation Error Bounds] \textit{For all $\mathbf{x}_k$ and $\bm{\eta}_k\in T_{\mathbf{x}_k}\mathcal{M}$, suppose that there exist $\delta_g,\delta_H>0$,  such that the approximate gradient and Hessian satisfy}
	\begin{align}
		\label{eq_approximate_grad_hess_bound1}
		\|\mathbf{G}_k-{\rm grad}f(\mathbf{x}_k)\|_{\mathbf{x}_k}&\le\delta_g, \\
		\label{eq_approximate_grad_hess_bound2}
		\|\mathbf{H}_k[\bm{\eta}_k]-\nabla^2\hat{f}_k(\mathbf{0}_{\mathbf{x}_k}) [\bm{\eta}_k]\|_{\mathbf{x}_k}&\le\delta_H\|\bm{\eta}_k\|_{\mathbf{x}_k}.
	\end{align}
	\label{cond_approximate_grad_hess_bound}
\end{cond}
As will be shown in Theorem \ref{theorem1} later, these allow the used sampling size in the gradient and Hessian approximations  to be fixed throughout the training process.
As a result, it can serve as a guarantee of the algorithmic efficiency when dealing with large-scale problems.

\subsection{Supporting Theorem and Assumption}

In this section, we prove a preparation theorem and present new conditions required by our results. 
Below, we re-develop Theorem 4.1 in \cite{kasai2018inexact} using the matrix Bernstein inequality \cite{gross2011recovering}. 
It provides lower bounds on the required subsampling size for approximating the gradient and Hessian  in order for Condition \ref{cond_approximate_grad_hess_bound} to hold. 
The proof is provided in Appendix \ref{appdendix_E}.
\begin{thm} [Gradient and Hessian Sampling Size] Define the suprema of the Riemannian gradient and Hessian
	\begin{align}
		\label{eq_Kgmax}
		K_{g_{\max}} := &\max_{i\in[n]} \sup_{\mathbf{x}\in\mathcal{M}} \left \| {\rm grad} f_i(\mathbf{x})\right \|_{\mathbf{x}}, \\
		\label{eq_KHmax}
		K_{H_{\max}} := & \max_{i\in[n]} \sup_{\mathbf{x}\in\mathcal{M}}  \sup_{\substack{\bm{\eta}\in T_{\mathbf{x}}\mathcal{M}\\ \|\bm{\eta}\|_{\mathbf{x}}\ne 0}}  \frac{\left\| {\rm Hess} f_i(\mathbf{x})[\mathbf{\bm\eta}]\right\|_{\mathbf{x}}}{\left\|\bm\eta\right\|_{\mathbf{x}}}. 
	\end{align}
	Given $0<\delta<1$,  Condition \ref{cond_approximate_grad_hess_bound} is satisfied with probability at least $\left(1-\delta\right)$ if
	\begin{align}
		\label{eq_restriction_sg}
		|\mathcal{S}_g| &\ge\frac{8\left(K_{g_{max}}^2+K_{g_{max}}\right)\ln\left(\frac{d+r}{\delta}\right)}{\delta_g^2}, \\
		|\mathcal{S}_H| & \ge\frac{8\left(K_{H_{max}}^2+\frac{K_{H_{max}} }{\left\| \bm\eta\right\|_{\mathbf{x}}}\right)\ln\left( \frac{d+r}{\delta}\right)}{\delta_H^2}.
		\label{eq_restriction_sH}
	\end{align}
	where $|\mathcal{S}_g|$ and $|\mathcal{S}_H|$ denote the sampling sizes,  while $d$ and $r$ are the dimensions of $\mathbf{x}$.
	\label{theorem1}
\end{thm}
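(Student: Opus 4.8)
The plan is to realize each subsampled quantity as an empirical average of i.i.d.\ centered random matrices and then apply the matrix Bernstein inequality of \cite{gross2011recovering}. First I would exploit the uniform-sampling assumption, which gives $\mathbb{E}[{\rm grad}f_i(\mathbf{x}_k)]={\rm grad}f(\mathbf{x}_k)$ and $\mathbb{E}[{\rm Hess}f_i(\mathbf{x}_k)]=\nabla^2\hat{f}_k(\mathbf{0}_{\mathbf{x}_k})$, so that the approximation errors in Eqs. (\ref{eq_approximate_grad_hess_bound1}) and (\ref{eq_approximate_grad_hess_bound2}) are exactly the norms of sums of zero-mean summands. For the gradient, writing $m=|\mathcal{S}_g|$ and $\mathbf{X}_i=\frac{1}{m}\left({\rm grad}f_i(\mathbf{x}_k)-{\rm grad}f(\mathbf{x}_k)\right)$, the target event is $\left\|\sum_{i\in\mathcal{S}_g}\mathbf{X}_i\right\|_{\mathbf{x}_k}\le\delta_g$; for the Hessian I would take the operator difference ${\rm Hess}f_i(\mathbf{x}_k)-\nabla^2\hat{f}_k(\mathbf{0}_{\mathbf{x}_k})$ as the summand, tested against a fixed $\bm\eta$.

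Next I would establish the two ingredients that matrix Bernstein requires. Using the suprema in Eqs. (\ref{eq_Kgmax}) and (\ref{eq_KHmax}), each centered gradient summand obeys the almost-sure bound $\|\mathbf{X}_i\|_{\mathbf{x}_k}\le 2K_{g_{\max}}/m$ and the variance proxy $\sigma^2\le 4K_{g_{\max}}^2/m$; the analogous Hessian estimates use the operator norm $\|{\rm Hess}f_i(\mathbf{x}_k)\|_{\mathbf{x}_k}\le K_{H_{\max}}$ for the per-term bound and $K_{H_{\max}}\|\bm\eta\|_{\mathbf{x}_k}$ for the deviation scale. Feeding these into the tail bound $P\left(\left\|\sum_i\mathbf{X}_i\right\|\ge t\right)\le(d+r)\exp\left(\frac{-t^2/2}{\sigma^2+Rt/3}\right)$, in which the ambient dimensions $d$ and $r$ supply the $(d+r)$ prefactor and hence the $\ln\frac{d+r}{\delta}$ term, and choosing $t=\delta_g$ for the gradient and $t=\delta_H\|\bm\eta\|_{\mathbf{x}_k}$ for the Hessian, produces a tail probability that I would force to be at most $\delta$.

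Finally I would invert this bound. Taking logarithms of $(d+r)\exp(-X)\le\delta$ gives $X\ge\ln\frac{d+r}{\delta}$, and solving the resulting linear inequality for $m$ yields a sampling size proportional to $\frac{\left(\sigma^2+\frac{2}{3}Rt\right)\ln\frac{d+r}{\delta}}{t^2}$ once the per-sample $1/m$ factors are cleared. Substituting the constants above and using $0<\delta_g,\delta_H<1$ to absorb the linear deviation terms into the leading coefficient collapses the estimates to Eqs. (\ref{eq_restriction_sg}) and (\ref{eq_restriction_sH}) with the factor $8$. In the Hessian case the factors of $\|\bm\eta\|_{\mathbf{x}_k}^2$ coming from $t^2$ and from the variance cancel, whereas the per-term operator-norm bound $R\sim K_{H_{\max}}/m$ carries no factor of $\|\bm\eta\|_{\mathbf{x}_k}$, which is exactly what leaves the residual $K_{H_{\max}}/\|\bm\eta\|_{\mathbf{x}_k}$ in the linear term of Eq. (\ref{eq_restriction_sH}).

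I expect the main obstacle to be the rigorous transfer of the matrix Bernstein inequality, stated for Euclidean random matrices, to the Riemannian setting, where the summands are tangent vectors and self-adjoint operators on $T_{\mathbf{x}_k}\mathcal{M}$. This requires fixing an orthonormal identification of $T_{\mathbf{x}_k}\mathcal{M}$ with a Euclidean space so that the Riemannian norm $\|\cdot\|_{\mathbf{x}_k}$ coincides with the matrix and operator norms used by the inequality, and checking that the ambient $d\times r$ representation (as in the Grassmann case) is what controls the dimensional prefactor $(d+r)$. The variance bookkeeping for the operator-valued Hessian summands, and the careful rounding that produces the clean factor $8$, are the remaining technical points.
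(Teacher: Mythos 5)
Your proposal follows essentially the same route as the paper's own proof: both center the subsampled Riemannian gradient (and the Hessian operator applied to a fixed $\bm\eta$) as zero-mean random matrices, invoke the matrix Bernstein inequality with per-term and variance bounds of order $K_{g_{\max}}$, $4K_{g_{\max}}^2$ (resp. $K_{H_{\max}}\left\|\bm\eta\right\|_{\mathbf{x}}$, $4K_{H_{\max}}^2\left\|\bm\eta\right\|_{\mathbf{x}}^2$) per sample, set the deviation thresholds to $\delta_g$ and $\delta_H\left\|\bm\eta\right\|_{\mathbf{x}}$, and invert the tail bound, absorbing the linear Bernstein term into the constant $8$ using the smallness of the tolerances, which is exactly how the residual $K_{H_{\max}}/\left\|\bm\eta\right\|_{\mathbf{x}}$ survives in the Hessian bound. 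The differences are cosmetic (the paper keeps the $1/|\mathcal{S}_g|$ factor on the averaged sum rather than on each summand, and silently treats tangent vectors as $d\times r$ ambient matrices, the point you rightly flag as the main technical caveat), so your argument is correct and matches the paper's.
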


The two quantities   $\delta_g$ and $\delta_H$ in Condition \ref{cond_approximate_grad_hess_bound} are the upper bounds of the gradient and Hessian approximation errors, respectively. 
The following assumption bounds $\delta_g$ and $\delta_H$. 
\begin{assu}[Restrictions on $\delta_g$ and $\delta_H$] 
	\label{assu_restriction_on_eg_eh}
	Given $\nu\in(0,1]$, $K_H\geq 0$, $L_H>0$,  $0<\tau, \epsilon_g<1$, we assume that $\delta_g$ and $\delta_H$ satisfy
	\begin{align}
		\label{eq_restrictions_delta_g}
		\delta_g\le \; &   \frac{(1-\tau)\left(\sqrt{K_H^2+4L_H\epsilon_g}-K_H\right)^2}{48L_H} , \\
		\label{eq_restrictions_delta_h}
		\delta_H\le\;  &\min \Bigg (\frac{1-\tau}{12}\left(\sqrt{K_H^2+4L_H\epsilon_g}-K_H\right),\frac{1-\tau}{3}\nu\epsilon_H\Bigg).
	\end{align}	
	\label{assu_restrictions_delta_g_h}
\end{assu}
As seen in Eqs. (\ref{eq_restriction_sg}) and (\ref{eq_restriction_sH}), sampling sizes $|\mathcal{S}_g|$ and $|\mathcal{S}_H|$ are directly proportional to the probability $(1-\delta)$ but inversely proportional to the error tolerances $\delta_g$ and $\delta_H$, respectively. Hence, a higher $(1-\delta)$ and smaller $\delta_g$ and $\delta_H$ (affected by $K_H$ and $L_H$) require larger $|\mathcal{S}_g|$ and $|\mathcal{S}_H|$ for estimating the inexact Riemannian gradient and Hessian.

\subsection{Main Results} \label{main_theorem}

Now we are ready to present our main convergence results in two main theorems  for  Algorithm \ref{alg_inexact_rtr_arc}. 
Different from \cite{sun2019escaping} which explores the escape rate from a saddle point to a local minimum, we study the convergence rate from a random point.
\begin{thm}[Convergence Complexity of  Algorithm \ref{alg_inexact_rtr_arc}] Consider $0<\epsilon_g,\epsilon_H<1$ and $\delta_g,\delta_H>0$. Suppose that Assumptions \ref{assu_second_order_retr}, \ref{ass_restricted_lipschitz_hessian}, \ref{assu_hessian_norm_bound} and \ref{assu_restrictions_delta_g_h} hold and the solution of the subproblem  in Eq. (\ref{eq_sub_problem}) satisfies Assumption \ref{assu_cauchy_eigen_point}. Then, if the inexact gradient $\mathbf{G}_k$ and Hessian $\mathbf{H}_k$ satisfy Condition \ref{cond_approximate_grad_hess_bound}, Algorithm \ref{alg_inexact_rtr_arc} returns an $\left( \epsilon_g,\epsilon_H\right)$-optimal solution in $ \mathcal{O}\left(\max\left(\epsilon_g^{-2},\epsilon_H^{-3}\right)\right)$ iterations.
	\label{theorem2}
\end{thm}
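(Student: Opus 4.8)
The plan is to follow the standard complexity analysis for adaptive cubic regularization, transported to the Riemannian subsampled setting, by partitioning the iterates of Algorithm \ref{alg_inexact_rtr_arc} into successful and unsuccessful ones and bounding each count separately, then summing.

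First I would establish a guaranteed per-iteration decrease of the true objective on successful iterations. By the definition of $\rho_k$ and the acceptance rule $\rho_k\ge\tau$, a successful iteration satisfies $f(\mathbf{x}_k)-f(\mathbf{x}_{k+1})\ge\tau\left(\hat{m}_k(\mathbf{0}_{\mathbf{x}_k})-\hat{m}_k(\bm\eta_k^*)\right)$, where the decrease is genuinely in the pullback of $f$. I would then invoke the Cauchy and eigenstep inequalities of Assumption \ref{assu_cauchy_eigen_point} (Eqs. (\ref{eq_cauchy_point})--(\ref{eq_eigen_point})) to lower-bound the model decrease by $\max(a_k,b_k,c_k)$. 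Before the stopping test fires, at least one optimality condition is violated: either $\|\mathbf{G}_k\|_{\mathbf{x}_k}>\epsilon_g$, in which case I use the Cauchy term $a_k$ together with $\|\mathbf{H}_k\|_{\mathbf{x}_k}\le K_H$ (Assumption \ref{assu_hessian_norm_bound}) and an upper bound on $\sigma_k$ to obtain a decrease of order $\epsilon_g^2$ (since $\epsilon_g<1$ forces $\epsilon_g/K_H<\sqrt{\epsilon_g/\sigma_k}$, making $\epsilon_g^2$ the binding term of the inner $\min$); or $\lambda_{min}(\mathbf{H}_k)<-\epsilon_H$, in which case the eigenstep term $c_k$, through its $\nu^2|\lambda_{min}(\mathbf{H}_k)|^2/\sigma_k^2$ branch, yields a decrease of order $\epsilon_H^3$. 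Hence each successful iteration decreases $f$ by at least a constant multiple of $\min(\epsilon_g^2,\epsilon_H^3)$.

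The crux of the argument, and what I expect to be the main obstacle, is producing a uniform upper bound $\sigma_k\le\sigma_{\max}$. I would argue that once $\sigma_k$ exceeds a threshold determined by $L_H$ (from Assumption \ref{ass_restricted_lipschitz_hessian}) and the tolerances $\delta_g,\delta_H$ (Condition \ref{cond_approximate_grad_hess_bound}), the agreement ratio is forced to satisfy $\rho_k\ge\tau$, so the iteration is successful and $\sigma_{k+1}\le\sigma_k$ by Step (\ref{step_src_0}). This is exactly where the calibrated restrictions on $\delta_g,\delta_H$ in Assumption \ref{assu_restrictions_delta_g_h} are consumed: they are chosen so that the discrepancy between the cubic model $\hat{m}_k$ and the pullback $\hat{f}_k$, controlled through the restricted Lipschitz bound, is dominated by the regularization term $\frac{\sigma_k}{3}\|\bm\eta\|_{\mathbf{x}_k}^3$. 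Combined with the lower bound $\sigma_k\ge\epsilon_\sigma$ enforced in Step (\ref{step_src_0}), this pins $\sigma_k$ into the fixed interval $[\epsilon_\sigma,\sigma_{\max}]$, which is what makes the per-iteration decrease uniform in $k$.

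Given the boundedness of $\sigma_k$, I would close with the standard amortized counting. Taking $f$ bounded below by $f^*$ and summing the per-iteration decrease over successful iterations bounds their number by $(f(\mathbf{x}_0)-f^*)/\left(\tau C\min(\epsilon_g^2,\epsilon_H^3)\right)=\mathcal{O}\left(\max(\epsilon_g^{-2},\epsilon_H^{-3})\right)$. Because every unsuccessful iteration multiplies $\sigma_k$ by $\gamma>1$ while every successful one divides it by at most $\gamma$, telescoping these multiplicative updates against the fixed range $[\epsilon_\sigma,\sigma_{\max}]$ bounds the number of unsuccessful iterations by a constant multiple of the number of successful ones plus a term logarithmic in $\sigma_0/\epsilon_\sigma$. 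Adding the two counts gives the claimed $\mathcal{O}\left(\max(\epsilon_g^{-2},\epsilon_H^{-3})\right)$ total iteration complexity. Finally, I would verify that the subsampled stopping test, via Condition \ref{cond_approximate_grad_hess_bound} and the restrictions of Assumption \ref{assu_restrictions_delta_g_h}, transfers to genuine $(\epsilon_g,\epsilon_H)$-optimality of the returned $\mathbf{x}_k$ for the true Riemannian gradient and Hessian, up to the absorbed constants.
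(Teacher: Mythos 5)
Your proposal takes essentially the same route as the paper's own proof: it establishes the per-iteration model decrease of order $\epsilon_g^2$ (Cauchy branch) or $\epsilon_H^3$ (eigenstep branch) from Assumption \ref{assu_cauchy_eigen_point}, proves the uniform bound $\sigma_k\le\max(\sigma_0,2\gamma L_H)$ by showing that $\sigma_k\ge L_H$ forces a successful iteration (which is exactly where the paper consumes Assumption \ref{assu_restrictions_delta_g_h} and Condition \ref{cond_approximate_grad_hess_bound} in its Lemmas on successful steps), and then counts successful iterations by telescoping $f$ and unsuccessful ones by telescoping the multiplicative $\sigma$-updates against the fixed range. This matches the paper's argument in Appendices \ref{app_theorem2A} and \ref{app_theorem2B}, so your plan is correct.
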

The proof along with the supporting lemmas is provided in Appendices \ref{app_theorem2A} and \ref{app_theorem2B}.
When the Hessian at the solution is close to positive semi-definite which indicates a small $\epsilon_H$, the Inexact Sub-RN-CR finds an  $\left( \epsilon_g,\epsilon_H\right)$-optimal solution in fewer iterations than the Inexact RTR, i.e., $\mathcal{O}\left(\max\left(\epsilon_g^{-2},\epsilon_H^{-3}\right)\right)$ iterations for the Inexact Sub-RN-CR as compared to  $ \mathcal{O}\left(\max\left(\epsilon_g^{-2}\epsilon_H^{-1},\epsilon_H^{-3}\right)\right)$ for the Inexact RTR \cite{kasai2018inexact}. 
Such a  result  is satisfactory. 
Combining Theorems \ref{theorem1} and \ref{theorem2}, it leads to the following corollary.
\begin{cor}
	Consider $0<\epsilon_g,\epsilon_H<1$ and $\delta_g,\delta_H>0$. Suppose that Assumptions \ref{assu_second_order_retr}, \ref{ass_restricted_lipschitz_hessian}, \ref{assu_hessian_norm_bound} and \ref{assu_restrictions_delta_g_h} hold and the solution of the subproblem  in Eq. (\ref{eq_sub_problem}) satisfies Assumption \ref{assu_cauchy_eigen_point}. For any $0<\delta<1$,
	suppose Eqs. (\ref{eq_restriction_sg}) and (\ref{eq_restriction_sH}) are satisfied at each iteration.
	Then, Algorithm \ref{alg_inexact_rtr_arc} returns an $\left( \epsilon_g,\epsilon_H\right)$-optimal solution in $ \mathcal{O}\left(\max\left(\epsilon_g^{-2},\epsilon_H^{-3}\right)\right)$ iterations  with a probability at least $p=(1-\delta)^{\mathcal{O}\left(\max\left(\epsilon_g^{-2},\epsilon_H^{-3}\right)\right)}$.
	\label{coro_1}
\end{cor}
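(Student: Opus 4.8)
The plan is to obtain this corollary as a direct composition of Theorems \ref{theorem1} and \ref{theorem2}, since each supplies exactly one half of the statement. Theorem \ref{theorem2} already establishes that \emph{whenever} the inexact gradient and Hessian satisfy Condition \ref{cond_approximate_grad_hess_bound}, together with Assumptions \ref{assu_second_order_retr}, \ref{ass_restricted_lipschitz_hessian}, \ref{assu_hessian_norm_bound}, \ref{assu_restrictions_delta_g_h} and \ref{assu_cauchy_eigen_point}, Algorithm \ref{alg_inexact_rtr_arc} returns an $(\epsilon_g,\epsilon_H)$-optimal solution in a \emph{deterministic} number of iterations $N=\mathcal{O}\!\left(\max\left(\epsilon_g^{-2},\epsilon_H^{-3}\right)\right)$. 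Theorem \ref{theorem1} in turn guarantees that at a single iteration Condition \ref{cond_approximate_grad_hess_bound} holds with probability at least $(1-\delta)$ as soon as the sampling sizes obey Eqs.\ (\ref{eq_restriction_sg}) and (\ref{eq_restriction_sH}). The only genuine work is therefore to upgrade the \emph{per-iteration} guarantee of Theorem \ref{theorem1} into one that holds \emph{simultaneously} over all $N$ iterations, which is precisely the hypothesis consumed by Theorem \ref{theorem2}.

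To do this I would first introduce, for each iteration $k$, the event $A_k$ that Condition \ref{cond_approximate_grad_hess_bound} holds at the current iterate $\mathbf{x}_k$ under the freshly drawn index sets $\mathcal{S}_g,\mathcal{S}_H$. The crucial observation is that the thresholds in Eqs.\ (\ref{eq_restriction_sg}) and (\ref{eq_restriction_sH}) involve only the uniform suprema $K_{g_{\max}}$ and $K_{H_{\max}}$ and not the particular point $\mathbf{x}_k$; hence the conclusion $\Pr(A_k)\ge 1-\delta$ of Theorem \ref{theorem1} is valid at \emph{every} point of $\mathcal{M}$, and so it survives conditioning on the entire random history $\mathcal{F}_{k-1}$ that generated $\mathbf{x}_k$, giving $\Pr(A_k\mid\mathcal{F}_{k-1})\ge 1-\delta$. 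Applying the chain rule to the intersection over all iterations then yields
\begin{equation}
	\Pr\!\left(\bigcap_{k=1}^{N} A_k\right)=\prod_{k=1}^{N}\Pr\!\left(A_k\,\big|\,\bigcap_{j<k}A_j\right)\ge (1-\delta)^{N}=(1-\delta)^{\mathcal{O}\left(\max\left(\epsilon_g^{-2},\epsilon_H^{-3}\right)\right)},
\end{equation}
which is exactly the claimed probability $p$. On the good event $\bigcap_{k=1}^{N}A_k$, Condition \ref{cond_approximate_grad_hess_bound} is in force at every iteration, so all hypotheses of Theorem \ref{theorem2} are met and the deterministic bound $N=\mathcal{O}\!\left(\max\left(\epsilon_g^{-2},\epsilon_H^{-3}\right)\right)$ applies.

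The hard part will be the conditioning argument rather than any calculation: because $\mathbf{x}_k$ is itself random and depends on the subsamples drawn in earlier iterations, the events $A_k$ are \emph{a priori} dependent, so one cannot naively multiply the per-iteration probabilities. The step I would spell out most carefully is therefore that the uniformity of Theorem \ref{theorem1} over $\mathbf{x}\in\mathcal{M}$ is what makes $\Pr(A_k\mid\mathcal{F}_{k-1})\ge 1-\delta$ hold irrespective of the trajectory, thereby legitimising the telescoping product above. A minor secondary point to keep consistent is that $N$ is the deterministic iteration count delivered by Theorem \ref{theorem2} only \emph{after} Condition \ref{cond_approximate_grad_hess_bound} is known to hold throughout; stating the conclusion on the event $\bigcap_{k=1}^{N}A_k$ keeps the iteration count and the probability statement mutually compatible and completes the proof.
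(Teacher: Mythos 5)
Your proposal is correct and follows essentially the same route as the paper's proof: it composes the per-iteration guarantee of Theorem \ref{theorem1} with the deterministic iteration bound of Theorem \ref{theorem2} and takes a product of $(1-\delta)$ factors over the $\mathcal{O}\left(\max\left(\epsilon_g^{-2},\epsilon_H^{-3}\right)\right)$ iterations. If anything, your filtration/chain-rule treatment of the dependence between iterations is more careful than the paper's argument, which simply writes $\Pr(E)=\prod_{i=1}^{T}\Pr(E_i)$ on the grounds that Condition \ref{cond_approximate_grad_hess_bound} ``can be independently achieved'' at each iteration by fresh subsampling.
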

\noindent
The proof is provided in Appendix \ref{app_theorem2C}.
We use an example to illustrate the effect of $\delta$ on the approximate gradient sample size $|\mathcal{S}_g|$. Suppose $\epsilon^{-2}_g > \epsilon^{-3}_H$, then $p=(1-\delta)^{\mathcal{O}\left(\epsilon_g^{-2}\right)}$. In addition, when $\delta= \mathcal{O}\left(\epsilon^2_g/10\right)$, $p\approx0.9$. Replacing $\delta$ with $ \mathcal{O}\left(\epsilon^2_g/10\right)$ in Eqs. (\ref{eq_restriction_sg}) and (\ref{eq_restriction_sH}), it can be obtained that the lower bound of $|\mathcal{S}_g|$ is proportional to $\ln\left(10\epsilon^{-2}_g\right)$.

Combining Assumption \ref{assu_cg} and the stopping condition in Eq. (\ref{eq_stepsize_stop_additional}) for the inexact solver,  a stronger convergence result can be obtained for Algorithm \ref{alg_inexact_rtr_arc}, which is presented in the following theorem and corollary.
\begin{thm}[\label{thm_opt_complex}Optimal Convergence Complexity of Algorithm \ref{alg_inexact_rtr_arc}] \hfill
	Consider $0<\epsilon_g,\epsilon_H<1$ and $\delta_g,\delta_H>0$. Suppose that  Assumptions \ref{assu_second_order_retr}, \ref{ass_restricted_lipschitz_hessian}, \ref{assu_hessian_norm_bound} and \ref{assu_restrictions_delta_g_h} hold and the solution of the subproblem satisfies  Assumptions \ref{assu_cauchy_eigen_point}, \ref{assu_g} and \ref{assu_cg}. Then, if the inexact gradient $\mathbf{G}_k$ and Hessian $\mathbf{H}_k$ satisfy Condition \ref{cond_approximate_grad_hess_bound}  and  $\delta_g\le\delta_H\le\kappa_\theta\epsilon_g$, Algorithm \ref{alg_inexact_rtr_arc} returns an $\left( \epsilon_g,\epsilon_H\right)$-optimal solution in $ \mathcal{O}\left(\max\left(\epsilon_g^{-\frac{3}{2}},\epsilon_H^{-3}\right)\right)$ iterations. 
	\label{theorem3}
\end{thm}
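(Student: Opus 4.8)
The plan is to bound the iteration count by the total achievable decrease in $f$, exactly as in the proof of Theorem~\ref{theorem2}, but to sharpen the part of the argument that counts the iterations driven by the gradient condition. I would partition the iterations into successful ones ($\rho_k\ge\tau$) and unsuccessful ones, and, since $f(\mathbf{x}_k)$ is non-increasing and $f$ is bounded below, bound the number of successful iterations by (total decrease)$/$(per-iteration decrease). Assumption~\ref{assu_g} (sub-model gradient norm) and Assumption~\ref{assu_cg} (approximate global minimizer) are precisely what let me replace the weak Cauchy-type decrease $\Omega(\epsilon_g^2)$ used for Theorem~\ref{theorem2} by a stronger $\Omega(\epsilon_g^{3/2})$ decrease, which upgrades the exponent from $2$ to $3/2$.

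First I would establish the per-successful-iteration model decrease. Substituting the first relation of Assumption~\ref{assu_cg}, $\langle\mathbf{G}_k,\bm\eta_k^*\rangle_{\mathbf{x}_k}=-\langle\bm\eta_k^*,\mathbf{H}_k[\bm\eta_k^*]\rangle_{\mathbf{x}_k}-\sigma_k\|\bm\eta_k^*\|_{\mathbf{x}_k}^3$, into $\hat{m}_k(\bm\eta_k^*)$ and then using the second relation $\langle\bm\eta_k^*,\mathbf{H}_k[\bm\eta_k^*]\rangle_{\mathbf{x}_k}\ge-\sigma_k\|\bm\eta_k^*\|_{\mathbf{x}_k}^3$ to bound $-\tfrac12\langle\bm\eta_k^*,\mathbf{H}_k[\bm\eta_k^*]\rangle_{\mathbf{x}_k}\le\tfrac{\sigma_k}{2}\|\bm\eta_k^*\|_{\mathbf{x}_k}^3$ yields the standard cubic estimate $\hat{m}_k(\mathbf{0}_{\mathbf{x}_k})-\hat{m}_k(\bm\eta_k^*)\ge\tfrac{\sigma_k}{6}\|\bm\eta_k^*\|_{\mathbf{x}_k}^3$. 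On a successful iteration this transfers to $f(\mathbf{x}_k)-f(\mathbf{x}_{k+1})\ge\tfrac{\tau\sigma_k}{6}\|\bm\eta_k^*\|_{\mathbf{x}_k}^3$. In parallel I would reuse the $\sigma$-boundedness argument of Theorem~\ref{theorem2}: Assumption~\ref{ass_restricted_lipschitz_hessian} forces every iteration with $\sigma_k$ above a threshold depending on $L_H$ to be successful, so $\epsilon_\sigma\le\sigma_k\le\sigma_{\max}$ for a fixed $\sigma_{\max}$, which also bounds the number of unsuccessful iterations proportionally to the successful ones.

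The crux is a ``long-step'' lower bound on $\|\bm\eta_k^*\|_{\mathbf{x}_k}$ in terms of the gradient at the new iterate. I would first record the auxiliary bound $\|\mathbf{G}_k\|_{\mathbf{x}_k}\le C'\|\bm\eta_k^*\|_{\mathbf{x}_k}$ for $\|\bm\eta_k^*\|_{\mathbf{x}_k}\le1$: rearranging $\mathbf{G}_k=\nabla_{\bm\eta}\hat{m}_k(\bm\eta_k^*)-\mathbf{H}_k[\bm\eta_k^*]-\sigma_k\|\bm\eta_k^*\|_{\mathbf{x}_k}\bm\eta_k^*$ and applying Assumption~\ref{assu_g}, the bound $\|\mathbf{H}_k\|_{\mathbf{x}_k}\le K_H$ from Assumption~\ref{assu_hessian_norm_bound}, and $\sigma_k\le\sigma_{\max}$ gives $\|\mathbf{G}_k\|_{\mathbf{x}_k}(1-\kappa_\theta)\le(K_H+\sigma_{\max})\|\bm\eta_k^*\|_{\mathbf{x}_k}$; this turns the otherwise linear residual $\kappa_\theta\|\bm\eta_k^*\|_{\mathbf{x}_k}\|\mathbf{G}_k\|_{\mathbf{x}_k}$ into a genuinely quadratic $\mathcal{O}(\|\bm\eta_k^*\|_{\mathbf{x}_k}^2)$ term. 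Next I would relate ${\rm grad}f(\mathbf{x}_{k+1})$ to ${\rm grad}\hat{f}_k(\bm\eta_k^*)$ through the second-order retraction (Assumption~\ref{assu_second_order_retr}) and apply Eq.~(\ref{eq_restricted_lipschitz_hessian_2}) to compare $\mathcal{P}^{-1}({\rm grad}\hat{f}_k(\bm\eta_k^*))$ with ${\rm grad}f(\mathbf{x}_k)+\nabla^2\hat{f}_k(\mathbf{0}_{\mathbf{x}_k})[\bm\eta_k^*]$ up to $\tfrac{L_H}{2}\|\bm\eta_k^*\|_{\mathbf{x}_k}^2$, expand $\nabla_{\bm\eta}\hat{m}_k(\bm\eta_k^*)=\mathbf{G}_k+\mathbf{H}_k[\bm\eta_k^*]+\sigma_k\|\bm\eta_k^*\|_{\mathbf{x}_k}\bm\eta_k^*$, and replace the exact quantities by $\mathbf{G}_k,\mathbf{H}_k$ at the cost of $\delta_g+\delta_H\|\bm\eta_k^*\|_{\mathbf{x}_k}$ via Condition~\ref{cond_approximate_grad_hess_bound}. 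Collecting terms gives $\|{\rm grad}f(\mathbf{x}_{k+1})\|_{\mathbf{x}_{k+1}}\le C\|\bm\eta_k^*\|_{\mathbf{x}_k}^2+\delta_H\|\bm\eta_k^*\|_{\mathbf{x}_k}+\delta_g$, and the hypotheses $\delta_g\le\delta_H\le\kappa_\theta\epsilon_g$ collapse the last two terms into $2\kappa_\theta\epsilon_g$. Hence, while the gradient condition fails, $\epsilon_g<\|{\rm grad}f(\mathbf{x}_{k+1})\|_{\mathbf{x}_{k+1}}$ forces $(1-2\kappa_\theta)\epsilon_g<C\|\bm\eta_k^*\|_{\mathbf{x}_k}^2$, i.e.\ $\|\bm\eta_k^*\|_{\mathbf{x}_k}\ge c\sqrt{\epsilon_g}$ (using $\kappa_\theta\le1/6$), so each such successful iteration decreases $f$ by at least $\tfrac{\tau\epsilon_\sigma}{6}(c\sqrt{\epsilon_g})^3=\Omega(\epsilon_g^{3/2})$, bounding the gradient-driven successful iterations by $\mathcal{O}(\epsilon_g^{-3/2})$.

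For the Hessian condition I would reuse the eigenstep estimate of Assumption~\ref{assu_cauchy_eigen_point}: while $\lambda_{min}(\mathbf{H}_k)<-\epsilon_H$, the quantity $c_k\ge\tfrac{\nu^3|\lambda_{min}(\mathbf{H}_k)|^3}{6\sigma_{\max}^2}=\Omega(\epsilon_H^3)$ gives a decrease of order $\epsilon_H^3$ per such successful iteration, hence $\mathcal{O}(\epsilon_H^{-3})$ of them. Summing the two counts and folding in the proportional number of unsuccessful iterations yields the claimed $\mathcal{O}(\max(\epsilon_g^{-3/2},\epsilon_H^{-3}))$ bound. I expect the main obstacle to be the long-step estimate of the third paragraph: carrying Eq.~(\ref{eq_restricted_lipschitz_hessian_2}) through the parallel transport and pullback-gradient identification in the Riemannian setting, and simultaneously absorbing the subsampling errors $\delta_g,\delta_H$ together with the $\kappa_\theta\|\mathbf{G}_k\|_{\mathbf{x}_k}$ residual from Assumption~\ref{assu_g} without spoiling the $\|\bm\eta_k^*\|_{\mathbf{x}_k}^2$ scaling, since it is exactly this quadratic scaling that produces the improved $\epsilon_g^{-3/2}$ exponent.
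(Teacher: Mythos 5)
Your proposal is correct and follows the paper's overall architecture for Theorem \ref{theorem3}: the same split of successful iterations into gradient-driven and Hessian-driven sets, the same sufficient-decrease bound $\hat{m}_k(\mathbf{0}_{\mathbf{x}_k})-\hat{m}_k(\bm\eta_k^*)\ge\frac{\sigma_k}{6}\|\bm\eta_k^*\|_{\mathbf{x}_k}^3$ obtained from Assumption \ref{assu_cg} (the paper imports this as Lemma 3.3 of \cite{cartis2011adaptive}), the same $\mathcal{O}(\epsilon_H^{-3})$ eigenstep count, and the same $\sigma$-adaptivity argument bounding unsuccessful iterations by successful ones plus $\log(\sigma_b/\sigma_0)/\log\gamma$. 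Where you genuinely deviate is inside the long-step lemma (the paper's Lemma \ref{lemma_step_lower_bound}), which is indeed the crux that upgrades $\epsilon_g^{-2}$ to $\epsilon_g^{-3/2}$. The paper lower-bounds $\|\nabla\hat{m}_k(\bm\eta_k)\|_{\mathbf{x}_k}$ by $\|\mathcal{P}_{\bm\eta_k}^{-1}{\rm grad}\hat{f}_k(\bm\eta_k)\|_{\mathbf{x}_k}-\delta_g-\delta_H\|\bm\eta_k\|_{\mathbf{x}_k}-(\frac{L_H}{2}+\sigma_b)\|\bm\eta_k\|_{\mathbf{x}_k}^2$ and then controls $\|\mathbf{G}_k\|_{\mathbf{x}_k}$ via an external Lipschitz-type bound $\|\mathcal{P}_{\bm\eta_k}^{-1}{\rm grad}\hat{f}_k(\bm\eta_k)-{\rm grad}f(\mathbf{x}_k)\|_{\mathbf{x}_k}\le L_l\|\bm\eta_k\|_{\mathbf{x}_k}$ (Lemma 3.8 of \cite{kasai2018riemannian}), so the constant $L_l$ propagates into $\kappa_s$ and the final bound; it then resolves the resulting quadratic inequality in two cases. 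You instead derive the auxiliary bound $(1-\kappa_\theta)\|\mathbf{G}_k\|_{\mathbf{x}_k}\le(K_H+\sigma_b)\|\bm\eta_k^*\|_{\mathbf{x}_k}$ directly from the model-gradient identity together with Assumptions \ref{assu_g} and \ref{assu_hessian_norm_bound}, which turns the residual $\kappa_\theta\|\bm\eta_k^*\|_{\mathbf{x}_k}\|\mathbf{G}_k\|_{\mathbf{x}_k}$ into a genuinely quadratic term without importing $L_l$; this is cleaner, stays entirely within the paper's stated assumptions, and merely trades $L_l$ for a (harmless) dependence on $K_H$ in the constant. Two details to make explicit when writing this up: (i) your auxiliary bound and the absorption of $\delta_g+\delta_H\|\bm\eta_k^*\|_{\mathbf{x}_k}$ into $2\kappa_\theta\epsilon_g$ are valid only when $\|\bm\eta_k^*\|_{\mathbf{x}_k}\le 1$; the complementary case is trivial because $\epsilon_g<1$ gives $\|\bm\eta_k^*\|_{\mathbf{x}_k}>1>\sqrt{\epsilon_g}$, but it must be stated (the paper handles the two cases separately); (ii) the algorithm's continuation test is on the inexact gradient $\mathbf{G}_{k+1}$, not on ${\rm grad}f(\mathbf{x}_{k+1})$, so you should pass through $\|{\rm grad}f(\mathbf{x}_{k+1})\|_{\mathbf{x}_{k+1}}\ge\|\mathbf{G}_{k+1}\|_{\mathbf{x}_{k+1}}-\delta_g\ge(1-\kappa_\theta)\epsilon_g$, which only changes your factor $1-2\kappa_\theta$ to $1-3\kappa_\theta$ and remains positive since $\kappa_\theta\le 1/6$.
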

\begin{cor}
	Consider $0<\epsilon_g,\epsilon_H<1$ and $\delta_g,\delta_H>0$. Suppose that  Assumptions \ref{assu_second_order_retr}, \ref{ass_restricted_lipschitz_hessian}, \ref{assu_hessian_norm_bound} and \ref{assu_restrictions_delta_g_h} hold, and the solution of the subproblem satisfies Assumptions \ref{assu_cauchy_eigen_point}, \ref{assu_g} and \ref{assu_cg}.  For any $0<\delta<1$,
	suppose Eqs. (\ref{eq_restriction_sg}) and (\ref{eq_restriction_sH}) are satisfied at each iteration. 
	Then, Algorithm \ref{alg_inexact_rtr_arc} returns an $\left( \epsilon_g,\epsilon_H\right)$-optimal solution in $ \mathcal{O}\left(\max\left(\epsilon_g^{-\frac{3}{2}},\epsilon_H^{-3}\right)\right)$ iterations with a probability at least $p=(1-\delta)^{\mathcal{O}\left(\max\left(\epsilon_g^{-\frac{3}{2}},\epsilon_H^{-3}\right)\right)}$.
	\label{coro_2}
\end{cor}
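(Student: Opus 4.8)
The plan is to bridge the per-iteration probabilistic sampling guarantee of Theorem \ref{theorem1} with the deterministic iteration bound of Theorem \ref{thm_opt_complex}, following the same template used to derive Corollary \ref{coro_1} from Theorems \ref{theorem1} and \ref{theorem2}, but substituting the sharpened complexity $\mathcal{O}\left(\max\left(\epsilon_g^{-3/2},\epsilon_H^{-3}\right)\right)$ for $\mathcal{O}\left(\max\left(\epsilon_g^{-2},\epsilon_H^{-3}\right)\right)$.

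First I would note that Theorem \ref{thm_opt_complex} is a conditional, deterministic statement: whenever Condition \ref{cond_approximate_grad_hess_bound} holds at \emph{every} iteration (together with Assumptions \ref{assu_second_order_retr}, \ref{ass_restricted_lipschitz_hessian}, \ref{assu_hessian_norm_bound}, \ref{assu_restrictions_delta_g_h}, the subproblem Assumptions \ref{assu_cauchy_eigen_point}, \ref{assu_g}, \ref{assu_cg}, and the requirement $\delta_g\le\delta_H\le\kappa_\theta\epsilon_g$), the algorithm is guaranteed to reach an $(\epsilon_g,\epsilon_H)$-optimal point within a fixed budget of $N = \mathcal{O}\left(\max\left(\epsilon_g^{-3/2},\epsilon_H^{-3}\right)\right)$ iterations. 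Hence it suffices to lower-bound the probability that Condition \ref{cond_approximate_grad_hess_bound} holds throughout those $N$ iterations.

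Next I would apply Theorem \ref{theorem1} iteration-by-iteration: since Eqs. (\ref{eq_restriction_sg}) and (\ref{eq_restriction_sH}) are assumed to hold at each iteration, Condition \ref{cond_approximate_grad_hess_bound} is satisfied with probability at least $(1-\delta)$ at any single iteration. Because the index sets $\mathcal{S}_g$ and $\mathcal{S}_H$ are resampled independently at the start of each iteration (Step 2 of Algorithm \ref{alg_inexact_rtr_arc}), these per-iteration events are independent, so the probability that Condition \ref{cond_approximate_grad_hess_bound} holds simultaneously across all $N$ iterations is at least $(1-\delta)^{N}=(1-\delta)^{\mathcal{O}\left(\max\left(\epsilon_g^{-3/2},\epsilon_H^{-3}\right)\right)}$. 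On this event Theorem \ref{thm_opt_complex} applies and delivers the $(\epsilon_g,\epsilon_H)$-optimal solution within $N$ iterations, which is exactly the claim.

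The main obstacle is the logical ordering between the (random) number of iterations the algorithm actually performs and the deterministic bound $N$: one must avoid the apparent circularity of using ``the algorithm terminates in $N$ iterations'' to define the very event over which the probability is computed. I would resolve this by fixing $N$ as the worst-case bound furnished by Theorem \ref{thm_opt_complex} and bounding the probability over exactly this prescribed budget, so that the convergence conclusion is drawn only after conditioning on the all-iterations success event. A secondary, minor point is that the clean product form $(1-\delta)^{N}$ genuinely requires independence of the subsamples across iterations as modelled in Step 2; absent independence, one could only invoke a union bound to obtain the weaker guarantee $1-N\delta$.
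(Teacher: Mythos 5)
Your proposal is correct and follows essentially the same route as the paper's own proof: fix the deterministic iteration budget $T = \mathcal{O}\left(\max\left(\epsilon_g^{-\frac{3}{2}},\epsilon_H^{-3}\right)\right)$ guaranteed by Theorem \ref{theorem3}, apply Theorem \ref{theorem1} to get per-iteration probability $(1-\delta)$ for Condition \ref{cond_approximate_grad_hess_bound}, and multiply across iterations using independence of the resampled index sets to obtain $(1-\delta)^{T}$. Your explicit handling of the ordering between the random iteration count and the fixed budget, and your remark on the independence requirement, are slightly more careful than the paper's terse argument but do not change the substance.
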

\noindent
The proof of Theorem \ref{theorem3} along with its supporting lemmas is provided in Appendices \ref{app_theorem3A} and \ref{app_theorem3B}. The proof of Corollary \ref{coro_2} follows Corollary \ref{coro_1} and is provided in Appendix \ref{app_theorem3C}.

\subsection{Computational Complexity Analysis}

We analyse the number of main operations required by the proposed algorithm.
Taking the PCA  task as an example, it optimizes over the Grassmann manifold ${\rm Gr}\left(r,d\right)$. 
Denote by $m$ the number of inexact iterations and $D$ the manifold dimension, i.e., $D=d\times(d-r)$ for the Grassmann manifold. 
Starting from the gradient and Hessian computation, the full case requires $\mathcal{O}(ndr)$  operations for both in the PCA task.
By using the  subsampling technique, these  can be  reduced to $\mathcal{O}(|\mathcal{S}_g|dr)$ and $\mathcal{O}(|\mathcal{S}_H|dr)$ by gradient and Hessian approximation. 
Following an existing setup for cost computation, i.e., Inexact RTR method \cite{kasai2018inexact}, the full function cost evaluation takes  $n$ operations, while the approximate cost evaluation after subsampling becomes $\mathcal{O}(|\mathcal{S}_n|dr)$, where $\mathcal{S}_n$ is the subsampled set of data points used to compute the function cost.
These show that, for large-scale practices with $n\gg \max\left(|\mathcal{S}_g|, |\mathcal{S}_H|,|\mathcal{S}_n| \right)$,  the computational cost reduction gained from the subsampling technique is significant.

For the subproblem solver by Algorithm \ref{alg_non_linear_Lan} or \ref{alg_non_linear_tCG}, the dominant computation within each iteration is the Hessian computation, which as mentioned above requires $\mathcal{O}(|\mathcal{S}_H|dr)$ operations after using the subsampling technique. Taking this into account to analyze Algorithm \ref{alg_inexact_rtr_arc}, its  overall computational complexity  becomes $\mathcal{O}\left(\max \left(\epsilon_g^{-2},   \epsilon_H^{-3} \right)\right)\times \left[\mathcal{O}(n + |\mathcal{S}_g|dr )+
\mathcal{O}\left(m| \mathcal{S}_H|d^2(d-r)r\right)\right]$ based on Theorem \ref{theorem2}, where $\mathcal{O}(n + |\mathcal{S}_g|dr)$ corresponds to the operations for computing the full function cost and the approximate gradient in an outer iteration.  This overall complexity can be simplified to $\mathcal{O}\left(\max \left(\epsilon_g^{-2},   \epsilon_H^{-3}\right)\right)\times \mathcal{O}(n + |\mathcal{S}_g|dr + m|\mathcal{S}_H|d^2(d-r)r)$, where $\mathcal{O}(m|\mathcal{S}_H|d^2(d-r)r)$ is the cost of the subproblem solver by either Algorithm \ref{alg_non_linear_Lan} or Algorithm \ref{alg_non_linear_tCG}.
Algorithm \ref{alg_non_linear_Lan} is guaranteed to return the optimal subproblem solution within at most $m=D=d\times(d-r)$ inner iterations, of which the complexity is at most $\mathcal{O}(|\mathcal{S}_H|d^2(d-r)^2r^2)$. 
Such a polynomial complexity is at least as good as the ST-tCG solver used in the Inexact RTR algorithm.
For Algorithm \ref{alg_non_linear_tCG}, although $m$ is not guaranteed to be bounded and polynomial, we have empirically observed that $m$ is generally smaller than $D$ in practice, presenting a similar complexity to Algorithm \ref{alg_non_linear_Lan}.

\section{Experiments and Result Analysis} \label{sec_experiment}

\begin{figure}[t]
	\centering
	\subfloat[\label{subfig_P_T1}Synthetic dataset P1]{
		\includegraphics[width=0.5\textwidth]{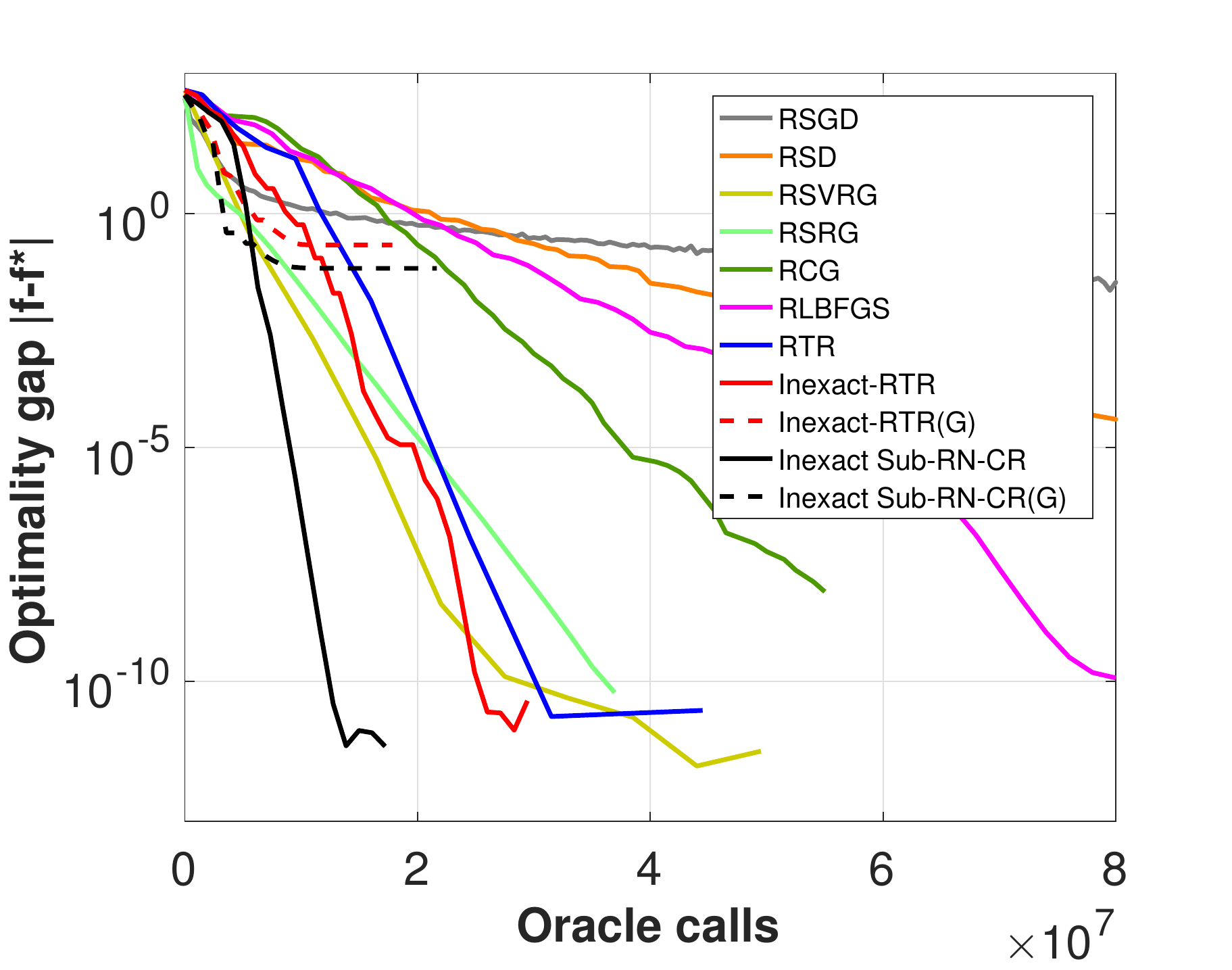}
		\includegraphics[width=0.5\textwidth]{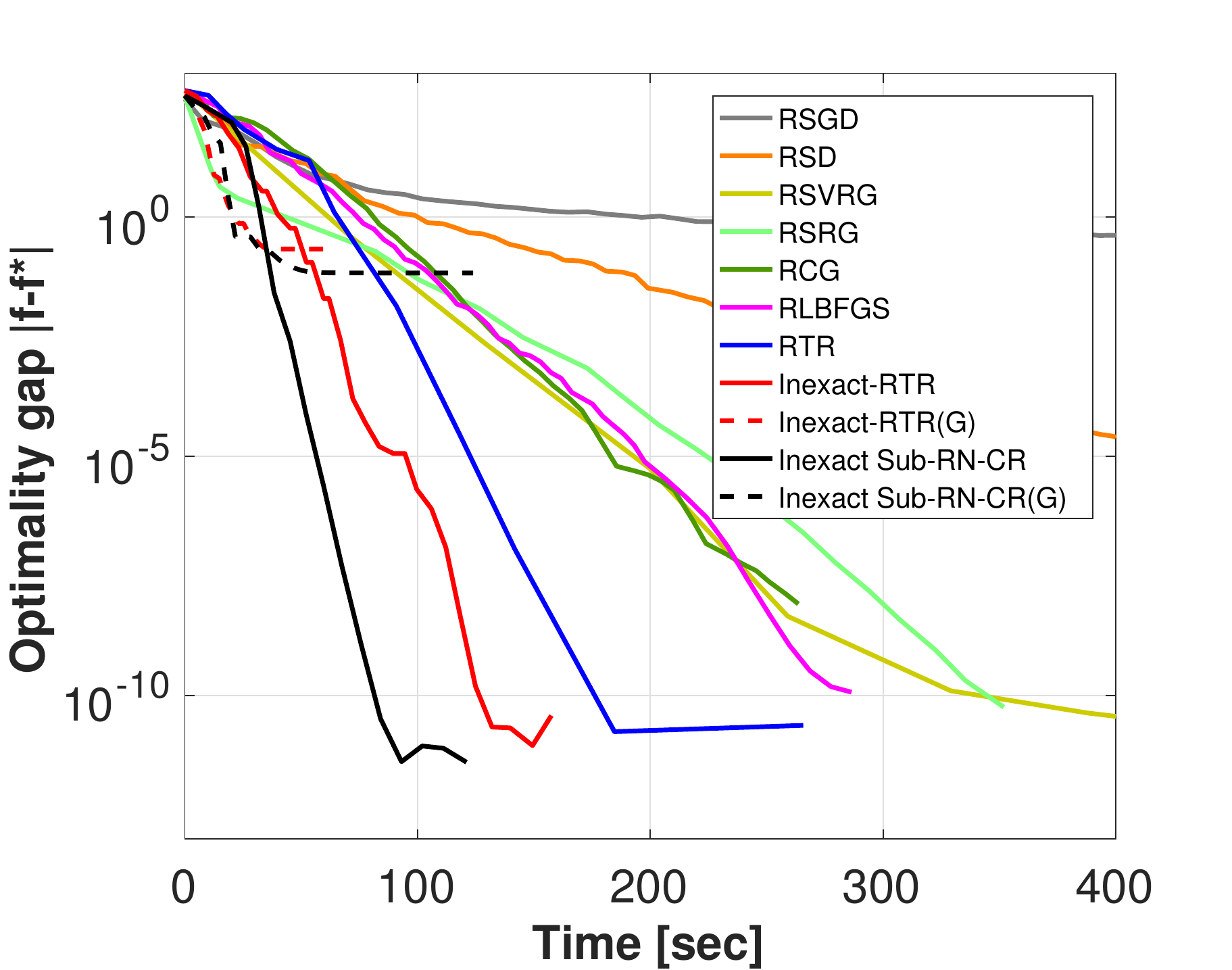}
	}
	\\
	\subfloat[\label{subfig_P_T2} MNIST dataset]{
		\includegraphics[width=0.5\textwidth]{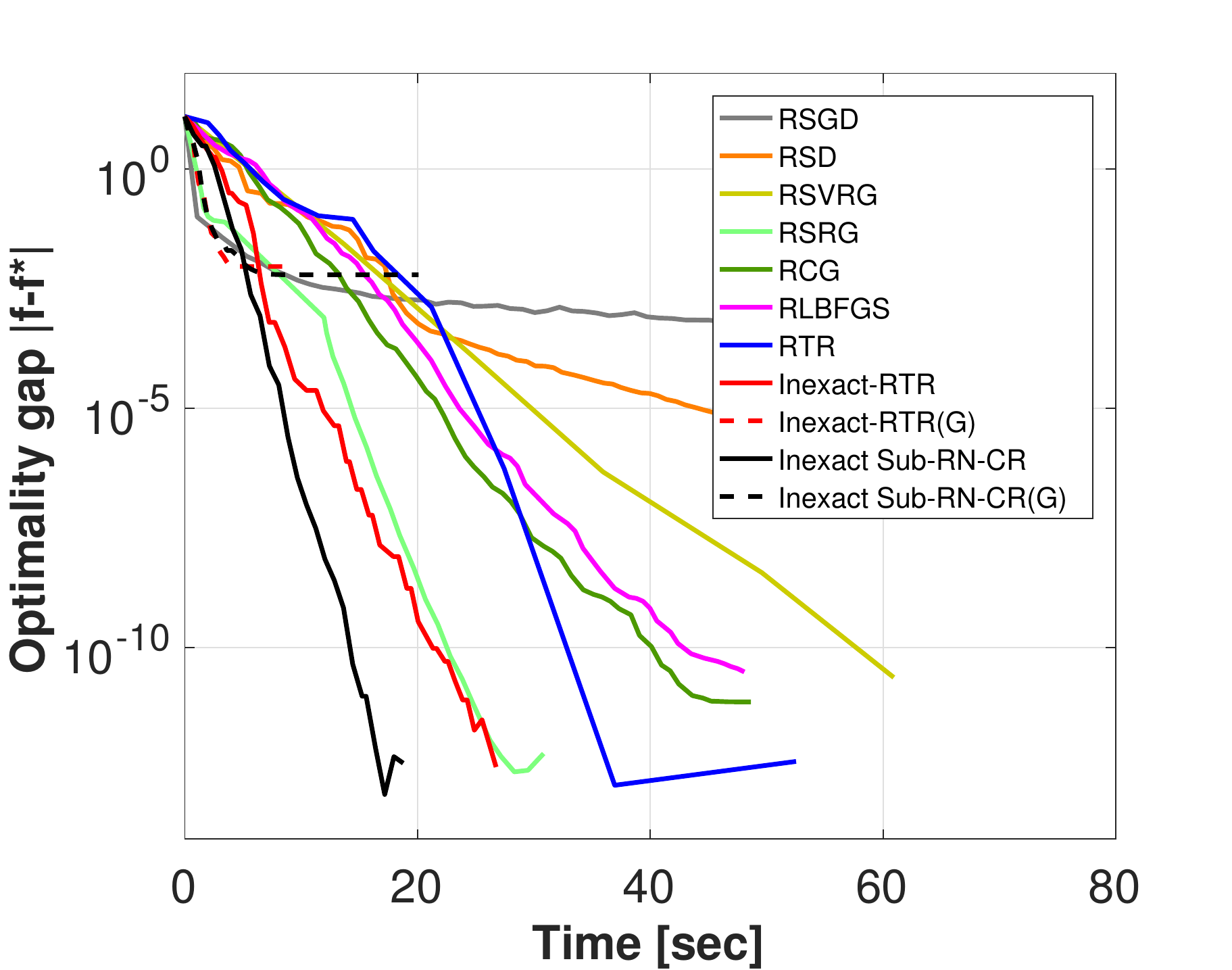}
	}
	\subfloat[\label{subfig_P_T3}Covertype dataset]{
		\includegraphics[width=0.5\textwidth]{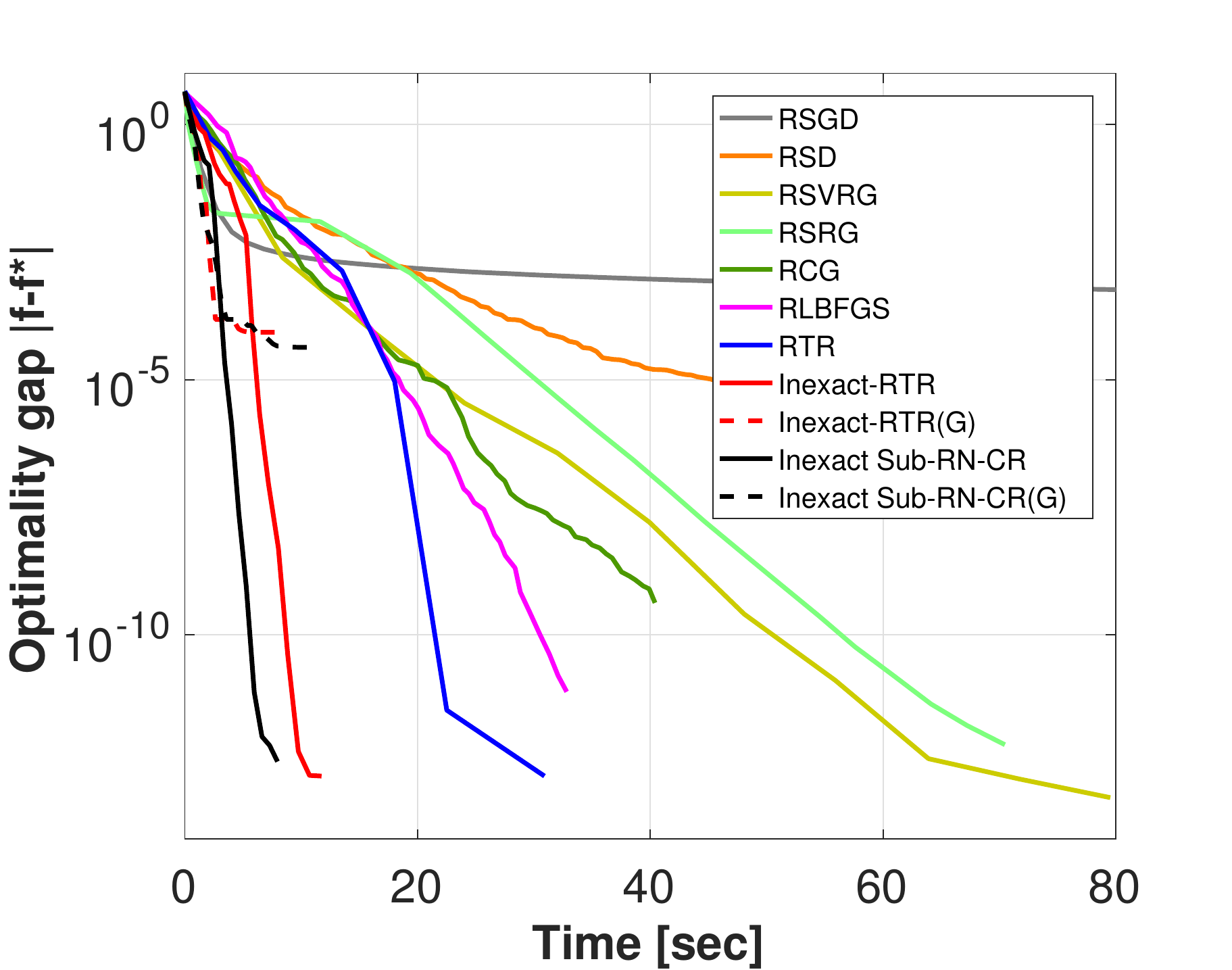}
	}
	\caption{\label{fig_PCA_all} Performance comparison by optimality gap for the PCA task.}
\end{figure}

\begin{figure}[t]
	\centering
	\includegraphics[width=1.0\textwidth]{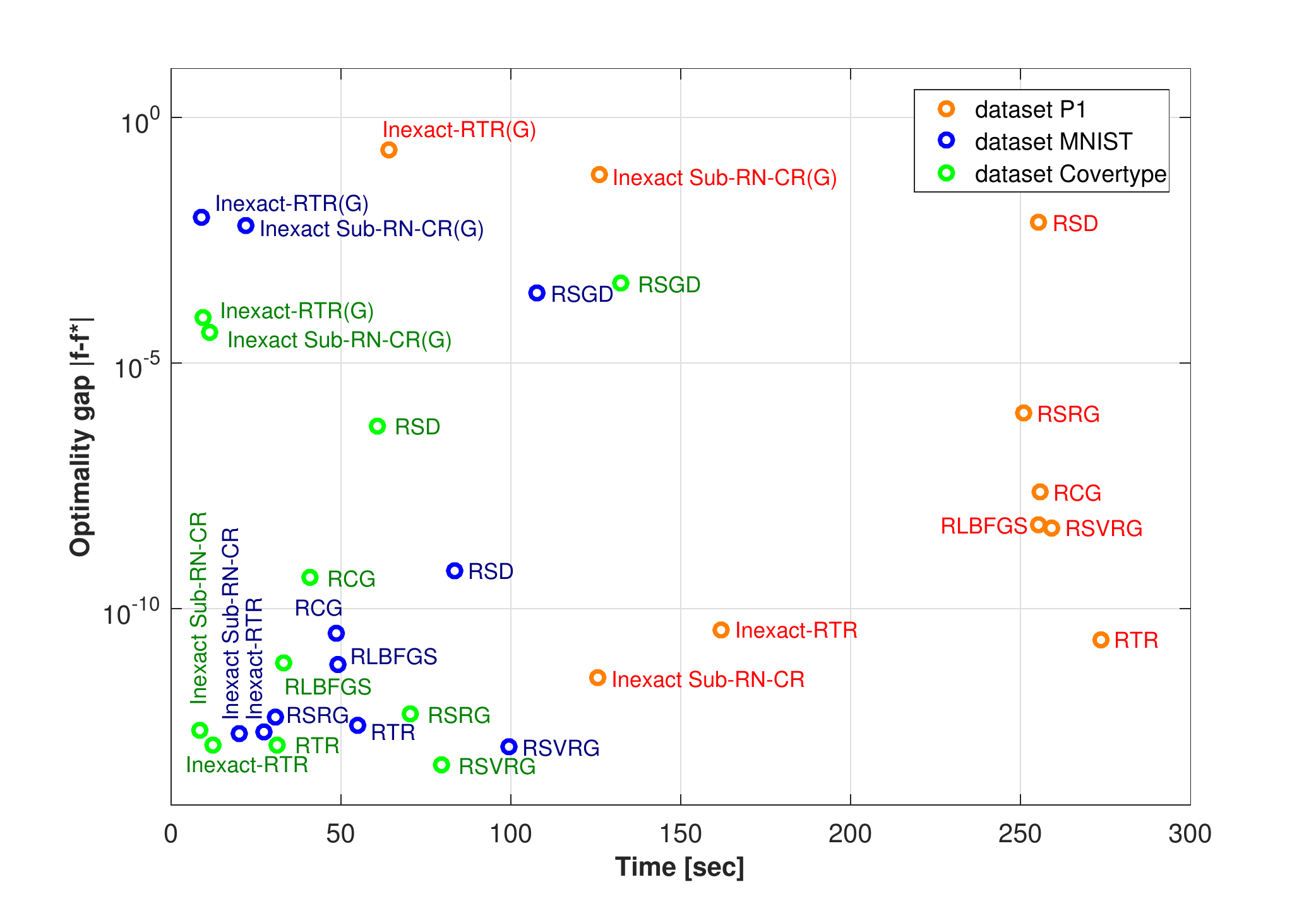}
	\caption{\label{fig_pca_dots} PCA  performance summary for the participating methods.}
\end{figure}

We compare the proposed Inexact Sub-RN-CR  algorithm with state-of-the-art and popular Riemannian  optimization algorithms. 
These  include the  Riemannian stochastic gradient descent (RSGD) \cite{bonnabel2013stochastic}, Riemannian steepest descent (RSD) \cite{absil2009optimization}, Riemannian conjugate gradient (RCG) \cite{absil2009optimization}, Riemannian limited memory BFGS algorithm (RLBFGS) \cite{yuan2016riemannian}, Riemannian stochastic variance-reduced gradient (RSVRG) \cite{zhang2016riemannian}, Riemannian stochastic recursive gradient (RSRG) \cite{kasai2018riemannian}, RTR \cite{absil2007trust}, Inexact RTR \cite{kasai2018inexact} and RTRMC \cite{boumal2011rtrmc}. 
Existing implementations of these algorithms are available in either Manopt \cite{boumal2014manopt} or Pymanopt \cite{townsend2016pymanopt} library. 
They are often used for algorithm comparison in existing literature, e.g., by Inexact RTR \cite{kasai2018inexact}.  
Particularly, RSGD, RSD, RCG, RLBFGS, RTR and RTRMC algorithms have been encapsulated into Manopt, and RSD, RCG and RTR also into Pymanopt. RSVRG, RSRG and Inexact RTR are implemented by \cite{kasai2018inexact} based on Manopt.
We use  existing implementations to reproduce their methods.
Our Inexact Sub-RN-CR implementation  builds on Manopt.

For the competing methods, we follow  the same parameter settings from the existing implementations, including  the batch size (i.e. sampling size), step size (i.e. learning rate) and the inner iteration number to ensure the same results as the reported ones.
For our method, we first keep the common algorithm parameters the same as the competing methods, including $\gamma$, $\tau$, $\epsilon_g$ and $\epsilon_H$. Then, we use a grid search to find appropriate values of $\theta$ and $\kappa_\theta$ for both Algorithms \ref{alg_non_linear_Lan} and \ref{alg_non_linear_tCG}. Specifically, the searching grid for $\theta$ is $(0.02, 0.05, 0.1, 0.2, 0.5, 1)$, and the searching grid for $\kappa_\theta$ is $(0.005, 0.01, 0.02, 0.04, 0.08, 0.16)$. For the parameter $\kappa$ in Algorithm \ref{alg_non_linear_tCG}, we keep it the same as the other conjugate gradient solvers.
The early stopping approach as described in Section \ref{sec:early} is applied to all the compared algorithms.

Regarding the batch setting, which is also the sample size setting for  approximating the gradient and Hessian, we adopt the same value as used in  existing subsampling implementations to  keep consistency.  Also, the same settings are used for both the PCA and matrix completion tasks. 
Specifically, the batch size  $\left|\mathcal{S}_g\right| = n/100$ is used for RSGD, RSVRG and RSRG where $\mathcal{S}_H$ is not considered as these are first-order methods.
For both the Inexact RTR and the proposed Inexact Sub-RN-CR, $\left|\mathcal{S}_H\right|=n/100$ and $\left|\mathcal{S}_g \right| = n$ is used. This is to follow  the existing setting in \cite{kasai2018inexact}  for benchmark purposes, which exploits the approximate Hessian but the full gradient.
In addition to these, we experiment with another batch setting of $\left\{\left|\mathcal{S}_H\right|=n/100,\left|\mathcal{S}_g\right|=n/10\right\}$ for both the Inexact RTR and Inexact Sub-RN-CR. This is flagged by  $(G)$  in the algorithm name, meaning that the algorithm uses the approximate gradient in addition to the approximate Hessian. 
Its purpose  is to evaluate the effect of $\mathcal{S}_g$ in the optimization.

Evaluation is conducted based on two machine learning tasks of PCA and low-rank matrix completion using both synthetic and real-world datasets with $n\gg d\gg 1$. 
Both tasks can be formulated as non-convex optimization problems on the Grassmann manifold Gr$\left(r,d\right)$.
The algorithm performance is evaluated by oracle calls and the run time. 
The former counts the number of function, gradient, and Hessian-vector product computations. 
For instance, Algorithm \ref{alg_inexact_rtr_arc} requires $n+|\mathcal{S}_g|+m|\mathcal{S}_H|$ oracle calls each iteration, where $m$ is the number of iterations of the subproblem solver.
Regarding the user-defined parameters in Algorithm \ref{alg_inexact_rtr_arc}, we  use $\epsilon_\sigma=10^{-18}$. 
Empirical observations suggest that the magnitude of the data entries  affects the optimization in its early stage, and hence these factors are taken into account in the setting of $\sigma_0$.
Let $\mathbf{S}=[s_{ij}]$ denote the input data matrix containing $L$ rows and $H$ columns. We compute $\sigma_0$ by considering the data dimension, also the averaged data magnitude normalized by its standard deviation, given as
\begin{equation}
	\sigma_0= \left(\sum_{i\in[L], j\in[H]} \frac{|s_{ij}|}{LH}\right)^2\left(\frac{dim(M)*d}{\sqrt{\frac{1}{LH}\sum_{i\in[L], j\in[H]}(s_{ij}-\mu_S)^2}}\right)^{\frac{1}{2}},
\end{equation}
where  $\mu_S =  \frac{1}{LH}\sum_{i\in[L], j\in[H]}s_{i,j}$ and $dim(M)$ is the manifold dimension.

Regarding the early stopping setting in Eq. (\ref{eq_early_stopping_1}), $K=5$ is used for both tasks, and we use $\tau_f = 10^{-12}$ for MNIST and $\tau_f = 10^{-10}$ for the remaining datasets in the PCA task. In the matrix completion task, we set $\tau_f = 10^{-10}$ for the synthetic datasets and $\tau_f = 10^{-3}$ for the real-world datasets.   
For the early stopping settings in Eq. (\ref{eq_stepsize_stop_additional}) and in Step (\ref{CG_stopping2}) of Algorithm \ref{alg_non_linear_tCG}, we adopt $\kappa_\theta=0.08$ and $\theta=0.1$.
Between the two subproblem solvers, we observe that Algorithm \ref{alg_non_linear_Lan} by the Lanczos method and  Algorithm \ref{alg_non_linear_tCG} by the conjugate gradient perform similarly. Therefore, we report the main results using  Algorithm \ref{alg_non_linear_Lan}, and provide supplementary results for Algorithm \ref{alg_non_linear_tCG} in a separate Section \ref{exp_cg}.

\begin{figure}[t]
	\centering
	\subfloat[\label{subfig_P_T1_2} P1, no early stopping]{\label{subfig_P1_b}
		\includegraphics[width=0.5\textwidth]{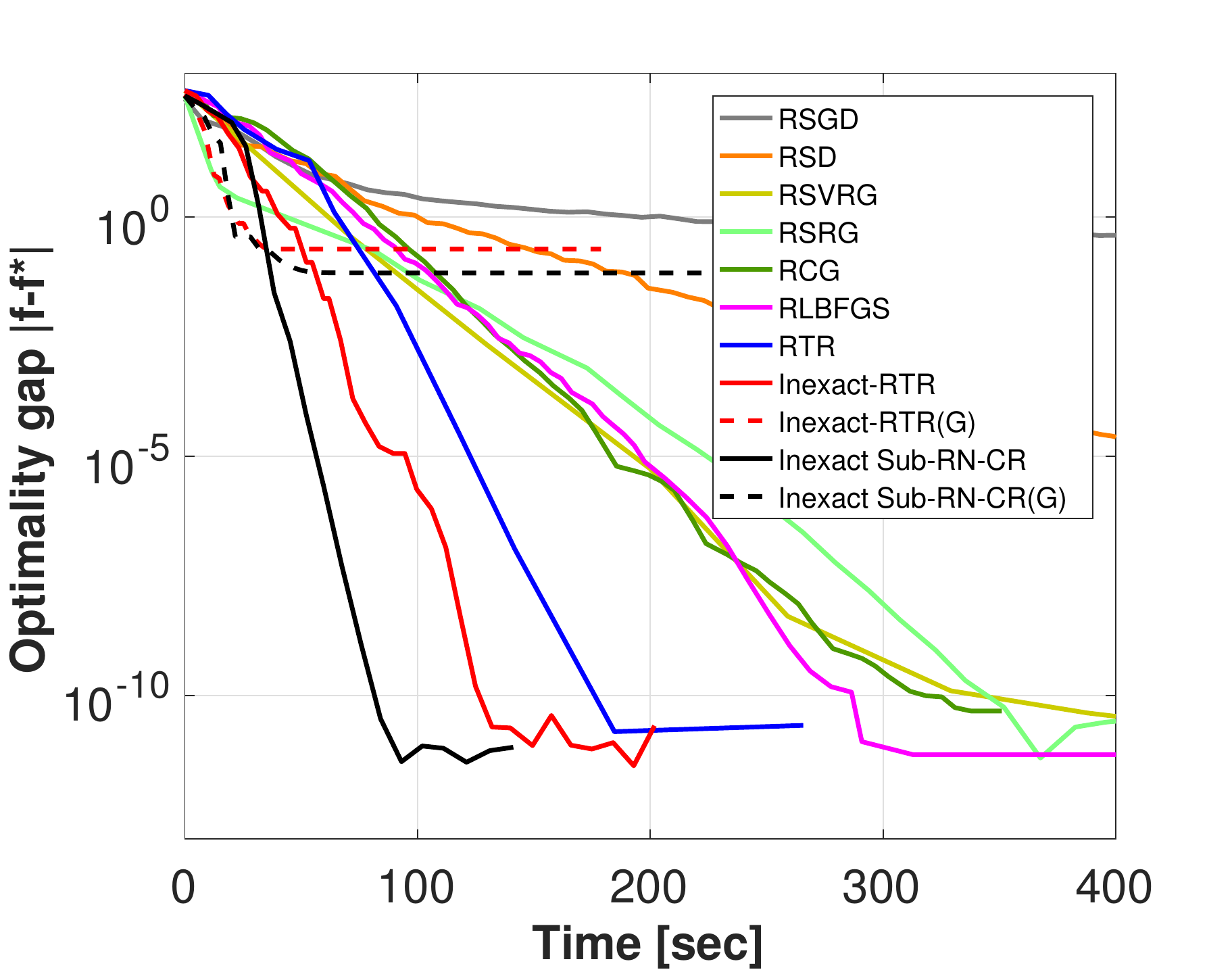}
	}
	\subfloat[\label{subfig_PCA_diff_sg1} P1,  varying $|\mathcal{S}_g|$ settings]{%
		\includegraphics[width=0.5\textwidth]{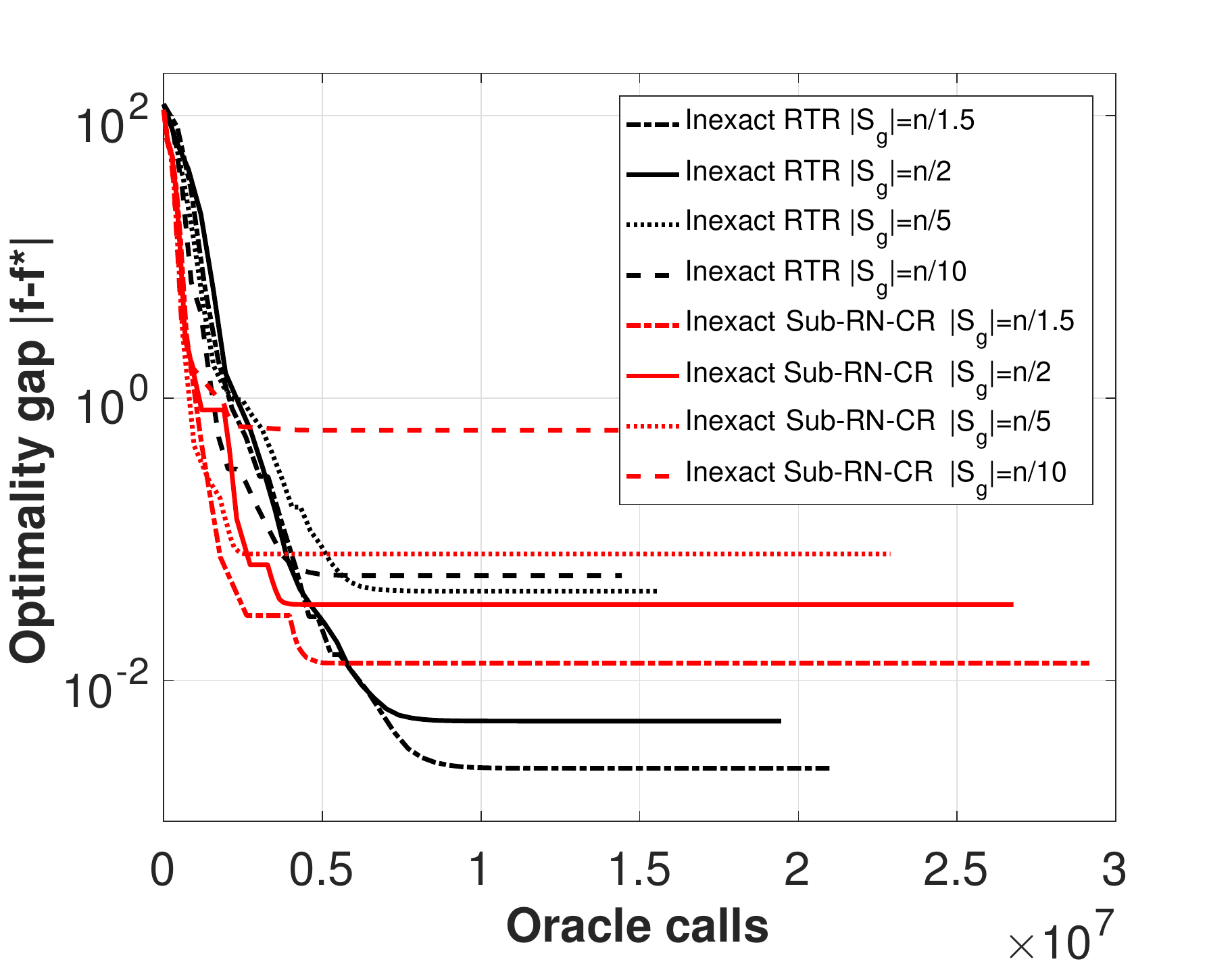}
	} \\
	\subfloat[\label{subfig_PCA_diff_sh1}  P1, varying $|\mathcal{S}_H|$ settings]{%
		\includegraphics[width=0.5\textwidth]{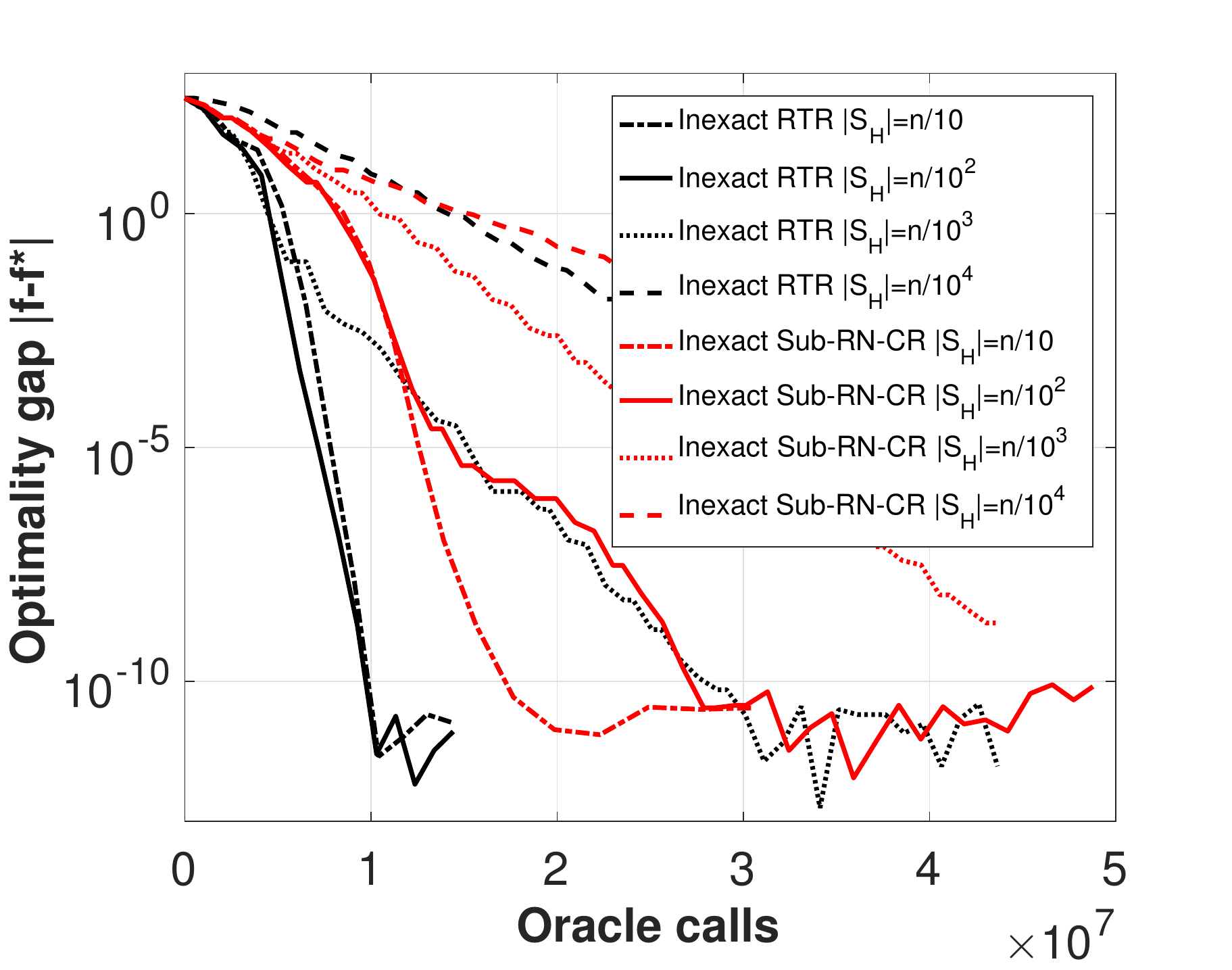}
	}
	\subfloat[\label{subfig_PCA_diff_sh2}  MNIST, varying $|\mathcal{S}_H|$ settings]{%
		\includegraphics[width=0.5\textwidth]{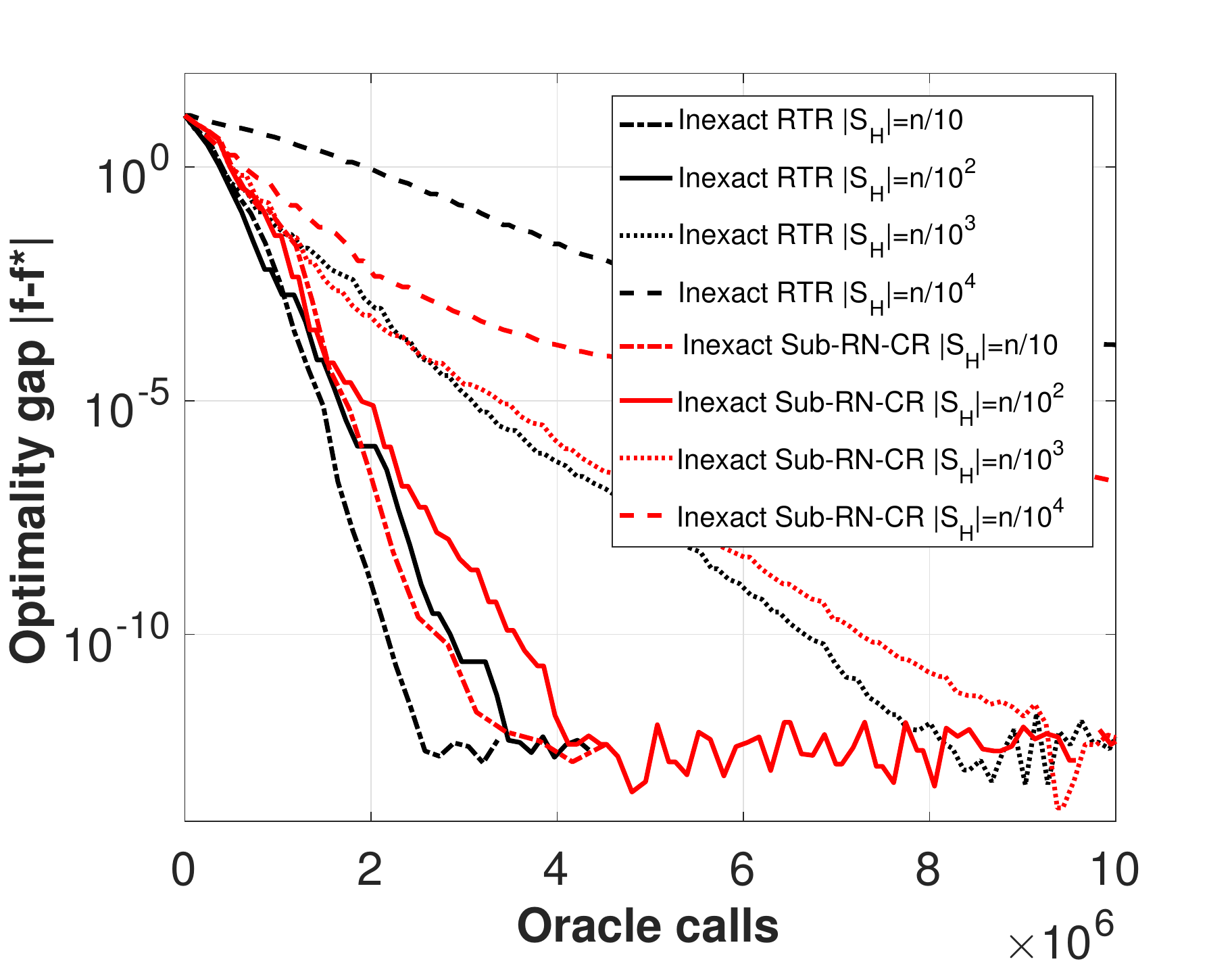}
	}\\
	\caption{\label{fig_PCA_diff_sh} Additional comparisons for the PCA task.}
\end{figure}

\begin{figure}[!b]
	\centering
	\subfloat[\label{fig_MC_M1} Synthetic  Dataset M1]{
		\includegraphics[width=0.5\textwidth]{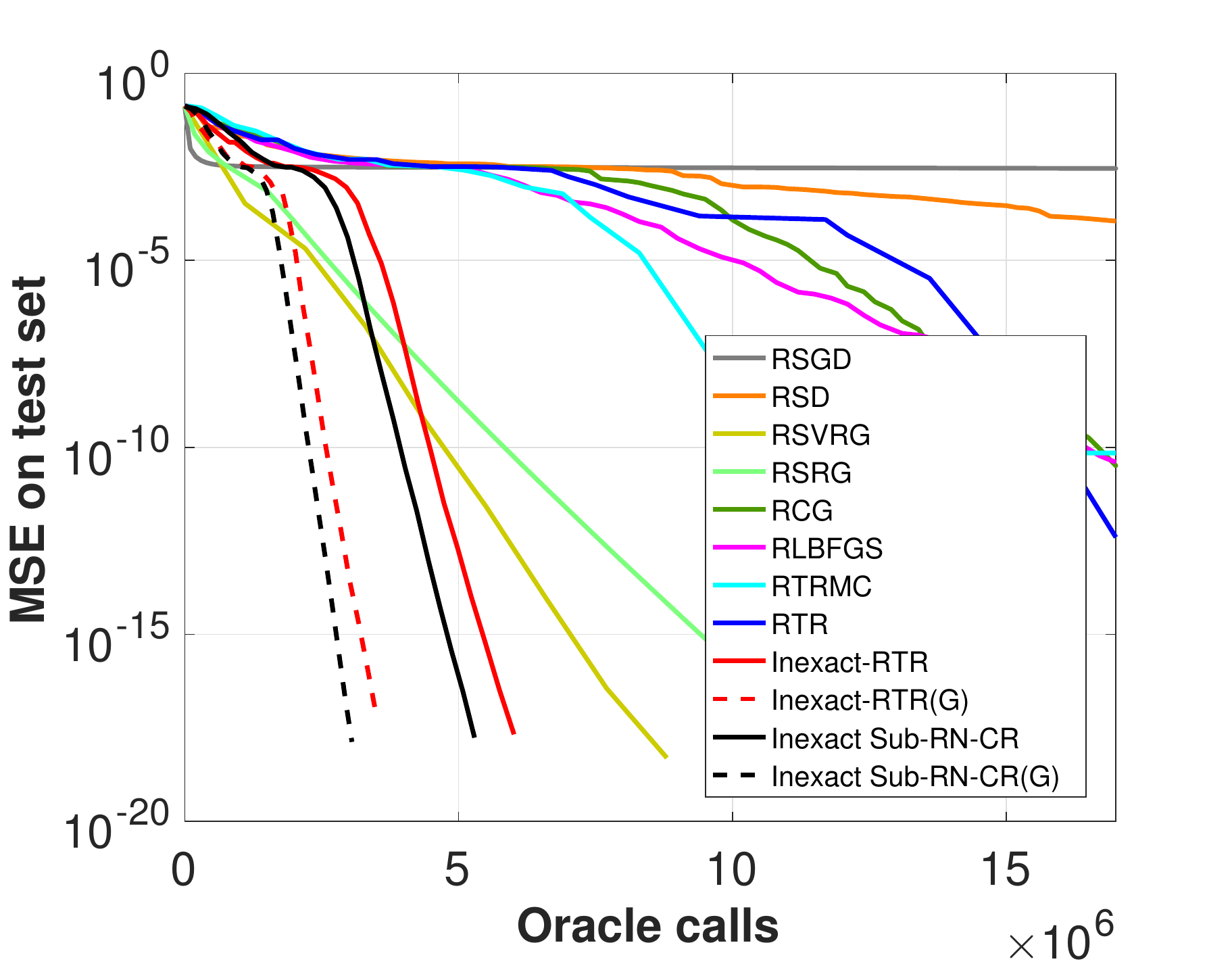}
		\includegraphics[width=0.5\textwidth]{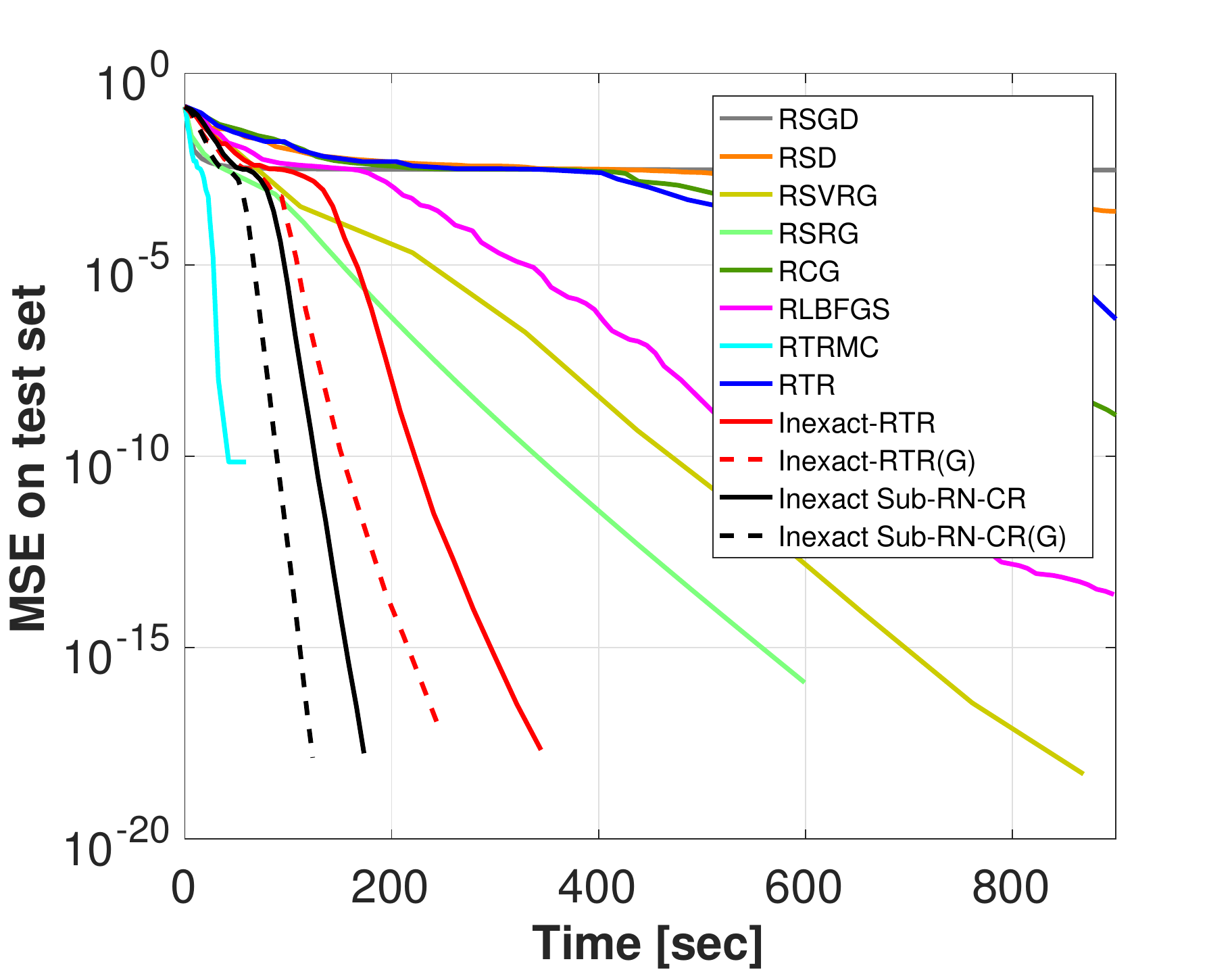}
	}
	\\
	\subfloat[\label{fig_MC_M2} Synthetic Dataset M2]{
		\includegraphics[width=0.5\textwidth]{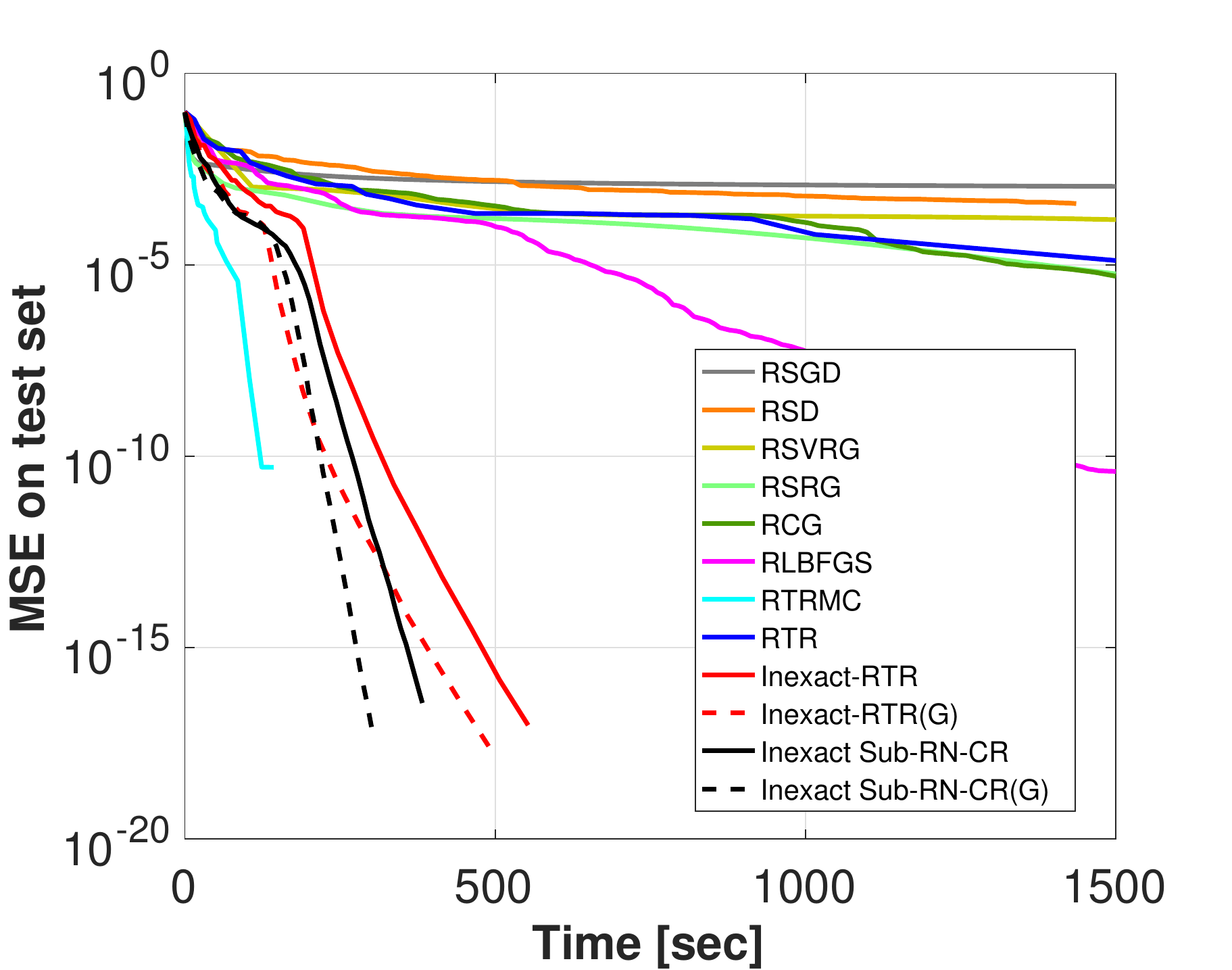}
	}
	\subfloat[\label{fig_MC_M3} Synthetic Dataset M3]{
		\includegraphics[width=0.5\textwidth]{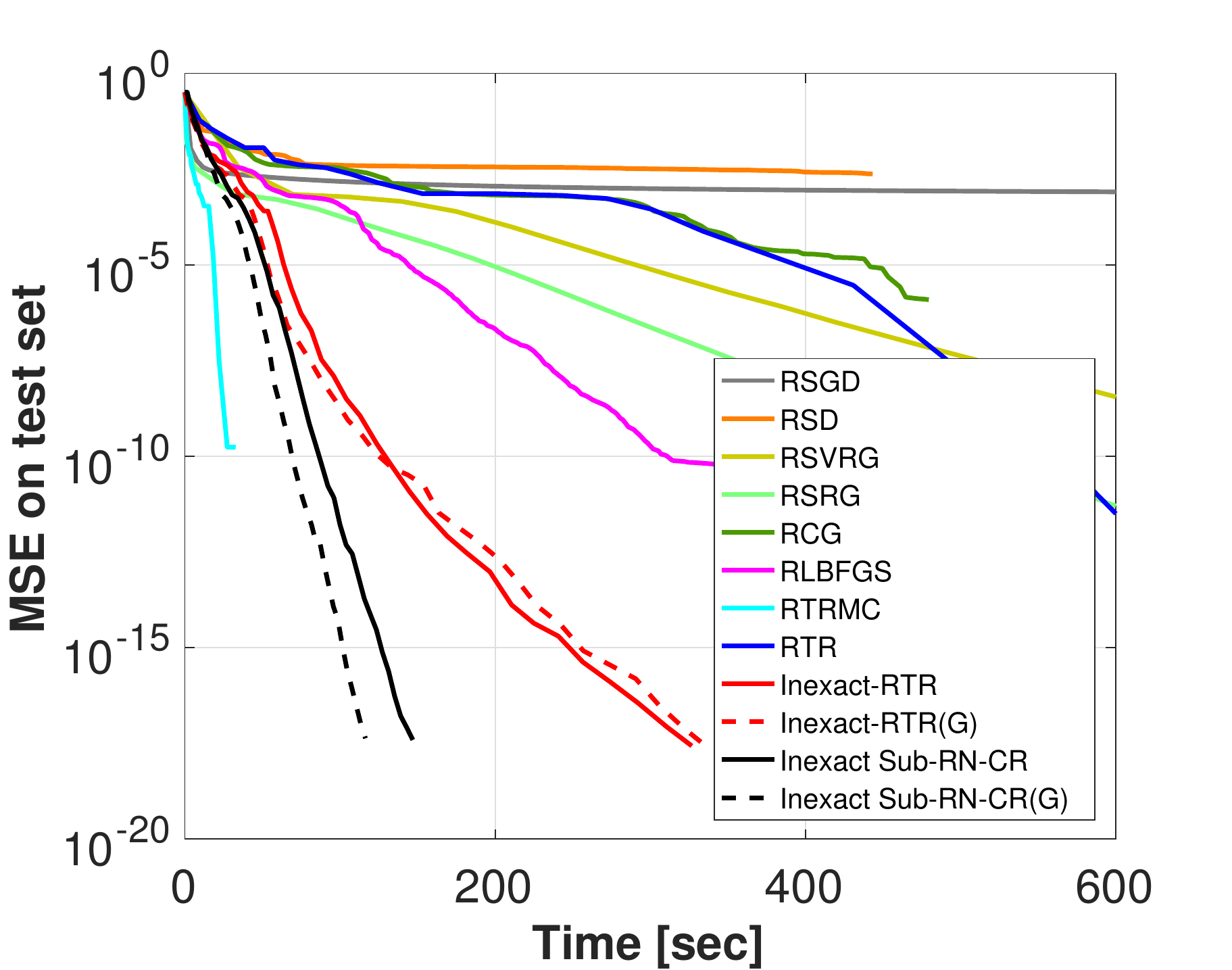}
	}
	\\
	\subfloat[\label{fig_MC_all2} Jester Dataset ]{
		\includegraphics[width=0.5\textwidth]{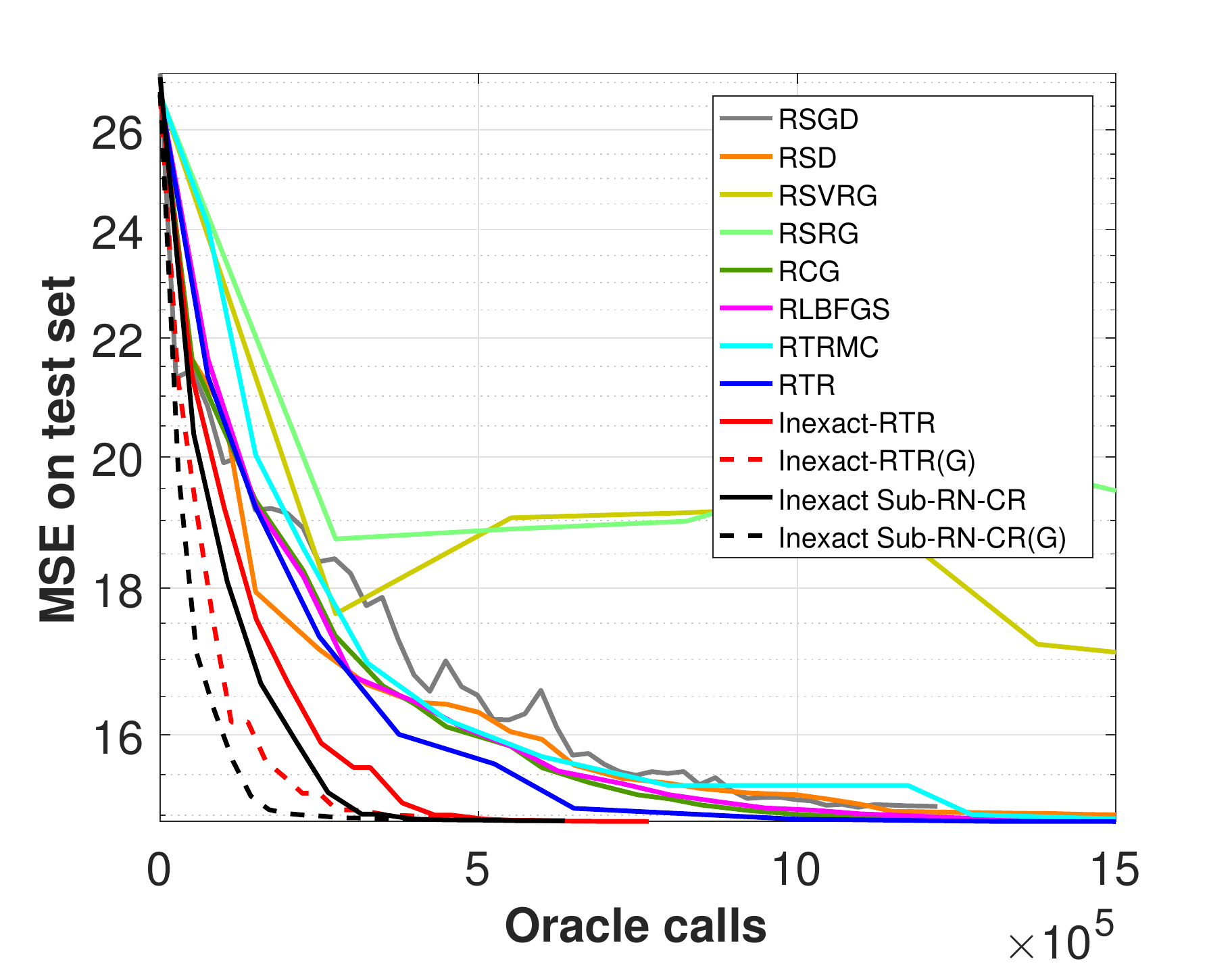}
		\includegraphics[width=0.5\textwidth]{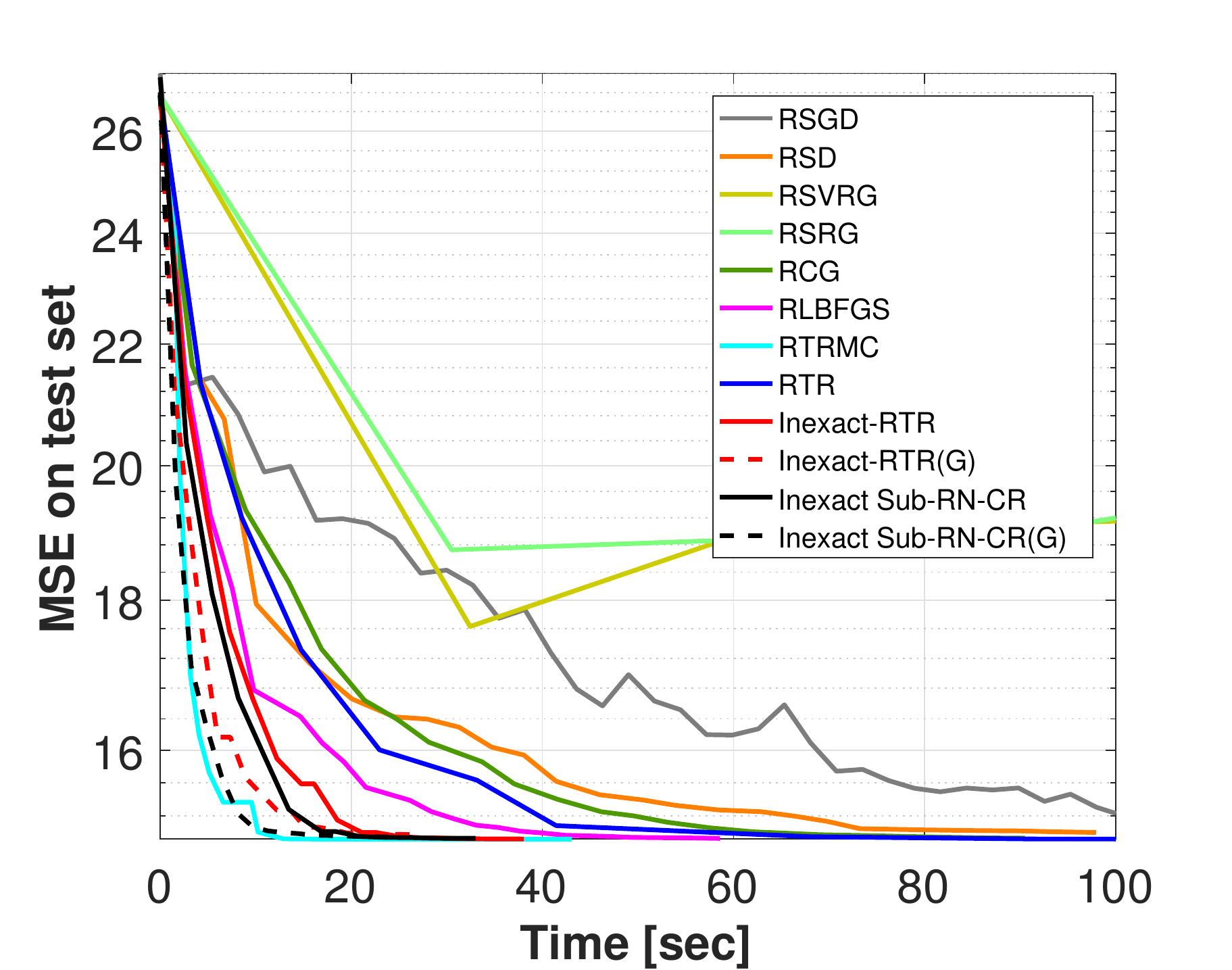}
	}
	\caption{\label{fig_MC_all} Performance comparison by MSE for the matrix completion task.}
\end{figure}

\subsection{PCA Experiments}

PCA can be interpreted as a minimization of  the sum of squared residual errors between the projected  and the original data points, formulated as
\begin{equation}
	\min_{\mathbf{U}\in {\rm Gr}\left(r,d\right)}\frac{1}{n}\sum_{i=1}^n\left\|\mathbf{z}_i-\mathbf{UU^T}\mathbf{z}_i\right\|_2^2,
	\label{eq_pca_formula_0}
\end{equation}
where $\mathbf{z}_i \in \mathbb{R}^{d}$. The objective function can be re-expressed as one on  the Grassmann manifold via
\begin{equation}
	\min_{\mathbf{U}\in {\rm Gr}\left(r,d\right)}-\frac{1}{n}\sum_{i=1}^n\mathbf{z}_i^T\mathbf{UU^T}\mathbf{z}_i.
	\label{eq_pca_formula}
\end{equation}
One synthetic  dataset P1 and two real-world  datasets including MNIST \cite{lecun1998gradient} and  Covertype \cite{blackard1999comparative} are used in the evaluation. 
The P1 dataset is firstly generated by randomly sampling each element of a matrix $\mathbf{A}\in \mathbb{R}^{n\times d}$ from a  normal distribution $\mathcal{N}(0,1)$. 
This is then followed by a multiplication with a diagonal matrix $\mathbf{S}\in \mathbb{R}^{d\times d}$ with each diagonal element randomly sampled from an exponential distribution $\textmd{Exp}(2)$, which increases the difference between the feature variances.
After that, a mean-subtraction preprocessing is applied to $\mathbf{A}\mathbf{S}$ to obtain the final P1 dataset.
The $\left(n,d,r\right)$  values are: $\left(5\times 10^5,10^3,5\right)$ for P1,   $\left(6\times 10^4,784,10\right)$ for  MNIST, and $\left(581012,54,10\right)$  for Covertype.  
Algorithm accuracy is assessed by optimality gap, defined as the absolute difference $|f-f^*|$, where $f=-\frac{1}{n}\sum_{i=1}^n\mathbf{z}_i^T\hat{\mathbf{U}}\hat{\mathbf{U}}^T\mathbf{z}_i$ with $\hat{\mathbf{U}}$ as the optimal solution returned by Algorithm \ref{alg_inexact_rtr_arc}. 
The optimal function value $f^*=-\frac{1}{n}\sum_{i=1}^n\mathbf{z}_i^T\tilde{\mathbf{U}}\tilde{\mathbf{U}}^T\mathbf{z}_i$ is computed by using the eigen-decomposition solution $\tilde{\mathbf{U}}$, which is a classical way to obtain PCA result without going through the optimization program.

\begin{figure}[t]
	\centering
	\includegraphics[width=1.0\textwidth]{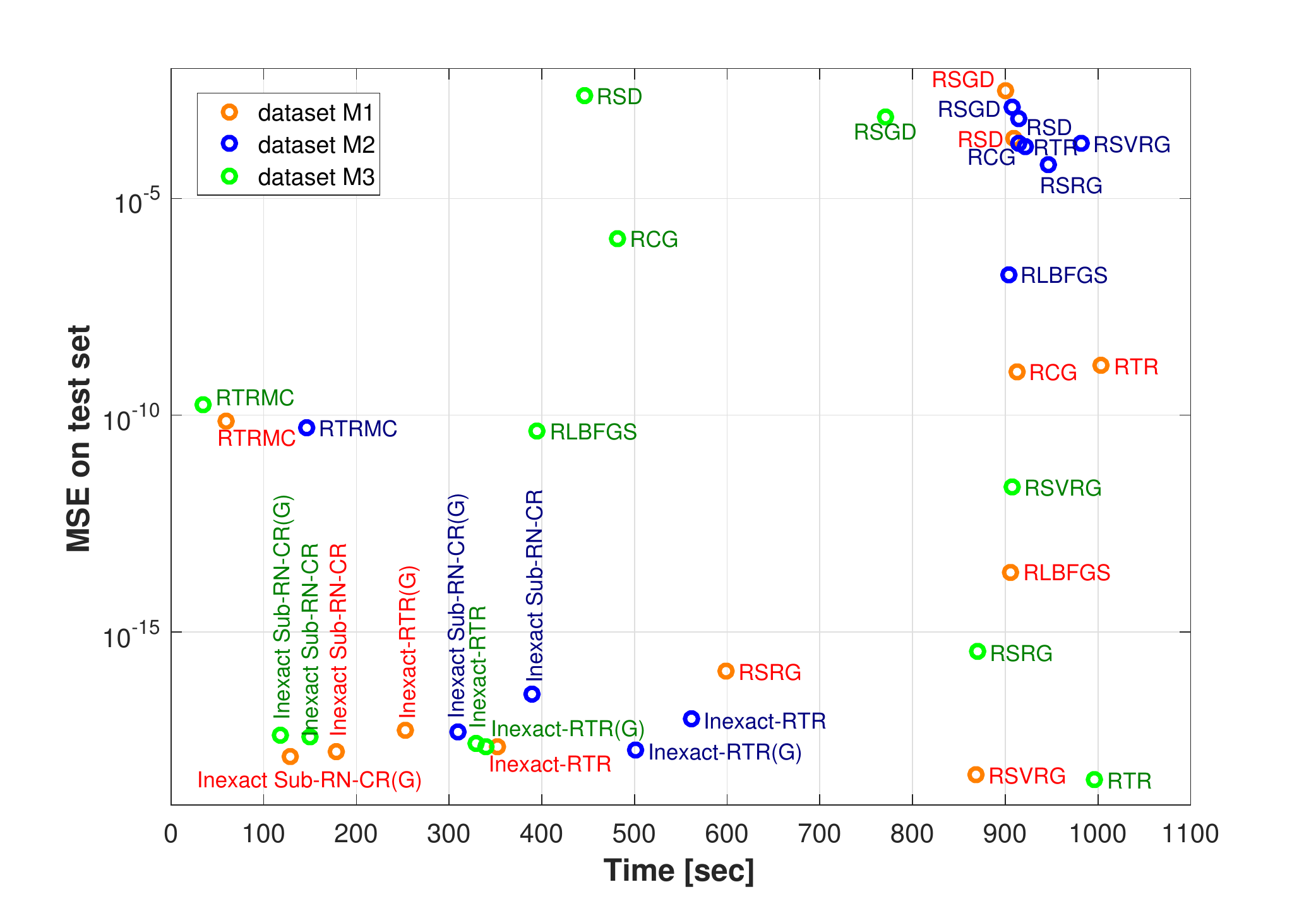}
	\caption{\label{fig_dots_mc} Matrix completion  performance summary on synthetic datasets.} 
\end{figure}

\begin{figure}[t]
	\centering
	\includegraphics[width=0.8\textwidth]{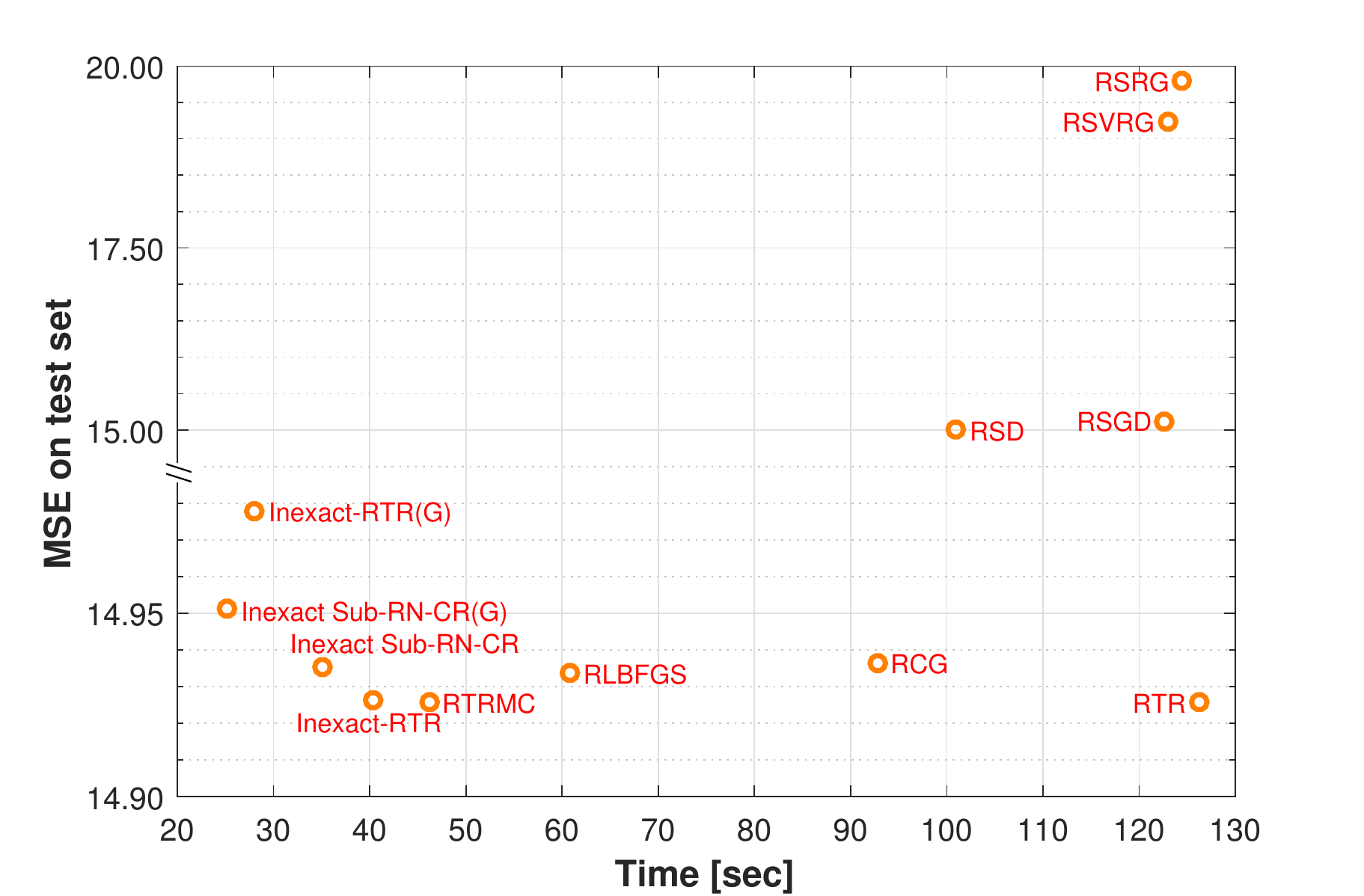}
	\caption{\label{fig_MC_dots_jester} Matrix completion  performance summary on  the Jester dataset.}
\end{figure}

Fig. \ref{fig_PCA_all} compares the     optimality gap  changes over iterations for all the  competing algorithms. 
Additionally, Fig. \ref{fig_pca_dots} summarizes their accuracy and convergence performance in optimality gap and run time.  
Fig. \ref{subfig_P1_b} reports the performance without using early stopping for the P1 dataset. 
It can be seen that the Inexact Sub-RN-CR reaches the minima with the smallest iteration number for both the synthetic and real-world datasets. 
In particular, the larger the scale of a problem is, the more obvious the advantage of our Inexact Sub-RN-CR  is,  evidenced by  the performance difference.

However,  both the Inexact RTR and Inexact Sub-RN-CR achieve their best PCA performance when using a full gradient calculation accompanied by a subsampled Hessian. The subsampled gradient does not seem to result in a satisfactory solution as shown in Fig. \ref{fig_PCA_all}  with $\left|\mathcal{S}_g\right|=n/10$. Additionally, we report more results for the Inexact RTR and the proposed Inexact Sub-RN-CR in Fig.  \ref{subfig_PCA_diff_sg1} on the P1 dataset with different gradient batch sizes, including  $\left|\mathcal{S}_g\right|\in\left\{n/1.5,n/2,n/5,n/10\right\}$. They all perform less well than $\left|\mathcal{S}_g\right| =n$. 
More accurate gradient information is required to produce a high-precision solution in these tested cases.
A hypothesis on the cause of  this phenomenon might be that the variance of the approximate gradients across samples is larger than that of the approximate Hessians. 
Hence, a sufficiently large sample size is needed for a stable approximation of the gradient information. 
Errors in approximate gradients may cause the algorithm to converge to a sub-optimal point with a higher cost, thus performing less well. 
Another hypothesis might be that the quadratic term $\mathbf{U}\mathbf{U}^T$ in  Eq. (\ref{eq_pca_formula}) would square the approximation error from the approximate gradient, which could significantly increase the PCA reconstruction error.

By fixing the gradient batch size $\left|\mathcal{S}_g\right| =n$ for both the Inexact RTR and Inexact Sub-RN-CR,  we compare in Figs. \ref{subfig_PCA_diff_sh1} and \ref{subfig_PCA_diff_sh2}  their sensitivity to the used batch size for Hessian approximation. 
We   experiment  with $\left|\mathcal{S}_H\right|\in\{n/10,n/10^2$, $n/10^3,n/10^4\}$. It can be seen that the Inexact Sub-RN-CR outperforms the Inexact RTR in almost all cases except for $\left|\mathcal{S}_H\right|=n/10^4$ for the MNIST dataset. The Inexact Sub-RN-CR possesses a rate of increase in oracle calls  significantly smaller for  large sample sizes. 
This implies that the Inexact Sub-RN-CR is more robust than the Inexact RTR  to batch-size change for inexact Hessian approximation.

\begin{figure}[t]
	\centering
	\subfloat[\label{fig_MC_T4} Synthetic M3 (left) and Jester  (right) with varying $|\mathcal{S}_g|$  settings.]{
		\includegraphics[width=0.5\textwidth]{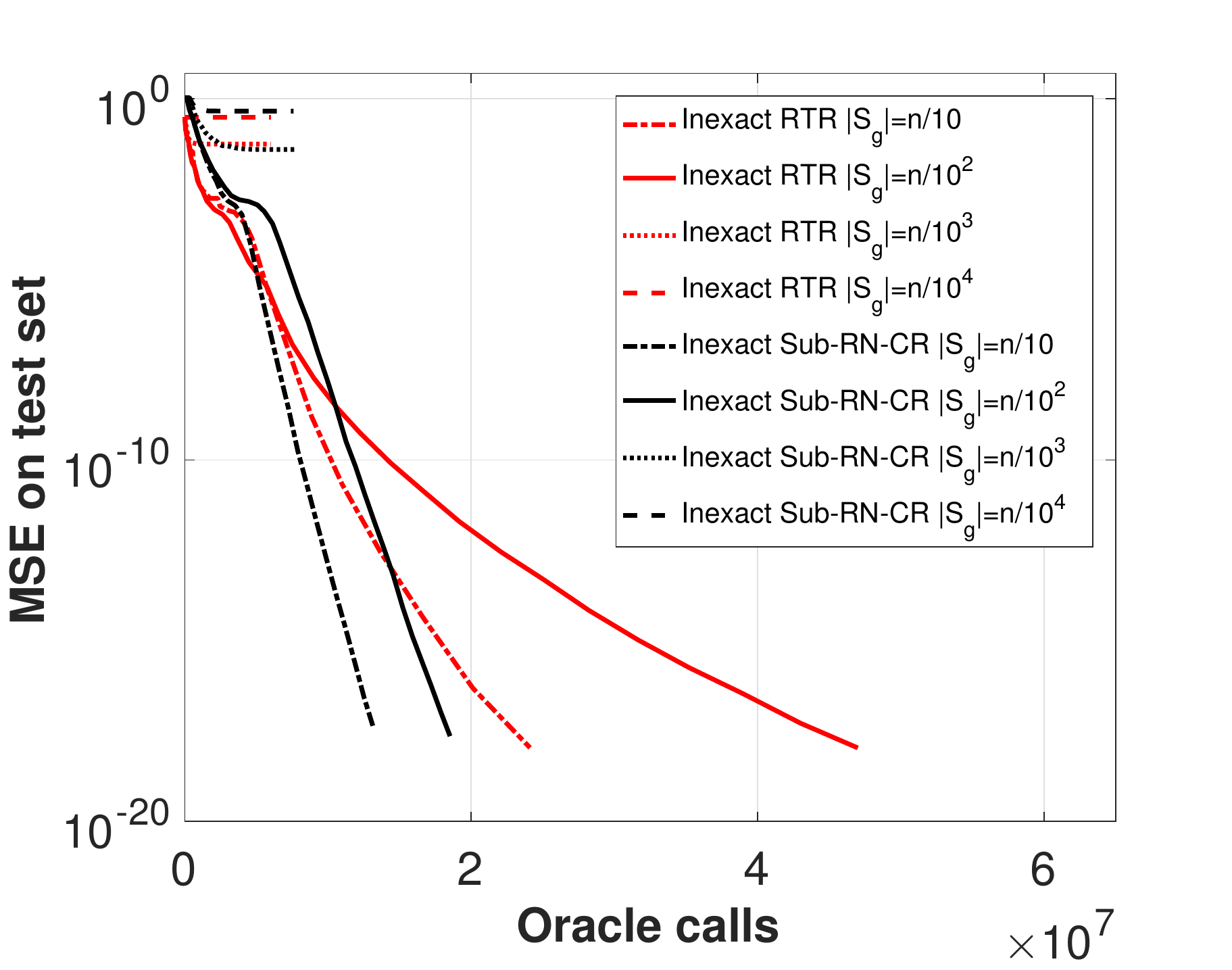}
		\includegraphics[width=0.5\textwidth]{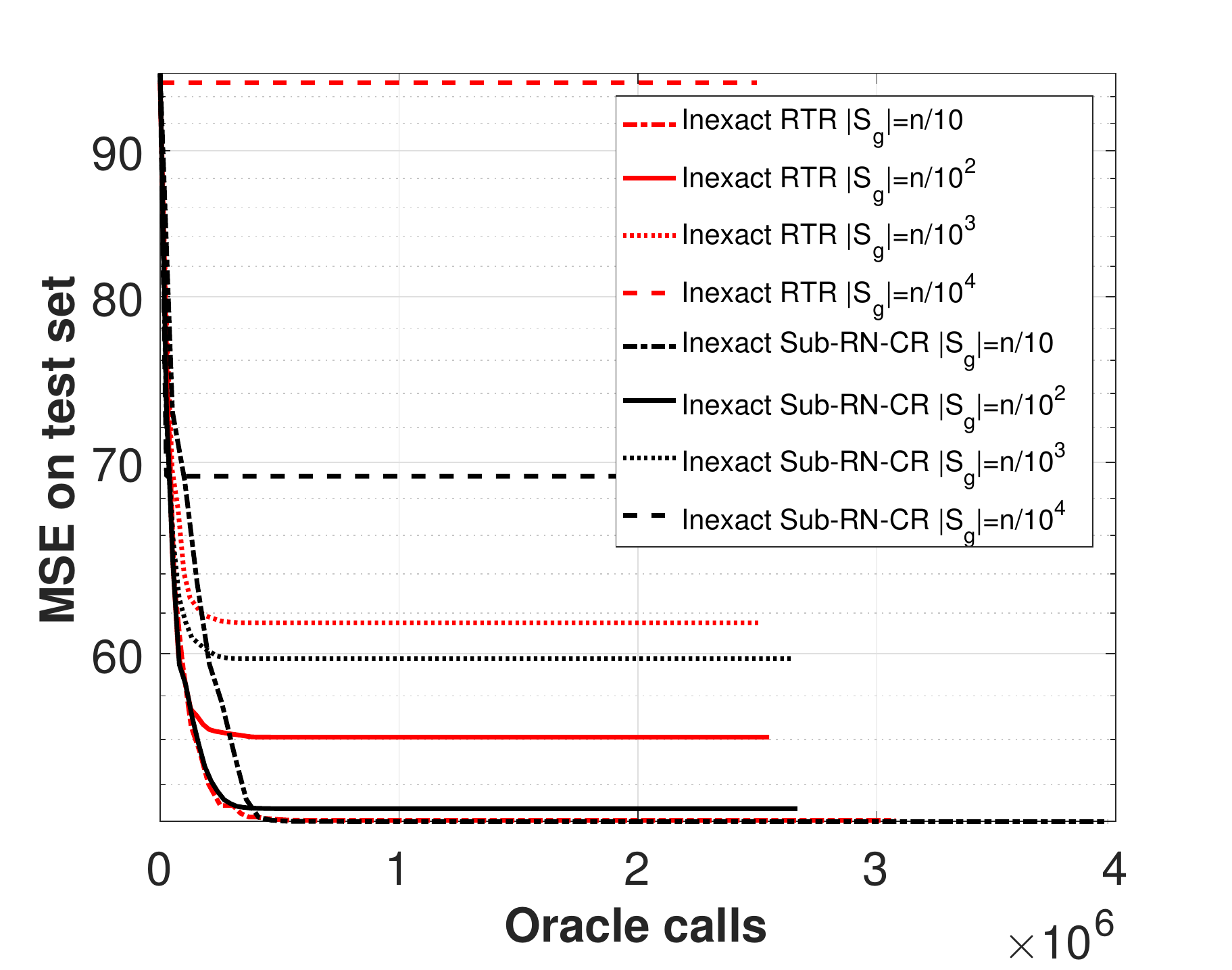}
	}
	\\
	\subfloat[\label{fig_MC_T5} Synthetic  M3 (left) and Jester  (right) with varying $|\mathcal{S}_H|$ settings. ]{
		\includegraphics[width=0.5\textwidth]{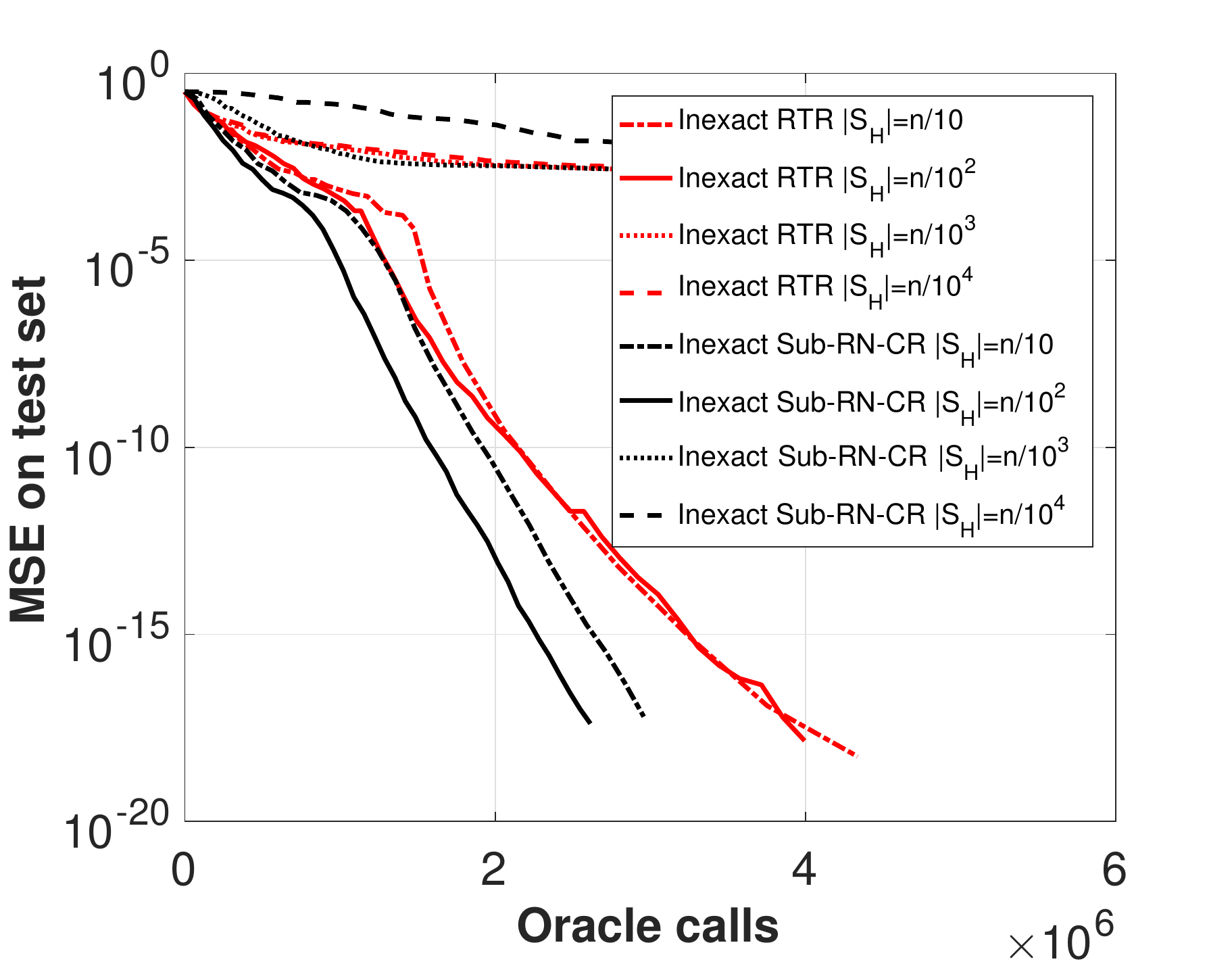}
		\includegraphics[width=0.5\textwidth]{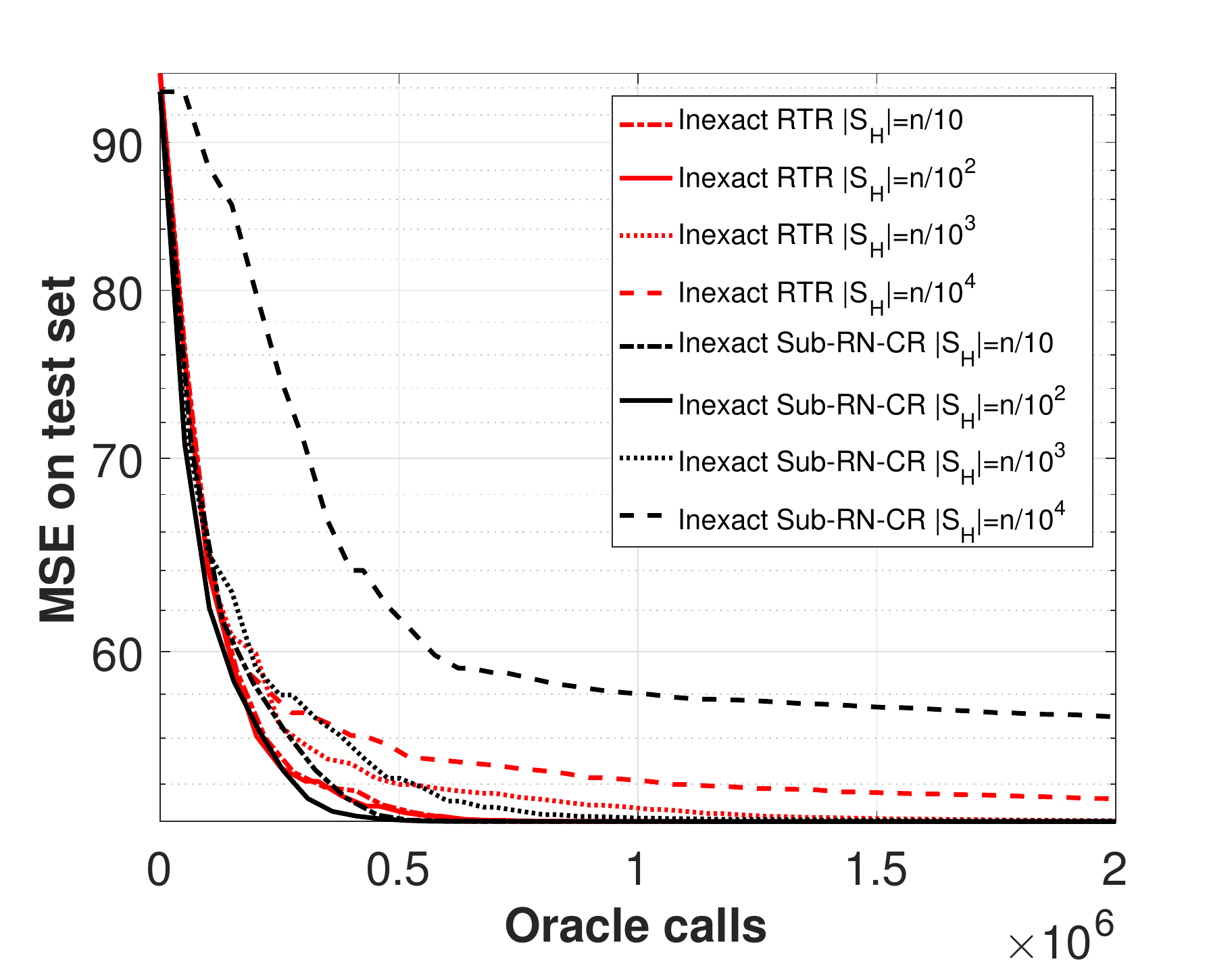}
	}
	\caption{\label{fig_MC_diff_sg_sh}  Inexact RTR vs. Inexact Sub-RN-CR for matrix completion with varying subsampling sizes for gradient and Hessian approximation.}
\end{figure}

\subsection{Low-rank Matrix Completion Experiments}

Low-rank matrix completion aims at completing a partial matrix $\mathbf{Z}$ with only a small number of  entries observed, under the assumption that the matrix has a low rank. 
One way to formulate the problem is shown as below
\begin{equation}
	\min_{\mathbf{U}\in {\rm Gr}\left(r,d\right), \mathbf{A}\in\mathbb{R}^{r\times n}}\frac{1}{|\Omega|}\left\|\mathcal{P}_\Omega\left(\mathbf{UA}\right)-\mathcal{P}_\Omega\left(\mathbf{Z}\right)\right\|_F^2,
	\label{eq_mc_formula}
\end{equation}
where $\Omega$ denotes  the index set of the observed matrix entries. 
The operator $\mathcal{P}_\Omega: \mathbb{R}^{d\times n}\rightarrow \mathbb{R}^{d\times n}$ is defined as $\mathcal{P}_\Omega\left(\mathbf{Z}\right)_{ij}=\mathbf{Z}_{ij}$ if $\left(i,j \right)\in\Omega$, while $\mathcal{P}_\Omega\left(\mathbf{Z}\right)_{ij}=0$ otherwise. 
We generate it by uniformly sampling a set of $|\Omega|=4r(n+d-r)$ elements from the  $dn$ entries. 
Let $\mathbf{a}_i$ be the $i$-th column of $\mathbf{A}$, $\mathbf{z}_i$ be the $i$-th column of $\mathbf{Z}$, and ${\Omega}_i$ be the subset of ${\Omega}$ that contains sampled indices for the $i$-th column of $\mathbf{Z}$. 
Then, $\mathbf{a}_i$ has a closed-form solution $\mathbf{a}_i=(\mathbf{U}_{{\Omega}_i})^{\dagger}\mathbf{z}_{\Omega_i}$ \cite{kasai2018inexact}, where $\mathbf{U}_{{\Omega}_i}$ contains the selected rows of $\mathbf{U}$, and $\mathbf{z}_{\Omega_i}$  the selected elements of  $\mathbf{z}_i$ according to the indices in $\Omega_i$, and $\dagger$ denotes the pseudo inverse operation.
To evaluate a solution $\mathbf{U}$, we generate another index set $\tilde{\Omega}$, which is used as the test set and satisfies $\tilde{\Omega}\cap\Omega = \emptyset $, following the same way as generating $\Omega$. We compute  the mean squared error (MSE)  by
\begin{equation}
	\textmd{MSE} = \frac{1}{\left|\tilde{\Omega} \right|}\left\|\mathcal{P}_{\tilde{\Omega}}(\mathbf{U}\mathbf{A})-\mathcal{P}_{\tilde{\Omega}}(\mathbf{Z})\right\|_F^2.
\end{equation}

In evaluation, three synthetic datasets and a real-world dataset Jester \cite{goldberg2001eigentaste} are used where the training and test sets are already predefined by \cite{goldberg2001eigentaste}. 
The synthetic datasets are generated by following a generative model similar to \cite{ngo2012scaled} based on SVD. 
Specifically, to develop a synthetic dataset, we generate two matrices $\mathbf{A}\in\mathbb{R}^{d\times r}$ and $\mathbf{B}\in\mathbb{R}^{n\times r}$ with their elements independently sampled from the normal distribution $\mathcal{N}(0,1)$. 
Then, we generate two orthogonal matrices $\mathbf{Q}_A$ and $\mathbf{Q}_B$ by applying the QR decomposition \cite{trefethen1997numerical} respectively to $\mathbf{A}$ and $\mathbf{B}$. 
After that, we construct a diagonal matrix $\mathbf{S}\in \mathbb{R}^{r\times r}$ of which the diagonal elements are computed by 
$s_{i,i}=10^{3+\frac{(i-r)\log_{10}(c)}{r-1}}$ for $i=1,...,r$, and the final data matrix  is computed by $\mathbf{Z}=\mathbf{Q}_A\mathbf{S}\mathbf{Q}_B^T$.
The reason to construct  $\mathbf{S}$ in this specific way is to have an easy control over the condition number of the data matrix, denoted by $\kappa(\mathbf{Z})$, which is the ratio between the maximum and minimum singular values of $\mathbf{Z}$. 
Because $\kappa(\mathbf{Z}) = \frac{\sigma_{max}(\mathbf{Z})}{\sigma_{min}(\mathbf{Z})}=\frac{10^3}{10^{3-\log_{10}(c)} }=c$, we can adjust the condition number by tuning the parameter $c$.
Following this generative model, each synthetic dataset is generated by randomly sampling two orthogonal matrices and constructing one diagonal matrix subject to the constraint of condition numbers, i.e., $c=\kappa(\mathbf{Z})=5$ for M1 and $c=\kappa(\mathbf{Z})=20$ for M2 and M3.
The $\left(n,d,r\right)$  values  of the used datasets are:  $\left(10^5,10^2,5\right)$ for M1, $\left(10^5,10^2,5\right)$ for M2,  $\left(3\times 10^4,10^2,5\right)$ for M3, and  $\left(24983,10^2,5\right)$ for Jester.

\begin{figure}[t]
	\centering
	\includegraphics[width=0.45\textwidth]{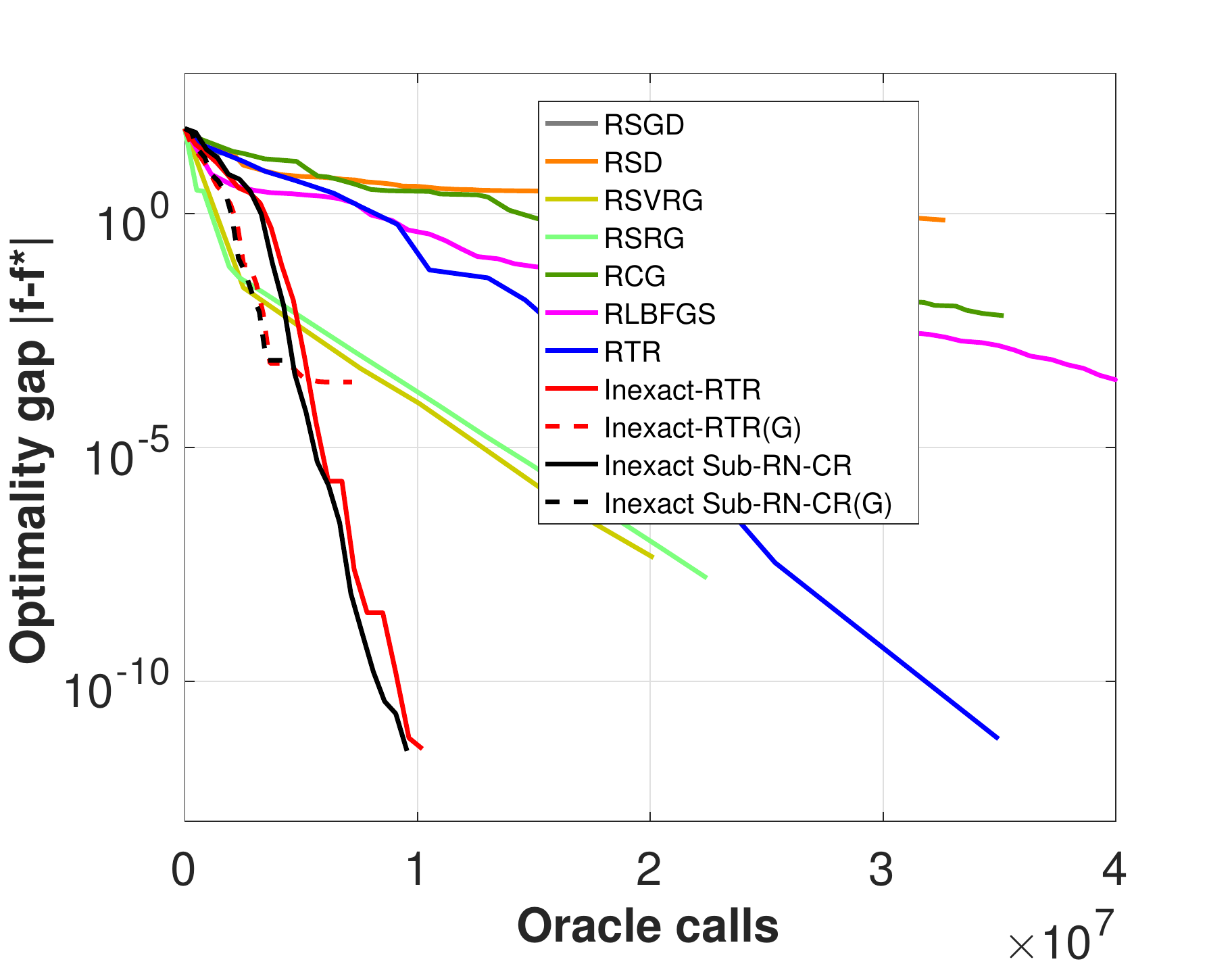}
	\includegraphics[width=0.45\textwidth]{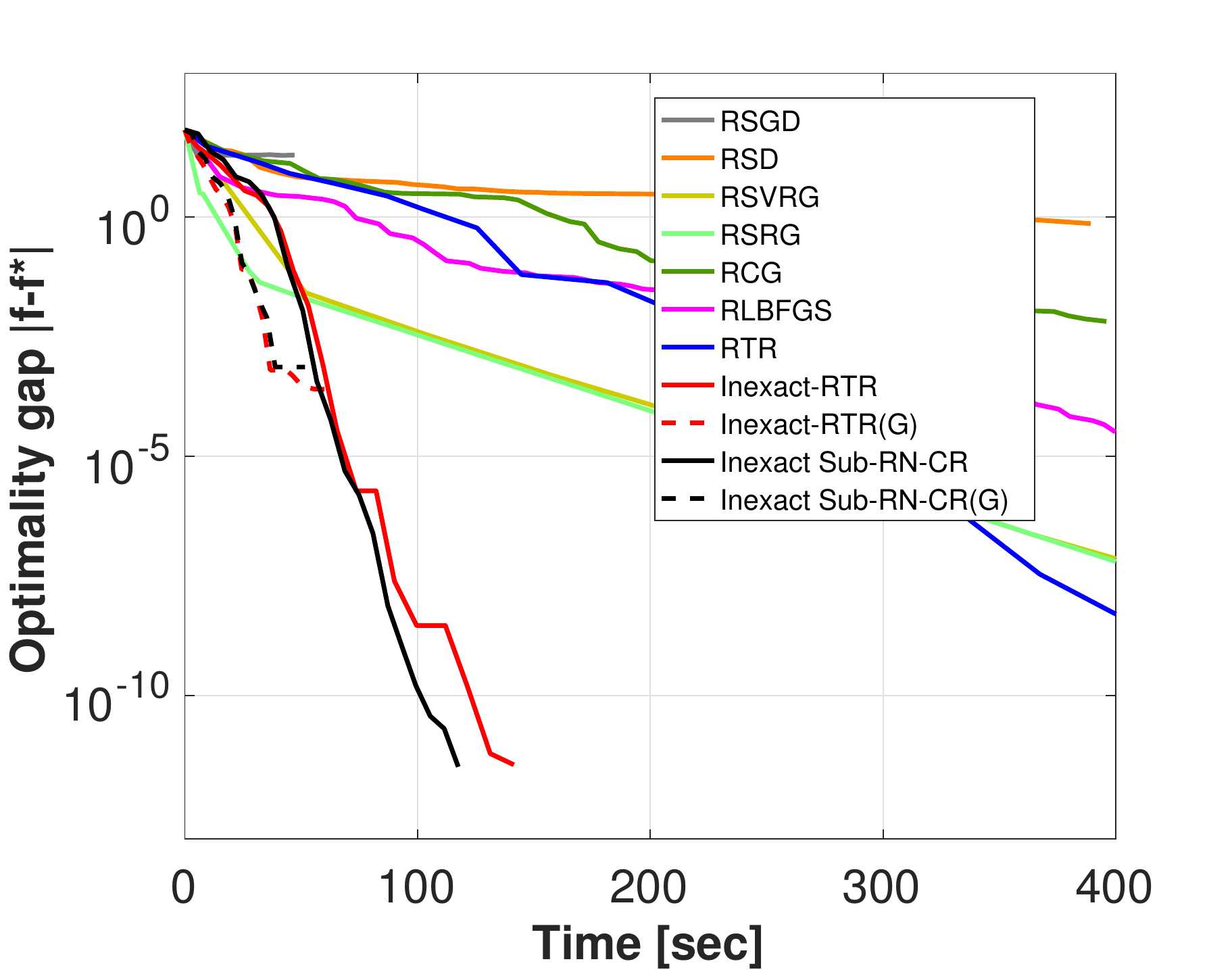}
	\\
	\includegraphics[width=0.8\textwidth]{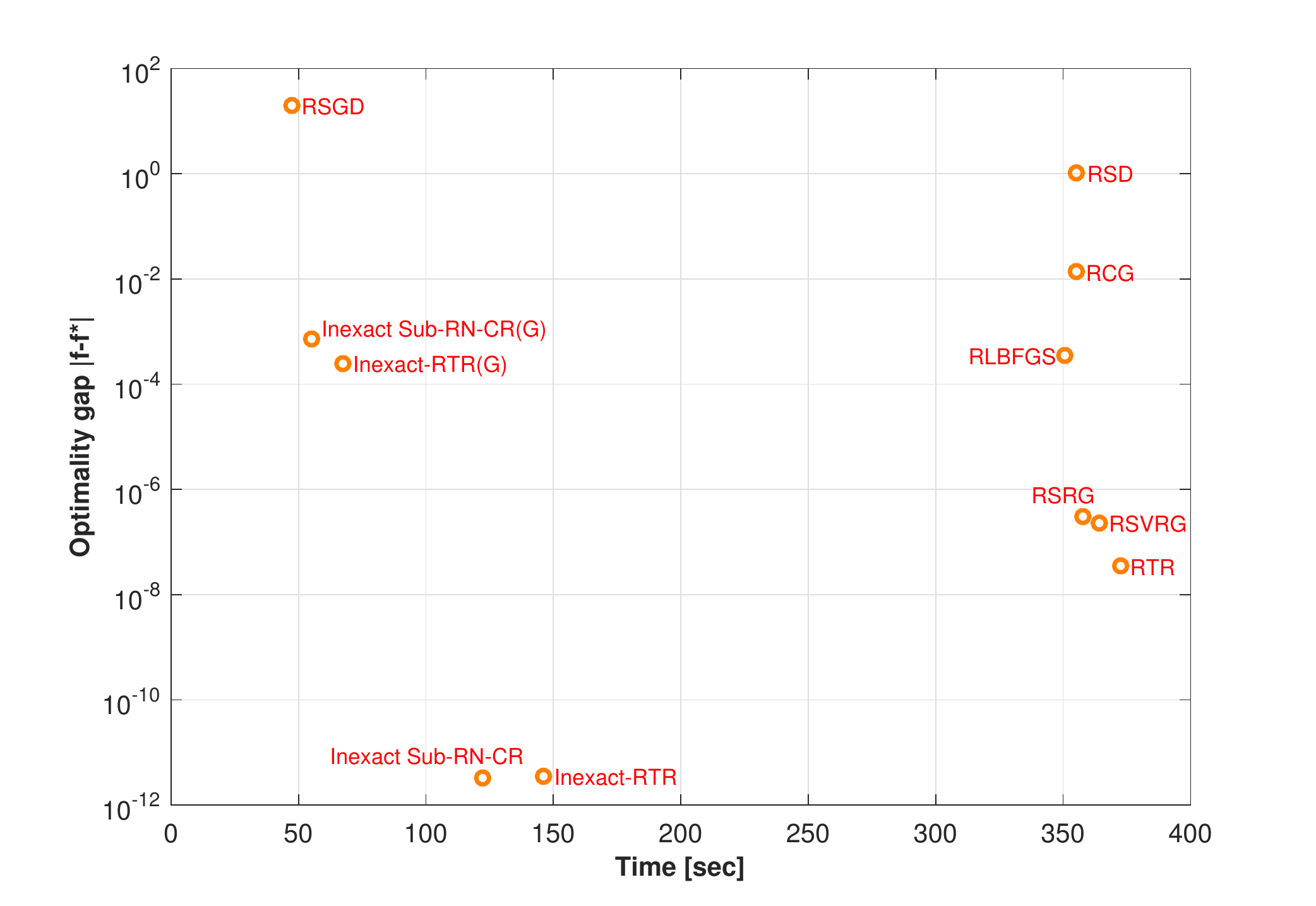}
	\caption{\label{fig_fmri_all}  PCA optimality gap comparison for fMRI analysis.}
\end{figure}

Fig. \ref{fig_MC_all}  compares the MSE changes over iterations, while Fig. \ref{fig_dots_mc} and Fig.  \ref{fig_MC_dots_jester}  summarize both the MSE performance and the  run time in the same plot for different algorithms and datasets.  In Fig. \ref{fig_dots_mc}, the Inexact Sub-RN-CR outperforms the others in most cases, and it can even be nearly twice as fast as the state-of-the-art methods for cases with a larger condition number (see dataset M2 and M3). This  shows that  the proposed algorithm is efficient at handling ill-conditioned problems.   In Fig. \ref{fig_MC_dots_jester}, the Inexact Sub-RN-CR achieves a sufficiently small MSE with the shortest run time, faster than the Inexact RTR and RTRMC. 
Unlike in the PCA task,  the subsampled gradient approximation actually helps to improve the convergence.
A hypothesis for explaining this phenomenon could be that, as compared to the quadratic term $\mathbf{U}\mathbf{U}^T$ in the PCA objective function, the linear term $\mathbf{U}$ in the matrix completion objective function accumulates fewer errors from the inexact gradient, making the optimization more stable.

Additionally, Fig. \ref{fig_MC_T4} compares the Inexact RTR and the Inexact Sub-RN-CR with varying batch sizes for gradient estimation and with fixed $|\mathcal{S}_H| =n$. The M1-M3 results show that our algorithm exhibits stronger robustness to $|\mathcal{S}_g|$, as it converges to the minima with only 50$\%$ additional oracle calls when reducing $|\mathcal{S}_g|$ from $n/10$ to $n/10^2$, whereas Inexact RTR requires twice as many calls. For the Jester dataset, in all settings of gradient sample sizes, our method achieves lower MSE than the Inexact RTR, especially when $|\mathcal{S}_g|=n/10^4$.  
Fig. \ref{fig_MC_T5} compares  sensitivities in Hessian sample sizes $|\mathcal{S}_H|$  with fixed $|\mathcal{S}_g| =n$. Inexact Sub-RN-CR performs better for the synthetic dataset M3 with $\left|\mathcal{S}_H\right|\in\{n/10,n/10^2\}$, showing roughly twice faster convergence. For the Jester dataset, Inexact Sub-RN-CR performs better with $\left|\mathcal{S}_H\right|\in\{n/10,n/10^2,n/10^3\}$ except for the case of $\left|\mathcal{S}_H\right|=n/10^4$, which is possibly because the construction of the Krylov subspace requires a more accurately approximated Hessian.

To summarize, we have observed from both the PCA and matrix completion tasks that the effectiveness of the subsampled gradient in the proposed approach can be sensitive to the choice of the practical problems, while the subsampled Hessian steadily contributes to a faster convergence rate.

\begin{figure}[t]
	\centering
	\subfloat[\label{fig_fmri_comp1}component 1]{
		\includegraphics[width=1.0\textwidth]{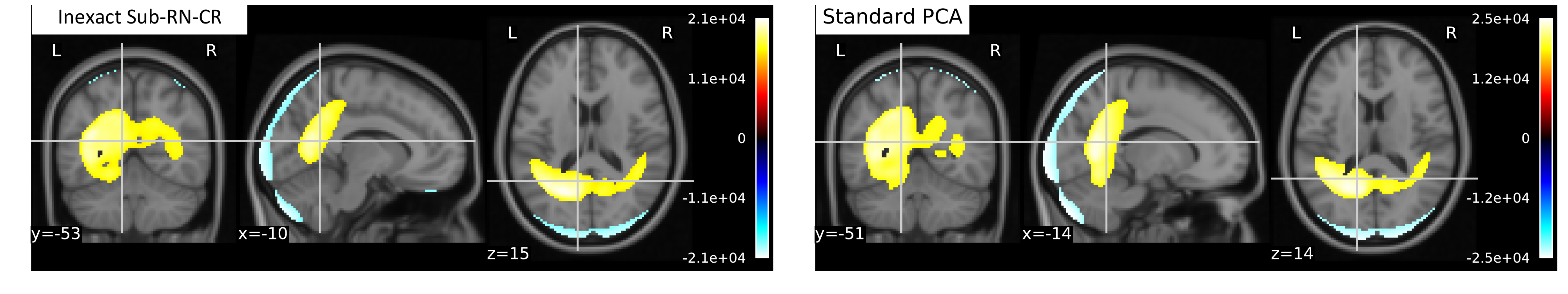}
	}
	\\
	\subfloat[\label{fig_fmri_comp2}component 2]{
		\includegraphics[width=1.0\textwidth]{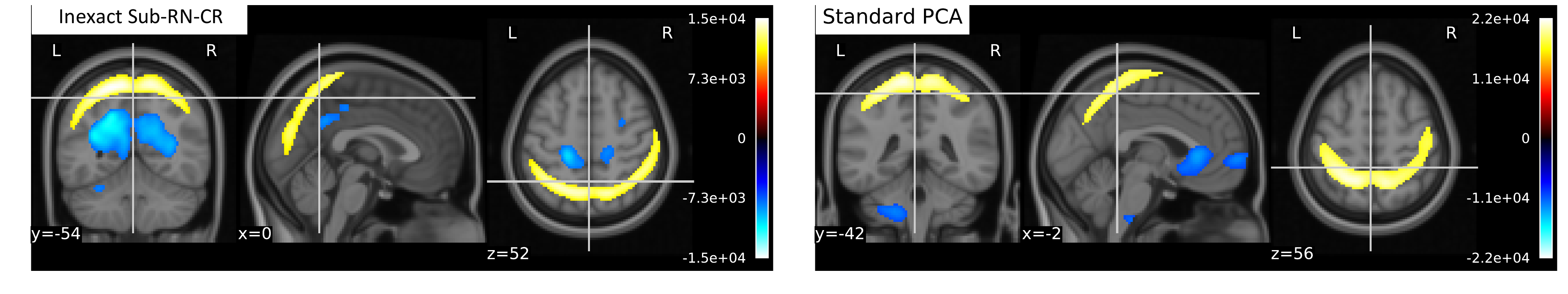}
	}
	\\
	\subfloat[\label{fig_fmri_comp3}component 3]{
		\includegraphics[width=1.0\textwidth]{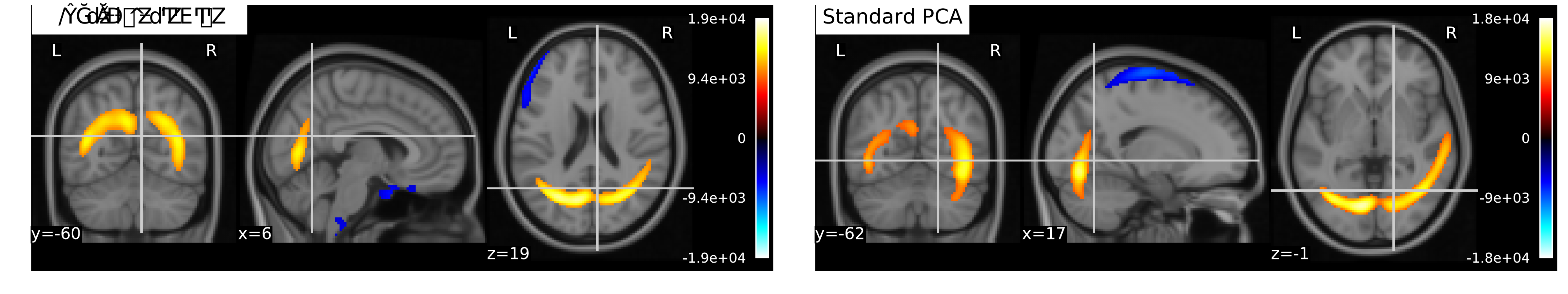}
	}
	\caption{\label{fig_fmri_pca_show} Comparison of the learned fMRI principal components.}
\end{figure}

\begin{figure}[t]
	\centering
	\includegraphics[width=0.45\textwidth]{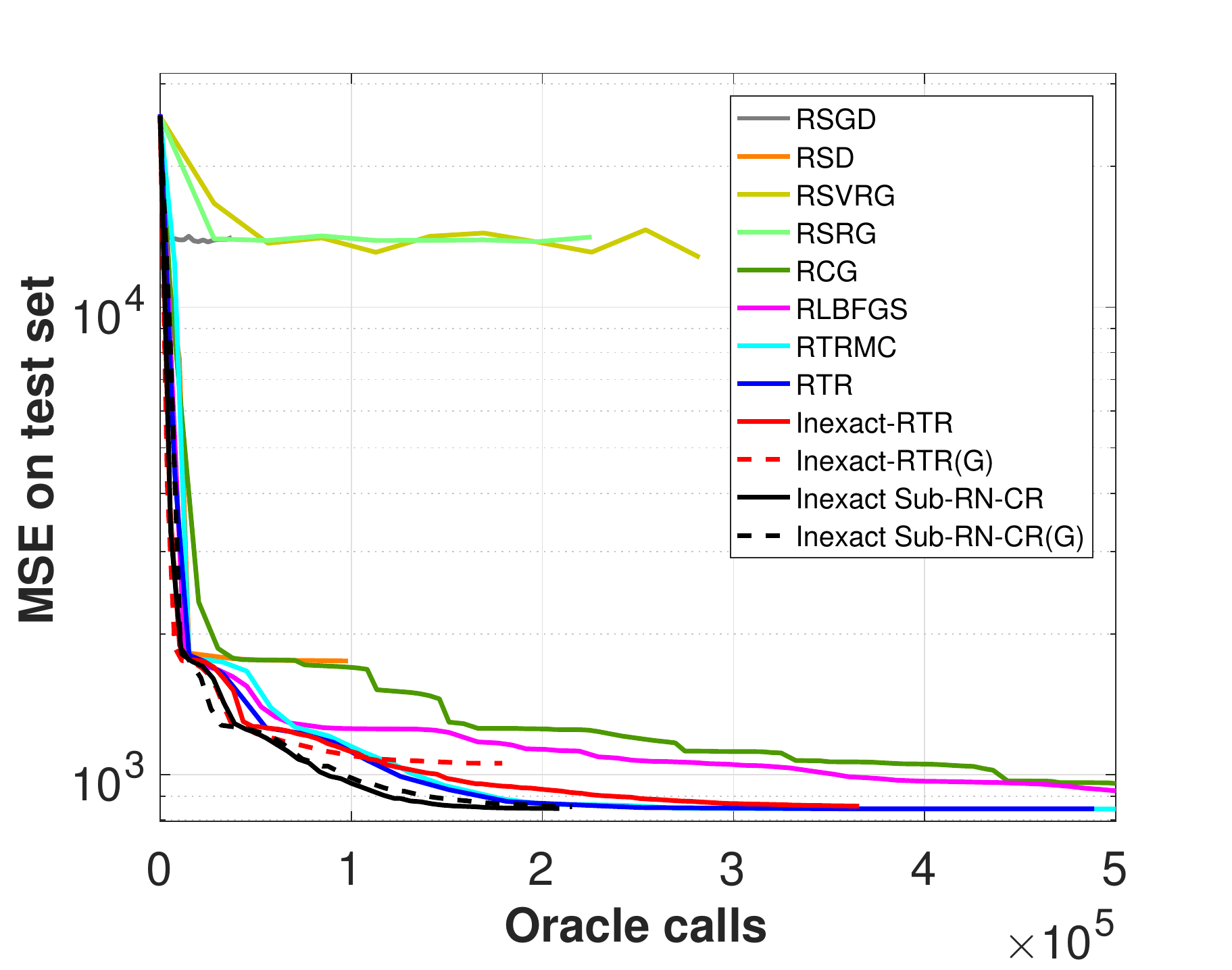}
	\includegraphics[width=0.45\textwidth]{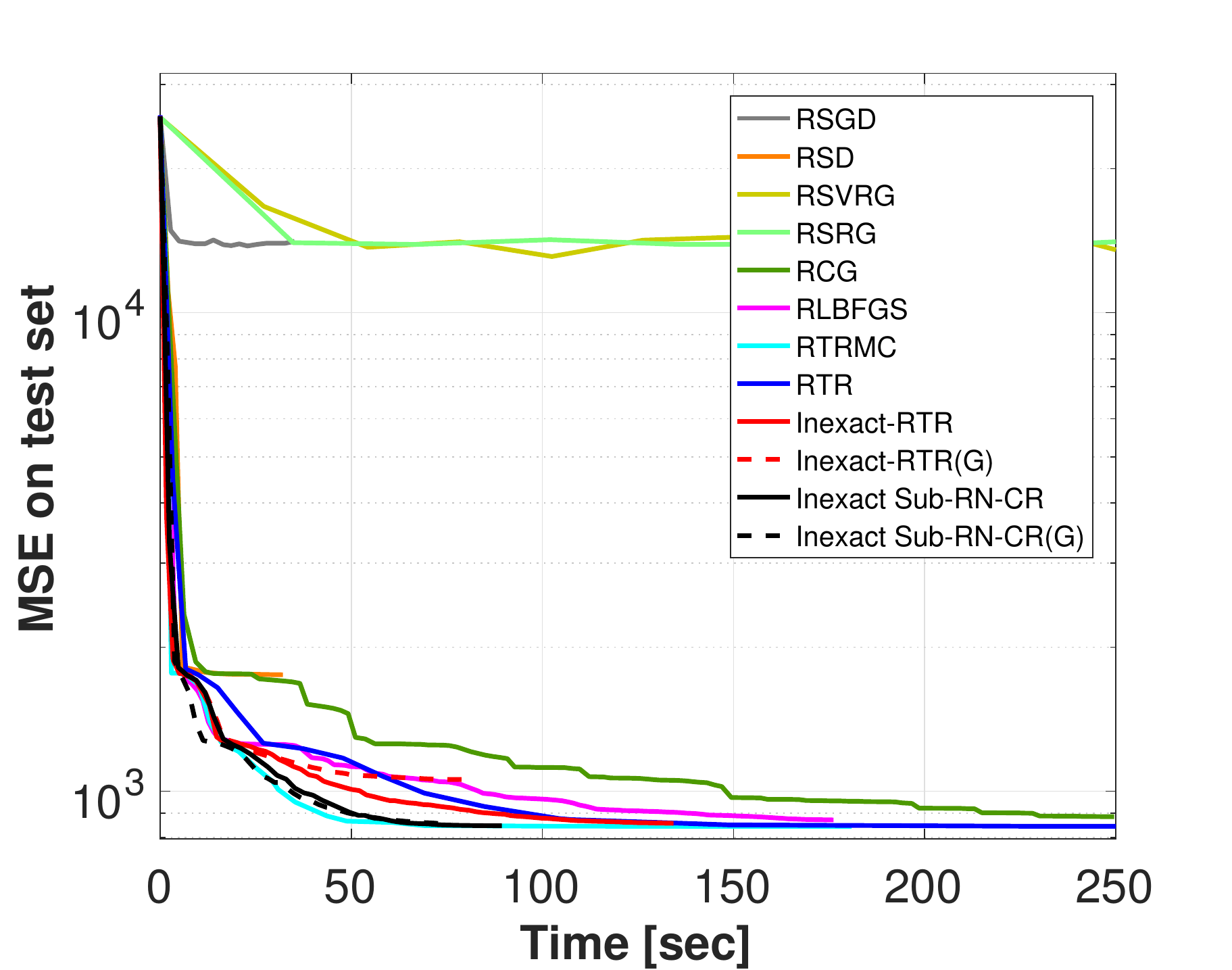}
	\\
	\includegraphics[width=0.7\textwidth]{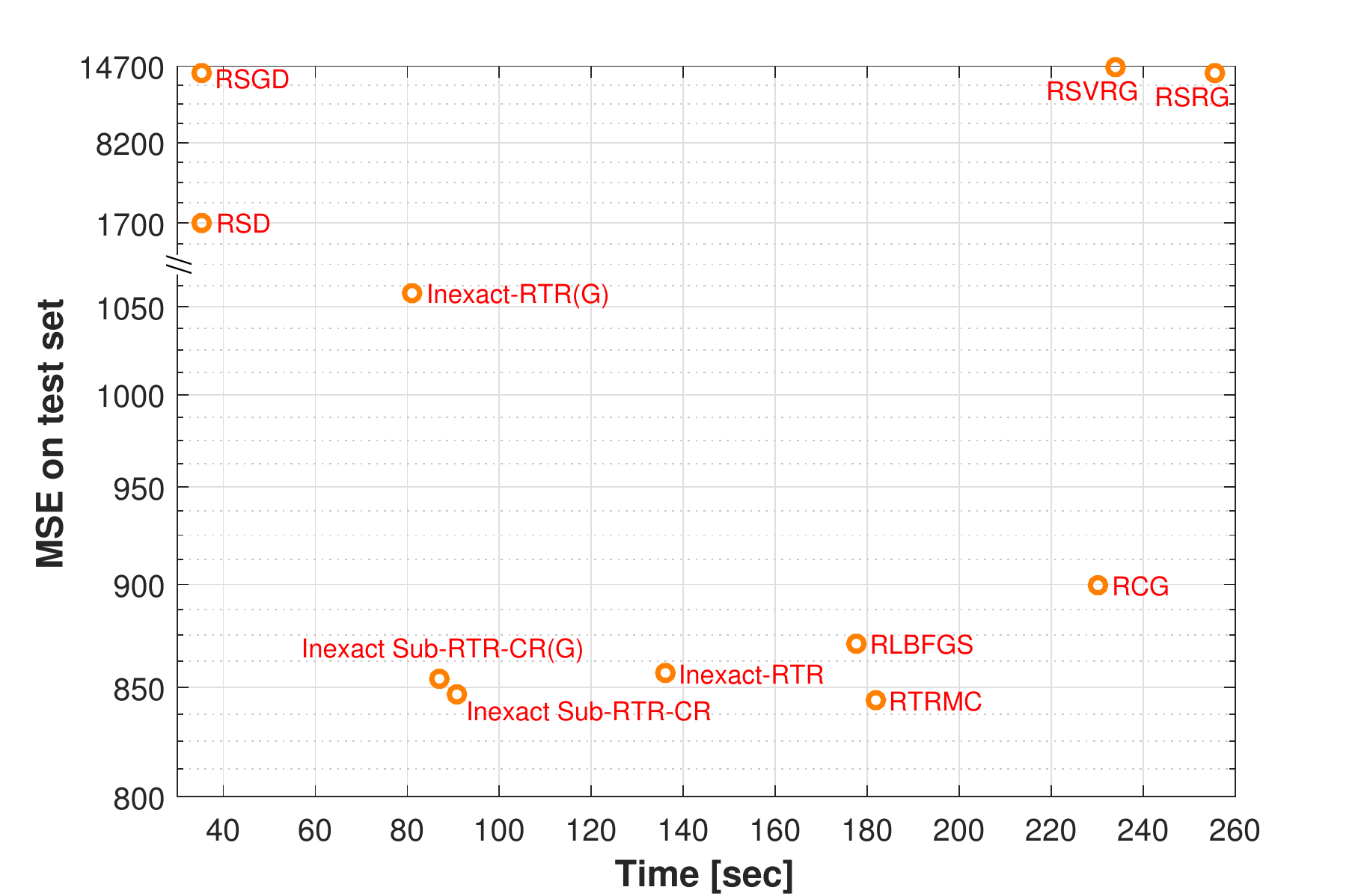}
	\caption{\label{fig_big_all} MSE comparison for scene image recovery by matrix completion.}
\end{figure}

\subsection{Imaging Applications}
In this section, we demonstrate some practical applications of PCA and matrix completion, which are solved by using the proposed optimization algorithm Inexact Sub-RN-CR,  for analyzing medical images and scene images.

\subsubsection{Functional Connectivity in fMRI by PCA}

Functional magnetic-resonance imaging (fMRI) can be used to measure brain activities and PCA is often used to find functional connectivities between brain regions based on the fMRI scans \cite{zhong2009detecting}.  This method is based on the assumption that the activation is independent of other signal variations such as brain motion and physiological signals \cite{zhong2009detecting}. Usually, the fMRI images are represented as a 4D data block subject to observational noise, including 3 spatial dimensions and 1 time dimension.   Following a common preprocessing routine \cite{sidhu2012kernel,kohoutova2020toward}, we denote an fMRI data block by  $\mathbf{D}\in\mathbb{R}^{u\times v\times w\times T}$ and a mask by $\mathbf{M}\in\{0,1\}^{u\times v\times w}$ that contains $d$ nonzero elements marked by $1$. By applying the mask to the data block, we obtain a  feature matrix $\mathbf{f}\in\mathbb{R}^{d\times T}$, where each column stores the features of the brain at a given time stamp.  One can increase the sample size by collecting $k$ fMRI data blocks corresponding to $k$ human subjects, after which the feature matrix is expanded to a larger matrix $\mathbf{F}\in\mathbb{R}^{d\times kT}$.

In this experiment, an fMRI dataset referred to as $ds000030$ provided by the OpenfMRI database \cite{poldrack2017openfmri} is used, where $u=v=64$, $w=34$, and $T=184$. We select $k=13$ human subjects and use the provided brain mask with $d=228483$. The dimension of the final data matrix is $(n,d)=(2392,228483)$, where $n=kT$. 
We  set the rank as $r=5$ which is sufficient to capture over $93\%$ of the variance in the data. After the PCA processing, each computed principal component can be rendered back to the brain reconstruction by using the open-source library Nilearn \cite{abraham2014machine}. Fig. \ref{fig_fmri_all} displays the optimization performance, where the Inexact Sub-RN-CR converges faster in terms of both run time and oracle calls. For our method and the Inexact RTR, adopting the subsampled gradient leads to a suboptimal solution in less time than using the full gradient.  We speculate that imprecise gradients cause an oscillation of the optimization near local optima. Fig. \ref{fig_fmri_pca_show} compares the results obtained by our optimization algorithm with those computed by eigen-decomposition. The highlighted regions denote the main activated regions with positive connectivity (yellow) and negative connectivity (blue). The components learned by the two approaches are similar, with some cases learned by our approach tending to be more connected (see Figs. \ref{fig_fmri_comp1} and \ref{fig_fmri_comp3}).

\subsubsection{Image Recovery by Matrix Completion}
In this application, we demonstrate image recovery with matrix completion using a $(W,H,C)=2519\times1679\times3$ scene image selected from the BIG dataset \cite{cheng2020cascadepsp}. As seen in Fig. \ref{fig_big_ori}, this image contains rich texture information. The values  of $(n,d,r)$  for conducting the matrix completion task are $(1679,2519,20)$ where we use a relatively large rank to allow more accurate recovery. The sampling ratio for observing the pixels is set as ${\rm SR}=\frac{\left|\Omega\right|}{W\times H\times C} =0.6$.
Fig. \ref{fig_big_all} compares the performance of different algorithms, where the Inexact Sub-RN-CR takes the shortest time to obtain a satisfactory solution. The subsampled gradient promotes the optimization speed of the Inexact Sub-RN-CR without sacrificing much  the MSE error. Fig. \ref{fig_big_mc_show} illustrates the recovered image using three representative algorithms, providing  similar visual results.

\begin{figure}[t]
	\centering
	\subfloat[\label{fig_big_ori}Original image]{
		\includegraphics[width=0.3\textwidth]{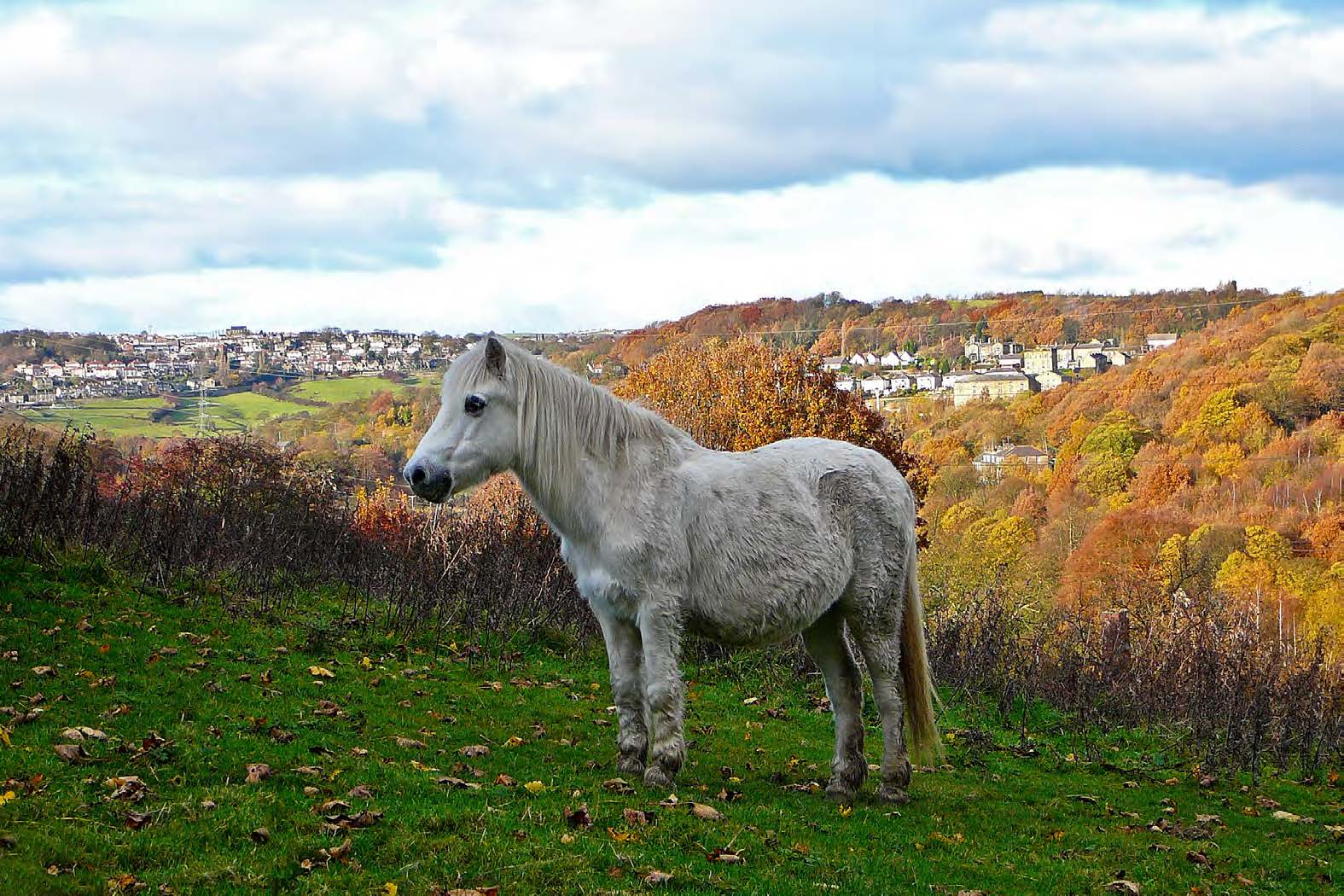}
	}
	\subfloat[\label{fig_big_mask}Observation]{
		\includegraphics[width=0.3\textwidth]{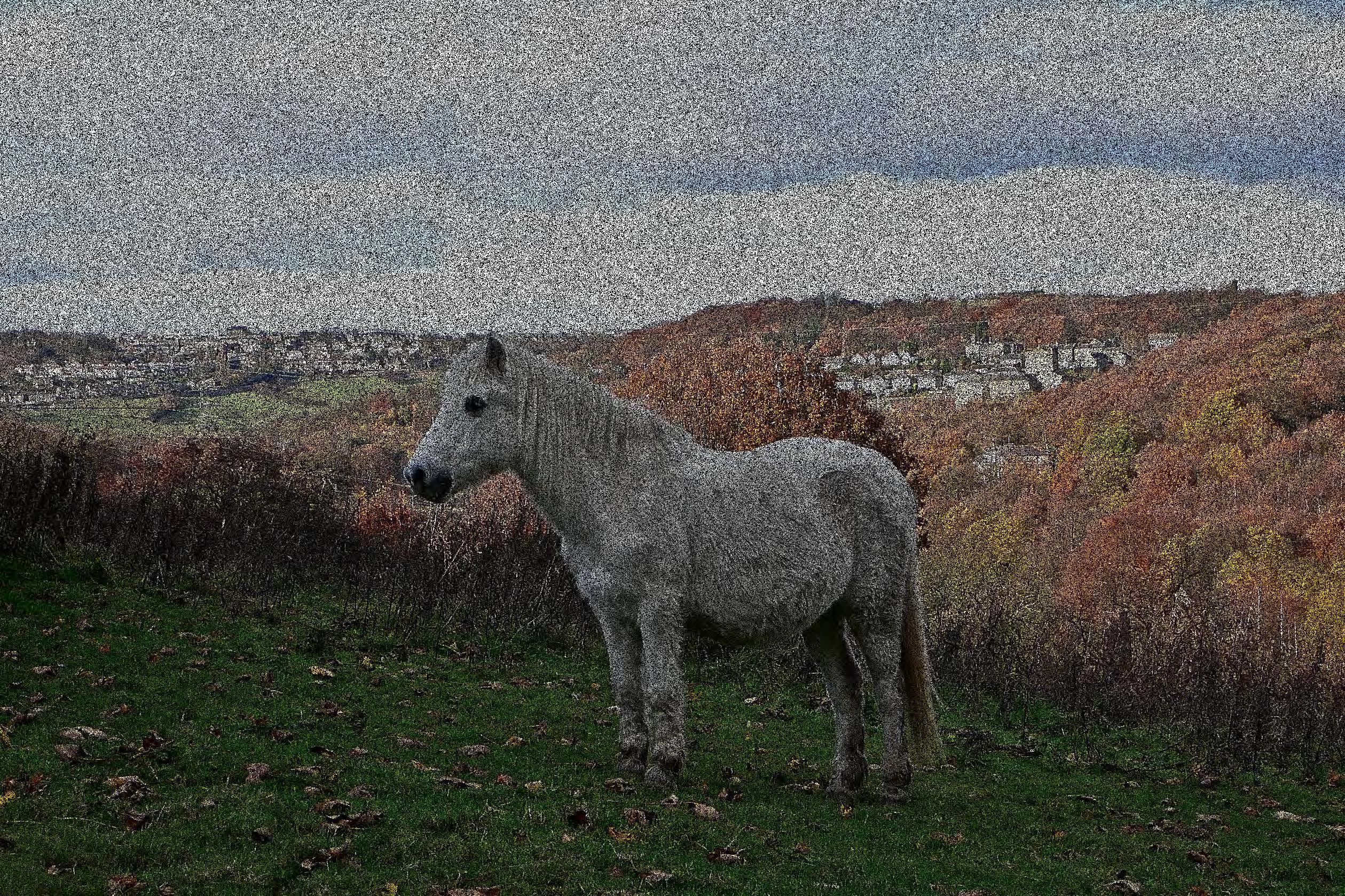}
	}
	\\
	\subfloat[\label{fig_big_rscr}Inexact Sub-RN-CR]{
		\includegraphics[width=0.3\textwidth]{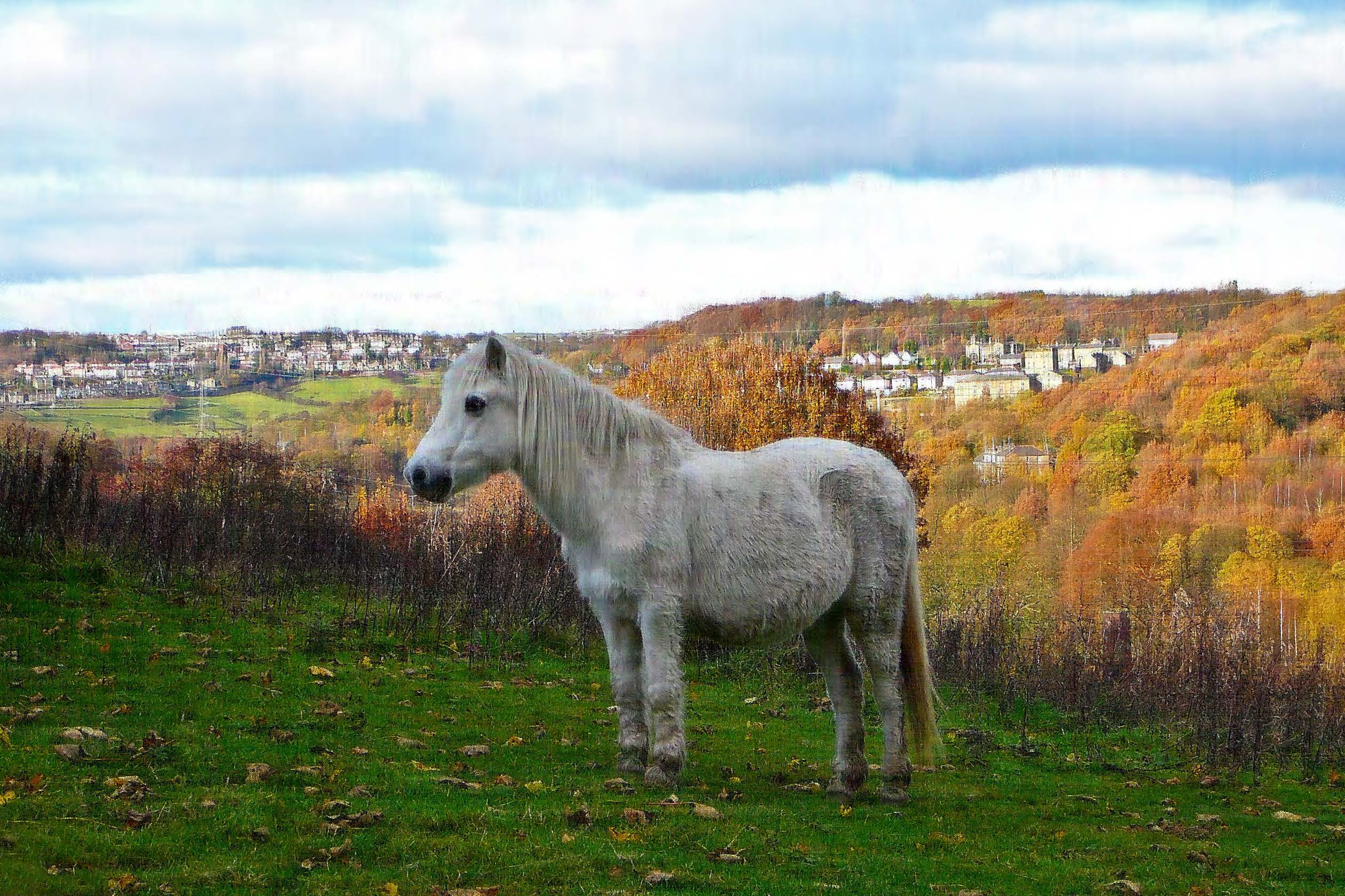}
	}
	\subfloat[\label{fig_big_rtr}Inexact RTR]{
		\includegraphics[width=0.3\textwidth]{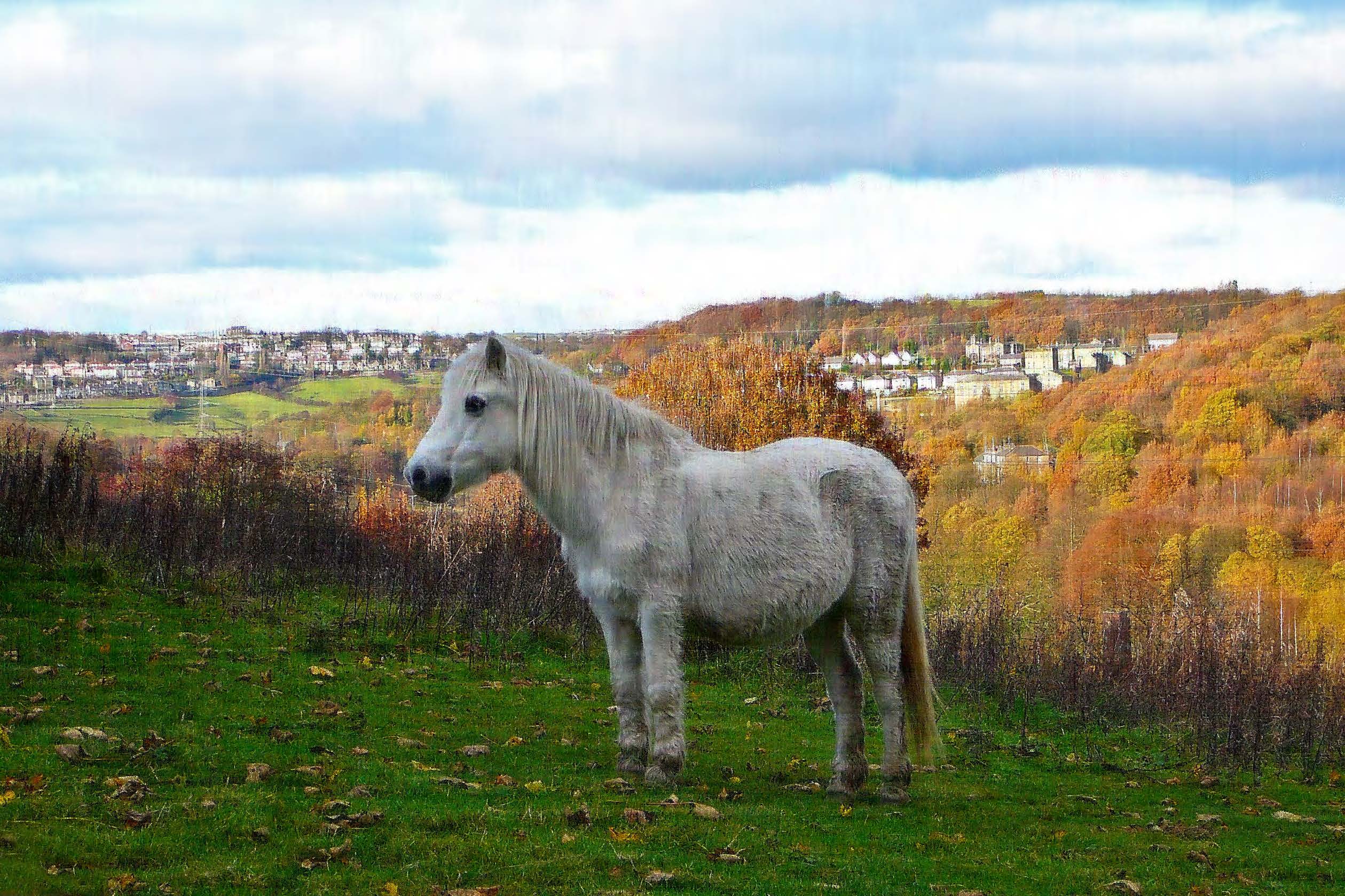}
	}
	\subfloat[\label{fig_big_rtrmc}RTRMC]{
		\includegraphics[width=0.3\textwidth]{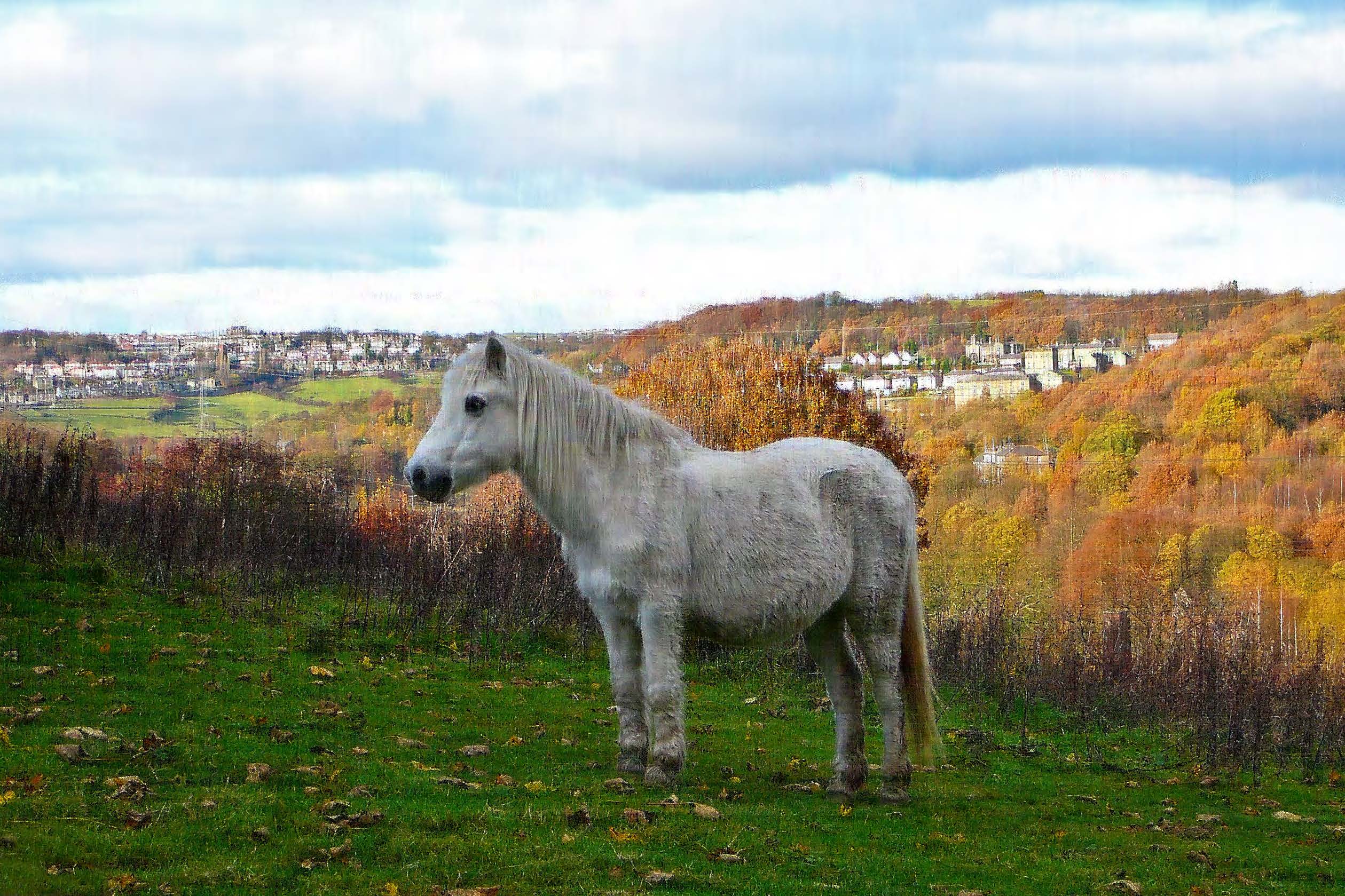}
	}
	\caption{\label{fig_big_mc_show} Comparison of the  scene images recovered by different algorithms.}
\end{figure}

\subsection{Results of Conjugate Gradient Subproblem Solver}
\label{exp_cg}

\begin{figure}[h]
	\centering
	\subfloat[Synthetic dataset P1]{
		\includegraphics[width=0.47\textwidth]{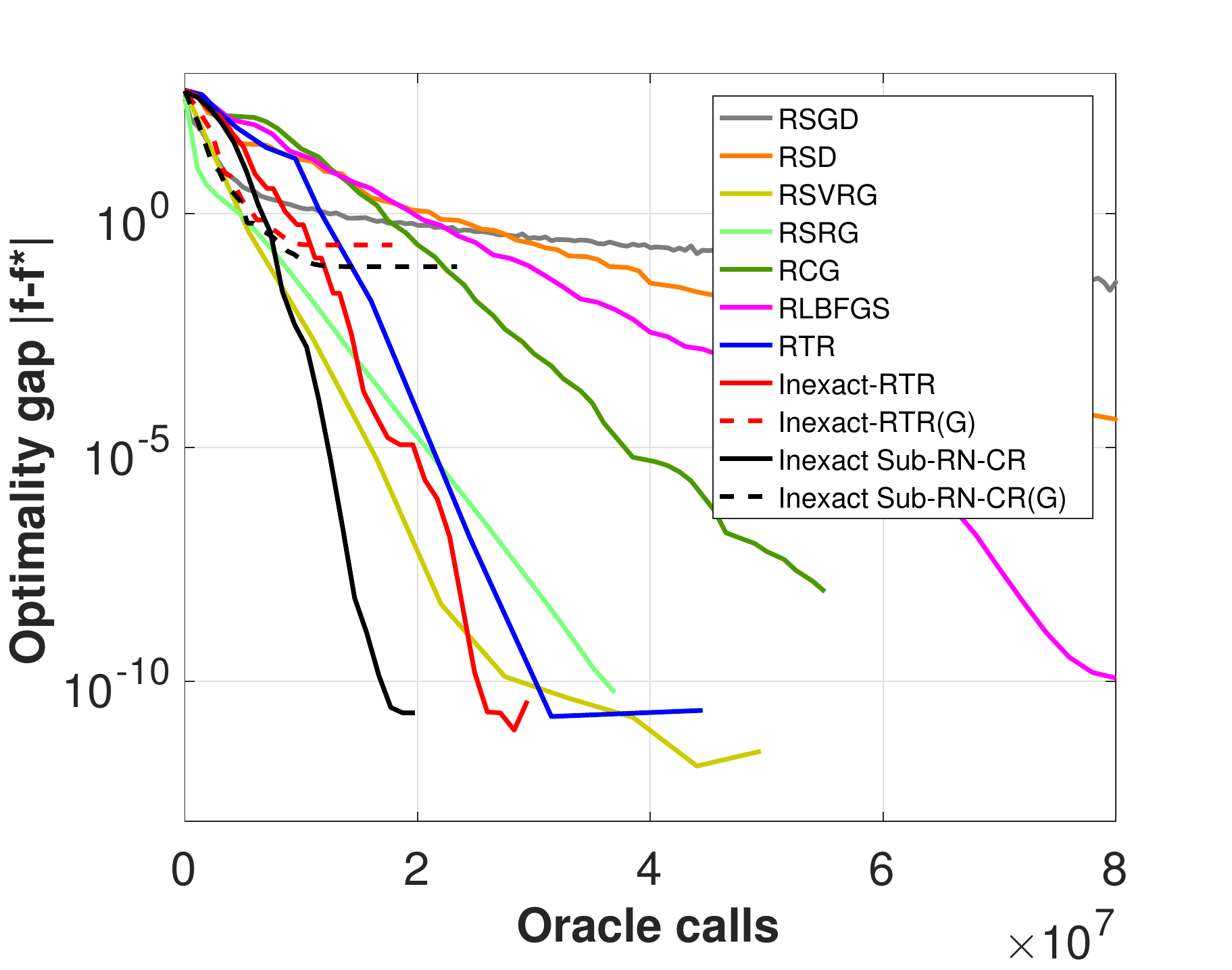}
		\includegraphics[width=0.47\textwidth]{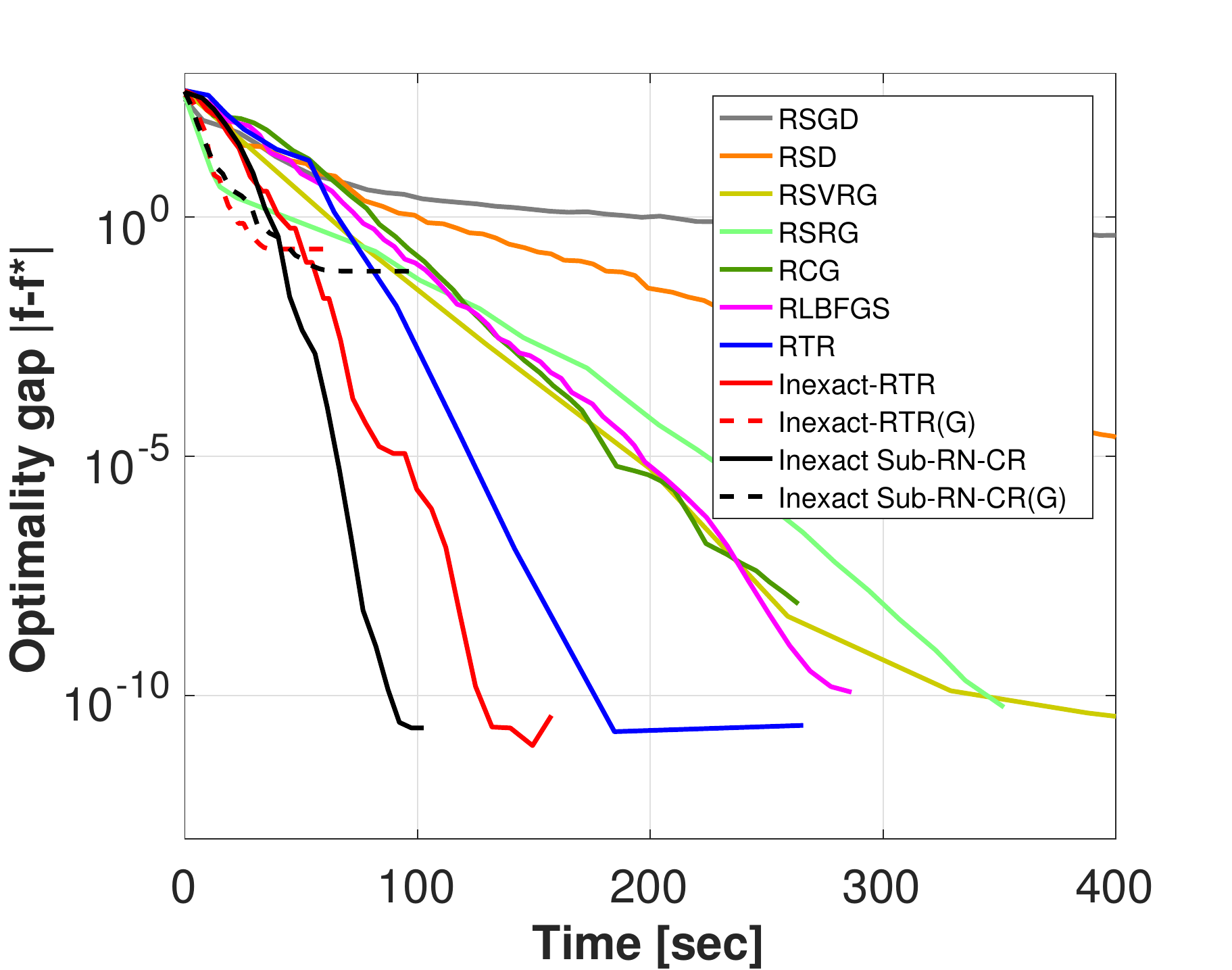}
	}
	\\
	\subfloat[ MNIST dataset]{
		\includegraphics[width=0.47\textwidth]{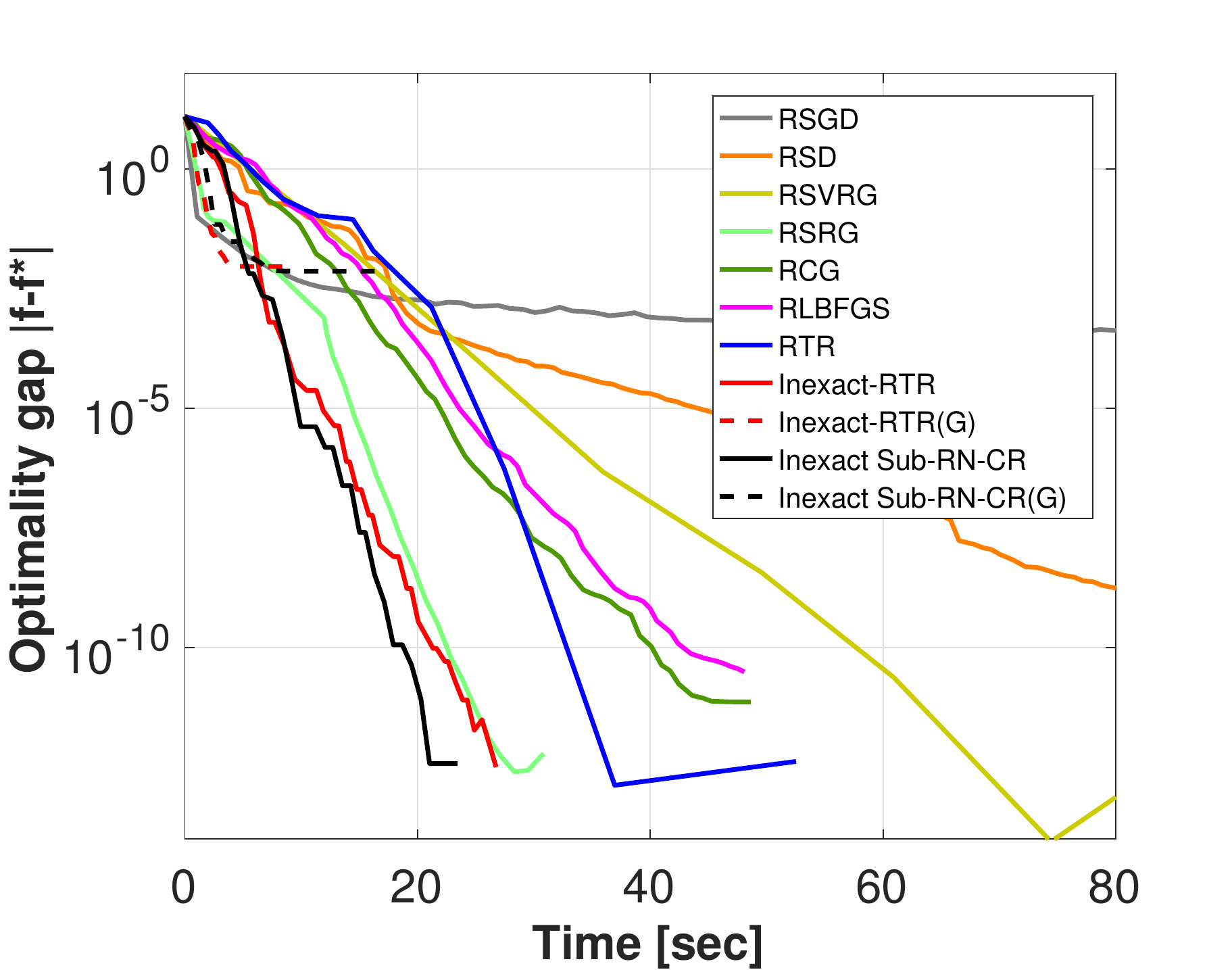}
	}
	\subfloat[Covertype dataset]{
		\includegraphics[width=0.47\textwidth]{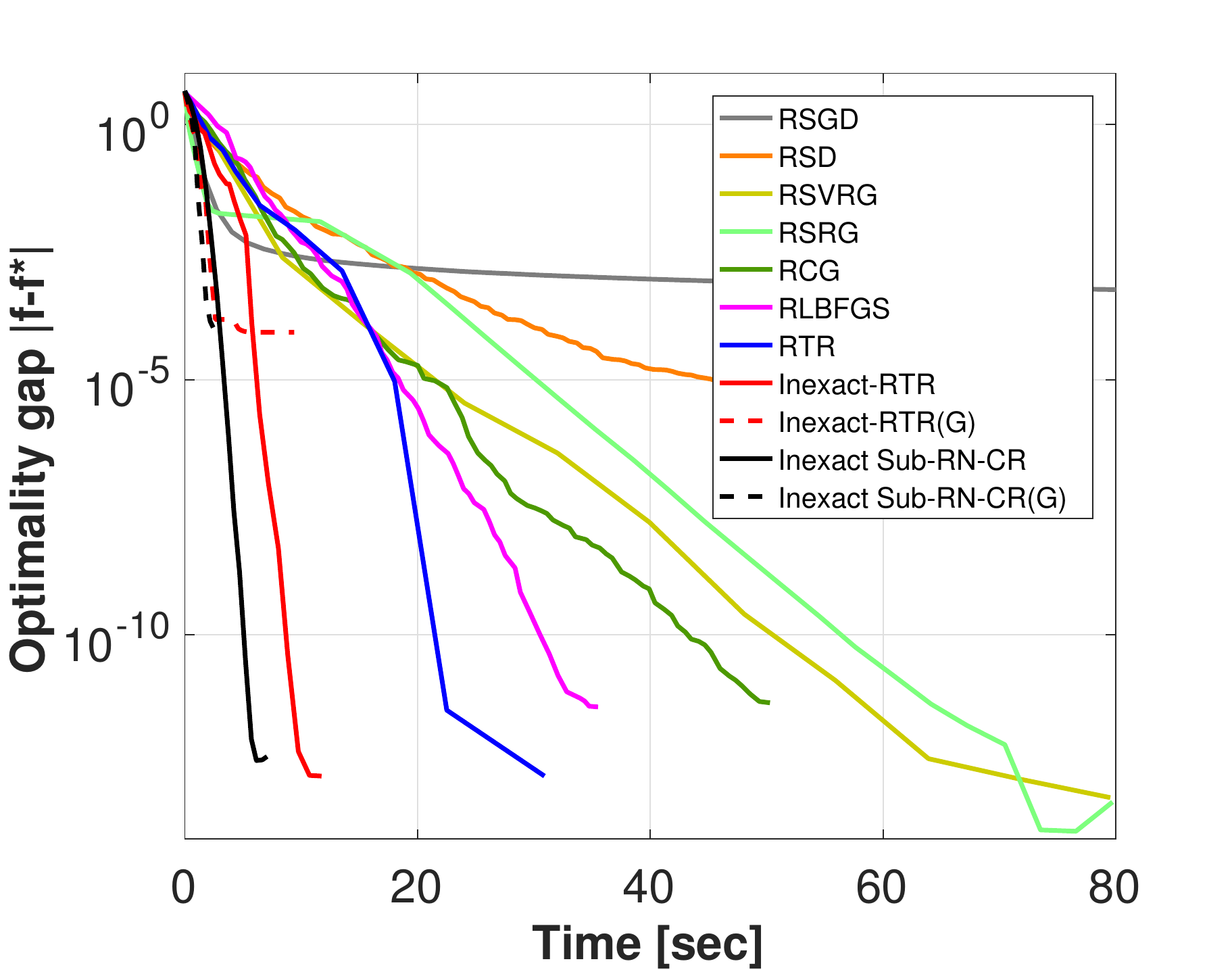}
	}
	\caption{\label{fig_PCA_all_CG} Performance comparison by optimality gap for the PCA task (using the CG solver in Inexact Sub-RN-CR).}
\end{figure}

\begin{figure}[!t]
	\centering
	\subfloat[ Synthetic  Dataset M1]{
		\includegraphics[width=0.5\textwidth]{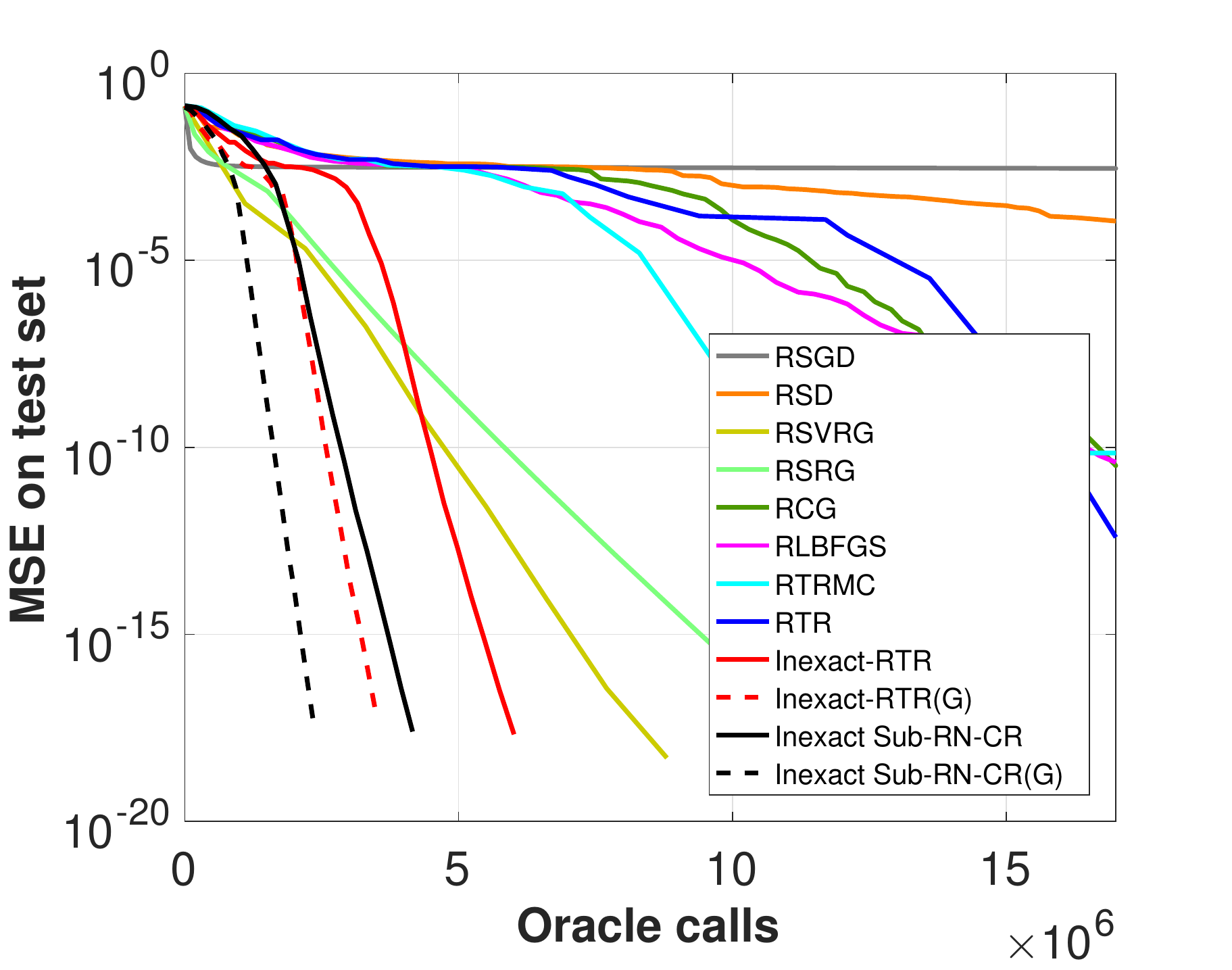}
		\includegraphics[width=0.5\textwidth]{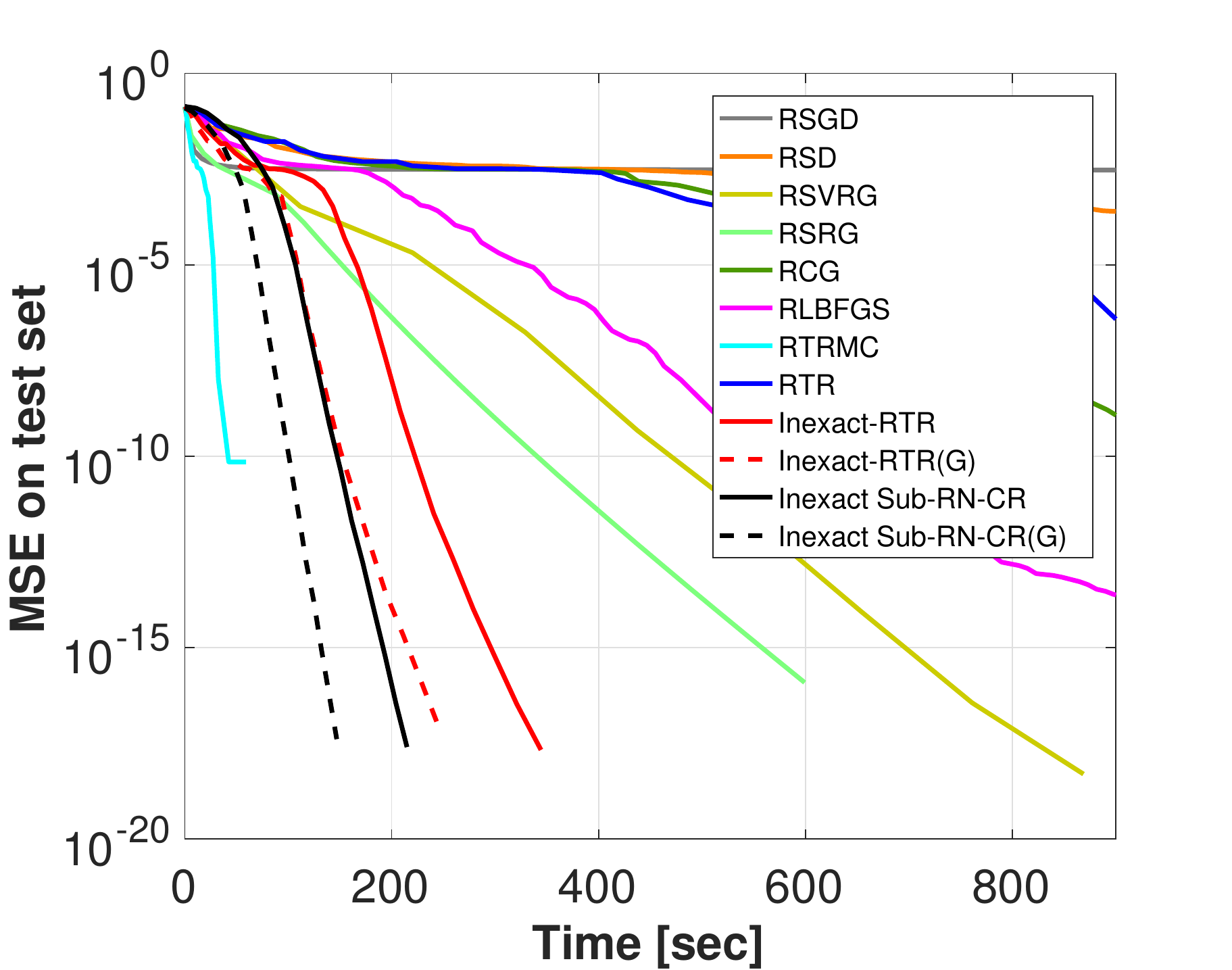}
	}
	\\
	\subfloat[ Synthetic Dataset M2]{
		\includegraphics[width=0.5\textwidth]{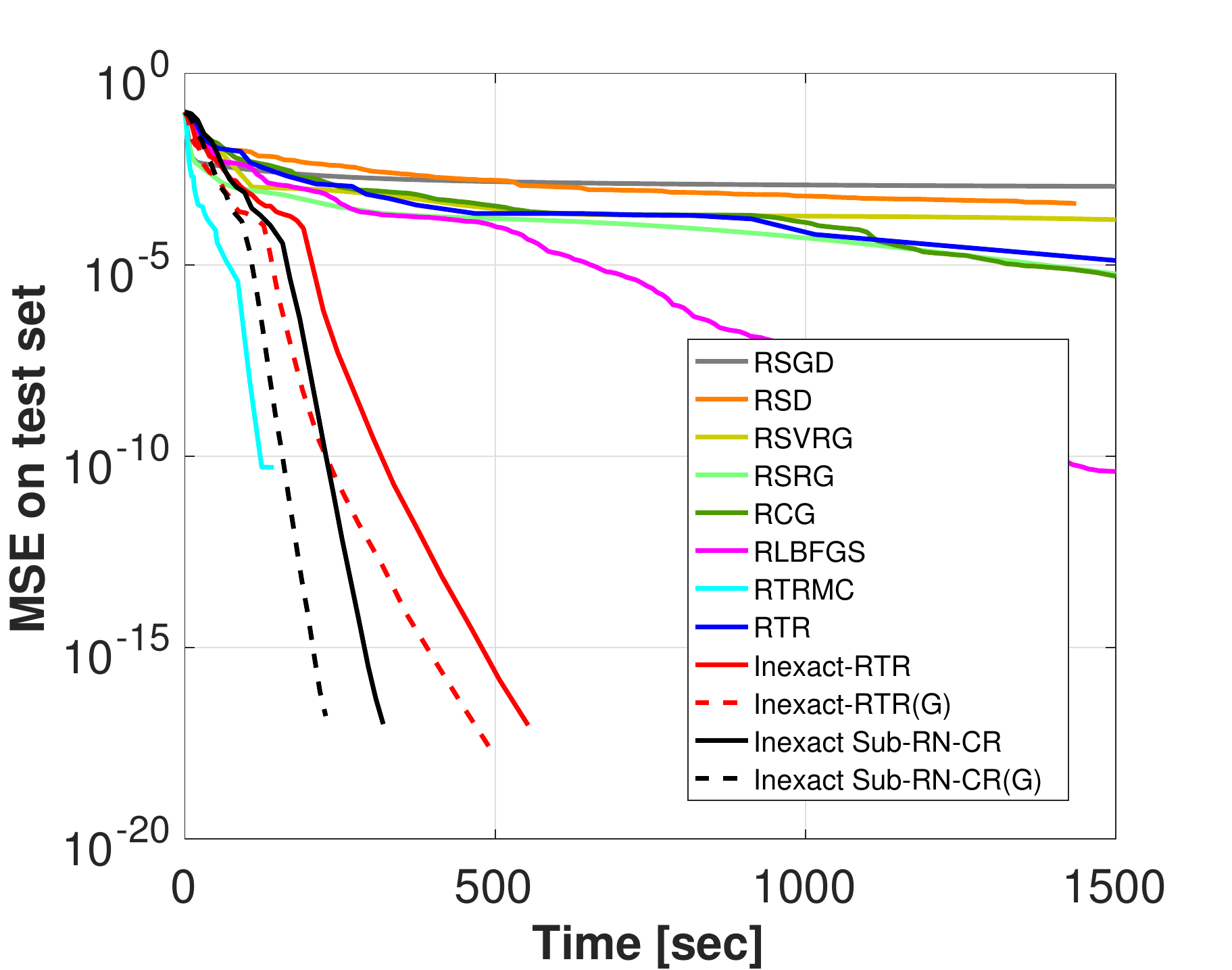}
	}
	\subfloat[ Synthetic Dataset M3]{
		\includegraphics[width=0.5\textwidth]{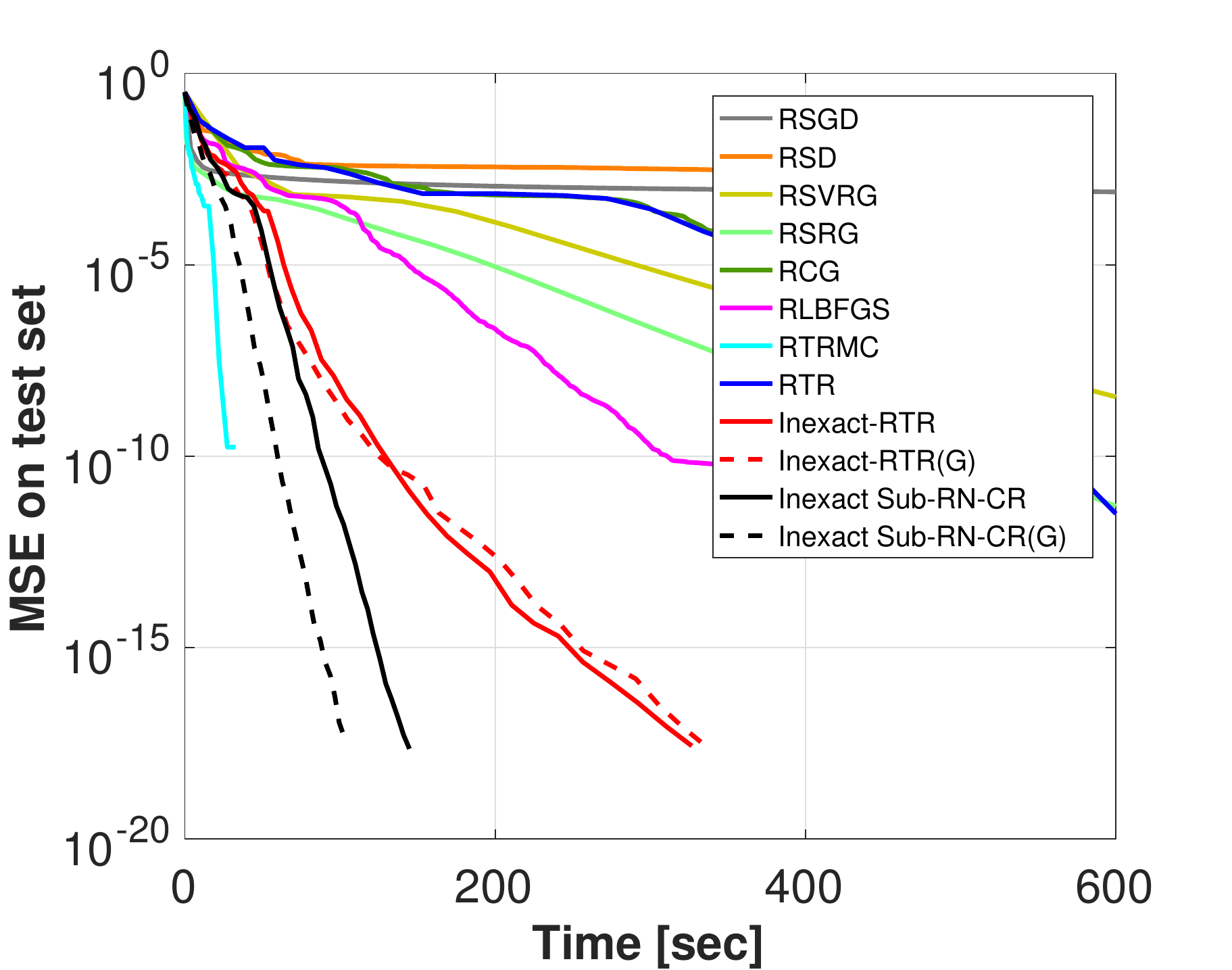}
	}
	\\
	\subfloat[ Jester Dataset ]{
		\includegraphics[width=0.5\textwidth]{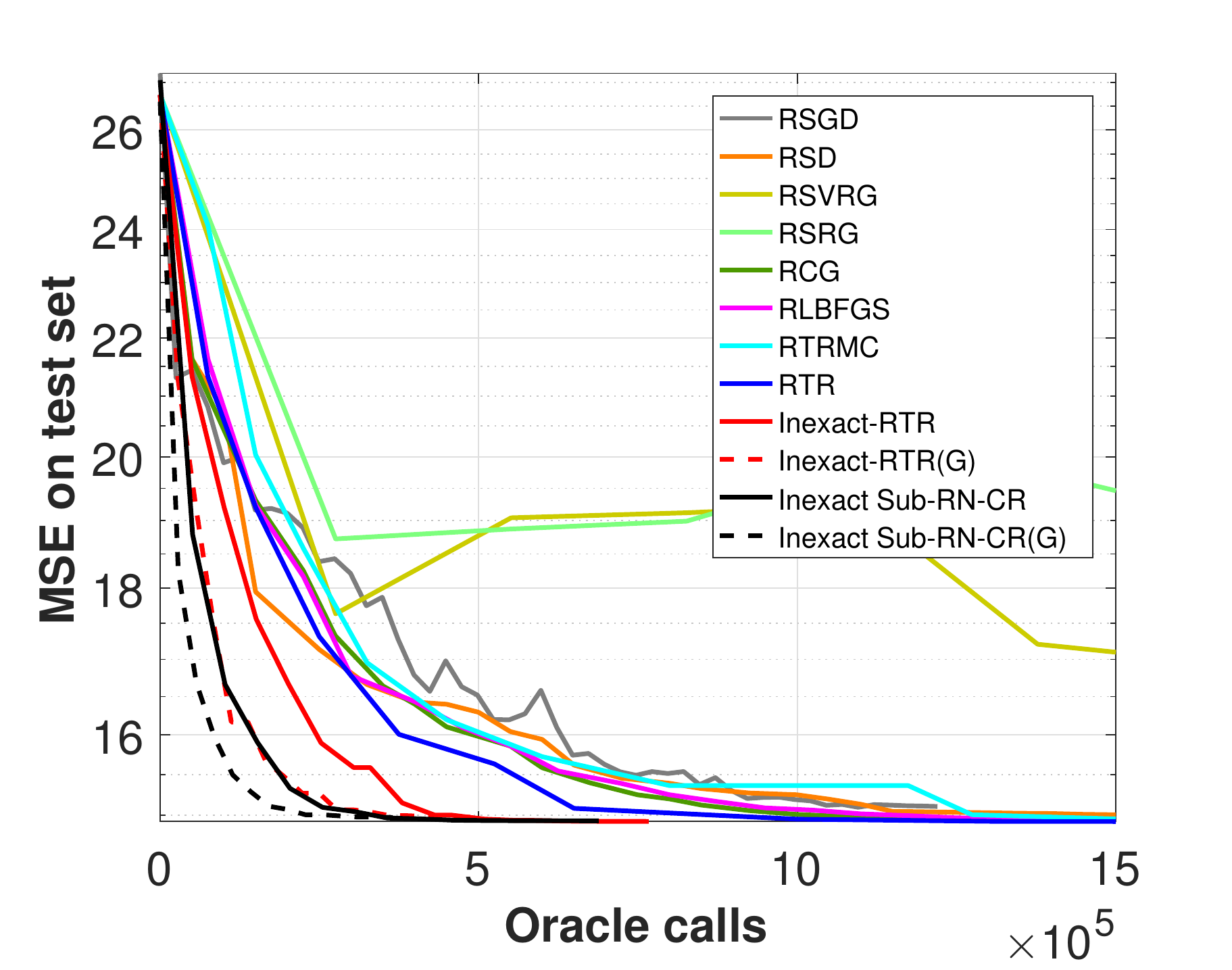}
		\includegraphics[width=0.5\textwidth]{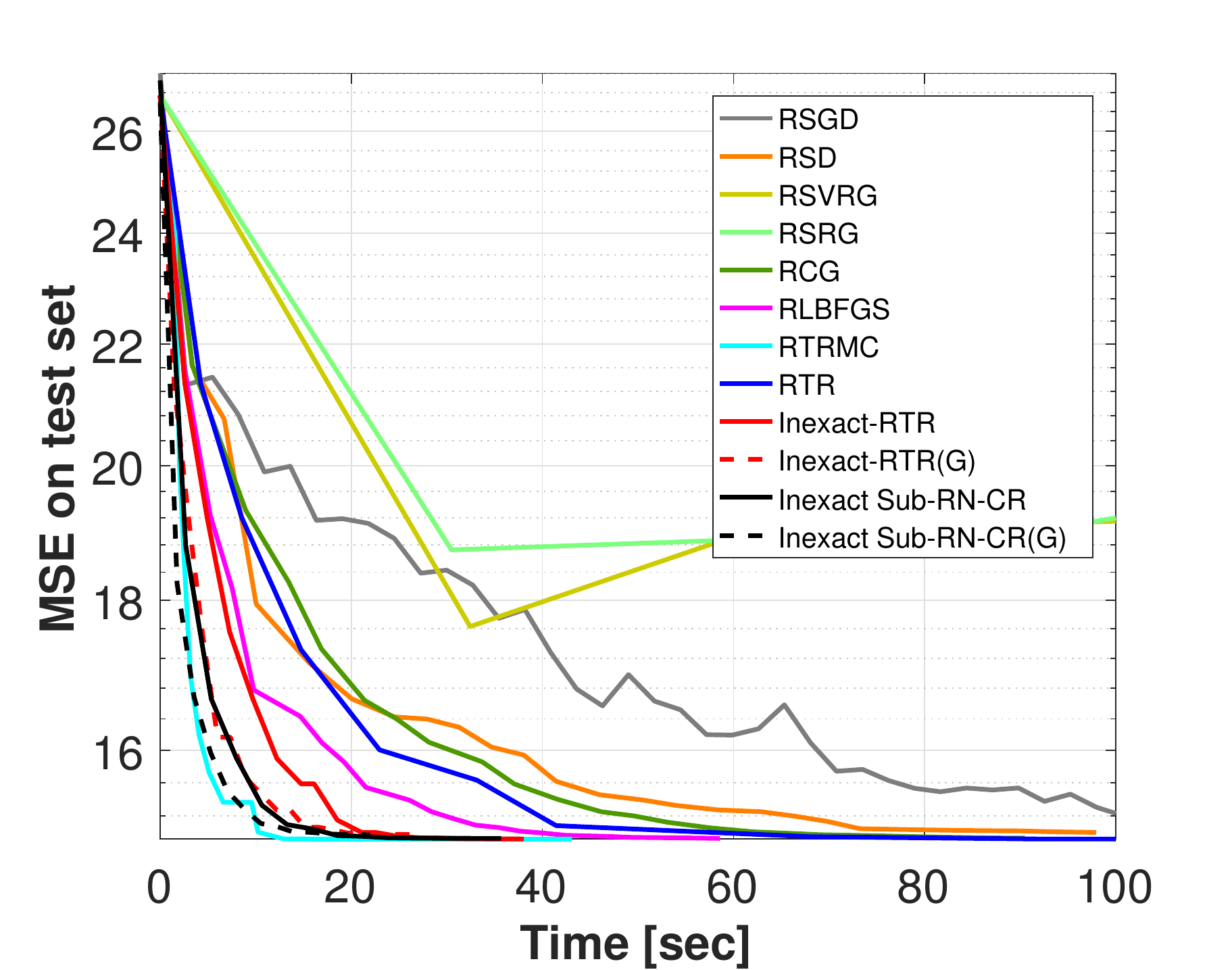}
	}
	\caption{\label{fig_MC_all_CG} Performance comparison by MSE for the matrix completion task (using the CG solver in Inexact Sub-RN-CR).}
\end{figure}

\begin{figure}[!t]
	\centering
	\includegraphics[width=0.47\textwidth]{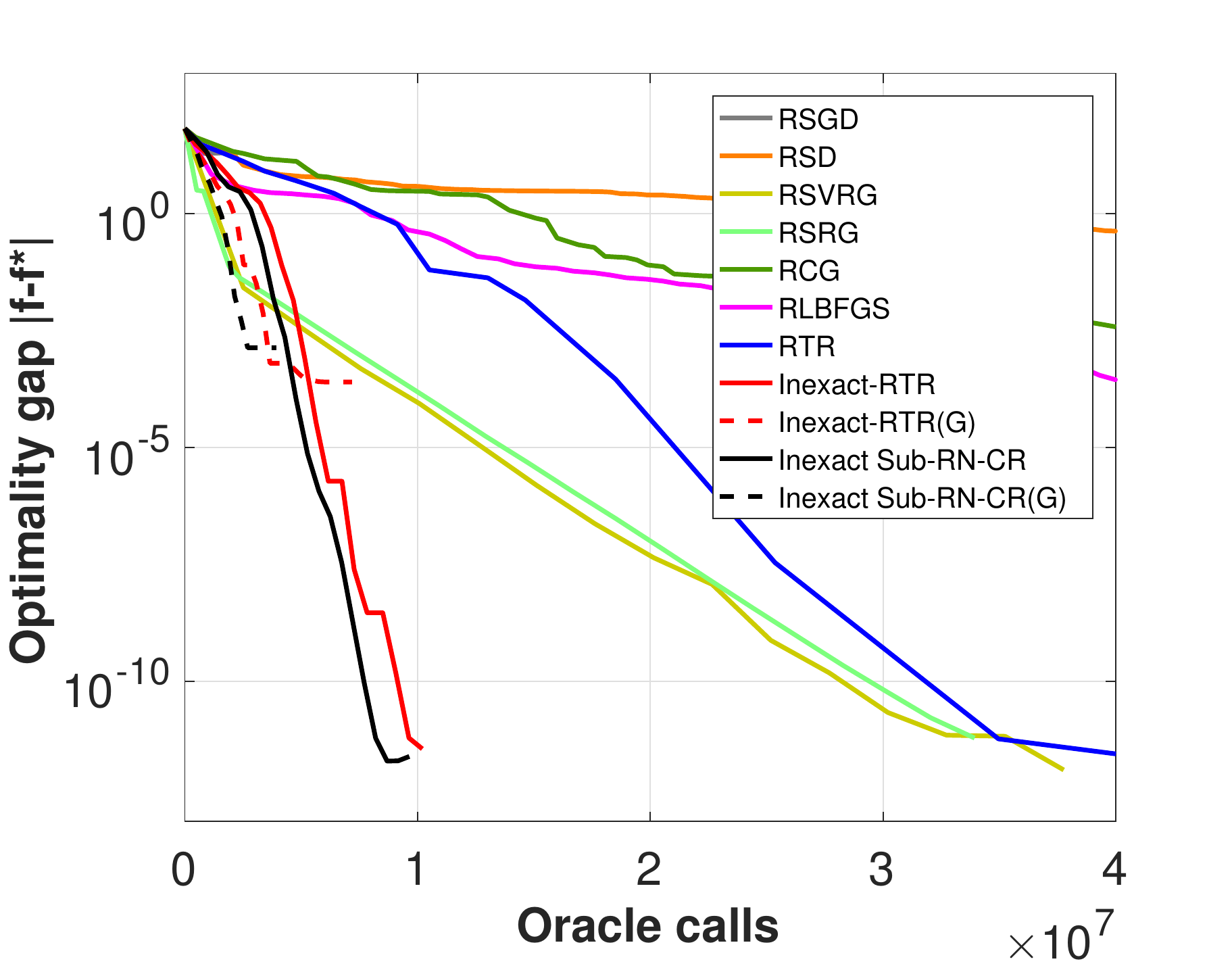}
	\includegraphics[width=0.47\textwidth]{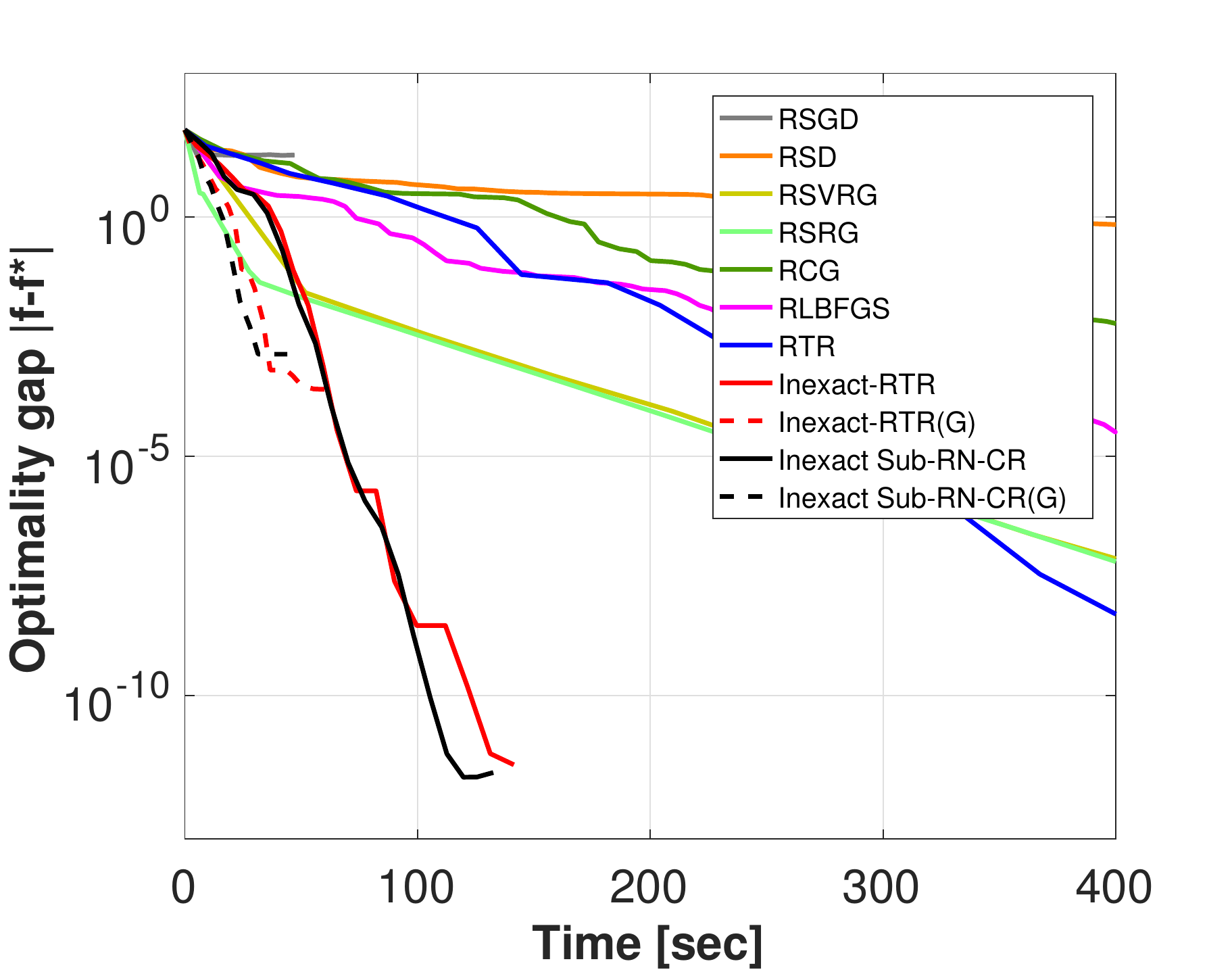}
	\caption{\label{fig_fmri_all_CG}  PCA optimality gap comparison for fMRI analysis (using the CG solver in Inexact Sub-RN-CR).}
\end{figure}

\begin{figure}[!t]
	\centering
	\includegraphics[width=0.47\textwidth]{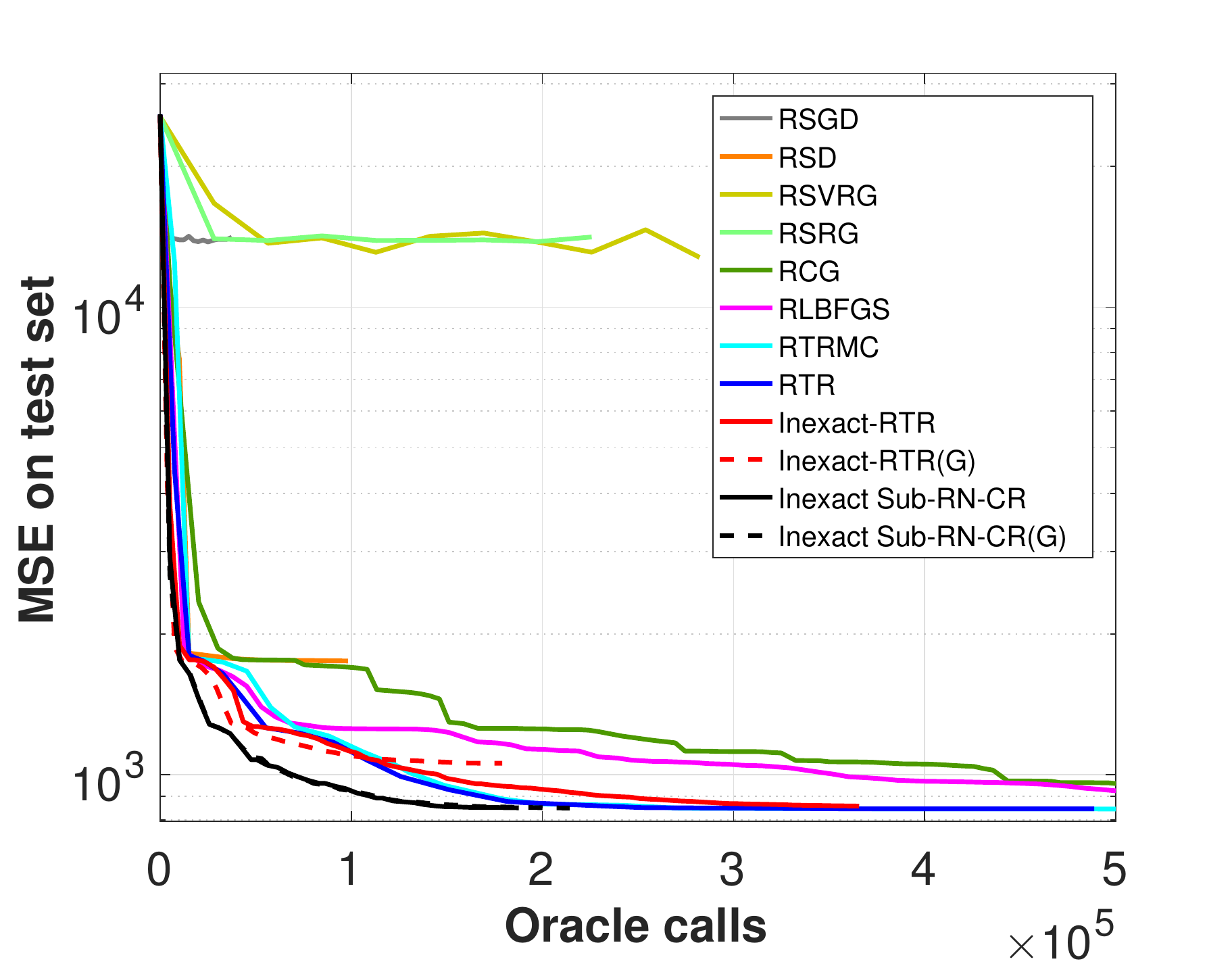}
	\includegraphics[width=0.47\textwidth]{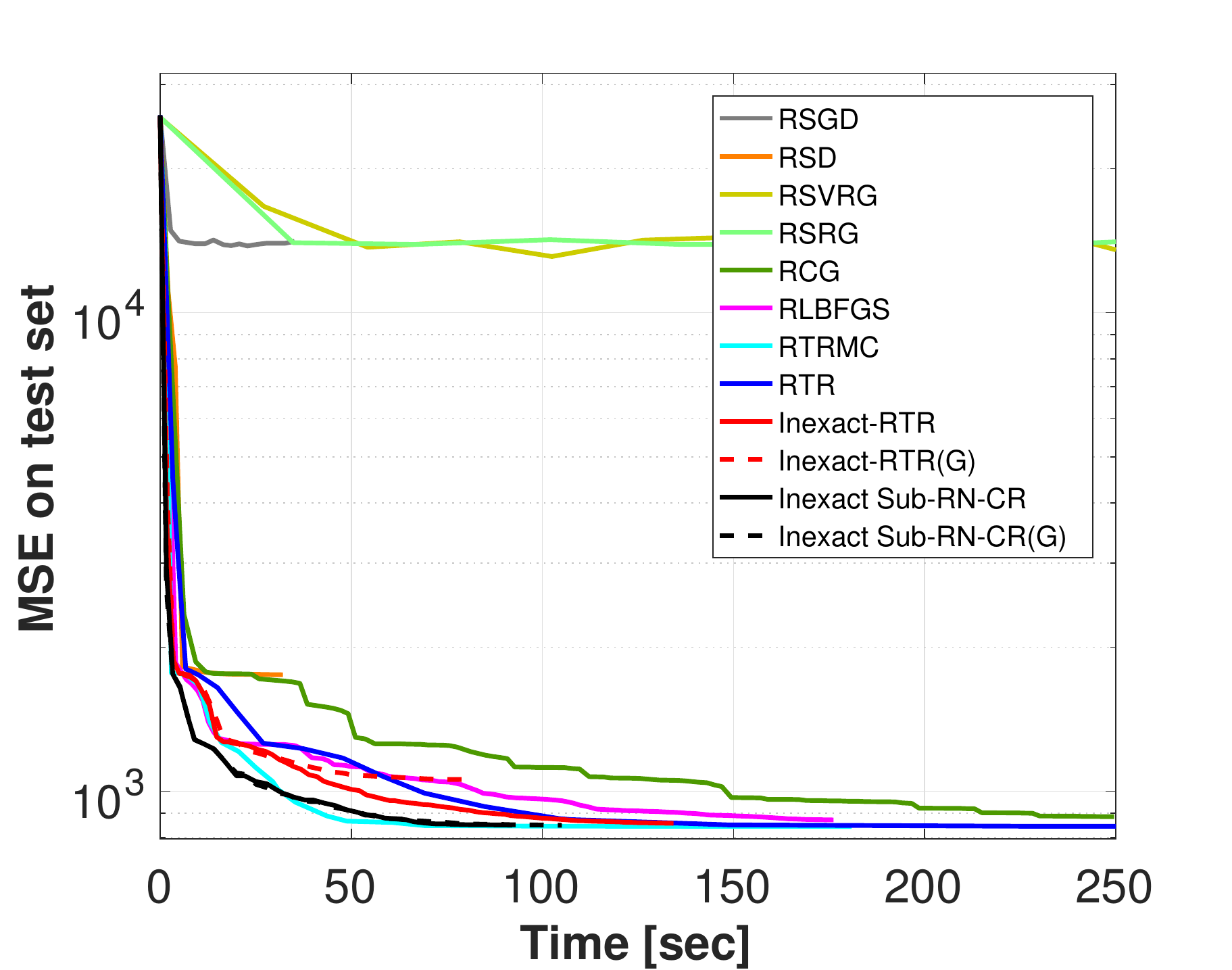}
	\caption{\label{fig_big_all_CG} MSE comparison for scene image recovery by matrix completion (using the CG solver in Inexact Sub-RN-CR).}
\end{figure}

We experiment with Algorithm \ref{alg_non_linear_tCG} for solving the subproblem.  
In Step (\ref{CG_opt_alpha}) of Algorithm \ref{alg_non_linear_tCG}, the eigen-decomposition method \cite{edelman1995polynomial}  used to solve the minimization problem has a complexity $\mathcal{O}(C^3)$ where $C=4$ is the fixed degree of the polynomial.
Figs.  \ref{fig_PCA_all_CG}-\ref{fig_big_all_CG} display the results for both the PCA and matrix completion tasks.
Overall, Algorithm \ref{alg_non_linear_tCG} can obtain the optimal results with the fastest convergence speed, as compared to the opponent approaches.
We have observed that, in general, Algorithm \ref{alg_non_linear_tCG}  provides similar results to Algorithm \ref{alg_non_linear_Lan}, but they differ  in run time. 
For instance,  Algorithm \ref{alg_non_linear_Lan} runs $18\%$ faster for the PCA task with the MNIST dataset and $20\%$ faster for the matrix completion task with the M1 dataset, as compared to Algorithm \ref{alg_non_linear_tCG}. 
But Algorithm \ref{alg_non_linear_tCG} runs $17\%$ faster than Algorithm \ref{alg_non_linear_Lan} for the matrix completion task with the M2 dataset. 
A hypothesis  could be that Algorithm \ref{alg_non_linear_Lan} performs well on well-conditioned data (e.g. MNIST and M1) because of its strength of finding the global solution, while for  ill-conditioned data (e.g. M2), it may not show significant advantages over  Algorithm \ref{alg_non_linear_tCG}. 
Moreover, from the computational aspect,  the Step (\ref{CG_opt_alpha}) in Algorithm \ref{alg_non_linear_tCG} is of $\mathcal{O}(C^3)$ complexity, which tends to be faster than solving Eq. (\ref{eq_lanczos_obj}) as required by Algorithm \ref{alg_non_linear_Lan}. 
Overall this makes Algorithm \ref{alg_non_linear_tCG} probably a better choice  than Algorithm \ref{alg_non_linear_Lan} for processing ill-conditioned data.

\subsection{Examination of Convergence Analysis Assumptions} \label{sec_assum_satisfaction}

As explained in Section \ref{sec:sub_solution} and Section \ref{sec:early}, the eigenstep condition in Assumption \ref{assu_cauchy_eigen_point}, Assumption \ref{assu_g} and Assumption \ref{assu_cg}, although are required by convergence analysis,  are not always satisfied by a subproblem solver in practice.
In this experiment, we attempt to estimate the probability $P$ that an assumption is satisfied in the process of optimization, by counting the number of outer iterations of Algorithm \ref{alg_inexact_rtr_arc} where an assumption holds. 
We repeat this entire process five times ($T=5$) to attain a stable result. 
Let $N_i$  be the number of outer iterations where the assumption is satisfied, and $M_i$ the total number of outer iterations, in the $i$-th repetition ($i=1,\;2,\;\ldots,5$). 
We compute the probability   by $P=\frac{\sum_{i\in[T]}N_i}{\sum_{i\in[T]}M_i}$.
Experiments are conducted for the PCA task using the P1 dataset.

In order to examine  Assumption \ref{assu_g}, which is  the stopping criterion in Eq. (\ref{eq_stepsize_stop_additional}), we temporarily deactivate the other stopping criteria.
We observe that Algorithm \ref{alg_non_linear_Lan} can always produce a solution that satisfies  Assumption \ref{assu_g}. 
However,  Algorithm \ref{alg_non_linear_tCG} has only  $P\approx50\%$ chance to produce a solution satisfying Assumption \ref{assu_g}.
The reason is probably that when computing $\mathbf{r}_i$ in Step (\ref{CG_update_r}) of Algorithm \ref{alg_non_linear_tCG},  the first-order approximation of $\nabla f(R_{\mathbf{x}_k}(\bm\eta_k^*))$ is used rather than the exact $\nabla f(R_{\mathbf{x}_k}(\bm\eta_k^*))$ for the sake of computational efficiency. 
This can result in an approximation error.

Regarding the eigenstep condition in Assumption \ref{assu_cauchy_eigen_point},  it can always be met by Algorithm \ref{alg_non_linear_Lan}  with $P\approx 100\%$. 
This indicates that even a few inner iterations are sufficient for it to find a solution pointing in  the direction of negative curvature.
However, Algorithm \ref{alg_non_linear_tCG} has    a $P\approx 70\%$ chance to meet the eigenstep condition. This might be caused by insufficient inner iterations according to Theorem \ref{thm_tCG_converge}. 
Moreover, the solution obtained by Algorithm \ref{alg_non_linear_tCG}  is only guaranteed to be stationary according to Theorem \ref{thm_tCG_converge}, rather than pointing in the direction of the negative curvature. This could  be a second cause for  Algorithm \ref{alg_non_linear_tCG} not to meet the eigenstep condition in Eq. (\ref{eq_eigen_condition}).

While about Assumption \ref{assu_cg}, according to Lemma \ref{lem_lanczos_sol_property}, Algorithm \ref{alg_non_linear_Lan} always satisfies it. This is verified by our results with $P=100\%$.
Algorithm \ref{alg_non_linear_tCG} has a $P\approx 80\%$ chance to meet  Assumption \ref{assu_cg} empirically.
This empirical result matches the theoretical result indicated by Lemma \ref{lem_tcg_sol_property} where solutions from Algorithm \ref{alg_non_linear_tCG} tend to approximately satisfy Assumption \ref{assu_cg}.

\section{Conclusion}

We have proposed the Inexact Sub-RN-CR algorithm to offer an effective and fast optimization for an important class of non-convex problems whose constraint sets possess manifold structures. The algorithm improves the current state of the art in second-order Riemannian optimization by using cubic regularization and subsampled Hessian and gradient approximations. We have also provided rigorous theoretical results on its convergence, and empirically evaluated  and compared it with state-of-the-art Riemannian optimization techniques for two general machine learning tasks and multiple datasets. Both theoretical and experimental results demonstrate that the Inexact Sub-RN-CR offers improved convergence and computational costs. 
Although the proposed method is promising in solving large-sample problems,  there remains an open and interesting question of whether the proposed algorithm can be effective in training a constrained deep neural network.  This is more demanding in its required computational complexity and convergence characteristics than many other machine learning problems, and it  is more challenging to perform the Hessian approximation. Our future work will pursue this direction.

\bibliographystyle{spmpsci}      
\bibliography{ref}   

\begin{thebibliography}{10}
\providecommand{\url}[1]{{#1}}
\providecommand{\urlprefix}{URL }
\expandafter\ifx\csname urlstyle\endcsname\relax
  \providecommand{\doi}[1]{DOI~\discretionary{}{}{}#1}\else
  \providecommand{\doi}{DOI~\discretionary{}{}{}\begingroup
  \urlstyle{rm}\Url}\fi

\bibitem{abraham2014machine}
Abraham, A., Pedregosa, F., Eickenberg, M., Gervais, P., Mueller, A., Kossaifi,
  J., Gramfort, A., Thirion, B., Varoquaux, G.: Machine learning for
  neuroimaging with scikit-learn.
\newblock Frontiers in neuroinformatics \textbf{8}, 14 (2014)

\bibitem{absil2007trust}
Absil, P.A., Baker, C.G., Gallivan, K.A.: Trust-region methods on riemannian
  manifolds.
\newblock Foundations of Computational Mathematics \textbf{7}(3), 303--330
  (2007)

\bibitem{absil2009optimization}
Absil, P.A., Mahony, R., Sepulchre, R.: Optimization algorithms on matrix
  manifolds.
\newblock Princeton University Press (2009)

\bibitem{agarwal2021adaptive}
Agarwal, N., Boumal, N., Bullins, B., Cartis, C.: Adaptive regularization with
  cubics on manifolds.
\newblock Mathematical Programming \textbf{188}(1), 85--134 (2021)

\bibitem{alimisis2021momentum}
Alimisis, F., Orvieto, A., B{\'e}cigneul, G., Lucchi, A.: Momentum improves
  optimization on riemannian manifolds.
\newblock In: International Conference on Artificial Intelligence and
  Statistics, pp. 1351--1359. PMLR (2021)

\bibitem{anandkumar2016efficient}
Anandkumar, A., Ge, R.: Efficient approaches for escaping higher order saddle
  points in non-convex optimization.
\newblock In: Conference on learning theory, pp. 81--102 (2016)

\bibitem{becigneul2018riemannian}
Becigneul, G., Ganea, O.E.: Riemannian adaptive optimization methods.
\newblock In: International Conference on Learning Representations (2019)

\bibitem{blackard1999comparative}
Blackard, J.A., Dean, D.J.: Comparative accuracies of artificial neural
  networks and discriminant analysis in predicting forest cover types from
  cartographic variables.
\newblock Computers and electronics in agriculture \textbf{24}(3), 131--151
  (1999)

\bibitem{bonnabel2013stochastic}
Bonnabel, S.: Stochastic gradient descent on riemannian manifolds.
\newblock IEEE Transactions on Automatic Control \textbf{58}(9), 2217--2229
  (2013)

\bibitem{boumal2020introduction}
Boumal, N.: An introduction to optimization on smooth manifolds.
\newblock Available online, May \textbf{3} (2020)

\bibitem{boumal2011rtrmc}
Boumal, N., Absil, P.a.: Rtrmc: A riemannian trust-region method for low-rank
  matrix completion.
\newblock In: Advances in neural information processing systems, pp. 406--414
  (2011)

\bibitem{boumal2019global}
Boumal, N., Absil, P.A., Cartis, C.: Global rates of convergence for nonconvex
  optimization on manifolds.
\newblock IMA Journal of Numerical Analysis \textbf{39}(1), 1--33 (2019)

\bibitem{boumal2014manopt}
Boumal, N., Mishra, B., Absil, P.A., Sepulchre, R.: Manopt, a matlab toolbox
  for optimization on manifolds.
\newblock The Journal of Machine Learning Research \textbf{15}(1), 1455--1459
  (2014)

\bibitem{carmon2018analysis}
Carmon, Y., Duchi, J.C.: Analysis of krylov subspace solutions of regularized
  non-convex quadratic problems.
\newblock Advances in Neural Information Processing Systems \textbf{31} (2018)

\bibitem{cartis2011adaptive}
Cartis, C., Gould, N.I., Toint, P.L.: Adaptive cubic regularisation methods for
  unconstrained optimization. part i: motivation, convergence and numerical
  results.
\newblock Mathematical Programming \textbf{127}(2), 245--295 (2011)

\bibitem{cheng2020cascadepsp}
Cheng, H.K., Chung, J., Tai, Y.W., Tang, C.K.: Cascadepsp: toward
  class-agnostic and very high-resolution segmentation via global and local
  refinement.
\newblock In: Proceedings of the IEEE/CVF Conference on Computer Vision and
  Pattern Recognition, pp. 8890--8899 (2020)

\bibitem{cho2017riemannian}
Cho, M., Lee, J.: Riemannian approach to batch normalization.
\newblock In: Advances in Neural Information Processing Systems, pp. 5225--5235
  (2017)

\bibitem{conn2000trust}
Conn, A.R., Gould, N.I., Toint, P.L.: Trust region methods.
\newblock SIAM (2000)

\bibitem{edelman1998geometry}
Edelman, A., Arias, T.A., Smith, S.T.: The geometry of algorithms with
  orthogonality constraints.
\newblock SIAM journal on Matrix Analysis and Applications \textbf{20}(2),
  303--353 (1998)

\bibitem{edelman1995polynomial}
Edelman, A., Murakami, H.: Polynomial roots from companion matrix eigenvalues.
\newblock Mathematics of Computation \textbf{64}(210), 763--776 (1995)

\bibitem{ferreira2002kantorovich}
Ferreira, O.P., Svaiter, B.F.: Kantorovich's theorem on newton's method in
  riemannian manifolds.
\newblock Journal of Complexity \textbf{18}(1), 304--329 (2002)

\bibitem{fletcher1964function}
Fletcher, R., Reeves, C.M.: Function minimization by conjugate gradients.
\newblock The computer journal \textbf{7}(2), 149--154 (1964)

\bibitem{goldberg2001eigentaste}
Goldberg, K., Roeder, T., Gupta, D., Perkins, C.: Eigentaste: A constant time
  collaborative filtering algorithm.
\newblock information retrieval \textbf{4}(2), 133--151 (2001)

\bibitem{1999Solving}
Gould, N., Lucidi, S., Roma, M., Toint, P.L.: Solving the trust-region
  subproblem using the lanczos method.
\newblock Siam Journal on Optimization \textbf{9}(2), 504--525 (1999)

\bibitem{griewank1981modification}
Griewank, A.: The modification of newton’s method for unconstrained
  optimization by bounding cubic terms.
\newblock Tech. rep., Technical report NA/12 (1981)

\bibitem{gross2011recovering}
Gross, D.: Recovering low-rank matrices from few coefficients in any basis.
\newblock IEEE Transactions on Information Theory \textbf{57}(3), 1548--1566
  (2011)

\bibitem{han2021riemannian}
Han, A., Mishra, B., Jawanpuria, P.K., Gao, J.: On riemannian optimization over
  positive definite matrices with the bures-wasserstein geometry.
\newblock Advances in Neural Information Processing Systems \textbf{34} (2021)

\bibitem{horev2016geometry}
Horev, I., Yger, F., Sugiyama, M.: Geometry-aware principal component analysis
  for symmetric positive definite matrices.
\newblock pp. 493--522. Springer (2017)

\bibitem{hosseini2018line}
Hosseini, S., Huang, W., Yousefpour, R.: Line search algorithms for locally
  lipschitz functions on riemannian manifolds.
\newblock SIAM Journal on Optimization \textbf{28}(1), 596--619 (2018)

\bibitem{huang2021riemannian}
Huang, W., Wei, K.: Riemannian proximal gradient methods.
\newblock Mathematical Programming pp. 1--43 (2021)

\bibitem{jia2021solving}
Jia, X., Liang, X., Shen, C., Zhang, L.H.: Solving the cubic regularization
  model by a nested restarting lanczos method  (2021)

\bibitem{kasai2018inexact}
Kasai, H., Mishra, B.: Inexact trust-region algorithms on riemannian manifolds.
\newblock In: Advances in Neural Information Processing Systems, pp. 4249--4260
  (2018)

\bibitem{kasai2018riemannian}
Kasai, H., Sato, H., Mishra, B.: Riemannian stochastic recursive gradient
  algorithm.
\newblock In: International Conference on Machine Learning, pp. 2516--2524
  (2018)

\bibitem{kohler2017sub}
Kohler, J.M., Lucchi, A.: Sub-sampled cubic regularization for non-convex
  optimization.
\newblock In: Proceedings of the 34th International Conference on Machine
  Learning-Volume 70, pp. 1895--1904. JMLR. org (2017)

\bibitem{kohoutova2020toward}
Kohoutov{\'a}, L., Heo, J., Cha, S., Lee, S., Moon, T., Wager, T.D., Woo, C.W.:
  Toward a unified framework for interpreting machine-learning models in
  neuroimaging.
\newblock Nature protocols \textbf{15}(4), 1399--1435 (2020)

\bibitem{kumar2018geometry}
Kumar~Roy, S., Mhammedi, Z., Harandi, M.: Geometry aware constrained
  optimization techniques for deep learning.
\newblock In: Proceedings of the IEEE Conference on Computer Vision and Pattern
  Recognition, pp. 4460--4469 (2018)

\bibitem{lecun1998gradient}
LeCun, Y., Bottou, L., Bengio, Y., Haffner, P.: Gradient-based learning applied
  to document recognition.
\newblock Proceedings of the IEEE \textbf{86}(11), 2278--2324 (1998)

\bibitem{liu2019convolution}
Liu, X., He, J., Duddy, S., O'Sullivan, L.: Convolution-consistent collective
  matrix completion.
\newblock In: International Conference on Information and Knowledge Management,
  pp. 2209--2212 (2019)

\bibitem{mishra2019riemannian}
Mishra, B., Kasai, H., Jawanpuria, P., Saroop, A.: A riemannian gossip approach
  to subspace learning on grassmann manifold.
\newblock Machine Learning \textbf{108}(10), 1783--1803 (2019)

\bibitem{mokhtari2018escaping}
Mokhtari, A., Ozdaglar, A., Jadbabaie, A.: Escaping saddle points in
  constrained optimization.
\newblock In: Advances in Neural Information Processing Systems, pp. 3629--3639
  (2018)

\bibitem{ngo2012scaled}
Ngo, T., Saad, Y.: Scaled gradients on grassmann manifolds for matrix
  completion.
\newblock Advances in neural information processing systems \textbf{25} (2012)

\bibitem{nguyen2019neural}
Nguyen, X.S., Brun, L., L{\'e}zoray, O., Bougleux, S.: A neural network based
  on spd manifold learning for skeleton-based hand gesture recognition.
\newblock In: Proceedings of the IEEE Conference on Computer Vision and Pattern
  Recognition, pp. 12036--12045 (2019)

\bibitem{nocedal2006numerical}
Nocedal, J., Wright, S.: Numerical optimization.
\newblock Springer Science \& Business Media (2006)

\bibitem{poldrack2017openfmri}
Poldrack, R.A., Gorgolewski, K.J.: Openfmri: Open sharing of task fmri data.
\newblock Neuroimage \textbf{144}, 259--261 (2017)

\bibitem{politz2016interpretable}
P{\"o}litz, C., Duivesteijn, W., Morik, K.: Interpretable domain adaptation via
  optimization over the stiefel manifold.
\newblock Machine Learning \textbf{104}(2), 315--336 (2016)

\bibitem{press2007chapter}
Press, W.H., Teukolsky, S.A., Vetterling, W.T., Flannery, B.: Chapter 9: Root
  finding and nonlinear sets of equations.
\newblock Book: Numerical Recipes: The Art of Scientific Computing, New York,
  Cambridge University Press, \textbf{10} (2007)

\bibitem{qi2011numerical}
Qi, C.: Numerical optimization methods on riemannian manifolds  (2011)

\bibitem{roosta2019sub}
Roosta-Khorasani, F., Mahoney, M.W.: Sub-sampled newton methods.
\newblock Mathematical Programming \textbf{174}(1), 293--326 (2019)

\bibitem{sakai2021sufficient}
Sakai, H., Iiduka, H.: Sufficient descent riemannian conjugate gradient
  methods.
\newblock Journal of Optimization Theory and Applications \textbf{190}(1),
  130--150 (2021)

\bibitem{sato2015new}
Sato, H., Iwai, T.: A new, globally convergent riemannian conjugate gradient
  method.
\newblock Optimization \textbf{64}(4), 1011--1031 (2015)

\bibitem{shahid2015robust}
Shahid, N., Kalofolias, V., Bresson, X., Bronstein, M., Vandergheynst, P.:
  Robust principal component analysis on graphs.
\newblock In: Proceedings of the IEEE International Conference on Computer
  Vision, pp. 2812--2820 (2015)

\bibitem{shen2019stochastic}
Shen, Z., Zhou, P., Fang, C., Ribeiro, A.: A stochastic trust region method for
  non-convex minimization.
\newblock arXiv preprint arXiv:1903.01540  (2019)

\bibitem{sidhu2012kernel}
Sidhu, G.S., Asgarian, N., Greiner, R., Brown, M.R.: Kernel principal component
  analysis for dimensionality reduction in fmri-based diagnosis of adhd.
\newblock Frontiers in systems neuroscience \textbf{6}, 74 (2012)

\bibitem{sun2019escaping}
Sun, Y., Flammarion, N., Fazel, M.: Escaping from saddle points on riemannian
  manifolds.
\newblock arXiv preprint arXiv:1906.07355  (2019)

\bibitem{townsend2016pymanopt}
Townsend, J., Koep, N., Weichwald, S.: Pymanopt: A python toolbox for
  optimization on manifolds using automatic differentiation.
\newblock The Journal of Machine Learning Research \textbf{17}(1), 4755--4759
  (2016)

\bibitem{trefethen1997numerical}
Trefethen, L.N., Bau~III, D.: Numerical linear algebra, vol.~50.
\newblock Siam (1997)

\bibitem{tripuraneni2018stochastic}
Tripuraneni, N., Stern, M., Jin, C., Regier, J., Jordan, M.I.: Stochastic cubic
  regularization for fast nonconvex optimization.
\newblock Advances in neural information processing systems \textbf{31} (2018)

\bibitem{tropp2015introduction}
Tropp, J.A.: An introduction to matrix concentration inequalities.
\newblock arXiv preprint arXiv:1501.01571  (2015)

\bibitem{wei2006convergence}
Wei, Z., Yao, S., Liu, L.: The convergence properties of some new conjugate
  gradient methods.
\newblock Applied Mathematics and computation \textbf{183}(2), 1341--1350
  (2006)

\bibitem{weiwei2013newton}
Weiwei, Y., Yueting, Y., Chenhui, Z., Mingyuan, C.: A newton-like trust region
  method for large-scale unconstrained nonconvex minimization.
\newblock In: Abstract and Applied Analysis, vol. 2013. Hindawi (2013)

\bibitem{xu2020newton}
Xu, P., Roosta, F., Mahoney, M.W.: Newton-type methods for non-convex
  optimization under inexact hessian information.
\newblock Mathematical Programming \textbf{184}(1), 35--70 (2020)

\bibitem{xu2016matrix}
Xu, Z., Zhao, P., Cao, J., Li, X.: Matrix eigen-decomposition via doubly
  stochastic riemannian optimization.
\newblock In: International Conference on Machine Learning, pp. 1660--1669
  (2016)

\bibitem{yao2021inexact}
Yao, Z., Xu, P., Roosta, F., Mahoney, M.W.: Inexact nonconvex newton-type
  methods.
\newblock Informs Journal on Optimization \textbf{3}(2), 154--182 (2021)

\bibitem{yuan2016riemannian}
Yuan, X., Huang, W., Absil, P.A., Gallivan, K.A.: A riemannian limited-memory
  bfgs algorithm for computing the matrix geometric mean.
\newblock Procedia Computer Science \textbf{80}, 2147--2157 (2016)

\bibitem{zhang2016riemannian}
Zhang, H., Reddi, S.J., Sra, S.: Riemannian svrg: Fast stochastic optimization
  on riemannian manifolds.
\newblock In: Advances in Neural Information Processing Systems, pp. 4592--4600
  (2016)

\bibitem{zhang2016first}
Zhang, H., Sra, S.: First-order methods for geodesically convex optimization.
\newblock In: Conference on Learning Theory, pp. 1617--1638 (2016)

\bibitem{zhang2018cubic}
Zhang, J., Zhang, S.: A cubic regularized newton's method over riemannian
  manifolds.
\newblock arXiv preprint arXiv:1805.05565  (2018)

\bibitem{zhong2009detecting}
Zhong, Y., Wang, H., Lu, G., Zhang, Z., Jiao, Q., Liu, Y.: Detecting functional
  connectivity in fmri using pca and regression analysis.
\newblock Brain topography \textbf{22}(2), 134--144 (2009)

\bibitem{zhou2020stochastic}
Zhou, D., Gu, Q.: Stochastic recursive variance-reduced cubic regularization
  methods.
\newblock In: International Conference on Artificial Intelligence and
  Statistics, pp. 3980--3990. PMLR (2020)

\bibitem{zhou2019stochastic}
Zhou, D., Xu, P., Gu, Q.: Stochastic variance-reduced cubic regularization
  methods.
\newblock J. Mach. Learn. Res. \textbf{20}(134), 1--47 (2019)

\bibitem{zhu2017riemannian}
Zhu, X.: A riemannian conjugate gradient method for optimization on the stiefel
  manifold.
\newblock Computational optimization and Applications \textbf{67}(1), 73--110
  (2017)

\end{thebibliography}

\newpage
\appendix

\section{Appendix: Derivation of Lanczos Method}
\label{app_lan}

Instead of solving the subproblem in the tangent space $\bm\eta\in T_{\mathbf{x}_k}\mathcal{M}$ of the manifold dimension $D$, the Lanczos method solves it within a Krylov subspace $\mathcal{K}_l$, where $l$ can range from 1 to  $D$. 
This subspace is  defined by the span of the following elements: 
\begin{equation}
	\label{krylov_span}
	\mathcal{K}_l(\mathbf{H}_k, \mathbf{G}_k):=\left\{\mathbf{G}_k, \mathbf{H}_k[\mathbf{G}_k], \mathbf{H}_k^2[\mathbf{G}_k],...,\mathbf{H}_k^l[\mathbf{G}_k] \right\}, 
\end{equation}
where, for $l \geq 2 $, $\mathbf{H}_k^l[\mathbf{G}_k]$ is recursively defined by $\mathbf{H}_k\left[\mathbf{H}^{l-1}_k\left[\mathbf{G}_k\right]\right]$.  
Its orthonormal basis $\mathbf{Q}_l=\{\mathbf{q}_1,...,\mathbf{q}_l\}$, where $\mathbf{Q}_l^T\mathbf{Q}_l=\mathbf{I}$, is successively constructed to satisfy
\begin{eqnarray}
	\mathbf{q}_1 = & \frac{\mathbf{G}_k}{\left\|\mathbf{G}_k\right\|_{\mathbf{x}_k}}, \\
	\langle\mathbf{q}_i, \mathbf{H}_k[\mathbf{q}_j]\rangle_{\mathbf{x}_k}=&\left(\mathbf{T}_l\right)_{i,j},
\end{eqnarray}
for $i,j\in[n]$, where  $\left(\mathbf{T}_l\right)_{i,j}$ denotes the $ij$-th element of the matrix  $\mathbf{T}_l$.
Each element $\bm\eta\in \mathcal{K}_l$ in the Krylov subspace can be expressed as  $ \bm\eta =\sum_{i=1}^l y_i \mathbf{q}_i$. Store these $\{y_i\}_{i=1}^l$ in the vector $\mathbf{y}\in \mathbb{R}^l$,  the  subproblem objective function $\mathop{\min}_{\bm\eta\in \mathcal{K}_l} \hat{m}(\bm\eta)$ is minimized in the  Krylov subspace   instead. 
By substituting  $ \bm\eta :=\sum_{i=1}^l y_i \mathbf{q}_i$ into $\hat{m}(\bm\eta)$, the objective function becomes
\begin{align}
	\nonumber
	\hat{m}(\bm\eta)&=  f(\mathbf{x}_k)+ \delta \left \langle\mathbf{G}_k, \sum_{i=1}^l y_i \mathbf{q}_i \right\rangle_{\mathbf{x}_k}+ \frac{1}{2} \left\langle\sum_{i=1}^l y_i \mathbf{q}_i,\mathbf{H}_k \left[\sum_{i=1}^l y_i \mathbf{q}_i \right] \right\rangle_{\mathbf{x}_k} +\frac{\sigma_k}{3} \left \|\sum_{i=1}^l y_i \mathbf{q}_i \right\|_{\mathbf{x}_k}^3 \\
	\nonumber
	&=f(\mathbf{x}_k)+ \delta \left\langle\mathbf{G}_k, y_1 \mathbf{q}_1 \right\rangle_{\mathbf{x}_k}+ \frac{1}{2}\sum_{i,j=1}^l y_iy_j \left\langle\mathbf{q}_i, \mathbf{H}_k[\mathbf{q}_j] \right\rangle_{\mathbf{x}_k} + \frac{\sigma_k}{3}\left\|\mathbf{y}\right\|_2^3\\
	& =f(\mathbf{x}_k)+ \ y_1\delta\left\|\mathbf{G}_k\right\|_{\mathbf{x}_k}+ \frac{1}{2}\mathbf{y}^T\mathbf{T}_D \mathbf{y} + \frac{\sigma_k}{3}\left\|\mathbf{y}\right\|_2^3.
\end{align}
The properties $\mathbf{q}_i\perp \mathbf{q}_j$ for $i\ne j$ and $\mathbf{G}_k\perp\mathbf{q}_i$ for $i\ne 1$ are used in the derivation.  Therefore, to solve $\min_{\bm\eta\in T_{\mathbf{x}_k} \mathcal{M}}\ \hat{m}(\bm\eta) $ is equivalent to  
\begin{equation}
	\min_{\mathbf{y}\in \mathbb{R}^l} \ y_1\delta\left\|\mathbf{G}_k\right\|_{\mathbf{x}_k}+ \frac{1}{2}\mathbf{y}^T\mathbf{T}_D \mathbf{y} + \frac{\sigma_k}{3}\left\|\mathbf{y}\right\|_2^3.
\end{equation}

\section{Appendix:  Proof of Lemma \ref{lem_krylov}}
\label{app_lan_gap}

\begin{proof} 
	
	Let  $\lambda_* = \sigma_k\left\|\bm\eta_{k}^*\right\|$.
	The Krylov subspaces are invariant to shifts by scalar matrices, therefore $\mathcal{K}_l(\mathbf{H}_k, \mathbf{G}_k)= \mathcal{K}_l(\mathbf{H}_k + \lambda_*{\rm Id}, \mathbf{G}_k) = \mathcal{K}_l(\tilde{\mathbf{H}}_k, \mathbf{G}_k)$ \cite{carmon2018analysis}, 
	where the definition of  $\mathcal{K}_l(\mathbf{H}_k, \mathbf{G}_k)$ follows Eq. (\ref{krylov_span}).
	Let $\bm\xi_l\in\mathcal{K}_l$ be the solution found in the  Krylov subspace $\mathcal{K}_l(\mathbf{H}_k, \mathbf{G}_k)$, which is  thus an element of $\mathcal{K}_l(\tilde{\mathbf{H}}_k, \mathbf{G}_k)$ expressed by  \begin{equation}
		\bm\xi_l=-p_l\left(\tilde{\mathbf{H}}_k\right)[\mathbf{G}_k]=-c_0\mathbf{G}_k-c_1\tilde{\mathbf{H}}_k[\mathbf{G}_k]\cdots- c_l\tilde{\mathbf{H}}_k^l[\mathbf{G}_k],
	\end{equation}
	for some values of $c_0,\;c_1, \;\ldots,\; c_l \in \mathbb{R}$.
	According to Section 6.2 of \cite{absil2009optimization}, a global minimizer $\bar{\bm\eta}_k^*$ of  the  RTR subproblem without cubic regularization in Eq. (\ref{eq_sub_problem_inexact})  is expected to satisfy the Riemannian quasi-Newton equation:
	\begin{equation}
		{\rm grad} \bar{m}_k({\mathbf{0}_k}) + \left({\rm Hess} \bar{m}_k(\mathbf{0}_k) + \lambda_*{\rm Id} \right)[\bar{\bm\eta}_k^*] = \mathbf{0}_{\mathbf{x}_k},
	\end{equation}
	where  $\lambda_*\ge  \max(-\lambda_{min}(\mathbf{H}_k), 0) $ and  $\lambda_*\left(\Delta_k-\left\|\bar{\bm\eta}_k^*\right\|_{\mathbf{x}_k}\right)=0$ according to Corollary 7.2.2 of \cite{conn2000trust}.
	Using the approximate gradient and Hessian, the inexact minimizer is expected to satisfy 
	\begin{equation}
		\mathbf{G}_k + (\mathbf{H}_k + \lambda_*{\rm Id})[\bar{\bm\eta}_k^*] = \mathbf{G}_k + \tilde{\mathbf{H}}_k [\bar{\bm\eta}_k^*]  = \mathbf{0}_{\mathbf{x}_k}.
		\label{eq_newton}
	\end{equation}
	
	Introduce $\bm\zeta_l=(1-\alpha)\bm\xi_l$ where $\alpha=\frac{\left\|\bm\xi_l\right\|_{\mathbf{x}_k}-\left\|\bar{\bm\eta}_k^*\right\|_{\mathbf{x}_k}}{\max\left(\left\|\bm\xi_l\right\|_{\mathbf{x}_k},\left\|\bar{\bm\eta}_k^*\right\|_{\mathbf{x}_k}\right)}$. 
	When $\left\|\bm\xi_l\right\|_{\mathbf{x}_k}<\left\|\bar{\bm\eta}_k^*\right\|_{\mathbf{x}_k}$, we start from the fact that $\left(\left\|\bar{\bm\eta}_k^*\right\|_{\mathbf{x}_k}-\left\|\bm\xi_l\right\|_{\mathbf{x}_k}\right)^2\ge0$, which results in the following:
	\begin{align}
		\nonumber
		\left(2\left\|\bar{\bm\eta}_k^*\right\|_{\mathbf{x}_k} -\left\|\bm\xi_l\right\|_{\mathbf{x}_k}\right) \left\|\bm\xi_l\right\|_{\mathbf{x}_k} \le \left\|\bar{\bm\eta}_k^*\right\|_{\mathbf{x}_k}^2  & \Longleftrightarrow \left(1 +\frac{\left\|\bar{\bm\eta}_k^*\right\|_{\mathbf{x}_k}-\left\|\bm\xi_l\right\|_{\mathbf{x}_k}}{\left\|\bar{\bm\eta}_k^*\right\|_{\mathbf{x}_k}}\right)\left\|\bm\xi_l\right\|_{\mathbf{x}_k}\le\left\|\bar{\bm\eta}_k^*\right\|_{\mathbf{x}_k}\\
		\nonumber
		&\Longleftrightarrow(1-\alpha)\left\|\bm\xi_l\right\|_{\mathbf{x}_k}\le\left\|\bar{\bm\eta}_k^*\right\|_{\mathbf{x}_k}\\
		&\Longleftrightarrow\left\|\bm\zeta_l\right\|_{\mathbf{x}_k}\le\left\|\bar{\bm\eta}_k^*\right\|_{\mathbf{x}_k}.
	\end{align}
	When $\left\|\bm\xi_l\right\|_{\mathbf{x}_k}\ge\left\|\bar{\bm\eta}_k^*\right\|_{\mathbf{x}_k}
	$, it has 
	\begin{equation}
		\left\|{\bm\zeta}_l\right\|_{\mathbf{x}_k}= \left\|(1-\alpha)\bm\xi_l\right\|_{\mathbf{x}_k}= \left\|\left(1- \frac{\left\|\bm\xi_l\right\|_{\mathbf{x}_k}-\left\|\bar{\bm\eta}_k^*\right\|_{\mathbf{x}_k}}{ \left\|\bm\xi_l\right\|_{\mathbf{x}_k}  }\right)\bm\xi_l\right\|_{\mathbf{x}_k} =\frac{\left\|\bar{\bm\eta}_k^*\right\|_{\mathbf{x}_k} \left\|{\bm\xi}_l\right\|_{\mathbf{x}_k}}{\left\|{\bm\xi}_l\right\|_{\mathbf{x}_k}}=\left\|\bar{\bm\eta}_k^*\right\|_{\mathbf{x}_k}.
	\end{equation}
	This concludes that for any $\bm\xi_l$,  it has  $\left\|\bm\zeta_l\right\|_{\mathbf{x}_k}\le\left\|\bar{\bm\eta}_k^*\right\|_{\mathbf{x}_k}$.

	We introduce the notation $\tilde{m}_k$ to denote the subproblem  in Eq. (\ref{eq_sub_problem_inexact}) using the inexact Hessian $\tilde{\mathbf{H}}_k$. 
	Let $\psi_{k}^*= \left(p_{l+1}\left(\tilde{\mathbf{H}}_k\right)-{\rm Id}\right)[\bar{\bm\eta}_k^*]$ and $\iota\left(\tilde{\mathbf{H}}_k\right)=\frac{\lambda_{max}(\tilde{\mathbf{H}}_k)}{\lambda_{min}(\tilde{\mathbf{H}}_k)}$. 
	Since $\phi_l\left(\tilde{\mathbf{H}}_k\right)$ is the upper bound of $\left\|p_{l+1}\left(\tilde{\mathbf{H}}_k\right)-{\rm Id}\right\|_{\mathbf{x}_k}$, it has $\left\|\psi_{k}^*\right\|_{\mathbf{x}_k} \leq \phi_l\left(\tilde{\mathbf{H}}_k\right)\left\|\bar{\bm\eta}_k^*\right\|_{\mathbf{x}_k}$.
	Then, we have 
	\begin{align}
		\nonumber
		&\bar{m}_k(\bm\zeta_l)-\bar{m}_k(\bar{\bm\eta}_k^*) \\
		\nonumber
		=&\; \tilde{m}_k(\bm\zeta_l)-\tilde{m}_k(\bar{\bm\eta}_k^*) + \frac{\lambda_*}{2}\left(\left\|\bar{\bm\eta}_k^*\right\|^2_{\mathbf{x}_k}-\left\|\bm\zeta_l\right\|^2_{\mathbf{x}_k}\right)	\\
		\nonumber
		= &\; \frac{1}{2} \left\langle\bm\zeta_l-\bar{\bm\eta}_k^*, \tilde{\mathbf{H}}_k[\bm\zeta_l-\bar{\bm\eta}_k^*]\right\rangle_{\mathbf{x}_k} + \frac{\lambda_*}{2}\left(\left\|\bar{\bm\eta}_k^*\right\|^2_{\mathbf{x}_k}-\left\|\bm\zeta_l\right\|^2_{\mathbf{x}_k}\right)\\
		\nonumber
		\le&\;  \frac{1}{2} \left\langle\bm\zeta_l-\bar{\bm\eta}_k^*, \tilde{\mathbf{H}}_k[\bm\zeta_l-\bar{\bm\eta}_k^*]\right\rangle_{\mathbf{x}_k} + \lambda_*\left\|\bar{\bm\eta}_k^*\right\|_{\mathbf{x}_k}\left(\left\|\bar{\bm\eta}_k^*\right\|_{\mathbf{x}_k}-\left\|\bm\zeta_l\right\|_{\mathbf{x}_k}\right)\\
		\label{eq_rtr_krylov0}
		\le &\; \frac{(1-\alpha)^2}{2} \left\langle\left(p_{l+1}\left(\tilde{\mathbf{H}}_k\right)-{\rm Id}\right)[\bar{\bm\eta}_k^*], \tilde{\mathbf{H}}_k\left[\left(p_{l+1}\left(\tilde{\mathbf{H}}_k\right)-{\rm Id}\right)\left[\bar{\bm\eta}_k^*\right]\right]\right\rangle_{\mathbf{x}_k} + \lambda_*\left\|\bar{\bm\eta}_k^*\right\|_{\mathbf{x}_k}^2\alpha^2\\	
		\nonumber
		=&\; \frac{(1-\alpha)^2}{2}\left\|\psi_{k}^*\right\|_{\mathbf{x}_k}^2\left\langle\frac{\psi_{k}^*}{\left\|\psi_{k}^*\right\|_{\mathbf{x}_k}},\tilde{\mathbf{H}}_k\left[\frac{\psi_{k}^*}{\left\|\psi_{k}^*\right\|_{\mathbf{x}_k}}\right]\right\rangle_{\mathbf{x}_k}+ \lambda_*\left\|\bar{\bm\eta}_k^*\right\|_{\mathbf{x}_k}^2\alpha^2\\
		\label{eq_rtr_krylov}
		\le&\; 	 2\phi_l\left(\tilde{\mathbf{H}}_k\right)^2\left\|\bar{\bm\eta}_k^*\right\|_{\mathbf{x}_k}^2 \iota\left(\tilde{\mathbf{H}}_k\right)\left\langle \frac{\bar{\bm\eta}_k^*}{\left\|\bar{\bm\eta}_k^*\right\|_{\mathbf{x}_k}},\tilde{\mathbf{H}}_k\left[\frac{\bar{\bm\eta}_k^*}{\left\|\bar{\bm\eta}_k^*\right\|_{\mathbf{x}_k}}\right]\right\rangle_{\mathbf{x}_k}  +\phi_l(\tilde{\mathbf{H}}_k)^2\lambda_*\left\|\bar{\bm\eta}_k^*\right\|_{\mathbf{x}_k}^2\\
		\le &\;  4\iota\left(\tilde{\mathbf{H}}_k\right)\left(\frac{1}{2} \left\langle  \bar{\bm\eta}_k^*,\tilde{\mathbf{H}}_k[\bar{\bm\eta}_k^*] \right\rangle_{\mathbf{x}_k}+\frac{1}{2}\lambda_*\left\|\bar{\bm\eta}_k^*\right\|_{\mathbf{x}_k}^2\right)\phi_l\left(\tilde{\mathbf{H}}_k\right)^2\\
		\label{eq_rtr_krylov_2}
		= &\; 4\iota\left(\tilde{\mathbf{H}}_k\right)\left(\bar{m}_k(\mathbf{0}_{\mathbf{x}_k})-\bar{m}_k\left(\bar{\bm\eta}_k^*\right)\right)\phi_l\left(\tilde{\mathbf{H}}_k\right)^2.
	\end{align}
	To derive Eq. (\ref{eq_rtr_krylov0}),   $\mathbf{G}_k=-\tilde{\mathbf{H}}_k[\bar{\bm\eta}_k^*]$ from Eq. (\ref{eq_newton}) is used. 
	To derive Eq. (\ref{eq_rtr_krylov}),  we use the definition in Eq. (\ref{eq_induced_hessian_norm}), where for a non-zero $\bm\xi\in T_{\mathbf{x}_k}\mathcal{M}$,  it has
	\begin{equation}
		\frac{\left\|\mathbf{H}_k[\bm\xi]\right\|_{\mathbf{x}_k}}{\left\|\bm\xi\right\|_{\mathbf{x}_k}}\le\left\|\mathbf{H}_k\right\|_{\mathbf{x}_k}=\sup_{{\mathbf{\bm\eta}\in T_{\mathbf{x}_k}\mathcal{M}, \|\mathbf{\bm\eta}\|_{\mathbf{x}_k}\ne0}} \frac{\left\|\mathbf{H}_k[\mathbf{\bm\eta}]\right\|_{\mathbf{x}_k}}{\left\|\bm\eta\right\|_{\mathbf{x}_k}}.
	\end{equation}
	Eq. (\ref{eq_rtr_krylov}) also uses (1) the fact of $\alpha\ge-1$ which comes from the fact of $\alpha$ being in the form of $\frac{a-b}{\max(a,b)}$, and thus $1-\alpha \le 2$, 
	(2) the definition of the smallest and largest eigenvalues in Eqs. (\ref{eq_minimum_eigenvalue_hessian}) and  (\ref{eq_maximum_eigenvalue_hessian}), which gives $\frac{\langle\bm\eta,\tilde{\mathbf{H}[\bm\eta]}\rangle_{\mathbf{x}_k}}{\langle\bm\xi,\tilde{\mathbf{H}[\bm\xi]}\rangle_{\mathbf{x}_k}}\le\frac{\lambda_{max}(\tilde{\mathbf{H}}_k)}{\lambda_{min}(\tilde{\mathbf{H}}_k)}$ for and any unit tangent vectors $\bm\eta $ and $\bm\xi$, and (3) the fact that
	\begin{align}
		\nonumber
		|\alpha|&=\frac{\left|\left\|\bm\xi_l\right\|_{\mathbf{x}_k}-\left\|\bar{\bm\eta}_{k}^*\right\|_{\mathbf{x}_k}\right|}{\max\left(\left\|\bm\xi_l\right\|_{\mathbf{x}_k},\left\|\bar{\bm\eta}_k^*\right\|_{\mathbf{x}_k}\right)} \le \frac{\left\|\bm\xi_l-\bar{\bm\eta}_k^*\right\|_{\mathbf{x}_k}}{\max\left(\left\|\bm\xi_l\right\|_{\mathbf{x}_k},\left\|\bar{\bm\eta}_k^*\right\|_{\mathbf{x}_k}\right)}\\
		&=\frac{\left\|\left(p_{l+1}\left(\tilde{\mathbf{H}}_k\right)-{\rm Id}\right)[\bar{\bm\eta}_k^*]\right\|_{\mathbf{x}_k}}{\max\left(\left\|\bm\xi_l\right\|_{\mathbf{x}_k},\left\|\bar{\bm\eta}_k^*\right\|_{\mathbf{x}_k}\right)}\le \frac{\phi_l(\mathbf{H}_k)\left\|\bar{\bm\eta}_k^*\right\|_{\mathbf{x}_k}}{\max\left(\left\|\bm\xi_l\right\|_{\mathbf{x}_k},\left\|\bar{\bm\eta}_k^*\right\|_{\mathbf{x}_k}\right)}\le\phi_l(\mathbf{H}_k).
	\end{align}
	To derive Eq. (\ref{eq_rtr_krylov_2}),  we use 
	\begin{equation}
		\bar{m}_k(\mathbf{0}_{\mathbf{x}_k})-\bar{m}_k(\bar{\bm\eta}_k^*)=\frac{1}{2}\left\langle \bar{\bm\eta}_k^*,\tilde{\mathbf{H}}_k[\bar{\bm\eta}_k^*]\right\rangle_{\mathbf{x}_k}+\frac{\lambda_*}{2}\left\|\bar{\bm\eta}_k^*\right\|_{\mathbf{x}_k}^2.
	\end{equation}
	Next, as $\bm\eta_k^{*l}$ is the optimal solution in the subspace $\mathcal{K}_l(\mathbf{H}_k, \mathbf{G}_k)$, we have $\bar{m}_k\left(\bm\eta_k^{*l}\right)\le\bar{m}_k\left(\bm\zeta^l\right)$, and hence 
	\begin{equation}
		\label{eq_res}
		\bar{m}_k\left(\bm\eta_k^{*l}\right)-\bar{m}_k\left(\bar{\bm\eta}_k^*\right)\le4\iota\left(\tilde{\mathbf{H}}_k\right)(\bar{m}_k(\mathbf{0}_{\mathbf{x}_k})-\bar{m}_k(\bar{\bm\eta}_k^*))\phi_l\left(\tilde{\mathbf{H}}_k\right)^2.
	\end{equation}

	We then show that the Lanczos method exhibits at least the same convergence property as above for the subsampled Riemannian cubic-regularization subproblem.  Let ${\bm\eta}_k^*$ be the global minimizer for the subproblem $\hat{m}_k$ in Eq. (\ref{eq_sub_problem}).
	${\bm\eta}_k^*$ is equivalent to $\bar{\bm\eta}_k^*$ in the RTR subproblem with $\Delta_k=\left\|{\bm\eta}_k^*\right\|_{\mathbf{x}_k}$ and $\lambda_*=\sigma_k\left\|{\bm\eta}_k^*\right\|_{\mathbf{x}_k}$ \cite{carmon2018analysis}. Then, letting $\bm\eta_k^{*l}$ be the minimizer of $\hat{m}_k$ over $\mathcal{K}_l(\mathbf{H}_k, \mathbf{G}_k)$ satisfying $\left\|{\bm\eta}_k^{*l}\right\|_{\mathbf{x}_k}\le\left\|{\bm\eta}_k^*\right\|_{\mathbf{x}_k}=\Delta_k$,  we have
	\begin{align}
		\nonumber
		\hat{m}_k({\bm\eta}_k^{*l})-\hat{m}_k({\bm\eta}_k^*)&\le \hat{m}_k({\bm\eta}_k^{*l})-\hat{m}_k({\bm\eta}_k^*)\\
		\nonumber
		&=	\bar{m}_k({\bm\eta}_k^{*l})-\bar{m}_k({\bm\eta}_k^*) + \frac{\sigma_k}{3}(\left\|{\bm\eta}_k^{*l}\right\|_{\mathbf{x}_k}^3-\left\|{\bm\eta}_k^*\right\|_{\mathbf{x}_k}^3)\\
		&\le\bar{m}_k({\bm\eta}_k^{*l})-\bar{m}_k({\bm\eta}_k^*) = \bar{m}_k({\bm\eta}_k^{*l})-\bar{m}_k(\bar{\bm\eta}_k^*)
	\end{align}
	Combining this with Eq. (\ref{eq_res}), it has
	\begin{equation}
		\hat{m}_k\left(\bm\eta_k^{*l}\right)-\hat{m}_k\left({\bm\eta}_k^*\right)\le4\iota\left(\tilde{\mathbf{H}}_k\right)(\bar{m}_k(\mathbf{0}_{\mathbf{x}_k})-\bar{m}_k(\bar{\bm\eta}_k^*))\phi_l\left(\tilde{\mathbf{H}}_k\right)^2.
	\end{equation}
	This completes the proof.
	
\end{proof}

\section{Appendix: Proof of Theorem \ref{thm_tCG_converge}}
\label{app_CGconvergence}

\begin{proof} 
	
	We first prove the relationship between $\mathbf{G}_k^i$ and $\mathbf{r}_i$. 
	According to Algorithm \ref{alg_non_linear_tCG},  $\mathbf{r}_0=\mathbf{G}_k^0=\mathbf{G}_k$. Then for $i>0$, we have 
	\begin{align}
		\nonumber
		\mathbf{G}_k^i &= \frac{1}{|\mathcal{S}_g|}\sum_{j\in\mathcal{S}_g} \nabla_{\alpha_i^*\mathbf{p}_i} f_j\left(R_{\mathbf{x}_k^{i-1}}\left(\alpha_i^*\mathbf{p}_i\right)\right)\\
		\label{eq_lem_cg_converege_1}
		&\approx \frac{1}{|\mathcal{S}_g|}\sum_{j\in\mathcal{S}_g}\nabla_{\alpha_i^*\mathbf{p}_i} \left(f_j\left(\mathbf{x}_{k}^{i-1}\right) + \left\langle\mathbf{G}_k^{i-1}, \alpha_i^*\mathbf{p}_i\right\rangle_{\mathbf{x}_k}+\frac{1}{2}\left\langle\alpha_i^*\mathbf{p}_i, \mathbf{H}_k^{i-1}[\alpha_i^*\mathbf{p}_i]\right\rangle_{\mathbf{x}_k}\right) \\
		\label{eq_lem_cg_converege_2}
		&=\mathbf{G}_k^{i-1}+\alpha_i^*\mathbf{H}_k^{i-1}[\mathbf{p}_i]  = \mathbf{r}_{i-1}+\alpha_i^*\mathbf{H}_k^{i-1}[\mathbf{p}_i] =\mathbf{r}_i,
	\end{align}
	where  $\approx$ comes from the first-order Taylor extension and Eq. (\ref{eq_lem_cg_converege_2}) follows the Step (\ref{CG_update_r}) in Algorithm \ref{alg_non_linear_tCG}.

	The exact line search in the Step. (\ref{CG_opt_alpha}) of Algorithm \ref{alg_non_linear_tCG} approximates 
	\begin{equation}
		\label{eq_step_search}
		\alpha_i^*=\arg\min_{\alpha \geq 0}f \left(R_{\mathbf{x}_k^{i-1}}(\alpha\mathbf{p}_i)\right).
	\end{equation}
	Zeroing the derivative of Eq. (\ref{eq_step_search}) with respect to $\alpha$ gives  
	\begin{equation}
		\label{eq_zero_obj}
		\mathbf{0}_{\mathbf{x}_k^i}=\nabla_{\alpha_i^*}f \left(R_{\mathbf{x}_k^{i-1}}\left(\alpha_i^*\mathbf{p}_i\right)\right)= \left\langle\nabla f\left(\mathbf{x}_k^i\right),\mathcal{P}_{\alpha_i^*\mathbf{p}_i}\frac{d(\alpha_i^*\mathbf{p}_i)}{d\alpha_i^*}\right\rangle_{\mathbf{x}_k^i}  \approx \left\langle\mathbf{G}_k^i,\mathcal{P}_{\alpha_i^*\mathbf{p}_i}\mathbf{p}_i \right\rangle_{\mathbf{x}_k^i},
	\end{equation}
	where $\approx$ results from the use of  subsampled gradient $\mathbf{G}_k^i$  to approximate the full gradient $\nabla f(\mathbf{x}_k^i)$. We then show that each $\mathbf{p}_i$ is a sufficient descent direction, i.e., $\left\langle\mathbf{G}_k^{i},\mathbf{p}_{i+1}\right\rangle_{\mathbf{x}_k^{i}}\le-C\left\|\mathbf{G}_k^i\right\|_{\mathbf{x}_k^i}^2$ for some constant $C>0$ \cite{sakai2021sufficient}.
	When $i=0$, $\mathbf{p}_1 = \mathbf{G}_k^0$, and thus $\left\langle\mathbf{G}_k^{0},\mathbf{p}_{1}\right\rangle_{\mathbf{x}_k^{0}}=-\left\|\mathbf{G}_k^0\right\|_{\mathbf{x}_k^0}^2$. 
	When $i>0$, from Step (\ref{CG_p_update}) in Algorithm \ref{alg_non_linear_tCG} and Eq. (\ref{eq_lem_cg_converege_2}),  we have $\mathbf{p}_{i+1} \approx -\mathbf{G}_k^i + \beta_i\mathbf{p}_i$. Applying the inner product to both sides by $\mathbf{G}_k^{i}$, we have
	\begin{align}
		\langle\mathbf{G}_k^i, \mathbf{p}_{i+1}\rangle_{\mathbf{x}_k^i}&\approx-\left\|\mathbf{G}_k^i\right\|_{\mathbf{x}_k^i}^2+\beta_i\langle\mathbf{G}_k^i,\mathcal{P}_{\alpha_i^*\mathbf{p}_i}\mathbf{p}_i\rangle_{\mathbf{x}_k^i}\\
		\label{eq_sufficient_descent}
		&\approx-\left\|\mathbf{G}_k^i\right\|_{\mathbf{x}_k^i}^2\le-C\left\|\mathbf{G}_k^i\right\|_{\mathbf{x}_k^i}^2,
	\end{align}
	with a selected $C>0$.  Here, Eq. (\ref{eq_sufficient_descent}) builds on Eq. (\ref{eq_zero_obj}).
	Given the sufficient descent direction $\mathbf{p}_i$ and the strong Wolfe conditions satisfied by $\alpha_i^*$, Theorem 2.4.1 in \cite{qi2011numerical} shows that the Zoutendijk Condition holds \cite{sato2015new}, i.e.,
	\begin{equation}
		\sum_{i=0}^\infty \frac{\left\langle\mathbf{G}_k^i, \mathbf{p}_{i+1}\right\rangle_{\mathbf{x}_k^i}^2}{\left\|\mathbf{p}_{i+1}\right\|_{\mathbf{x}_k^i}^2}<\infty.
		\label{eq_zou}
	\end{equation}	
	Next, we show that $\beta_i$ is upper bounded. Using Eq. (\ref{eq_lem_cg_converege_2}), we have
	\begin{align}
		\nonumber
		\beta_i&\approx \frac{\left\langle\mathbf{G}_k^i,\mathbf{G}_k^i-\frac{\left\|\mathbf{G}_k^i\right\|_{\mathbf{x}_k^i}}{\left\|\mathbf{G}_k^{i-1}\right\|_{\mathbf{x}_k^{i-1}}}\mathcal{P}_{\alpha_i^*\mathbf{T}_i}\mathbf{G}_k^{i-1}\right\rangle_{\mathbf{x}_k^i}}{2 \left\langle\mathbf{G}_k^{i-1},\mathbf{G}_k^{i-1} \right\rangle_{\mathbf{x}_k^{i-1}}}\\
		\label{eq_beta_bound}
		&=\frac{\left\|\mathbf{G}_k^i\right\|_{\mathbf{x}_k^i}^2-\frac{\left\|\mathbf{G}_k^i\right\|_{\mathbf{x}_k^i}}{\left\|\mathbf{G}_k^{i-1}\right\|_{\mathbf{x}_k^{i-1}}}\cos\theta\left\|\mathbf{G}_k^i\right\|_{\mathbf{x}_k^i}\left\|\mathbf{G}_k^{i-1}\right\|_{\mathbf{x}_k^{i-1}}}{2\left\|\mathbf{G}_k^{i-1}\right\|_{\mathbf{x}_k^{i-1}}^2} \le\frac{\left\|\mathbf{G}_k^i\right\|_{\mathbf{x}_k^i}^2}{\left\|\mathbf{G}_k^{i-1}\right\|_{\mathbf{x}_k^{i-1}}^2},
	\end{align}
	where $c_3>1$ is some constant.

	Now, we prove $\lim_{i\to\infty} \left\|\mathbf{G}_k^i\right\|_{\mathbf{x}_k^i}=0$ by contradiction. Assume that $\lim_{i\to\infty} \left\|\mathbf{G}_k^i\right\|_{\mathbf{x}_k^i}>0$, that is, for all $i$, there exists $\gamma>0$ such that $\left\|\mathbf{G}_k^i\right\|_{\mathbf{x}_k^i}>\gamma>0$.
	Squaring Step (\ref{CG_p_update}) of Algorithm \ref{alg_non_linear_tCG} and applying Eqs. (\ref{eq_wolfe_2}), (\ref{eq_lem_cg_converege_2}), (\ref{eq_sufficient_descent}) and (\ref{eq_beta_bound}), we have
	\begin{align}
		\nonumber
		\left\|\mathbf{p}_{i+1}\right\|_{\mathbf{x}_k^i}^2&\le	\left\|\mathbf{G}_k^{i}\right\|_{\mathbf{x}_k^i}^2 + 2\beta_i \left |\left\langle\mathbf{G}_k^{i},\mathcal{P}_{\alpha_i^*\mathbf{p}_i}\mathbf{p}_{i} \right\rangle_{\mathbf{x}_k^i}\right|+	\beta_i^2\left\|\mathcal{P}_{\alpha_i^*\mathbf{p}_i}\mathbf{p}_{i}\right\|_{\mathbf{x}_k^{i}}^2\\
		\nonumber
		&\le\left\|\mathbf{G}_k^{i}\right\|_{\mathbf{x}_k^i}^2-2c_2\frac{\left\|\mathbf{G}_k^i\right\|_{\mathbf{x}_k^i}^2}{\left\|\mathbf{G}_k^{i-1}\right\|_{\mathbf{x}_k^{i-1}}^2}\left\langle\mathbf{G}_k^{i-1},\mathbf{p}_i\right\rangle_{\mathbf{x}_k^{i-1}}+\beta_i^2\left\|\mathbf{p}_{i}\right\|_{\mathbf{x}_k^{i-1}}^2\\
		\nonumber
		&\le\left\|\mathbf{G}_k^{i}\right\|_{\mathbf{x}_k^i}^2+2c_2C\frac{\left\|\mathbf{G}_k^i\right\|_{\mathbf{x}_k^i}^2}{\left\|\mathbf{G}_k^{i-1}\right\|_{\mathbf{x}_k^{i-1}}^2}\left\|\mathbf{G}_k^{i-1}\right\|_{\mathbf{x}_k^{i-1}}^2+\beta_i^2\left\|\mathbf{p}_{i}\right\|_{\mathbf{x}_k^{i-1}}^2\\
		\label{beta_bound}
		&=\hat{C}\left\|\mathbf{G}_k^{i}\right\|_{\mathbf{x}_k^i}^2+\beta_i^2\left\|\mathbf{p}_{i}\right\|_{\mathbf{x}_k^{i-1}}^2,
	\end{align}
	where $\hat{C}=1+2c_2C>1$.
	Applying this  repetitively, we have
	\begin{align}
		\nonumber
		\left\|\mathbf{p}_{i+1}\right\|_{\mathbf{x}_k^i}^2&\le\hat{C}\left\|\mathbf{G}_k^{i}\right\|_{\mathbf{x}_k^i}^2+\beta_i^2\left(\hat{C}\left\|\mathbf{G}_k^{i-1}\right\|_{\mathbf{x}_k^{i-1}}^2+\beta_{i-1}^2\left\|\mathbf{p}_{i-1}\right\|_{\mathbf{x}_k^{i-2}}^2\right)\\
		\nonumber
		&\le\hat{C}\left(\left\|\mathbf{G}_k^{i}\right\|_{\mathbf{x}_k^i}^2+\beta_i^2\left\|\mathbf{G}_k^{i-1}\right\|_{\mathbf{x}_k^{i-1}}^2+\cdots+ \prod_{j=2}^{i}\beta_j^2\left\|\mathbf{G}_k^{1}\right\|_{\mathbf{x}_k^{1}}^2\right) +\left\|\mathbf{p}_{1}\right\|_{\mathbf{x}_k^{0}}^2 \prod_{j=1}^{i}\beta_j^2\\
		\label{beta_explain}
		&\le\hat{C}\left\|\mathbf{G}_k^{i}\right\|_{\mathbf{x}_k^i}^4\left(\frac{1}{\left\|\mathbf{G}_k^{i}\right\|_{\mathbf{x}_k^i}^2}+\frac{1}{\left\|\mathbf{G}_k^{i-1}\right\|_{\mathbf{x}_k^{i-1}}^2}+\cdots+\frac{1}{\left\|\mathbf{G}_k^{1}\right\|_{\mathbf{x}_k^{1}}^2}+\frac{1}{\hat{C}\left\|\mathbf{G}_k^{0}\right\|_{\mathbf{x}_k^0}^2}\right)\\
		\nonumber
		&\le\hat{C}\left\|\mathbf{G}_k^{i}\right\|_{\mathbf{x}_k^i}^4\sum_{j=0}^i\frac{1}{\left\|\mathbf{G}_k^{j}\right\|_{\mathbf{x}_k^j}^2}
		\le\frac{\hat{C}(i+1)}{\gamma^2}\left\|\mathbf{G}_k^{i}\right\|_{\mathbf{x}_k^i}^4,
	\end{align}
	where Eq. (\ref{beta_explain}) uses Eq. (\ref{eq_beta_bound}) and $\mathbf{p}_1=\mathbf{G}_k^0$, and the last inequality uses $\left\|\mathbf{G}_k^i\right\|_{\mathbf{x}_k^i}>\gamma$. Subsequently, this gives
	\begin{equation}
		\label{eq_norm_conj_bound} 
		\frac{\left\|\mathbf{G}_k^{i}\right\|_{\mathbf{x}_k^i}^4}{\left\|\mathbf{p}_{i+1}\right\|_{\mathbf{x}_k^i}^2}\ge\frac{\gamma^2}{\hat{C}(i+1)}.
	\end{equation}
	Combing Eqs. (\ref{eq_sufficient_descent}) and (\ref{eq_norm_conj_bound}), we have
	\begin{equation}
		\sum_{i=0}^\infty \frac{\left\langle\mathbf{G}_k^i, \mathbf{p}_{i+1}\right\rangle_{\mathbf{x}_k^i}^2}{\left\|\mathbf{p}_{i+1}\right\|_{\mathbf{x}_k^i}^2} =\sum_{i=0}^\infty\frac{\left\|\mathbf{G}_k^{i}\right\|_{\mathbf{x}_k^i}^4}{\left\|\mathbf{p}_{i+1}\right\|_{\mathbf{x}_k^i}^2} \times \frac{\langle\mathbf{G}_k^i, \mathbf{p}_{i+1}\rangle_{\mathbf{x}_k^i}^2}{\left\|\mathbf{G}_k^{i}\right\|_{\mathbf{x}_k^i}^4} \ge \sum_{i=0}^\infty \frac{C^2\gamma^2}{\hat{C}(i+1)}=\infty.
	\end{equation}
	This contradicts Eq. (\ref{eq_zou}) and completes the proof.
	
\end{proof}

\section{Appendix: Subproblem Solvers}
\label{app_sub_solution}

\subsection{Proof of Lemma \ref{lem_lanczos_sol_property}}

\subsubsection{The Case of $l=D$}

\begin{proof} 
	
	\textbf{Assumption \ref{assu_cauchy_eigen_point}:} 
	Regarding the Cauchy condition in Eq. (\ref{eq_cauchy_point}), it is satisfied simply because of Eq. (\ref{eq_lanczos_cauchy}) and Eq. (\ref{eq_cauchy_general_definition}). 
	Regarding the eigenstep condition, the proof is also fairly simple. 
	The solution $\bm\eta_k^*$ from Algorithm \ref{alg_non_linear_Lan} with $l=D$  is the global minimizer over $\mathbb{R}^{D}$. 
	As the subspace spanned by Cauchy steps and eigensteps belongs to $\mathbb{R}^D$, i.e., ${\rm Span}\left\{\bm\eta_k^C, \bm\eta_k^E\right\}\in\mathbb{R}^D$, we have
	\begin{equation}
		\hat{m}_k(\bm\eta_k^*) \le \min_{\bm\eta\in{\rm Span}\left\{\bm\eta_k^C, \bm\eta_k^E \right\}}\hat{m}_k (\bm\eta).
	\end{equation}
	Hence, the solution from Algorithm \ref{alg_non_linear_Lan} satisfies the eigenstep condition in Eq. (\ref{eq_eigen_point})) from Assumption \ref{assu_cauchy_eigen_point}.

	\textbf{Assumption \ref{assu_g}: } 
	As stated in Section 3.3 of \cite{cartis2011adaptive}, any minimizer, including the global minimizer $ \bm\eta_{k}^{*}$ from Algorithm \ref{alg_non_linear_Lan}, is a stationary point of $\hat{m}_k$, and naturally has the property $\nabla_{\bm\eta}\hat{m}_k( \bm\eta_{k}^{*})=\mathbf{0}_{\mathbf{x}_k}$. 
	Hence, it has $\left\|\nabla_{\bm\eta}\hat{m}_k( \bm\eta_{k}^{*})\right\|_{\mathbf{x}_k}=0\le \kappa_\theta\min\left(1, \left\| \bm\eta_k^*  \right\|_{\mathbf{x}_k} \right) \|\mathbf{G}_k\|_{\mathbf{x}_k}$. Assumption \ref{assu_g} is then satisfied.
	
	\textbf{Assumption \ref{assu_cg}: } 
	Given the above-mentioned property of the global minimizer $\bm\eta_k^*$ of $\hat{m}_k$ from Algorithm \ref{alg_non_linear_Lan}, i.e., $\nabla_{\bm\eta}\hat{m}_k( \bm\eta_{k}^{*})=\mathbf{0}_{\mathbf{x}_k}$, and using the definition of $\nabla_{\bm\eta}\hat{m}_k( \bm\eta_{k}^{*})$, it has
	\begin{equation}
		\label{eq_def_model_grad}
		\nabla_{\bm\eta}\hat{m}_k( \bm\eta_{k}^{*})=\mathbf{G}_k+\mathbf{H}_k[\bm\eta_k^*]+\sigma_k\left\|\bm\eta_{k}^{*}\right\|_{\mathbf{x}_k}\bm\eta_{k}^{*}=\mathbf{0}_{\mathbf{x}_k}.
	\end{equation}
	Applying the inner product operation with  $\bm\eta_k^*$ to both sides of Eq. (\ref{eq_def_model_grad}), it has
	\begin{equation}
		\label{eq_lan_assums_1}
		\langle\mathbf{G}_k, \bm\eta_k^*\rangle_{\mathbf{x}_k}+\langle\bm\eta_k^*, \mathbf{H}_k[\bm\eta_k^*]\rangle_{\mathbf{x}_k}+\sigma_k\left\|\bm\eta_{k}^{*}\right\|_{\mathbf{x}_k}^3 = 0,
	\end{equation}
	and this corresponds to Eq. (\ref{eq_sufficient_step0}). 
	As $\bm\eta_k^*$ is a descent direction, we have $\langle\mathbf{G}_k, \bm\eta_k^*\rangle_{\mathbf{x}_k}\le 0$. 
	Combining this with Eq. (\ref{eq_lan_assums_1}), it has
	\begin{equation}
		\langle\bm\eta_k^*, \mathbf{H}_k[\bm\eta_k^*]\rangle_{\mathbf{x}_k}+\sigma_k\left\|\bm\eta_{k}^{*}\right\|_{\mathbf{x}_k}^3\ge0.
	\end{equation}
	and this is Eq. (\ref{eq_sufficient_step}). This completes the proof.
\end{proof}

\subsubsection{The Case of $l<D$}
\begin{proof} 
	
	\textbf{Cauchy condition in Assumption \ref{assu_cauchy_eigen_point}: } Implied by Eq. (\ref{eq_lanczos_cauchy}), any intermediate solution $\bm\eta_k^{*l}$ satisfies the Cauchy condition.
	
	\textbf{Assumption \ref{assu_g}: } As stated in Section 3.3 of \cite{cartis2011adaptive}, any minimizer $\bm\eta^*$ of $\hat{m}_k$ admits the property $\nabla_{\bm\eta}\hat{m}_k( \bm\eta^{*})=\mathbf{0}_{\mathbf{x}_k}$. Since each $\bm\eta_k^{*l}$ is the minimizer of $\hat{m}_k$ over $\mathcal{K}_l$, it has $\nabla_{\bm\eta}\hat{m}_k( \bm\eta_k^{*l})=\mathbf{0}_{\mathbf{x}_k}$. Then, it has $\left\|\nabla_{\bm\eta}\hat{m}_k( \bm\eta_{k}^{*l})\right\|_{\mathbf{x}_k}=0\le \kappa_\theta\min\left(1, \left\| \bm\eta_k^{*l}  \right\|_{\mathbf{x}_k} \right) \|\mathbf{G}_k\|_{\mathbf{x}_k}$. Assumption \ref{assu_g} is then satisfied.

	\textbf{Assumption \ref{assu_cg}: } According to Lemma 3.2 of \cite{cartis2011adaptive}, a global minimizer of $\hat{m}_k$ over a subspace of $\mathbb{R}^D$ satisfies Assumption \ref{assu_cg}. As the solution $\bm\eta_k^{*l}$ in each iteration of Algorithm \ref{alg_non_linear_Lan} is a global solution over the subspace $\mathcal{K}_l$, each $\bm\eta_k^{*l}$ satisfies Assumption \ref{assu_cg}. 
\end{proof}

\subsection{Proof of Lemma \ref{lem_tcg_sol_property}}
\label{cite_proof}

\begin{proof} 
	
	\textbf{For Cauchy Condition of  Assumption \ref{assu_cauchy_eigen_point}:} 
	In the first iteration of  Algorithm \ref{alg_non_linear_tCG},   the step size is optimized along the steepest gradient direction, as  in the classical steepest-descent method of Cauchy, i.e.,
	\begin{equation}
		\label{eq_cg_cauchy}
		\bm\eta_{k}^{1}=\left(\arg\min_{\alpha\in\mathbb{R}} \hat{m}_k(\alpha \mathbf{G}_k)\right)\mathbf{G}_k.
	\end{equation}
	At each iteration $i$ of Algorithm \ref{alg_non_linear_tCG}, the line search process in Eq. (\ref{step_search0})  aims at finding a step size that can achieve a cost decrease, otherwise the step size will be zero, meaning that no strict decrease can be achieved and the algorithm will stop at Step (\ref{CG_return_1}). Because of this, we have
	\begin{equation}
		\hat{m}_k\left(\bm\eta_{k}^{i-1}+\alpha_i \mathbf{p}_i\right)\le \hat{m}_k\left(\bm\eta_{k}^{i-1}\right).
	\end{equation}
	Given $\bm\eta_{k}^{i}=\bm\eta_{k}^{i-1}+\alpha_i^*\mathbf{p}_i$ in Algorithm \ref{alg_non_linear_tCG}, we have
	\begin{equation}
		\hat{m}_k\left(\bm\eta_{k}^{i}\right)\le \hat{m}_k\left(\bm\eta_{k}^{i-1}\right).
	\end{equation}
	Then, considering all $i=1,2,...$, we have
	\begin{equation}
		\hat{m}_k\left( \bm\eta_{k}^* \right)\le \ldots \le \hat{m}_k\left( \bm\eta_{k}^{1}\right) \le \hat{m}_k\left( \bm\eta_{k}^{0} \right).
	\end{equation}
	This shows Algorithm \ref{alg_non_linear_tCG} always returns a solution better than or equal to the Cauchy step. 
	
	\textbf{For $\left\|\nabla_{\bm\eta} \hat{m}_k(\bm\eta)\right\|_{\mathbf{x}_k}\approx 0$:} 
	The approximation concept ($\approx$) of interest here builds on the fact that  $\hat{m}_k(\bm\eta)$ is used as the approximation of the real objective function $f(R_{\mathbf{x}_k}(\bm\eta))$. By assuming $\hat{m}_k(\bm\eta)\approx f(R_{\mathbf{x}_k}(\bm\eta))$, it leads to 
	\begin{equation}
		\label{grad_approx}
		\nabla_{\bm\eta} \hat{m}_k(\bm\eta)\approx \nabla_{\bm\eta}f(R_{\mathbf{x}_k}(\bm\eta)).
	\end{equation}
	Let $\mathbf{G}_{k+1}$ be the subsampled gradient evaluated at $R_{\mathbf{x}_k}(\bm\eta_k^*)$. Based on Theorem \ref{thm_tCG_converge}, it has $\mathbf{G}_{k+1}= \lim_{i\to\infty}\mathbf{G}_k^i$ where $\mathbf{G}_k^i$ is the resulting subsampled gradient after $i$ inner iterations. Since $\mathbf{G}_{k+1}$ is the approximate gradient of the full gradient $\nabla f(R_{\mathbf{x}_k}(\bm\eta_k^*))$, it has $\nabla f(R_{\mathbf{x}_k}(\bm\eta_k^*))=\mathbb{E}[\mathbf{G}_{k+1}]=\mathbb{E}[\lim_{i\to\infty}\mathbf{G}_k^i]$. Hence, it has
	\begin{align}
		& \left\|	\nabla_{\bm\eta} f\left(R_{\mathbf{x}_k}(\bm\eta_k^*)\right)\right\|_{\mathbf{x}_k} \\
		\nonumber
		= &\left\|	\nabla f\left(R_{\mathbf{x}_k}(\bm\eta_k^*)\right)\frac{d\left(R_{\mathbf{x}_k}(\bm\eta_k^*)\right)}{d\bm\eta}\Big|_{\bm\eta=\bm\eta_k^*}\right\|_{\mathbf{x}_k} = \left\|\mathbb{E}\left[\lim_{i\to\infty}\mathbf{G}_k^i\right]\right\|_{\mathbf{x}_k}\le\mathbb{E}\left[\lim_{i\to\infty}\left\|\mathbf{G}_k^i\right\|_{\mathbf{x}_k^i}\right]=0,
	\end{align}
	which indicates the equality holds as $\left\|\nabla_{\bm\eta} f\left(R_{\mathbf{x}_k}(\bm\eta_k^*)\right)\right\|_{\mathbf{x}_k}  = 0$.
	Combining this with Eq. (\ref{grad_approx}), it completes the proof.

	\textbf{For Condition 1 of Assumption \ref{assu_cg}:} 
	Using the definition of $\nabla_{\bm\eta}\hat{m}_k( \bm\eta_{k}^{*})$ as in Eq. (\ref{eq_def_model_grad}), it has
	\begin{align}
		\label{appD_eq1}
		\left\langle \nabla_{\bm\eta}\hat{m}_k( \bm\eta_{k}^{*}), \bm\eta_k^* \right\rangle & = \langle\mathbf{G}_k+\mathbf{H}_k[\bm\eta_k^*]+\sigma_k\left\|\bm\eta_{k}^{*}\right\|_{\mathbf{x}_k}\bm\eta_{k}^{*}, \bm\eta_k^* \rangle_{\mathbf{x}_k} \\
		\nonumber
		& = \langle\mathbf{G}_k, \bm\eta_k^*\rangle_{\mathbf{x}_k}+\langle\bm\eta_k^*, \mathbf{H}_k[\bm\eta_k^*]\rangle_{\mathbf{x}_k}+\sigma_k\left\|\bm\eta_{k}^{*}\right\|_{\mathbf{x}_k}^3.
	\end{align}
	Also,  since $\left\|\nabla_{\bm\eta}\hat{m}_k( \bm\eta_{k}^{*})\right\|_{\mathbf{x}_k}\approx 0$, we have
	\begin{equation}
		\label{appD_eq2}
		\left| \left\langle \nabla_{\bm\eta}\hat{m}_k( \bm\eta_{k}^{*}), \bm\eta_k^* \right\rangle \right| \leq \left\|\nabla_{\bm\eta}\hat{m}_k( \bm\eta_{k}^{*})\right\|_{\mathbf{x}_k} \left\|\bm\eta_k^*\right\|_{\mathbf{x}_k} \approx 0.
	\end{equation}
	Combining Eqs (\ref{appD_eq1}) and (\ref{appD_eq2}) , this results in Eq. (\ref{assump3_app}), which completes the proof.
\end{proof}

\section{Appendix: Proof of Theorem \ref{theorem1}}
\label{appdendix_E}

\subsection{Matrix Bernstein Inequality}
We build the proof on the  matrix Bernstein inequality.  We restate this inequality in the following lemma.
\begin{lem} [Matrix Bernstein Inequality (\cite{gross2011recovering,tropp2015introduction})] 
	Let  $\mathbf{A}_1,...,\mathbf{A}_n$ be independent, centered random matrices with the common dimension $d_1\times d_2$. Assume that each one is uniformly bounded:
	\begin{equation}
		\mathbb{E}[\mathbf{A}_i]=0, \|\mathbf{A}_i\|\le \mu,\  i=1,...,n.	
	\end{equation}
	Given the matrix sum $\mathbf{Z}=\sum_{i=1}^n\mathbf{A}_i$, we define its variance $\nu(\mathbf{Z})$ by
	\begin{equation}
		\begin{split}
			\nu(\mathbf{Z})&:=\max\left\{\left\|\sum_{i=1}^n\mathbb{E}\left[\mathbf{A}_i\mathbf{A}_i^T\right]\right\|, \left\|\sum_{i=1}^n\mathbb{E}\left[\mathbf{A}_i^T\mathbf{A}_i\right]\right\|\right\}.
		\end{split}
	\end{equation}
	Then 
	\begin{equation}
		{\rm Pr}(\left\|\mathbf{Z}\right\|\ge \epsilon)\le (d_1+d_2)\exp\left(\frac{-\epsilon^2/2}{\nu(\mathbf{Z})+\mu \epsilon/3}\right)\ {\rm for\ all}\ \epsilon>0.
	\end{equation}
	\label{lem_Matrix_Bernstein_inequality}
\end{lem}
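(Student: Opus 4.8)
The plan is to establish this via the matrix Laplace-transform (matrix Chernoff) method: first reduce the rectangular case to a Hermitian one, then control the trace of the matrix moment generating function and optimize a free parameter. First I would remove the rectangularity by passing to the \emph{Hermitian dilation}
\[
\mathcal{H}(\mathbf{A}) = \begin{pmatrix} \mathbf{0} & \mathbf{A} \\ \mathbf{A}^T & \mathbf{0} \end{pmatrix},
\]
which is a $(d_1+d_2)\times(d_1+d_2)$ symmetric matrix satisfying $\lambda_{\max}(\mathcal{H}(\mathbf{A}))=\|\mathbf{A}\|$ and $\mathcal{H}(\mathbf{A})^2 = {\rm diag}(\mathbf{A}\mathbf{A}^T,\ \mathbf{A}^T\mathbf{A})$. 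Setting $\mathbf{Y}_i=\mathcal{H}(\mathbf{A}_i)$ and $\mathbf{Y}=\sum_i \mathbf{Y}_i=\mathcal{H}(\mathbf{Z})$, the event $\{\|\mathbf{Z}\|\ge\epsilon\}$ becomes $\{\lambda_{\max}(\mathbf{Y})\ge\epsilon\}$, which is how the dimension $(d_1+d_2)$ enters the final bound. Each $\mathbf{Y}_i$ is centered with $\|\mathbf{Y}_i\|\le\mu$, and the block structure above identifies $\nu(\mathbf{Z})=\lambda_{\max}\big(\sum_i \mathbb{E}[\mathbf{Y}_i^2]\big)$.

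Next I would apply the Chernoff-type bound: for any $\theta>0$, Markov's inequality together with $e^{\theta\lambda_{\max}(\mathbf{Y})}\le {\rm tr}\,e^{\theta\mathbf{Y}}$ gives
\[
{\rm Pr}(\lambda_{\max}(\mathbf{Y})\ge\epsilon) \le e^{-\theta\epsilon}\,\mathbb{E}\big[{\rm tr}\,e^{\theta\mathbf{Y}}\big].
\]
The crucial step, and the main obstacle, is to decouple the summands inside the trace exponential. Because the $\mathbf{Y}_i$ do not commute, $e^{\theta\mathbf{Y}}\neq\prod_i e^{\theta\mathbf{Y}_i}$ in general, so the scalar factorization of the moment generating function fails. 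The resolution is \emph{Lieb's concavity theorem}, which asserts that $\mathbf{M}\mapsto{\rm tr}\,\exp(\mathbf{H}+\log\mathbf{M})$ is concave on positive-definite $\mathbf{M}$. Applying Jensen's inequality through this concave map and peeling off one summand at a time (conditioning on $\mathbf{Y}_1,\dots,\mathbf{Y}_{n-1}$) yields the subadditivity of the matrix cumulant generating function,
\[
\mathbb{E}\big[{\rm tr}\,e^{\theta\mathbf{Y}}\big] \le {\rm tr}\,\exp\Big(\textstyle\sum_{i=1}^n \log\mathbb{E}\big[e^{\theta\mathbf{Y}_i}\big]\Big),
\]
and establishing this rigorously is the genuine technical heart of the argument.

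With subadditivity in hand I would bound each summand's matrix moment generating function. Define the scalar function $f(z)=(e^{\theta z}-\theta z-1)/z^2$ for $z\neq0$ and $f(0)=\theta^2/2$, which is increasing on $\mathbb{R}$. Since the eigenvalues of $\mathbf{Y}_i$ lie in $[-\mu,\mu]$, the spectral-mapping bound gives $f(\mathbf{Y}_i)\preceq f(\mu)\mathbf{I}$, whence by conjugation
\[
e^{\theta\mathbf{Y}_i} = \mathbf{I}+\theta\mathbf{Y}_i+\mathbf{Y}_i f(\mathbf{Y}_i)\mathbf{Y}_i \preceq \mathbf{I}+\theta\mathbf{Y}_i+f(\mu)\mathbf{Y}_i^2.
\]
Taking expectations, using $\mathbb{E}[\mathbf{Y}_i]=\mathbf{0}$ together with $\mathbf{I}+\mathbf{X}\preceq e^{\mathbf{X}}$ and operator monotonicity of $\log$, I obtain $\log\mathbb{E}[e^{\theta\mathbf{Y}_i}]\preceq f(\mu)\,\mathbb{E}[\mathbf{Y}_i^2]$ for $0<\theta<3/\mu$, where the elementary estimate $e^x-x-1\le \tfrac{x^2/2}{1-x/3}$ with $x=\theta\mu$ shows $f(\mu)\le \tfrac{\theta^2/2}{1-\mu\theta/3}$.

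Finally I would assemble the pieces. Summing the per-summand bounds and invoking monotonicity of the trace exponential gives
\[
\mathbb{E}\big[{\rm tr}\,e^{\theta\mathbf{Y}}\big] \le (d_1+d_2)\,\exp\!\Big(\tfrac{\theta^2/2}{1-\mu\theta/3}\,\nu(\mathbf{Z})\Big),
\]
with $\nu(\mathbf{Z})=\lambda_{\max}(\sum_i\mathbb{E}[\mathbf{Y}_i^2])$ as identified above. Combining this with the Chernoff bound and choosing $\theta=\epsilon/(\nu(\mathbf{Z})+\mu\epsilon/3)\in(0,3/\mu)$ collapses the exponent to exactly $-\tfrac{\epsilon^2/2}{\nu(\mathbf{Z})+\mu\epsilon/3}$, which proves the stated tail bound for $\lambda_{\max}(\mathbf{Y})=\|\mathbf{Z}\|$. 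I expect the dilation, the per-summand estimate, and the final optimization to be routine once the framework is in place; the only deep ingredient is the cumulant subadditivity, which rests on Lieb's theorem rather than on any elementary commutation identity.
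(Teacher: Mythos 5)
Your proof is correct, but there is an important mismatch of expectations: the paper never proves this lemma at all. It is imported verbatim, with citations, as a black-box tool (Lemma \ref{lem_Matrix_Bernstein_inequality} in Appendix E) whose only role is to feed the sampling-size bounds of Theorem \ref{theorem1}; the paper's actual work begins \emph{after} this statement, in applying it to the subsampled Riemannian gradients and Hessians. What you have reconstructed, blind, is essentially Tropp's own proof of the rectangular matrix Bernstein inequality, and every step checks out: the Hermitian dilation $\mathcal{H}(\mathbf{A})$ correctly converts $\|\mathbf{Z}\|$ into $\lambda_{\max}(\mathbf{Y})$ and its square $\mathcal{H}(\mathbf{A})^2={\rm diag}(\mathbf{A}\mathbf{A}^T,\mathbf{A}^T\mathbf{A})$ correctly identifies $\nu(\mathbf{Z})=\lambda_{\max}\bigl(\sum_i\mathbb{E}[\mathbf{Y}_i^2]\bigr)$ with the paper's max-of-two-norms definition (both blocks being PSD); the Laplace-transform bound plus Lieb-based subadditivity is the right (and, as you say, the only deep) ingredient, since the naive factorization of $e^{\theta\mathbf{Y}}$ fails for non-commuting summands; the per-summand estimate via the increasing function $f(z)=(e^{\theta z}-\theta z-1)/z^2$ and the conjugation bound $\mathbf{Y}_if(\mathbf{Y}_i)\mathbf{Y}_i\preceq f(\mu)\mathbf{Y}_i^2$ is the standard Lemma-6.7-style argument; and your final choice $\theta=\epsilon/(\nu(\mathbf{Z})+\mu\epsilon/3)$ indeed satisfies $\theta<3/\mu$, gives $1-\theta\mu/3=\nu(\mathbf{Z})/(\nu(\mathbf{Z})+\mu\epsilon/3)$, and collapses the exponent to exactly $-\frac{\epsilon^2/2}{\nu(\mathbf{Z})+\mu\epsilon/3}$, matching the stated tail bound with the dimensional factor $(d_1+d_2)$ inherited from ${\rm tr}\,e^{\theta\mathbf{Y}}\le(d_1+d_2)e^{\theta\lambda_{\max}(\mathbf{Y})}$ applied after the dilation. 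So your write-up is a self-contained derivation of something the authors deliberately outsourced; in a revision of this paper it would be overkill to include, but it is the correct proof of the cited result, and your assessment of where the difficulty lies (Lieb's concavity theorem, not the bookkeeping) is accurate.
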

This lemma supports us to prove Theorem \ref{theorem1} as below.

\subsection{Main Proof}
\begin{proof}
	Following the subsampling process, a total of $|\mathcal{S}_g|$ matrices are uniformly sampled from the set of $n$ Riemannan gradients $\left\{{\rm grad}f_i(\mathbf{x}) \subseteq\mathbb{R}^{d\times r}\right\}_{i=1}^n$. We denote each sampled element as $\mathbf{G}^{(i)}_{\mathbf{x}}$,  and  it has 
	\begin{equation}
		{\rm Pr}\left(\mathbf{G}^{(i)}_\mathbf{x}\right)=\frac{1}{n},\ i=1,...,|\mathcal{S}_g|.
	\end{equation}
	Define the random matrix
	\begin{equation}
		\mathbf{X}_i:=\mathbf{G}^{(i)}_\mathbf{x}-{\rm grad}f(\mathbf{x}),\ i=1,2,..,|\mathcal{S}_g|.
	\end{equation}
	
	\noindent Our focus is the type of problem as in Eq. (\ref{general_optimization_problem}), therefore it has
	\begin{align}
		&\mathbb{E}[\mathbf{X}_i] =\mathbb{E}\left[\mathbf{G}^{(i)}_\mathbf{x}\right] - \mathbb{E}\left[{\rm grad}f(\mathbf{x})\right]=\mathbb{E}\left[{\rm grad}f_i(\mathbf{x})\right] - \mathbb{E}\left[ \frac{1}{n} \sum_{i=1}^n {\rm grad} f_i(\mathbf{x})\right]  =  \mathbf{0}.
	\end{align}
	Define a random variable
	\begin{equation}
		\mathbf{X}:=\frac{1}{|\mathcal{S}_g|}\sum_{i=1}^{|\mathcal{S}_g|} \mathbf{X}_i=\frac{1}{|\mathcal{S}_g|}\sum_{i=1}^{|\mathcal{S}_g|}\left(\mathbf{G}^{(i)}_\mathbf{x}-{\rm grad}f(\mathbf{x})\right).
	\end{equation}
	Its variance satisfies
	\begin{align}
		\nonumber
		\nu(\mathbf{X})&=\max\left\{\frac{1}{|\mathcal{S}_g|^2}\left\|\sum_{i=1}^{|\mathcal{S}_g|}\mathbb{E}\left[\mathbf{X}_i\mathbf{X}_i^T\right]\right\|_\mathbf{x}, \frac{1}{|\mathcal{S}_g|^2}\left\|\sum_{i=1}^{|\mathcal{S}_g|}\mathbb{E}\left[\mathbf{X}_i^T\mathbf{X}_i\right]\right\|_\mathbf{x}\right\}\\
		\label{eq_nu_2}
		&\le\frac{1}{|\mathcal{S}_g|^2}\max\left\{\sum_{i=1}^{|\mathcal{S}_g|}\mathbb{E}\left[\left\|\mathbf{X}_i\mathbf{X}_i^T\right\|_\mathbf{x}\right], \sum_{i=1}^{|\mathcal{S}_g|}\mathbb{E}\left[\left\|\mathbf{X}_i^T\mathbf{X}_i\right\|_\mathbf{x}\right]\right\}
	\end{align}
	Take $\mathbf{G}^{(i)}_{\mathbf{x}}={\rm grad}f_1(\mathbf{x})$ as an example and  applying the definition of $K_{g_{max}}$ in Eq. (\ref{eq_Kgmax}), we have
	\begin{align}
		\nonumber
		\mathbb{E}[\|\mathbf{X}_i\|_\mathbf{x}] &= \mathbb{E}\left[\left\|{\rm grad}f_1(\mathbf{x})-{\rm grad}f(\mathbf{x})\right\|_\mathbf{x}\right] =\mathbb{E}\left[\left\|{\rm grad}f_1(\mathbf{x})-\frac{1}{n}\sum_{i=1}^n{\rm grad}f_i(\mathbf{x})\right\|_\mathbf{x}\right]\\
		\nonumber
		&= \mathbb{E}\left[\left\|\frac{n-1}{n}{\rm grad}f_1(\mathbf{x})-\frac{1}{n}\sum_{i=2}^n{\rm grad}f_i(\mathbf{x})\right\|_\mathbf{x}\right]\\
		\nonumber
		&\le \mathbb{E}\left[ \left( \frac{2(n-1)^2}{n^2} \left\|{\rm grad}f_1(\mathbf{x})\right\|_\mathbf{x}^2+ \frac{2}{n^2}\left\|\sum_{i=2}^n{\rm grad}f_i(\mathbf{x})\right\|_\mathbf{x}^2 \right)^{\frac{1}{2}}\right]\\
		\label{eq_Xi_norm_2}
		&\le \mathbb{E}\left[ \left( \frac{2(n-1)^2}{n^2} K_{g_{max}}^2+ \frac{2(n-2)^2}{n^2}  K_{g_{max}}^2 \right)^{\frac{1}{2}}\right]\le 2K_{g_{max}}.
	\end{align}
	where the first inequality uses $(a+b)^2\le 2a^2+2b^2$. Combining Eq. (\ref{eq_nu_2}) and Eq. (\ref{eq_Xi_norm_2}), it has
	\begin{align}
		\nu(\mathbf{X})\le \frac{1}{|\mathcal{S}_g|^2}\sum_{i=1}^{|\mathcal{S}_g|}\mathbb{E}\left[\left\|\mathbf{X}_i\right\|_\mathbf{x}\right]\mathbb{E}\left[\left\|\mathbf{X}_i^T\right\|_\mathbf{x}\right]
		\le \frac{1}{|\mathcal{S}_g|^2}\sum_{i=1}^{|\mathcal{S}_g|} 4K_{g_{max}}^2
		=\frac{4}{|\mathcal{S}_g|}K_{g_{max}}^2.
	\end{align}

	Now we are ready to apply Lemma \ref{lem_Matrix_Bernstein_inequality}.  Given $\mathbb{E}\left[\frac{1}{|\mathcal{S}_g|}\mathbf{X}_i\right]=\frac{1}{|\mathcal{S}_g|}\mathbb{E}[\mathbf{X}_i]=\mathbf{0}$ and according to the superma definition $\left\|\frac{1}{|\mathcal{S}_g|}\mathbf{X}_i\right\|_\mathbf{x}=\frac{1}{|\mathcal{S}_g|}\|\mathbf{X}_i\|_\mathbf{x}\le\frac{K_{g_{max}}}{|\mathcal{S}_g|}$, the following is obtained from the matrix Bernstein inequality:
	\begin{align}
		{\rm Pr}(\|\mathbf{X}\|_\mathbf{x}\ge \epsilon)&= {\rm Pr}\left(\left\|\frac{1}{|\mathcal{S}_g|}\sum_{i=1}^{|\mathcal{S}_g|}\mathbf{G}^{(i)}_\mathbf{x}-{\rm grad}f(\mathbf{x})\right\|_\mathbf{x}\ge \epsilon\right)\\
		&\le(d+r)\exp\left(\frac{-\epsilon^2/2}{\frac{4}{|\mathcal{S}_g|}K_{g_{max}}^2+\frac{K_{g_{max}}}{|\mathcal{S}_g|} \epsilon/3}\right)\\
		&\le(d+r)\exp\left(\frac{-|\mathcal{S}_g|\epsilon^2}{8(K_{g_{max}}^2+K_{g_{max}})}\right)=\delta,
	\end{align}
	of which the last equality implies
	\begin{equation}
		\epsilon=2\sqrt{\frac{2\left(K_{g_{max}}^2+K_{g_{max}}\right)\ln\left(\frac{d+r}{\delta}\right)}{|\mathcal{S}|_g}}.
	\end{equation}
	In simple words, with a probability at least $1-\delta$, we have $\|\mathbf{X}\|_\mathbf{x} < \epsilon$. 
	Letting 
	\begin{equation}
		\epsilon =2\sqrt{\frac{2\left(K_{g_{max}}^2+K_{g_{max}}\right)\ln\left(\frac{d+r}{\delta}\right)}{|\mathcal{S}|_g}} \le \delta_g,
	\end{equation}
	this results in the sample size bound as in Eq. (\ref{eq_restriction_sg})
	\begin{equation}\
		|\mathcal{S}_g|\ge\frac{8\left(K_{g_{max}}^2+K_{g_{max}}\right)\ln\left(\frac{d+r}{\delta}\right)}{\delta_g^2}.
	\end{equation}
	Therefore, we have $ \|\mathbf{X}\|_\mathbf{x} \le \delta_g$. 
	Expanding $\mathbf{X}$, we have the following satisfied with a  probability at least $1-\delta$:
	\begin{equation}
		\|\mathbf{X}\|_\mathbf{x} = \left\|\frac{1}{|\mathcal{S}_g|}\sum_{i=1}^{|\mathcal{S}_g|}\mathbf{G}^{(i)}_{\mathbf{x}}-{\rm grad}f(\mathbf{x})\right\|_\mathbf{x} = \left\|\mathbf{G}_k-{\rm grad}f(\mathbf{x}_k)\right\|_\mathbf{x}  \le \delta_g,
	\end{equation}
	which is Eq. (\ref{eq_approximate_grad_hess_bound1}) of Condition 1. 
	
	The proof of the other sample size bound follows the same strategy.
	A total of $|\mathcal{S}_H|$ matrices are uniformly sampled from the set of $n$ Riemannan Hessians $\left\{\nabla^2\hat{f}_i(\mathbf{0}_x)[\bm\eta] \subseteq\mathbb{R}^{d\times r} \right\}_{i=1}^n $. We denote each sampled element as $\mathbf{H}^{(i)}_{\mathbf{x}}[\bm\eta]$,  and  it has 
	\begin{equation}
		{\rm Pr}\left(\mathbf{H}^{(i)}_\mathbf{x}[\bm\eta]\right)=\frac{1}{n},\ i=1,...,|\mathcal{S}_H|.
	\end{equation}
	Define the random matrix
	\begin{equation}
		\mathbf{Y}_i:=\mathbf{H}^{(i)}_\mathbf{x}[\bm\eta]-\nabla^2\hat{f}(\mathbf{0}_x)[\bm\eta],\ i=1,2,..,|\mathcal{S}_H|.
	\end{equation}
	For the problem defined in Eq. (1) with a second-order retraction, it has
	\begin{align}
		&\mathbb{E}[\mathbf{Y}_i] =\mathbb{E}\left[\mathbf{H}^{(i)}_\mathbf{x}[\bm\eta]\right] - \mathbb{E}\left[\nabla^2\hat{f}(\mathbf{0}_x)[\bm\eta]\right] \\
		&=\mathbb{E}\left[\nabla^2\hat{f}_i(\mathbf{0}_x)[\bm\eta]\right] - \mathbb{E}\left[ \frac{1}{n} \sum_{i=1}^n\nabla^2\hat{f}_i(\mathbf{0}_x)[\bm\eta]\right]  =  \mathbf{0}.
	\end{align}
	Define a random variable
	\begin{equation}
		\mathbf{Y}:=\frac{1}{|\mathcal{S}_H|}\sum_{i=1}^{|\mathcal{S}_H|} \mathbf{Y}_i=\frac{1}{|\mathcal{S}_H|}\sum_{i=1}^{|\mathcal{S}_H|}\left(\mathbf{H}^{(i)}_\mathbf{x}[\bm\eta]-\nabla^2\hat{f}(\mathbf{0}_x)[\bm\eta]\right).
	\end{equation}
	Its variance satisfies
	\begin{align}
		\nonumber
		\nu(\mathbf{Y})&=\max\left\{\frac{1}{|\mathcal{S}_H|^2}\left\|\sum_{i=1}^{|\mathcal{S}_H|}\mathbb{E}\left[\mathbf{Y}_i\mathbf{Y}_i^T\right]\right\|_\mathbf{x}, \frac{1}{|\mathcal{S}_H|^2}\left\|\sum_{i=1}^{|\mathcal{S}_H|}\mathbb{E}\left[\mathbf{Y}_i^T\mathbf{Y}_i\right]\right\|_\mathbf{x}\right\}\\
		\label{eq_nu_Y2}
		&\le\frac{1}{|\mathcal{S}_H|^2}\max\left\{\sum_{i=1}^{|\mathcal{S}_H|}\mathbb{E}\left[\left\|\mathbf{Y}_i\mathbf{Y}_i^T\right\|_\mathbf{x}\right], \sum_{i=1}^{|\mathcal{S}_H|}\mathbb{E}\left[\left\|\mathbf{Y}_i^T\mathbf{Y}_i\right\|_\mathbf{x}\right]\right\}.
	\end{align}
	Take $\mathbf{H}^{(i)}_\mathbf{x}[\bm\eta]=\nabla^2\hat{f}_1(\mathbf{0}_x)[\bm\eta]$ as an example and  applying the definition of $K_{H_{max}}$ in Eq. (\ref{eq_KHmax}), we have
	\begin{align}
		\nonumber
		\mathbb{E}[\|\mathbf{Y}_i\|_\mathbf{x}] &=\mathbb{E}\left[\left\|\nabla^2\hat{f}_1(\mathbf{0}_x)[\bm\eta]-\frac{1}{n}\sum_{i=1}^n\nabla^2\hat{f}_i(\mathbf{0}_x)[\bm\eta]\right\|_\mathbf{x}\right]\\
		\nonumber
		&= \mathbb{E}\left[\left\|\frac{n-1}{n}\nabla^2\hat{f}_1(\mathbf{0}_x)[\bm\eta]-\frac{1}{n}\sum_{i=2}^n\nabla^2\hat{f}_i(\mathbf{0}_x)[\bm\eta]\right\|_\mathbf{x}\right]\\
		\nonumber
		&\le \mathbb{E}\left[ \left(\frac{2(n-1)^2}{n^2} \left\|\nabla^2\hat{f}_1(\mathbf{0}_x)[\bm\eta]\right\|_\mathbf{x}^2+ \frac{2}{n^2} \left\|\sum_{i=2}^n\nabla^2\hat{f}_i(\mathbf{0}_x)[\bm\eta]\right\|_\mathbf{x}^2 \right)^{\frac{1}{2}}\right]\\
		\nonumber
		&\le \mathbb{E}\left[ \left(\frac{2(n-1)^2}{n^2} K_{H_{max}}^2\left\| \bm\eta\right\|_{\mathbf{x}}^2 + \frac{2(n-2)^2}{n^2} K_{H_{max}}^2\left\| \bm\eta\right\|_{\mathbf{x}}^2 \right)^{\frac{1}{2}}\right] \\
		\label{eq_Yi_norm_2}
		&\le 2K_{H_{max}}\left\| \bm\eta\right\|_{\mathbf{x}}.
	\end{align}
	where the first inequality uses $(a+b)^2\le 2a^2+2b^2$, and $\bm\eta$ is the current moving direction being optimized in Eq. (\ref{eq_sub_problem}). Combining Eq. (\ref{eq_nu_Y2}) and Eq. ((\ref{eq_Yi_norm_2}), we have
	\begin{align}
		\nu(\mathbf{Y})&\le \frac{1}{|\mathcal{S}_H|^2}\sum_{i=1}^{|\mathcal{S}_H|}\mathbb{E}\left[\left\|\mathbf{Y}_i\right\|_\mathbf{x}\right]\mathbb{E}\left[\left\|\mathbf{Y}_i^T\right\|_\mathbf{x}\right]\\
		&\le \frac{1}{|\mathcal{S}_H|^2}\sum_{i=1}^{|\mathcal{S}_H|} 4K_{H_{max}}^2\left\|\bm\eta\right\|_{\mathbf{x}}^2
		=\frac{4}{|\mathcal{S}_H|}K_{H_{max}}^2\left\|\bm\eta\right\|_{\mathbf{x}}^2.
	\end{align}

	We then apply Lemma \ref{lem_Matrix_Bernstein_inequality}. Given $\mathbb{E}\left[\frac{1}{|\mathcal{S}_H|}\mathbf{Y}_i\right]=\frac{1}{|\mathcal{S}_H|}\mathbb{E}[\mathbf{Y}_i]=\mathbf{0}$ and according to the superma definition $\left\|\frac{1}{|\mathcal{S}_H|}\mathbf{Y}_i\right\|_\mathbf{x}=\frac{1}{|\mathcal{S}_H|}\|\mathbf{Y}_i\|_\mathbf{x}\le\frac{K_{H_{max}}\left\|\bm\eta\right\|_{\mathbf{x}}}{|\mathcal{S}_H|}$,  the following is obtained from the matrix Bernstein inequality:
	\begin{align}
		{\rm Pr}(\|\mathbf{Y}\|_\mathbf{x}\ge \epsilon)&= {\rm Pr}\left(\left\|\frac{1}{|\mathcal{S}_H|}\sum_{i=1}^{|\mathcal{S}_H|}\mathbf{H}^{(i)}_\mathbf{x}[\bm\eta]-\nabla^2\hat{f}(\mathbf{0}_x)[\bm\eta]\right\|_\mathbf{x}\ge \epsilon\right)\\
		&\le(d+r)\exp\left(\frac{-\epsilon^2/2}{\frac{4}{|\mathcal{S}_H|}K_{H_{max}}^2\left\|\bm\eta\right\|_{\mathbf{x}}^2+\frac{K_{H_{max}}\left\|\bm\eta\right\|_{\mathbf{x}}}{|\mathcal{S}_H|} \epsilon/3}\right)\\
		&\le(d+r)\exp\left(\frac{-|\mathcal{S}_H|\epsilon^2}{8(K_{H_{max}}^2\left\|\bm\eta\right\|_{\mathbf{x}}^2+K_{H_{max}}\left\|\bm\eta\right\|_{\mathbf{x}})}\right)=\delta,
	\end{align}
	of which the last equality indicates
	\begin{equation}
		\epsilon=2\sqrt{\frac{2\left(K_{H_{max}}^2\left\|\bm\eta\right\|_{\mathbf{x}}^2+K_{H_{max}}\left\|\bm\eta\right\|_{\mathbf{x}}\right)\ln\left(\frac{d+r}{\delta}\right)}{|\mathcal{S}_H|}}.
	\end{equation}
	In simple words, with probability at least $1-\delta$, we have $\|\mathbf{Y}\|_\mathbf{x} < \epsilon$. By letting 
	\begin{equation}
		\epsilon =2\sqrt{\frac{2\left(K_{H_{max}}^2\left\|\bm\eta\right\|_{\mathbf{x}}^2+K_{H_{max}}\left\|\bm\eta\right\|_{\mathbf{x}}\right)\ln\left(\frac{d+r}{\delta}\right)}{|\mathcal{S}_H|}} \le \delta_H\left\|\bm\eta\right\|_{\mathbf{x}},
	\end{equation}
	which results in the sample size bound in Eq. (\ref{eq_restriction_sH})
	\begin{align}
		|\mathcal{S}_H|\ge\frac{8\left(K_{H_{max}}^2+\frac{K_{H_{max}}}{\left\|\bm\eta\right\|_{\mathbf{x}}}\right)\ln\left(\frac{d+r}{\delta}\right)}{\delta_H^2}.
	\end{align}
	And we have $ \|\mathbf{Y}\|_\mathbf{x} \le \delta_H\left\|\bm\eta\right\|_{\mathbf{x}}$. Expanding $\mathbf{Y}$, we have the following satisfied with a  probability at least $1-\delta$:
	\begin{equation}
		\|\mathbf{Y}\|_\mathbf{x} = \left\|\frac{1}{|\mathcal{S}_H|}\sum_{i=1}^{|\mathcal{S}_H|}\mathbf{H}^{(i)}_{\mathbf{x}}[\bm\eta]-\nabla^2\hat{f}(\mathbf{0}_x)[\bm\eta]\right\|_\mathbf{x} = \left\|\mathbf{H}_k[\bm\eta]-\nabla^2\hat{f}(\mathbf{0}_x)[\bm\eta]\right\|_\mathbf{x}  \le \delta_H\left\|\bm\eta\right\|_{\mathbf{x}},
	\end{equation}
	which is Eq. (\ref{eq_approximate_grad_hess_bound2}) of Condition 1. 
\end{proof}

\section{Appendix: Theorem  \ref{theorem2} and Corollary  \ref{coro_1} }
\label{app_theorem2}

\subsection{Supporting Lemmas for Theorem  \ref{theorem2}}
\label{app_theorem2A}

\begin{lem} \textit{Suppose Condition 1 and Assumptions 1, 2 hold, then for the case of $\|\mathbf{G}_k\|\ge\epsilon_g$, we have}
	\begin{equation}
		\begin{split}
			\hat{f}_k(\bm{\eta}_k)-\hat{m}_k(\bm{\eta}_k)\le\left(\frac{L_H}{6}-\frac{\sigma_k}{3}\right)\|\bm{\eta}_k\|_{\bm{x}_k}^3+\delta_g\|\bm{\eta}_k\|_{\bm{x}_k}+\frac{1}{2}\delta_H\|\bm{\eta}_k\|_{\bm{x}_k}^2.
		\end{split}
		\label{eq_lemma_f_minus_m_cauchy}
	\end{equation}
	\label{lem_f_minus_m_cauchy}
\end{lem}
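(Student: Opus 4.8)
The plan is to work directly from the definitions in the regime $\|\mathbf{G}_k\|_{\mathbf{x}_k}\ge\epsilon_g$, where combining Eqs.~(\ref{eq_m}) and (\ref{eq_hx}) gives the explicit model
\begin{equation*}
	\hat{m}_k(\bm{\eta}_k)=f(\mathbf{x}_k)+\langle\mathbf{G}_k,\bm{\eta}_k\rangle_{\mathbf{x}_k}+\tfrac12\langle\bm{\eta}_k,\mathbf{H}_k[\bm{\eta}_k]\rangle_{\mathbf{x}_k}+\tfrac{\sigma_k}{3}\|\bm{\eta}_k\|_{\mathbf{x}_k}^3 .
\end{equation*}
First I would subtract this from the pullback $\hat{f}_k(\bm{\eta}_k)$ and, rather than expanding $\hat{f}_k$ around $\mathbf{x}_k$ by hand, invoke the first inequality of the Restricted Lipschitz Hessian assumption, Eq.~(\ref{eq_restricted_lipschitz_hessian_1}), to replace $\hat{f}_k(\bm{\eta}_k)$ by its exact first/second-order surrogate built from the \emph{true} gradient ${\rm grad}f(\mathbf{x}_k)$ and the pullback Hessian $\nabla^2\hat{f}_k(\mathbf{0}_{\mathbf{x}_k})$, up to a cubic remainder $\tfrac{L_H}{6}\|\bm{\eta}_k\|_{\mathbf{x}_k}^3$. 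Here the second-order retraction assumption (Assumption~\ref{assu_second_order_retr}) is what legitimizes treating $\nabla^2\hat{f}_k(\mathbf{0}_{\mathbf{x}_k})$ as the object that the subsampled $\mathbf{H}_k$ approximates.

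The next step is the crucial regrouping: after subtraction, the quadratic-surrogate terms cancel against the corresponding terms of $\hat{m}_k$ and leave exactly two mismatch contributions, the gradient defect $\langle{\rm grad}f(\mathbf{x}_k)-\mathbf{G}_k,\bm{\eta}_k\rangle_{\mathbf{x}_k}$ and the Hessian defect $\tfrac12\langle\bm{\eta}_k,(\nabla^2\hat{f}_k(\mathbf{0}_{\mathbf{x}_k})-\mathbf{H}_k)[\bm{\eta}_k]\rangle_{\mathbf{x}_k}$, together with the cubic terms $\tfrac{L_H}{6}\|\bm{\eta}_k\|_{\mathbf{x}_k}^3-\tfrac{\sigma_k}{3}\|\bm{\eta}_k\|_{\mathbf{x}_k}^3$. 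I would then bound each mismatch by Cauchy--Schwarz and feed in Condition~\ref{cond_approximate_grad_hess_bound}: Eq.~(\ref{eq_approximate_grad_hess_bound1}) controls the gradient defect by $\delta_g\|\bm{\eta}_k\|_{\mathbf{x}_k}$, and Eq.~(\ref{eq_approximate_grad_hess_bound2}) controls the Hessian defect by $\tfrac12\delta_H\|\bm{\eta}_k\|_{\mathbf{x}_k}^2$. Collecting the cubic coefficient as $\bigl(\tfrac{L_H}{6}-\tfrac{\sigma_k}{3}\bigr)$ yields precisely the claimed bound in Eq.~(\ref{eq_lemma_f_minus_m_cauchy}).

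I do not expect a genuine obstacle here, since the argument is a one-sided Taylor estimate followed by two applications of Cauchy--Schwarz; the result is essentially a Riemannian, subsampled restatement of a standard cubic-regularization descent bound. The only points demanding care are bookkeeping points: using the \emph{upper} half of the absolute-value inequality in Eq.~(\ref{eq_restricted_lipschitz_hessian_1}) so that the inequality runs in the correct direction, and ensuring that the Hessian defect is matched against $\nabla^2\hat{f}_k(\mathbf{0}_{\mathbf{x}_k})$ (not the Riemannian Hessian directly), which is exactly the form in which Condition~\ref{cond_approximate_grad_hess_bound} is stated. A final remark worth including is that the argument only uses the half of Assumption~\ref{ass_restricted_lipschitz_hessian} encoded in Eq.~(\ref{eq_restricted_lipschitz_hessian_1}); the gradient-type inequality Eq.~(\ref{eq_restricted_lipschitz_hessian_2}) is not needed for this particular lemma.
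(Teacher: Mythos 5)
Your proposal is correct and follows essentially the same route as the paper's proof: an add-and-subtract decomposition into the Taylor remainder (bounded via Eq.~(\ref{eq_restricted_lipschitz_hessian_1})), a gradient defect and a Hessian defect (each bounded by Cauchy--Schwarz together with Eqs.~(\ref{eq_approximate_grad_hess_bound1}) and (\ref{eq_approximate_grad_hess_bound2})), with the cubic terms collected into the coefficient $\left(\frac{L_H}{6}-\frac{\sigma_k}{3}\right)$. Your side remarks---that only the first half of Assumption~\ref{ass_restricted_lipschitz_hessian} is needed and that the defect must be measured against $\nabla^2\hat{f}_k(\mathbf{0}_{\mathbf{x}_k})$ rather than the Riemannian Hessian directly---are both accurate and consistent with the paper's argument.
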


\begin{proof}
	We start from the left-hand side of Eq. (\ref{eq_lemma_f_minus_m_cauchy}), and this leads to
	\begin{align}
		\nonumber
		&\hat{f}_k(\bm{\eta}_k)-\hat{m}_k(\bm{\eta}_k) \\
		\nonumber
		=\;&\hat{f}_k(\bm{\eta}_k)-f(\bm{x}_k)-\left\langle\mathbf{G}_k,\bm{\eta}_k\right\rangle_{\bm{x}_k}-\frac{1}{2}\left\langle\bm{\eta}_k,\mathbf{H}_k[\bm{\eta}_k]\right\rangle_{\bm{x}_k}-\frac{\sigma_k}{3}\|\bm{\eta}_k\|_{\bm{x}_k}^3 \\
		\nonumber
		=\;&\hat{f}_k(\bm{\eta}_k)-f(\bm{x}_k)-\langle\text{grad}f(\bm{x}_k),\bm{\eta}_k\rangle_{\bm{x}_k}-\frac{1}{2}\left\langle\bm{\eta}_k,\nabla^2\hat{f}_k(\bm{0}_{\bm{x}_k})[\bm{\eta}_k]\right\rangle_{\bm{x}_k} \\
		\nonumber
		&+\langle\text{grad}f(\bm{x}_k),\bm{\eta}_k\rangle_{\bm{x}_k}-\langle\mathbf{G}_k,\bm{\eta}_k\rangle_{\bm{x}_k}+\frac{1}{2}\left\langle\bm{\eta}_k,\nabla^2\hat{f}_k(\bm{0}_{\bm{x}_k})[\bm{\eta}_k]\right\rangle_{\bm{x}_k} \\
		\nonumber
		&-\frac{1}{2}\left\langle\bm{\eta}_k,\mathbf{H}_k[\bm{\eta}_k]\right\rangle_{\bm{x}_k}-\frac{\sigma_k}{3}\|\bm{\eta}_k\|_{\bm{x}_k}^3 \\
		\nonumber
		\le\;&\left|\hat{f}_k(\bm{\eta}_k)-f(\bm{x}_k)-\langle\text{grad}f(\bm{x}_k),\bm{\eta}_k\rangle_{\bm{x}_k}-\frac{1}{2}\left\langle\bm{\eta}_k,\nabla^2\hat{f}_k(\bm{0}_{\bm{x}_k})[\bm{\eta}_k]\right\rangle_{\bm{x}_k}\right| \\
		\nonumber
		&+\left|\langle\text{grad}f(\bm{x}_k),\bm{\eta}_k\rangle_{\bm{x}_k}-\langle\mathbf{G}_k,\bm{\eta}_k\rangle_{\bm{x}_k}\right| \\
		\nonumber
		&+\left|\frac{1}{2}\left\langle\bm{\eta}_k,\nabla^2\hat{f}_k(\bm{0}_{\bm{x}_k})[\bm{\eta}_k]\right\rangle_{\bm{x}_k}-\frac{1}{2}\langle\bm{\eta}_k,\mathbf{H}_k[\bm{\eta}_k]\rangle_{\bm{x}_k}\right|-\frac{\sigma_k}{3}\|\bm{\eta}_k\|_{\bm{x}_k}^3 \\
		\nonumber
		\le\;&\frac{1}{6}L_H\|\bm{\eta}_k\|_{\bm{x}_k}^3+\delta_g\|\bm{\eta}_k\|_{\bm{x}_k}+\frac{1}{2}\delta_H\|\bm{\eta}_k\|_{\bm{x}_k}^2-\frac{\sigma_k}{3}\|\bm{\eta}_k\|_{\bm{x}_k}^3 \\
		=\;&\left(\frac{L_H}{6}-\frac{\sigma_k}{3}\right)\|\bm{\eta}_k\|_{\bm{x}_k}^3+\delta_g\|\bm{\eta}_k\|_{\bm{x}_k}+\frac{1}{2}\delta_H\|\bm{\eta}_k\|_{\bm{x}_k}^2, 
		\label{eq_proof_lemma_f_minus_m_cauchy}
	\end{align}
	where the first inequality uses the Cauchy-Schwarz inequality and the second one uses 
	Eqs. (\ref{eq_restricted_lipschitz_hessian_1}), (\ref{eq_approximate_grad_hess_bound1}) and (\ref{eq_approximate_grad_hess_bound2}). It can be seen that the term $\left\langle\mathbf{G}_k,\bm{\eta}_k\right\rangle_{\bm{x}_k}$ can not be neglected because of the condition $\|\mathbf{G}_k\|\ge\epsilon_g$ based on Eq. (\ref{eq_m}). This then results in the term $\delta_g\|\bm{\eta}_k\|_{\bm{x}_k}$ in Eq. (\ref{eq_proof_lemma_f_minus_m_cauchy}).
\end{proof}

\begin{lem} \textit{Suppose Condition 1 and Assumptions 1, 2 hold, then for the case of $\|\mathbf{G}_k\|<\epsilon_g$ and $\lambda_{min}(\mathbf{H}_k)<-\epsilon_H$ , we have}
	\begin{equation}
		\begin{split}
			\hat{f}_k(\bm{\eta}_k)-\hat{m}_k(\bm{\eta}_k)\le\left(\frac{L_H}{6}-\frac{\sigma_k}{3}\right)\|\bm{\eta}_k\|_{\bm{x}_k}^3+\frac{1}{2}\delta_H\|\bm{\eta}_k\|_{\bm{x}_k}^2.
		\end{split}
		\label{eq_lemma_f_minus_m_eigen}
	\end{equation}
	\label{lem_f_minus_m_eigen}
\end{lem}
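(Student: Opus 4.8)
The plan is to mirror the proof of Lemma \ref{lem_f_minus_m_cauchy} almost verbatim, the only structural change being that, because now $\|\mathbf{G}_k\|_{\mathbf{x}_k}<\epsilon_g$, the model $\hat{m}_k$ is the gradient-free branch of Eq. (\ref{eq_m}), i.e. $\hat{m}_k(\bm{\eta}_k)=f(\mathbf{x}_k)+\tfrac12\langle\bm{\eta}_k,\mathbf{H}_k[\bm{\eta}_k]\rangle_{\mathbf{x}_k}+\tfrac{\sigma_k}{3}\|\bm{\eta}_k\|_{\mathbf{x}_k}^3$, consistent with the $\mathbf{G}_k=\mathbf{0}_{\mathbf{x}_k}$ assignment made by Algorithm \ref{alg_inexact_rtr_arc} in this regime. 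First I would write out $\hat{f}_k(\bm{\eta}_k)-\hat{m}_k(\bm{\eta}_k)$ and, exactly as in Eq. (\ref{eq_proof_lemma_f_minus_m_cauchy}), add and subtract the true first-order and pullback-Hessian terms $\langle{\rm grad}f(\mathbf{x}_k),\bm{\eta}_k\rangle_{\mathbf{x}_k}$ and $\tfrac12\langle\bm{\eta}_k,\nabla^2\hat{f}_k(\mathbf{0}_{\mathbf{x}_k})[\bm{\eta}_k]\rangle_{\mathbf{x}_k}$, so as to reconstruct the second-order Taylor remainder of the pullback.

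Next I would apply the triangle inequality to split the expression into three pieces together with the cubic penalty $-\tfrac{\sigma_k}{3}\|\bm{\eta}_k\|_{\mathbf{x}_k}^3$. The Taylor-remainder piece is bounded by $\tfrac{L_H}{6}\|\bm{\eta}_k\|_{\mathbf{x}_k}^3$ using the restricted Lipschitz Hessian inequality (Eq. (\ref{eq_restricted_lipschitz_hessian_1})), and the Hessian-mismatch piece $\tfrac12\bigl|\langle\bm{\eta}_k,(\nabla^2\hat{f}_k(\mathbf{0}_{\mathbf{x}_k})-\mathbf{H}_k)[\bm{\eta}_k]\rangle_{\mathbf{x}_k}\bigr|$ is bounded by $\tfrac12\delta_H\|\bm{\eta}_k\|_{\mathbf{x}_k}^2$ via Cauchy--Schwarz and the Hessian error bound in Eq. (\ref{eq_approximate_grad_hess_bound2}). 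Merging the two cubic contributions yields the $\bigl(\tfrac{L_H}{6}-\tfrac{\sigma_k}{3}\bigr)\|\bm{\eta}_k\|_{\mathbf{x}_k}^3$ term of the claim, while the $\tfrac12\delta_H\|\bm{\eta}_k\|_{\mathbf{x}_k}^2$ term is already in place.

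The delicate point, and the only genuine departure from Lemma \ref{lem_f_minus_m_cauchy}, is the surviving linear piece $\langle{\rm grad}f(\mathbf{x}_k),\bm{\eta}_k\rangle_{\mathbf{x}_k}$: in the Cauchy case this paired with the model's $-\langle\mathbf{G}_k,\bm{\eta}_k\rangle_{\mathbf{x}_k}$ to produce the $\delta_g\|\bm{\eta}_k\|_{\mathbf{x}_k}$ term, but here $\hat{m}_k$ carries no linear term, so nothing cancels it directly. This is where I expect the main obstacle. I would dispose of it by exploiting that we are in the negative-curvature branch, where $\lambda_{min}(\mathbf{H}_k)<-\epsilon_H$ and $\mathbf{G}_k=\mathbf{0}_{\mathbf{x}_k}$: the returned step lies along a negative-curvature (eigenstep) direction whose sign is fixed so that it is a descent direction for the pullback, giving $\langle{\rm grad}f(\mathbf{x}_k),\bm{\eta}_k\rangle_{\mathbf{x}_k}\le 0$; equivalently, since $\mathbf{G}_k=\mathbf{0}_{\mathbf{x}_k}$ one may write $\langle{\rm grad}f(\mathbf{x}_k),\bm{\eta}_k\rangle_{\mathbf{x}_k}=\langle{\rm grad}f(\mathbf{x}_k)-\mathbf{G}_k,\bm{\eta}_k\rangle_{\mathbf{x}_k}$ and control it through Condition \ref{cond_approximate_grad_hess_bound} (Eq. (\ref{eq_approximate_grad_hess_bound1})). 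Discarding the non-positive linear term then delivers inequality (\ref{eq_lemma_f_minus_m_eigen}); the step I would verify most carefully is that the descent-sign convention used in the analysis is indeed consistent with the direction actually produced by the subproblem solver when $\mathbf{G}_k$ has been zeroed.
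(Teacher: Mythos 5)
Your primary route coincides with the paper's own proof: the same add-and-subtract decomposition of $\hat{f}_k(\bm{\eta}_k)-\hat{m}_k(\bm{\eta}_k)$ against the gradient-free model of Eq.~(\ref{eq_m}), the same triangle-inequality split, the Taylor remainder bounded by $\frac{L_H}{6}\|\bm{\eta}_k\|_{\mathbf{x}_k}^3$ via Eq.~(\ref{eq_restricted_lipschitz_hessian_1}), the Hessian mismatch bounded by $\frac{1}{2}\delta_H\|\bm{\eta}_k\|_{\mathbf{x}_k}^2$ via Eq.~(\ref{eq_approximate_grad_hess_bound2}), and the surviving linear term $\langle{\rm grad}f(\mathbf{x}_k),\bm{\eta}_k\rangle_{\mathbf{x}_k}$ discarded by a sign argument. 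The paper phrases this last step as a without-loss-of-generality choice: for any $\bm{\eta}_k$, either $\langle\bm{\eta}_k,{\rm grad}f(\mathbf{x}_k)\rangle_{\mathbf{x}_k}\le 0$ or $\langle-\bm{\eta}_k,{\rm grad}f(\mathbf{x}_k)\rangle_{\mathbf{x}_k}\le 0$, and since the gradient-free model and the eigenstep condition are both invariant under $\bm{\eta}_k\mapsto-\bm{\eta}_k$, the analysis may take the descent sign; your formulation (the eigenstep's sign is chosen to be a descent direction) is the same device, and your closing caveat about verifying that sign convention is precisely the point the paper passes over with ``WLOG.''

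The one genuine error is your claim that this is ``equivalent'' to writing $\langle{\rm grad}f(\mathbf{x}_k)-\mathbf{G}_k,\bm{\eta}_k\rangle_{\mathbf{x}_k}$ with $\mathbf{G}_k=\mathbf{0}_{\mathbf{x}_k}$ and controlling it through Eq.~(\ref{eq_approximate_grad_hess_bound1}). It is not equivalent, and that route cannot close the proof. Condition~\ref{cond_approximate_grad_hess_bound} constrains the subsampled gradient \emph{before} the algorithm zeroes it; the zero vector does not inherit that bound. What the branch hypotheses actually give is $\|{\rm grad}f(\mathbf{x}_k)\|_{\mathbf{x}_k}\le\|\mathbf{G}_k\|_{\mathbf{x}_k}+\delta_g<\epsilon_g+\delta_g$, so Cauchy--Schwarz leaves a residual term $(\epsilon_g+\delta_g)\|\bm{\eta}_k\|_{\mathbf{x}_k}$ (or $\delta_g\|\bm{\eta}_k\|_{\mathbf{x}_k}$ under the most charitable misreading), whereas the right-hand side of Eq.~(\ref{eq_lemma_f_minus_m_eigen}) contains no linear term at all. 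That absence is load-bearing downstream: the proof of Lemma~\ref{lem_success_step_than_eigen} divides the bound of this lemma by the eigenstep decrease $\nu|\lambda_{min}(\mathbf{H}_k)|\|\bm{\eta}_k^E\|_{\mathbf{x}_k}^2/6$ and needs every surviving term to be quadratic or cubic in $\|\bm{\eta}_k\|_{\mathbf{x}_k}$, so a leftover linear term would wreck the $1-\rho_k\le 1-\tau$ estimate. Only the sign argument works here; keep route (a) and delete the ``equivalently.''
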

\begin{proof}
	For each $\bm{\eta}_k$, at least one of the two inequalities is true:
	\begin{align}
		\langle\bm{\eta}_k,\text{grad}f(\bm{x}_k)\rangle_{\bm{x}_k}\le& 0, \\
		\langle-\bm{\eta}_k,\text{grad}f(\bm{x}_k)\rangle_{\bm{x}_k}\le & 0.
	\end{align}
	Without loss of the generality, we assume $\langle\bm{\eta}_k,\text{grad}f(\bm{x}_k)\rangle_{\bm{x}_k}\le 0$ which is also an assumption adopted by \cite{yao2021inexact}, and it has
	\begin{align}
		\nonumber
		&\hat{f}_k(\bm{\eta}_k)-\hat{m}_k(\bm{\eta}_k) \\
		\nonumber
		=\;&\hat{f}_k(\bm{\eta}_k)-f(\bm{x}_k)-\frac{1}{2}\langle\bm{\eta}_k,\mathbf{H}_k[\bm{\eta}_k]\rangle_{\bm{x}_k}-\frac{\sigma_k}{3}\|\bm{\eta}_k\|_{\bm{x}_k}^3 \\
		\nonumber
		=\;&\hat{f}_k(\bm{\eta}_k)-f(\bm{x}_k)-\langle\text{grad}f(\bm{x}_k),\bm{\eta}_k\rangle_{\bm{x}_k}-\frac{1}{2}\left\langle\bm{\eta}_k,\nabla^2\hat{f}_k(\bm{0}_{\bm{x}_k})[\bm{\eta}_k]\right\rangle_{\bm{x}_k} \\
		\nonumber
		&+\langle\text{grad}f(\bm{x}_k),\bm{\eta}_k\rangle_{\bm{x}_k}+\frac{1}{2}\left\langle\bm{\eta}_k,\nabla^2\hat{f}_k(\bm{0}_{\bm{x}_k})[\bm{\eta}_k]\right\rangle_{\bm{x}_k} \\
		\nonumber
		&-\frac{1}{2}\langle\bm{\eta}_k,\mathbf{H}_k[\bm{\eta}_k]\rangle_{\bm{x}_k}-\frac{\sigma_k}{3}\|\bm{\eta}_k\|_{\bm{x}_k}^3 \\
		\nonumber
		\le\;&\left|\hat{f}_k(\bm{\eta}_k)-f(\bm{x}_k)-\langle\text{grad}f(\bm{x}_k),\bm{\eta}_k\rangle_{\bm{x}_k}-\frac{1}{2}\left\langle\bm{\eta}_k,\nabla^2\hat{f}_k(\bm{0}_{\bm{x}_k})[\bm{\eta}_k]\right\rangle_{\bm{x}_k}\right| \\
		\nonumber
		&+\left|\frac{1}{2}\left\langle\bm{\eta}_k,\nabla^2\hat{f}_k(\bm{0}_{\bm{x}_k})[\bm{\eta}_k]\right\rangle_{\bm{x}_k}-\frac{1}{2}\langle\bm{\eta}_k,\mathbf{H}_k[\bm{\eta}_k]\rangle_{\bm{x}_k}\right|-\frac{\sigma_k}{3}\|\bm{\eta}_k\|_{\bm{x}_k}^3 \\
		\nonumber
		\le\;&\frac{1}{6}L_H\|\bm{\eta}_k\|_{\bm{x}_k}^3+\frac{1}{2}\delta_H\|\bm{\eta}_k\|_{\bm{x}_k}^2-\frac{\sigma_k}{3}\|\bm{\eta}_k\|_{\bm{x}_k}^3 \\
		=\;&\left(\frac{L_H}{6}-\frac{\sigma_k}{3}\right)\|\bm{\eta}_k\|_{\bm{x}_k}^3+\frac{1}{2}\delta_H\|\bm{\eta}_k\|_{\bm{x}_k}^2.
		\label{proof_lemma_f_minus_m_eigen}
	\end{align}
	The condition $\|\mathbf{G}_k\|<\epsilon_g$ and $\lambda_{min}(\mathbf{H}_k)<-\epsilon_H$ in this case means that the term $\left\langle\mathbf{G}_k,\bm{\eta}_k\right\rangle_{\bm{x}_k}$ can be neglected based on Eq. (\ref{eq_m}) but the optimization process is not yet finished. The effect is that $\delta_g\|\bm{\eta}_k\|_{\bm{x}_k}$ no more exists in Eq. (\ref{proof_lemma_f_minus_m_eigen}) unlike Eq. (\ref{eq_proof_lemma_f_minus_m_cauchy}).
\end{proof}

\begin{lem} Suppose Condition 1 and Assumptions 1, 2, 3, 4, 5 hold, then when $\|\mathbf{G}_k\|\ge\epsilon_g$ and $\sigma_k\ge L_H$, we have
	\begin{equation}
		\begin{split}
			\left(\frac{L_H}{6}-\frac{\sigma_k}{3}\right)\|\bm{\eta}_k\|_{\bm{x}_k}^3&+\delta_g\|\bm{\eta}_k\|_{\bm{x}_k}+\frac{\delta_H}{2}\|\bm{\eta}_k\|_{\bm{x}_k}^2 \le \delta_g\left\|\bm{\eta}_k^C\right\|_{\bm{x}_k}+\frac{\delta_H}{2}\left\|\bm{\eta}_k^C\right\|_{\bm{x}_k}^2.
		\end{split}
		\label{eq_lem_f_minus_m_cauchy_bound}
	\end{equation}
	\label{lem_f_minus_m_cauchy_bound}
\end{lem}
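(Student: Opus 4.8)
The plan is to read both sides as functions of the step norms and collapse the inequality to a single scalar comparison. Writing $t=\|\bm\eta_k\|_{\mathbf{x}_k}$, set
\[
\psi(t):=\left(\tfrac{L_H}{6}-\tfrac{\sigma_k}{3}\right)t^3+\delta_g t+\tfrac{\delta_H}{2}t^2,\qquad \xi(t):=\delta_g t+\tfrac{\delta_H}{2}t^2,
\]
so the claim is exactly $\psi(\|\bm\eta_k\|_{\mathbf{x}_k})\le\xi(\|\bm\eta_k^C\|_{\mathbf{x}_k})$. Since $\sigma_k\ge L_H$ forces $\tfrac{L_H}{6}-\tfrac{\sigma_k}{3}\le-\tfrac{L_H}{6}<0$, I would write the leading coefficient as $-\tilde A$ with $\tilde A=\tfrac{\sigma_k}{3}-\tfrac{L_H}{6}>0$. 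Then $\psi=\xi-\tilde A t^3\le\xi$ pointwise on $t\ge0$, the map $\xi$ is strictly increasing (as $\delta_g,\delta_H>0$ under Condition \ref{cond_approximate_grad_hess_bound}), and $\psi'(t)=-3\tilde A t^2+\delta_H t+\delta_g$ is a downward parabola with $\psi'(0)=\delta_g>0$; hence $\psi$ rises then falls and attains its global maximum over $t\ge0$ at the unique positive root $t^\ast=\tfrac{\delta_H+\sqrt{\delta_H^2+12\tilde A\delta_g}}{6\tilde A}$.

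The key reduction is that the bound will hold for the step $\bm\eta_k$ — in fact for any tangent vector — as soon as $\|\bm\eta_k^C\|_{\mathbf{x}_k}\ge t^\ast$, because
\[
\psi(\|\bm\eta_k\|_{\mathbf{x}_k})\le\psi(t^\ast)=\xi(t^\ast)-\tilde A (t^\ast)^3\le\xi(t^\ast)\le\xi(\|\bm\eta_k^C\|_{\mathbf{x}_k}),
\]
using maximality of $t^\ast$, $\tilde A>0$, and monotonicity of $\xi$. Thus no property of $\bm\eta_k$ is needed beyond $\|\bm\eta_k\|_{\mathbf{x}_k}\ge0$, and the entire content of the lemma is the lower bound $\|\bm\eta_k^C\|_{\mathbf{x}_k}\ge t^\ast$. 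Since $\|\bm\eta_k^C\|_{\mathbf{x}_k}\ge t^\ast$ is equivalent to $\psi'(\|\bm\eta_k^C\|_{\mathbf{x}_k})\le0$, and $3\tilde A=\sigma_k-\tfrac{L_H}{2}\ge\tfrac{\sigma_k}{2}$ (again by $\sigma_k\ge L_H$), it suffices to verify the cleaner sufficient condition $\tfrac{\sigma_k}{2}\|\bm\eta_k^C\|_{\mathbf{x}_k}^2\ge\delta_H\|\bm\eta_k^C\|_{\mathbf{x}_k}+\delta_g$.

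To control $\|\bm\eta_k^C\|_{\mathbf{x}_k}$ I would use the closed form of the cubic Cauchy step from Eq. (\ref{eq_cauchy_general_definition}). With $\gamma=\langle\mathbf G_k,\mathbf H_k[\mathbf G_k]\rangle_{\mathbf{x}_k}/\|\mathbf G_k\|_{\mathbf{x}_k}^2$, line minimization of $\hat m_k$ along $\mathbf G_k$ yields $\|\bm\eta_k^C\|_{\mathbf{x}_k}=\tfrac{-\gamma+\sqrt{\gamma^2+4\sigma_k\|\mathbf G_k\|_{\mathbf{x}_k}}}{2\sigma_k}$, which is decreasing in $\gamma$ and increasing in $\|\mathbf G_k\|_{\mathbf{x}_k}$. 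Since $|\gamma|\le\|\mathbf H_k\|_{\mathbf{x}_k}\le K_H$ by Assumption \ref{assu_hessian_norm_bound} and $\|\mathbf G_k\|_{\mathbf{x}_k}\ge\epsilon_g$ in the regime of this lemma, the worst case gives $\|\bm\eta_k^C\|_{\mathbf{x}_k}\ge\tfrac{\sqrt{K_H^2+4\sigma_k\epsilon_g}-K_H}{2\sigma_k}$. Plugging this lower bound into the quadratic condition above reduces the whole lemma to the single scalar inequality $\sqrt{K_H^2+4\sigma_k\epsilon_g}-K_H\ge 2\delta_H+2\sqrt{\delta_H^2+2\sigma_k\delta_g}$.

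The main obstacle is confirming this last square-root inequality using the precisely tuned restrictions on $\delta_g,\delta_H$ in Assumption \ref{assu_restriction_on_eg_eh}. I expect to bound its right-hand side via $\delta_H\le\tfrac{1-\tau}{12}\big(\sqrt{K_H^2+4L_H\epsilon_g}-K_H\big)$ and $\delta_g\le\tfrac{(1-\tau)(\sqrt{K_H^2+4L_H\epsilon_g}-K_H)^2}{48L_H}$, combined with $\sigma_k\ge L_H$ and the rationalization $\sqrt{K_H^2+4L_H\epsilon_g}-K_H=\tfrac{4L_H\epsilon_g}{\sqrt{K_H^2+4L_H\epsilon_g}+K_H}$. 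The delicate point is that both sides scale like $\sigma_k^{-1/2}$ as $\sigma_k$ grows, so any estimate that discards the $\sqrt{\sigma_k\epsilon_g}$ growth on the left (e.g. the crude $\|\bm\eta_k^C\|_{\mathbf{x}_k}\gtrsim\sigma_k^{-1}$) is too lossy; one must keep that growth and carefully absorb all constants into the $(1-\tau)$ slack. The algebra is finite and elementary, but this matching of scales across the full range $\sigma_k\in[L_H,\infty)$ is where the care is required.
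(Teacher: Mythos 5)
Your reduction is correct, and it takes a genuinely different route from the paper's proof. The paper argues by a case split on whether $\|\bm{\eta}_k\|_{\mathbf{x}_k}\le\|\bm{\eta}_k^C\|_{\mathbf{x}_k}$: in the first case the claim is immediate after dropping the nonpositive cubic term and using monotonicity of $t\mapsto\delta_g t+\tfrac{\delta_H}{2}t^2$; in the second case it splits $-\tfrac{\sigma_k}{6}\|\bm{\eta}_k\|_{\mathbf{x}_k}^3$ into two halves, trades powers via $\|\bm{\eta}_k\|_{\mathbf{x}_k}\ge\|\bm{\eta}_k^C\|_{\mathbf{x}_k}$, and then uses the cited Cauchy lower bound (Eq. (\ref{eq_cauchy_point_bound}), from Lemma 6 of Xu et al.) together with Eqs. (\ref{eq_restrictions_delta_g})--(\ref{eq_restrictions_delta_h}) to show that both coefficients $-\tfrac{\sigma_k}{12}\|\bm{\eta}_k^C\|_{\mathbf{x}_k}+\tfrac{\delta_H}{2}$ and $-\tfrac{\sigma_k}{12}\|\bm{\eta}_k^C\|_{\mathbf{x}_k}^2+\delta_g$ are nonpositive, so that the entire left-hand side is $\le 0$. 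You instead maximize $\psi$ globally over $t\ge 0$ and reduce the lemma to the single threshold condition $\|\bm{\eta}_k^C\|_{\mathbf{x}_k}\ge t^\ast$; this avoids the case analysis, makes transparent that only norms enter, and isolates the lemma's true content as a lower bound on the Cauchy-step norm. Your closed-form treatment of the Cauchy step also recovers Eq. (\ref{eq_cauchy_point_bound}) rather than citing it, which is a nice bonus.

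The one step you leave as a plan does close, and the missing ingredient is exactly the monotonicity fact the paper invokes, namely that $h(x)=(\sqrt{K_H^2+x}-K_H)^2/x$ is increasing. Writing $A(\sigma):=\sqrt{K_H^2+4\sigma\epsilon_g}-K_H$, this fact gives $A(\sigma)^2/\sigma=4\epsilon_g\, h(4\sigma\epsilon_g)$ nondecreasing, hence $A(\sigma_k)\ge\sqrt{\sigma_k/L_H}\,A(L_H)$ for $\sigma_k\ge L_H$, which is precisely the "keep the $\sqrt{\sigma_k\epsilon_g}$ growth" estimate you call for. For the right-hand side, bound $2\delta_H+2\sqrt{\delta_H^2+2\sigma_k\delta_g}\le 4\delta_H+2\sqrt{2\sigma_k\delta_g}$, and apply Eqs. (\ref{eq_restrictions_delta_g})--(\ref{eq_restrictions_delta_h}) with $1-\tau\le 1$ to get $4\delta_H\le\tfrac{1}{3}A(L_H)$ and $2\sqrt{2\sigma_k\delta_g}\le A(L_H)\sqrt{\sigma_k/(6L_H)}$. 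Your scalar inequality then reduces to $u\ge\tfrac{1}{3}+u/\sqrt{6}$ for $u=\sqrt{\sigma_k/L_H}\ge 1$, which holds since $1-1/\sqrt{6}>\tfrac{1}{3}$; the $(1-\tau)$ slack is not even needed. In summary, both proofs rest on the same three ingredients ($\sigma_k\ge L_H$, the Cauchy-step lower bound, and $h$-monotonicity combined with the restrictions on $\delta_g,\delta_H$); the paper's case split buys a shorter final verification (nonpositive coefficients), while your organization buys a unified, case-free argument with an explicit characterization of what the lemma demands of $\bm{\eta}_k^C$.
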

\begin{proof}
	According to Lemma 6 of  \cite{xu2020newton}, it has
	\begin{equation}
		\begin{split}
			\left\|\bm{\eta}_k^C\right\|_{\bm{x}_k}\ge\frac{1}{2\sigma_k}\left(\sqrt{K_H^2+4\sigma_k\|\mathbf{G}_k\|_{\bm{x}_k}}-K_H\right).
		\end{split}
		\label{eq_cauchy_point_bound}
	\end{equation}
	Then we consider two cases for $\left\|\bm{\eta}_k^C\right\|_{\bm{x}_k}$.
	
	(i) If $\|\bm{\eta}_k\|_{\bm{x}_k}\le\left\|\bm{\eta}_k^C\right\|_{\bm{x}_k}$, since $\sigma_k\ge L_H$, it follows that
	\begin{equation}
		\begin{split}
			\left(\frac{L_H}{6}-\frac{\sigma_k}{3}\right)\|\bm{\eta}_k\|_{\bm{x}_k}^3+\delta_g\|\bm{\eta}_k\|_{\bm{x}_k}+\frac{\delta_H}{2}\|\bm{\eta}_k\|_{\bm{x}_k}^2 \le \delta_g\left\|\bm{\eta}_k^C\right\|_{\bm{x}_k}+\frac{\delta_H}{2}\left\|\bm{\eta}_k^C\right\|_{\bm{x}_k}^2.
		\end{split}
		\label{eq_lem_f_minus_m_cauchy_bound_1}
	\end{equation}
	
	(ii) If $\|\bm{\eta}_k\|_{\bm{x}_k}\ge\left\|\bm{\eta}_k^C\right\|_{\bm{x}_k}$, since $\sigma_k\ge L_H$, we have
	\begin{align}
		\nonumber
		&\; \left(\frac{L_H}{6}-\frac{\sigma_k}{3}\right)\|\bm{\eta}_k\|_{\bm{x}_k}^3+\delta_g\|\bm{\eta}_k\|_{\bm{x}_k}+\frac{\delta_H}{2}\|\bm{\eta}_k\|_{\bm{x}_k}^2 \\
		\nonumber
		\le&\; -\frac{\sigma_k}{6}\|\bm{\eta}_k\|_{\bm{x}_k}^3+\delta_g\|\bm{\eta}_k\|_{\bm{x}_k}+\frac{\delta_H}{2}\|\bm{\eta}_k\|_{\bm{x}_k}^2 \\
		\nonumber
		\le&\; \left(-\frac{\sigma_k}{12}\left\|\bm{\eta}_k^C\right\|_{\bm{x}_k}+\frac{\delta_H}{2}\right)\|\bm{\eta}_k\|_{\bm{x}_k}^2+\left(-\frac{\sigma_k}{12}\left\|\bm{\eta}_k^C\right\|_{\bm{x}_k}^2+\delta_g\right)\|\bm{\eta}_k\|_{\bm{x}_k} \\
		\nonumber
		\le&\; \left(-\frac{\sqrt{K_H^2+4\sigma_k\|\mathbf{G}_k\|_{\bm{x}_k}}-K_H}{24}+\frac{\delta_H}{2}\right)\|\bm{\eta}_k\|_{\bm{x}_k}^2+\\
		\nonumber
		&\;\left(-\frac{\frac{1}{12}\left(\sqrt{K_H^2+4\sigma_k\|\mathbf{G}_k\|_{\bm{x}_k}}-K_H\right)^2\|\mathbf{G}_k\|_{\bm{x}_k}}{4\sigma_k\|\mathbf{G}_k\|_{\bm{x}_k}}+\delta_g\right)\|\bm{\eta}_k\|_{\bm{x}_k} \\
		\nonumber
		\le&\;\left(-\frac{\sqrt{K_H^2+4L_H\epsilon_g}-K_H}{24}+\frac{\delta_H}{2}\right)\|\bm{\eta}_k\|_{\bm{x}_k}^2+ \\
		\nonumber
		&\;\left(-\frac{\frac{1}{12}\left(\sqrt{K_H^2+4L_H\epsilon_g}-K_H\right)^2\epsilon_g}{4L_H\epsilon_g}+\delta_g\right)\|\bm{\eta}_k\|_{\bm{x}_k} \\
		\le&\;0+0\le\delta_g\left\|\bm{\eta}_k^C\right\|_{\bm{x}_k}+\frac{\delta_H}{2}\left\|\bm{\eta}_k^C\right\|_{\bm{x}_k}^2.
		\label{eq_lem_f_minus_m_cauchy_bound_2}
	\end{align}
	The third inequality follows from Eq. (\ref{eq_cauchy_point_bound}). The fourth inequality holds since the function $h(x)=\frac{\left(\sqrt{\alpha^2+x}-\alpha\right)^2}{x}$ is an increasing function of $x$ for $\alpha\ge0$. The penultimate inequality also holds given Eqs. (\ref{eq_restrictions_delta_g}), (\ref{eq_restrictions_delta_h}) and $\frac{1-\tau}{12} \le \frac{1}{12}$ since $\tau\in(0,1]$.
\end{proof}

\begin{lem} Suppose Condition 1 and Assumptions 1, 2, 4, 5 hold, then when $\|\mathbf{G}_k\|\le\epsilon_g$ and $\lambda_{min}(\mathbf{H}_k)\le-\epsilon_H$, if $\sigma_k\ge L_H$, we have
	\begin{equation}
		\begin{split}
			\left(\frac{L_H}{6}-\frac{\sigma_k}{3}\right)\|\bm{\eta}_k\|_{\bm{x}_k}^3&+\frac{\delta_H}{2}\|\bm{\eta}_k\|_{\bm{x}_k}^2 \le\frac{\delta_H}{2}\left\|\bm{\eta}_k^E\right\|_{\bm{x}_k}^2.
		\end{split}
		\label{eq_lem_f_minus_m_eigen_bound}
	\end{equation}
	\label{lem_f_minus_m_eigen_bound}
\end{lem}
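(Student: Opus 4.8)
The plan is to mirror the two-case structure used in the proof of Lemma \ref{lem_f_minus_m_cauchy_bound}, but now comparing the trial step norm $\|\bm{\eta}_k\|_{\bm{x}_k}$ against the eigenstep norm $\|\bm{\eta}_k^E\|_{\bm{x}_k}$ rather than the Cauchy step norm. The essential ingredient I would import is the eigenstep counterpart of Eq. (\ref{eq_cauchy_point_bound}), namely the lower bound from Lemma 7 of \cite{xu2020newton},
\begin{equation}
	\left\|\bm{\eta}_k^E\right\|_{\bm{x}_k}\ge\frac{\nu\left|\lambda_{min}(\mathbf{H}_k)\right|}{\sigma_k},
\end{equation}
which follows by minimizing $\hat{m}_k$ along the negative-curvature direction (here the gradient term is absent since $\|\mathbf{G}_k\|\le\epsilon_g$ forces $\delta=0$ in Eq. (\ref{eq_m})). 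I would then split on whether $\|\bm{\eta}_k\|_{\bm{x}_k}$ is at most or at least $\|\bm{\eta}_k^E\|_{\bm{x}_k}$.

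In the first case, $\|\bm{\eta}_k\|_{\bm{x}_k}\le\|\bm{\eta}_k^E\|_{\bm{x}_k}$, the argument is immediate: since $\sigma_k\ge L_H$ the cubic coefficient satisfies $\frac{L_H}{6}-\frac{\sigma_k}{3}\le-\frac{L_H}{6}<0$, so the cubic term on the left-hand side is non-positive and can be discarded, leaving $\frac{\delta_H}{2}\|\bm{\eta}_k\|_{\bm{x}_k}^2\le\frac{\delta_H}{2}\|\bm{\eta}_k^E\|_{\bm{x}_k}^2$ directly from the case hypothesis.

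The second case, $\|\bm{\eta}_k\|_{\bm{x}_k}\ge\|\bm{\eta}_k^E\|_{\bm{x}_k}$, is where the work sits. Using $\sigma_k\ge L_H$ to replace $\frac{L_H}{6}-\frac{\sigma_k}{3}$ by $-\frac{\sigma_k}{6}$ and factoring out $\|\bm{\eta}_k\|_{\bm{x}_k}^2$, the left-hand side is bounded by $\left(\frac{\delta_H}{2}-\frac{\sigma_k}{6}\|\bm{\eta}_k\|_{\bm{x}_k}\right)\|\bm{\eta}_k\|_{\bm{x}_k}^2$. I would then show the bracket is non-positive: from the case hypothesis and the eigenstep bound, $\|\bm{\eta}_k\|_{\bm{x}_k}\ge\frac{\nu|\lambda_{min}(\mathbf{H}_k)|}{\sigma_k}$, while the Hessian condition $\lambda_{min}(\mathbf{H}_k)\le-\epsilon_H$ gives $|\lambda_{min}(\mathbf{H}_k)|\ge\epsilon_H$, and the restriction Eq. (\ref{eq_restrictions_delta_h}) gives $\delta_H\le\frac{1-\tau}{3}\nu\epsilon_H\le\frac{1}{3}\nu|\lambda_{min}(\mathbf{H}_k)|$. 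Chaining these yields $\frac{\sigma_k}{6}\|\bm{\eta}_k\|_{\bm{x}_k}\ge\frac{\nu|\lambda_{min}(\mathbf{H}_k)|}{6}\ge\frac{\delta_H}{2}$, so the bracket is $\le0$ and the whole left-hand side is $\le0\le\frac{\delta_H}{2}\|\bm{\eta}_k^E\|_{\bm{x}_k}^2$, completing this case.

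The main obstacle is not the algebra but securing the eigenstep lower bound and correctly threading the constants: everything hinges on the interplay $\delta_H\le\frac{1}{3}\nu\epsilon_H$ with $\epsilon_H\le|\lambda_{min}(\mathbf{H}_k)|$, so that the penalty term $\frac{\sigma_k}{3}\|\bm{\eta}_k\|_{\bm{x}_k}^3$ (guaranteed large once $\sigma_k\ge L_H$) dominates the $\delta_H$-error term. This is exactly the eigenstep analogue of how Eq. (\ref{eq_restrictions_delta_g}) controls the Cauchy case in Lemma \ref{lem_f_minus_m_cauchy_bound}, and the resulting bound will feed into the model-decrease estimate that drives the overall iteration-complexity count of Theorem \ref{theorem2}.
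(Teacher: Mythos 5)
Your proof is correct and follows essentially the same route as the paper's own argument: the same two-case split on $\|\bm{\eta}_k\|_{\mathbf{x}_k}$ versus $\|\bm{\eta}_k^E\|_{\mathbf{x}_k}$, the same eigenstep lower bound $\|\bm{\eta}_k^E\|_{\mathbf{x}_k}\ge\frac{\nu}{\sigma_k}|\lambda_{min}(\mathbf{H}_k)|$ from Lemma 7 of \cite{xu2020newton}, and the same interplay of $\sigma_k\ge L_H$, $|\lambda_{min}(\mathbf{H}_k)|\ge\epsilon_H$, and the restriction $\delta_H\le\frac{1-\tau}{3}\nu\epsilon_H$ to make the cubic penalty dominate. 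The only difference is cosmetic: you factor out $\|\bm{\eta}_k\|_{\mathbf{x}_k}^2$ and compare scalars, whereas the paper chains the inequalities with the norms left in place.
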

\begin{proof}
	From Lemma 7 in \cite{xu2020newton}, we have 
	\begin{equation}
		\begin{split}
			\left\|\bm{\eta}_k^E\right\|_{\bm{x}_k}\ge\frac{\nu}{\sigma_k}|\lambda_{min}(\mathbf{H}_k)|.
		\end{split}
		\label{eq_eigen_point_bound}
	\end{equation}
	Then we consider two cases for $\left\|\bm{\eta}_k^E\right\|_{\bm{x}_k}$.
	
	(i) If $\|\bm{\eta}_k\|_{\bm{x}_k}\le\left\|\bm{\eta}_k^E\right\|_{\bm{x}_k}$, since $\sigma_k\ge L_H$, it follows that
	\begin{equation}
		\begin{split}
			\left(\frac{L_H}{6}-\frac{\sigma_k}{3}\right)\|\bm{\eta}_k\|_{\bm{x}_k}^3+\frac{\delta_H}{2}\|\bm{\eta}_k\|_{\bm{x}_k}^2\le \frac{\delta_H}{2}\|\bm{\eta}_k\|_{\bm{x}_k}^2 \le \frac{\delta_H}{2}\left\|\bm{\eta}_k^E\right\|_{\bm{x}_k}^2.
		\end{split}
		\label{eq_lem_f_minus_m_eigen_bound_1}
	\end{equation}
	
	(ii) If $\|\bm{\eta}_k\|_{\bm{x}_k}\ge\left\|\bm{\eta}_k^E\right\|_{\bm{x}_k}$, since $\sigma_k\ge L_H$, we have
	\begin{align}
		\label{eq_lem_f_minus_m_eigen_bound_2}
		&\;\left(\frac{L_H}{6}-\frac{\sigma_k}{3}\right)\|\bm{\eta}_k\|_{\bm{x}_k}^3+\frac{\delta_H}{2}\|\bm{\eta}_k\|_{\bm{x}_k}^2\\
		\nonumber
		\le&\; -\frac{\sigma_k}{6}\|\bm{\eta}_k\|_{\bm{x}_k}^3+\frac{\delta_H}{2}\|\bm{\eta}_k\|_{\bm{x}_k}^2 \le -\frac{\sigma_k}{6}\left\|\bm{\eta}_k^E\right\|_{\bm{x}_k}\|\bm{\eta}_k\|_{\bm{x}_k}^2+\frac{\delta_H}{2}\|\bm{\eta}_k\|_{\bm{x}_k}^2 \\
		\nonumber
		\le&\; \left(-\frac{\nu}{6}|\lambda_{min}(\mathbf{H}_k)|+\frac{(1-\tau)\nu\epsilon_H}{6}\right)\|\bm{\eta}_k\|_{\bm{x}_k}^2 \le \left(-\frac{\nu\epsilon_H}{6}+\frac{\nu\epsilon_H}{6}\right)\|\bm{\eta}_k\|_{\bm{x}_k}^2 \le  0\le \frac{\delta_H}{2}\left\|\bm{\eta}_k^E\right\|_{\bm{x}_k}^2.
	\end{align}
	The third inequality follows from Eqs. (\ref{eq_restrictions_delta_h}) and (\ref{eq_eigen_point_bound}), and the fourth one holds since $|\lambda_{min}(\mathbf{H}_k)|\ge\epsilon_H$ and $\tau<1$.
\end{proof}

\begin{lem} Suppose Assumptions 1, 2, 3, 4, 5 hold, then when $\|\mathbf{G}_k\|_{\bm{x}_k}\ge \epsilon_g$, if $\sigma_k\ge L_H$, the iteration $k$ is successful, i.e., $\rho_k\ge\tau$  and $\sigma_{k+1}\le\sigma_k$.    \label{lem_success_step_than_cauchy}
\end{lem}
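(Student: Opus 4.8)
The plan is to establish the successful-iteration inequality $\rho_k\ge\tau$ for the ratio defined in Eq.~(\ref{eq_success}); the conclusion $\sigma_{k+1}\le\sigma_k$ then follows at once from the definition of a successful iteration. Since $\hat f_k(\mathbf 0_{\mathbf x_k})=\hat m_k(\mathbf 0_{\mathbf x_k})=f(\mathbf x_k)$, I would first rearrange $\rho_k\ge\tau$ into the equivalent sufficient-decrease form
\begin{equation}
\hat f_k(\bm\eta_k)-\hat m_k(\bm\eta_k)\le(1-\tau)\bigl(\hat m_k(\mathbf 0_{\mathbf x_k})-\hat m_k(\bm\eta_k)\bigr).
\end{equation}
The Cauchy condition Eq.~(\ref{eq_cauchy_point}) of Assumption~\ref{assu_cauchy_eigen_point} guarantees $\hat m_k(\mathbf 0_{\mathbf x_k})-\hat m_k(\bm\eta_k)\ge\max(a_k,b_k)>0$ (here $\|\mathbf G_k\|_{\mathbf x_k}\ge\epsilon_g$ forces $a_k>0$), so the denominator of $\rho_k$ is positive and the rearrangement is valid.

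Next I would bound the left-hand side. Because $\|\mathbf G_k\|_{\mathbf x_k}\ge\epsilon_g$, Lemma~\ref{lem_f_minus_m_cauchy} applies, and because $\sigma_k\ge L_H$, Lemma~\ref{lem_f_minus_m_cauchy_bound} collapses its right-hand side, yielding
\begin{equation}
\hat f_k(\bm\eta_k)-\hat m_k(\bm\eta_k)\le\delta_g\|\bm\eta_k^C\|_{\mathbf x_k}+\frac{\delta_H}{2}\|\bm\eta_k^C\|_{\mathbf x_k}^2.
\end{equation}
Combining the two displays reduces the whole lemma to proving $\delta_g\|\bm\eta_k^C\|_{\mathbf x_k}+\tfrac{\delta_H}{2}\|\bm\eta_k^C\|_{\mathbf x_k}^2\le(1-\tau)b_k$, which I would handle by showing each summand is at most $\tfrac{1-\tau}{2}b_k$. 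For the Hessian term this is direct: using $\|\mathbf H_k\|_{\mathbf x_k}\le K_H$ (Assumption~\ref{assu_hessian_norm_bound}), $\sigma_k\ge L_H$ and $\|\mathbf G_k\|_{\mathbf x_k}\ge\epsilon_g$ to lower-bound the square-root factor of $b_k$ in Eq.~(\ref{eq_b}) by $\sqrt{K_H^2+4L_H\epsilon_g}-K_H$, the desired inequality $\tfrac{\delta_H}{2}\|\bm\eta_k^C\|_{\mathbf x_k}^2\le\tfrac{1-\tau}{2}b_k$ is exactly the first branch of the restriction on $\delta_H$ in Eq.~(\ref{eq_restrictions_delta_h}).

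The hard part will be the linear term $\delta_g\|\bm\eta_k^C\|_{\mathbf x_k}$: it scales like $\|\bm\eta_k^C\|_{\mathbf x_k}$ whereas the guaranteed model decrease $b_k$ scales like $\|\bm\eta_k^C\|_{\mathbf x_k}^2$, so one cannot simply cancel a power of the norm. The trick I would use is to convert one factor of $\|\bm\eta_k^C\|_{\mathbf x_k}$ into a $\sigma_k$-and-$\mathbf G_k$ expression via the Cauchy-step lower bound Eq.~(\ref{eq_cauchy_point_bound}), obtaining
\begin{equation}
\|\bm\eta_k^C\|_{\mathbf x_k}\bigl(\sqrt{\|\mathbf H_k\|_{\mathbf x_k}^2+4\sigma_k\|\mathbf G_k\|_{\mathbf x_k}}-\|\mathbf H_k\|_{\mathbf x_k}\bigr)\ge\frac{1}{2\sigma_k}\bigl(\sqrt{K_H^2+4\sigma_k\|\mathbf G_k\|_{\mathbf x_k}}-K_H\bigr)^2.
\end{equation}
I would then invoke the monotonicity of $h(x)=(\sqrt{\alpha^2+x}-\alpha)^2/x$ (already exploited in the proof of Lemma~\ref{lem_f_minus_m_cauchy_bound}): writing the right-hand side as $2\|\mathbf G_k\|_{\mathbf x_k}\,h\bigl(4\sigma_k\|\mathbf G_k\|_{\mathbf x_k}\bigr)$ shows it is increasing in both $\sigma_k$ and $\|\mathbf G_k\|_{\mathbf x_k}$, so $\sigma_k\ge L_H$ and $\|\mathbf G_k\|_{\mathbf x_k}\ge\epsilon_g$ drive it down to $(\sqrt{K_H^2+4L_H\epsilon_g}-K_H)^2/(2L_H)$. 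Feeding this into $\delta_g\|\bm\eta_k^C\|_{\mathbf x_k}\le\tfrac{1-\tau}{2}b_k$ produces precisely the right-hand side of the restriction on $\delta_g$ in Eq.~(\ref{eq_restrictions_delta_g}), closing the gap. Adding the two halves gives $\hat f_k(\bm\eta_k)-\hat m_k(\bm\eta_k)\le(1-\tau)b_k\le(1-\tau)\max(a_k,b_k)\le(1-\tau)\bigl(\hat m_k(\mathbf 0_{\mathbf x_k})-\hat m_k(\bm\eta_k)\bigr)$, hence $\rho_k\ge\tau$ and, by the definition of a successful iteration, $\sigma_{k+1}\le\sigma_k$.
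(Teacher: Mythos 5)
Your proposal is correct and follows essentially the same route as the paper's proof: both bound $\hat f_k(\bm\eta_k)-\hat m_k(\bm\eta_k)$ via Lemmas \ref{lem_f_minus_m_cauchy} and \ref{lem_f_minus_m_cauchy_bound}, lower-bound the model decrease by $b_k$ together with the Cauchy-step bound Eq.~(\ref{eq_cauchy_point_bound}), exploit the monotonicity of $h(x)=(\sqrt{\alpha^2+x}-\alpha)^2/x$ (the paper uses its decreasing reciprocal), and close with the two branches of Assumption \ref{assu_restrictions_delta_g_h}, splitting the error into two halves each at most $\tfrac{1-\tau}{2}$ of the decrease. The only differences are cosmetic: you work in sufficient-decrease form rather than directly with $1-\rho_k$.
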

\begin{proof} 
	We have that
	\begin{align}
		\nonumber
		1-\rho_k&=\frac{\hat{f}_k(\bm{\eta}_k)-\hat{m}_k(\bm{\eta}_k)}{\hat{m}_k(\bm{0}_{\bm{x}_k})-\hat{m}_k(\bm{\eta}_k)} \\
		\nonumber
		&\le \frac{\left(\frac{L_H}{6}-\frac{\sigma_k}{3}\right)\|\bm{\eta}_k\|_{\bm{x}_k}^3+\delta_g\|\bm{\eta}_k\|_{\bm{x}_k}+\frac{1}{2}\delta_H\|\bm{\eta}_k\|_{\bm{x}_k}^2}{\hat{m}_k(\bm{0}_{\bm{x}_k})-\hat{m}_k\left(\bm{\eta}_k^C\right)} \\
		\nonumber
		&\le\frac{\delta_g\left\|\bm{\eta}_k^C\right\|_{\bm{x}_k}+\frac{1}{2}\delta_H\left\|\bm{\eta}_k^C\right\|_{\bm{x}_k}^2}{\frac{1}{12}\left\|\bm{\eta}_k^C\right\|_{\bm{x}_k}^2\left(\sqrt{K_H^2+4\sigma_k\|\mathbf{G}_k\|_{\bm{x}_k}}-K_H\right)} \\
		\nonumber
		&\le\frac{4\sigma_k\|\mathbf{G}_k\|_{\bm{x}_k}\delta_g}{\frac{1}{6}\|\mathbf{G}_k\|_{\bm{x}_k}\left(\sqrt{K_H^2+4\sigma_k\|\mathbf{G}_k\|_{\bm{x}_k}}-K_H\right)^2}+\frac{6\delta_H}{\sqrt{K_H^2+4L_H\epsilon_g}-K_H} \\
		\nonumber
		&\le\frac{4L_H\epsilon_g\delta_g}{\frac{1}{6}\epsilon_g\left(\sqrt{K_H^2+4L_H\epsilon_g}-K_H\right)^2}+\frac{6\delta_H}{\left(\sqrt{K_H^2+4L_H\epsilon_g}-K_H\right)} \\
		&\le\frac{1-\tau}{2}+\frac{1-\tau}{2}=1-\tau,
		\label{eq_success_step_than_cauchy}
	\end{align}
	where the first inequality follows from Eqs. (\ref{eq_cauchy_point}), Eq. (\ref{eq_lemma_f_minus_m_cauchy}), the second inequality from Eqs. (\ref{eq_lem_f_minus_m_cauchy_bound}), (\ref{eq_cauchy_point}), (\ref{eq_b}), and the  third  follows from Eq. (\ref{eq_cauchy_point_bound}). The last two inequalities hold given Eqs. (\ref{eq_restrictions_delta_g}), (\ref{eq_restrictions_delta_h}) and the fact that the function $h(x)=\frac{x}{\left(\sqrt{\alpha^2+x}-\alpha\right)^2}$ is decreasing with $\alpha>0$. 
	Consequently, we have $\rho_k\ge\tau$ and that the iteration is successful. Based on Step (\ref{step_src_0}) of Algorithm \ref{alg_inexact_rtr_arc}, we have $\sigma_{k+1}=\max(\sigma_k/\gamma,\epsilon_\sigma)\le\sigma_k$.
\end{proof}

\begin{lem} Suppose Condition 1 and Assumption 1, 2, 4, 5 hold, then when $\|\mathbf{G}_k\|_{\bm{x}_k}< \epsilon_g$ and $\lambda_{min}(\mathbf{H}_k)\le-\epsilon_H$, if $\sigma_k\ge L_H$, the iteration $k$ is successful, i.e., $\rho_k\ge\tau$ and $\sigma_{k+1}\le\sigma_k$.
	\label{lem_success_step_than_eigen}
\end{lem}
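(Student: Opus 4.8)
The plan is to mirror the argument used for the Cauchy case in Lemma \ref{lem_success_step_than_cauchy}, replacing every appeal to the Cauchy step by the corresponding eigenstep estimate. As before, I would start from the definition of the agreement ratio in Eq. (\ref{eq_success}) and rewrite
\begin{equation}
1-\rho_k=\frac{\hat{f}_k(\bm{\eta}_k)-\hat{m}_k(\bm{\eta}_k)}{\hat{m}_k(\mathbf{0}_{\mathbf{x}_k})-\hat{m}_k(\bm{\eta}_k)},
\end{equation}
so that proving $\rho_k\ge\tau$ reduces to establishing $1-\rho_k\le 1-\tau$, i.e. bounding this ratio from above by separately controlling its numerator and denominator.

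For the numerator, since we are now in the regime $\|\mathbf{G}_k\|_{\mathbf{x}_k}<\epsilon_g$ with $\lambda_{min}(\mathbf{H}_k)\le-\epsilon_H$, the gradient term is dropped, so Lemma \ref{lem_f_minus_m_eigen} (which needs only Condition \ref{cond_approximate_grad_hess_bound} and Assumptions \ref{assu_second_order_retr}, \ref{ass_restricted_lipschitz_hessian}) applies and gives $\hat{f}_k(\bm{\eta}_k)-\hat{m}_k(\bm{\eta}_k)\le\left(\frac{L_H}{6}-\frac{\sigma_k}{3}\right)\|\bm{\eta}_k\|_{\mathbf{x}_k}^3+\frac{\delta_H}{2}\|\bm{\eta}_k\|_{\mathbf{x}_k}^2$. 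Using the standing hypothesis $\sigma_k\ge L_H$, Lemma \ref{lem_f_minus_m_eigen_bound} then collapses this right-hand side to the clean bound $\frac{\delta_H}{2}\|\bm{\eta}_k^E\|_{\mathbf{x}_k}^2$, the eigenstep analogue of the Cauchy bound obtained through Lemma \ref{lem_f_minus_m_cauchy_bound}.

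For the denominator, I would invoke the eigenstep condition in Assumption \ref{assu_cauchy_eigen_point}, namely Eq. (\ref{eq_eigen_point}), which guarantees $\hat{m}_k(\mathbf{0}_{\mathbf{x}_k})-\hat{m}_k(\bm{\eta}_k)\ge c_k$. Discarding the second argument of the maximum in the definition of $c_k$ in Eq. (\ref{eq_c}) yields $c_k\ge\frac{\nu|\lambda_{min}(\mathbf{H}_k)|}{6}\|\bm{\eta}_k^E\|_{\mathbf{x}_k}^2$. Substituting both estimates, the common factor $\|\bm{\eta}_k^E\|_{\mathbf{x}_k}^2$ cancels and leaves $1-\rho_k\le\frac{3\delta_H}{\nu|\lambda_{min}(\mathbf{H}_k)|}$. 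Finally I would use $|\lambda_{min}(\mathbf{H}_k)|\ge\epsilon_H$ together with the restriction $\delta_H\le\frac{1-\tau}{3}\nu\epsilon_H$ from Eq. (\ref{eq_restrictions_delta_h}) to conclude $1-\rho_k\le 1-\tau$, hence $\rho_k\ge\tau$; the update rule in Step (\ref{step_src_0}) of Algorithm \ref{alg_inexact_rtr_arc} then gives $\sigma_{k+1}=\max(\sigma_k/\gamma,\epsilon_\sigma)\le\sigma_k$.

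The argument is essentially a bookkeeping chain, so the only genuinely delicate point is ensuring that the $\|\bm{\eta}_k^E\|_{\mathbf{x}_k}^2$ factors in the numerator and denominator match and cancel exactly; this is precisely why the numerator must be reduced via Lemma \ref{lem_f_minus_m_eigen_bound} to $\frac{\delta_H}{2}\|\bm{\eta}_k^E\|_{\mathbf{x}_k}^2$ rather than left in terms of $\|\bm{\eta}_k\|_{\mathbf{x}_k}$, and why only the first (quadratic in $\|\bm{\eta}_k^E\|_{\mathbf{x}_k}$) term of $c_k$ is retained. Notably, unlike the Cauchy case, no appeal to the Hessian norm bound (Assumption \ref{assu_hessian_norm_bound}) is needed here, since the eigenstep length bound $\|\bm{\eta}_k^E\|_{\mathbf{x}_k}\ge\frac{\nu}{\sigma_k}|\lambda_{min}(\mathbf{H}_k)|$ from Eq. (\ref{eq_eigen_point_bound}) does not involve $K_H$; this explains why the hypotheses omit that assumption.
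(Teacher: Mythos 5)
Your proposal is correct and follows essentially the same route as the paper's own proof: bound the numerator of $1-\rho_k$ via Lemma \ref{lem_f_minus_m_eigen} and collapse it with Lemma \ref{lem_f_minus_m_eigen_bound} under $\sigma_k\ge L_H$, lower-bound the denominator by the first term of $c_k$ from the eigenstep condition in Eq. (\ref{eq_eigen_point}), cancel $\|\bm{\eta}_k^E\|_{\mathbf{x}_k}^2$, and finish with $|\lambda_{min}(\mathbf{H}_k)|\ge\epsilon_H$ and Eq. (\ref{eq_restrictions_delta_h}). Your closing observation that the Hessian norm bound is not needed here (unlike the Cauchy case, where the step-length bound involves $K_H$) correctly accounts for the lemma's weaker hypotheses.
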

\begin{proof} 
	We have that
	\begin{align}
		\nonumber
		1-\rho_k&=\frac{\hat{f}_k(\bm{\eta}_k)-\hat{m}_k(\bm{\eta}_k)}{\hat{m}_k(\bm{0}_{\bm{x}_k})-\hat{m}_k(\bm{\eta}_k)} \le \frac{\left(\frac{L_H}{6}-\frac{\sigma_k}{3}\right)\|\bm{\eta}_k\|_{\bm{x}_k}^3+\frac{1}{2}\delta_H\|\bm{\eta}_k\|_{\bm{x}_k}^2}{\hat{m}_k(\bm{0}_{\bm{x}_k})-\hat{m}_k\left(\bm{\eta}_k^E\right)} \\
		&\le\frac{\frac{1}{2}\delta_H\left\|\bm{\eta}_k^E\right\|_{\bm{x}_k}^2}{\nu|\lambda_{min}(\mathbf{H}_k)\left\|\bm{\eta}_k^E\right\|_{\bm{x}_k}^2/6} \le\frac{3\delta_H}{\nu\epsilon_H}\le1-\tau,
		\label{eq_success_step_than_eigen}
	\end{align}
	where the first and second inequalities follow from Eqs. (\ref{eq_lemma_f_minus_m_eigen}), (\ref{eq_eigen_point}), (\ref{eq_lem_f_minus_m_eigen_bound}), (\ref{eq_c}). The last inequality uses Eq. (\ref{eq_restrictions_delta_h}). 
	Consequently, we have $\rho_k\ge\tau$ which indicates the iteration is successful. Based on Step (\ref{step_src_0}) of Algorithm \ref{alg_inexact_rtr_arc}, we have $\sigma_{k+1}=\max(\sigma_k/\gamma,\epsilon_\sigma)\le\sigma_k$.
\end{proof}

\begin{lem} Suppose Condition 1 and Assumptions 1, 2, 3, 4, 5 hold, then for all $k$, we have
	\begin{equation}
		\sigma_k\le \max\left(\sigma_0,2\gamma L_H\right).
		\label{eq_sigma_bound}
	\end{equation}
	\label{lem_sigma_bound}
\end{lem}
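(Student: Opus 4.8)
The statement to prove is Lemma on the uniform upper bound $\sigma_k \le \max(\sigma_0, 2\gamma L_H)$. This is a standard ``trust-parameter stays bounded'' argument for adaptive cubic regularization, and the plan is to prove it by induction on $k$, using the two ``successful iteration'' lemmas (Lemmas \ref{lem_success_step_than_cauchy} and \ref{lem_success_step_than_eigen}) as the crucial driving facts. The key qualitative mechanism is: whenever $\sigma_k$ grows too large—specifically once $\sigma_k \ge L_H$—the iteration is guaranteed to be successful, and a successful iteration contracts $\sigma$ (it sets $\sigma_{k+1} = \max(\sigma_k/\gamma, \epsilon_\sigma) \le \sigma_k$). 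So $\sigma$ can only increase while it is still below $L_H$, and each increase multiplies by $\gamma$; this caps the reachable value.

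The first step is to set up the induction hypothesis that $\sigma_k \le \max(\sigma_0, 2\gamma L_H)$ holds for the current $k$, with the base case $\sigma_0 \le \max(\sigma_0, 2\gamma L_H)$ being immediate. For the inductive step I would split on the value of $\sigma_k$ relative to $L_H$. \emph{Case 1:} if $\sigma_k < L_H$, then by Step (\ref{step_src_0}) of Algorithm \ref{alg_inexact_rtr_arc} the worst that can happen is an unsuccessful iteration giving $\sigma_{k+1} = \gamma \sigma_k < \gamma L_H \le 2\gamma L_H \le \max(\sigma_0, 2\gamma L_H)$, so the bound is preserved. \emph{Case 2:} if $\sigma_k \ge L_H$, then I invoke the structural dichotomy used throughout the algorithm. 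Either the stopping criterion has not yet been met, meaning at least one of the two conditions ``$\|\mathbf{G}_k\|_{\mathbf{x}_k} \ge \epsilon_g$'' or ``$\|\mathbf{G}_k\|_{\mathbf{x}_k} < \epsilon_g$ and $\lambda_{min}(\mathbf{H}_k) \le -\epsilon_H$'' holds; in the former case Lemma \ref{lem_success_step_than_cauchy} guarantees the iteration is successful, and in the latter Lemma \ref{lem_success_step_than_eigen} does. In either subcase the iteration is successful, so $\sigma_{k+1} = \max(\sigma_k/\gamma, \epsilon_\sigma) \le \sigma_k \le \max(\sigma_0, 2\gamma L_H)$, preserving the bound. (If the stopping criterion \emph{is} met the algorithm returns and there is no $\sigma_{k+1}$ to bound, so the claim holds vacuously for that index.)

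The one place requiring a little care—and what I expect to be the main obstacle—is reconciling the $2\gamma$ factor in the bound. The induction as sketched only visibly needs $\gamma L_H$ on the unsuccessful branch; the factor of $2$ comes from the fact that an \emph{unsuccessful} step can be taken from a value of $\sigma_k$ that was itself produced in a prior step, so the cleanest accounting is to note that the largest value $\sigma$ can attain is reached by an unsuccessful multiplication applied to some $\sigma_k$ that was still below $L_H$ at the time the iteration began. Concretely, the only way $\sigma_{k+1}$ exceeds $\sigma_k$ is an unsuccessful iteration, which by the contrapositive of Lemmas \ref{lem_success_step_than_cauchy}--\ref{lem_success_step_than_eigen} forces $\sigma_k < L_H$, yielding $\sigma_{k+1} = \gamma\sigma_k < \gamma L_H$. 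Threading this through the induction gives $\sigma_{k} \le \max(\sigma_0, \gamma L_H) \le \max(\sigma_0, 2\gamma L_H)$, so the stated bound holds with room to spare; the generous factor of $2$ simply absorbs the boundary case $\sigma_k = L_H$ and keeps the argument robust to whether the inequalities in the successful-step lemmas are strict. I would present the induction in the two-case form above and close by remarking that the $2$ provides slack for the $\sigma_k \ge L_H$ threshold.
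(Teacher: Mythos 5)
Your proof is correct and rests on the same mechanism as the paper's: Lemmas \ref{lem_success_step_than_cauchy} and \ref{lem_success_step_than_eigen} guarantee that any non-terminated iteration with $\sigma_k \ge L_H$ is successful, so $\sigma$ can only be increased from values below $L_H$, which caps it at $\gamma L_H$ past $\sigma_0$. The paper packages this as a contradiction argument on the first unsuccessful iteration that would violate the bound (split into the cases $\sigma_0 \le 2\gamma L_H$ and $\sigma_0 > 2\gamma L_H$), whereas you run a direct induction with the case split $\sigma_k < L_H$ versus $\sigma_k \ge L_H$; the two are logically interchangeable, and your side observation that the argument in fact yields the sharper bound $\max\left(\sigma_0, \gamma L_H\right)$ is also correct, the factor $2$ being slack in the paper's statement.
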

\begin{proof}
	We prove this lemma by contradiction by considering the two following cases. 
	
	(i) If $\sigma_0 \le 2\gamma L_H$, we assume there exists an iteration $k\ge0$ that is the first unsuccessful iteration such that $\sigma_{k+1}=\gamma\sigma_k>2\gamma L_H$. 
	This implies $\sigma_k> L_H$ and $\sigma_{k+1}>\sigma_k$. 
	Since the algorithm fails to terminate at iteration $k$, we have $\|\bm{G_k}\|_{\bm{x}_k}\ge\epsilon_g$ or $\lambda_{min}(\mathbf{H}_k)<-\epsilon_H$. 
	Then, according to Lemmas \ref{lem_success_step_than_cauchy} and \ref{lem_success_step_than_eigen}, iteration $k$ is successful and thus $\sigma_{k+1}=\max(\sigma_k/\gamma, \epsilon_\sigma)\le\sigma_k$. 
	This contradicts the earlier statement of $\sigma_{k+1}>\sigma_k$. 
	We thus have $\sigma_k \le 2\gamma L_H$ for all $k$.
	
	(ii) If $\sigma_0>2\gamma L_H$, similarly, we assume there exists an iteration $k\ge0$ that is the first unsuccessful iteration such that $\sigma_{k+1}=\gamma\sigma_k>\sigma_0$. 
	This implies $\sigma_k> L_H$ and $\sigma_{k+1}>\sigma_k$. 
	Since the algorithm fails to terminate at iteration $k$, we have $\|\bm{G_k}\|_{\bm{x}_k}\ge\epsilon_g$ or $\lambda_{min}(\mathbf{H}_k)<-\epsilon_H$.
	According to Lemmas \ref{lem_success_step_than_cauchy} and \ref{lem_success_step_than_eigen}, iteration $k$ is successful and thus $\sigma_{k+1}=\max(\sigma_k/\gamma, \epsilon_\sigma)\le\sigma_k$, which is a contradiction. Thus, we have $\sigma_k \le \sigma_0$ for all $k$. 
\end{proof}

\subsection{Main Proof of Theorem  \ref{theorem2}}
\label{app_theorem2B}

\begin{proof} 
	Letting $\sigma_b=\max \left(\sigma_0, 2\gamma L_H \right)$, when $\|\mathbf{G}_k\|\ge\epsilon_g$, from Eq. (\ref{eq_cauchy_point}) and Lemma \ref{lem_sigma_bound} we have
	\begin{equation}
		\hat{m}_k(\bm{0}_{\bm{x}_k})-\hat{m}_k(\bm{\eta}_k)\ge\frac{\|\mathbf{G}_k\|_{\bm{x}_k}}{2\sqrt{3}}\min\left(\frac{\|\mathbf{G}_k\|_{\bm{x}_k}}{K_H},\sqrt{\frac{\|\mathbf{G}_k\|_{\bm{x}_k}}{\sigma_k}}\right) \ge\frac{\epsilon_g^2}{2\sqrt{3}}\min\left(\frac{1}{K_H},{\frac{1}{\sqrt{\sigma_b}}}\right).
		\label{eq_subproblem_final_bound_from_cauchy}
	\end{equation}
	When $\|\mathbf{G}_k\|<\epsilon_g$ and $\lambda_{min}(\mathbf{H}_k)\le-\epsilon_H$, from Eq. (\ref{eq_eigen_point}) and Lemma \ref{lem_sigma_bound}, we have
	\begin{equation}
		\begin{split}
			\hat{m}_k(\bm{0}_{\bm{x}_k})-\hat{m}_k(\bm{\eta}_k)
			\ge  \frac{\nu^3}{6\sigma_k^2} |\lambda_{min}(\mathbf{H}_k)|^3
			\ge \frac{\nu^3\epsilon_H^3}{6\sigma_b^2}.
		\end{split}
		\label{eq_subproblem_final_bound_from_eigen}
	\end{equation}
	Let $\mathcal{S}_{succ}$ be the set of successful iterations before Algorithm \ref{alg_inexact_rtr_arc} terminates and $\hat{f}_{min}$ be the function minimum. Since $\hat{f}_k(\bm{\eta}_k)$ is monotonically decreasing, we have
	\begin{align}
		\label{eq_succ_iteration_bound}
		\hat{f}_0(\bm{0}_{\bm{x}_0})-\hat{f}_{min}&\ge\sum_{k=0}^{\infty}\left(\hat{f}_k(\bm{0}_{\bm{x}_k})-\hat{f}_k(\bm{\eta}_k)\right)\ge\sum_{k\in\mathcal{S}_{succ}}\left(\hat{f}_k(\bm{0}_{\bm{x}_k})-\hat{f}_k(\bm{\eta}_k)\right) \\
		\nonumber
		&\ge\tau\left(\sum_{k\in\mathcal{S}_{succ}}\hat{m}_k(\bm{0}_{\bm{x}_k})-\hat{m}_k(\bm{\eta}_k)\right) \\
		\nonumber
		&\ge\tau|\mathcal{S}_{succ}|\min\left(\frac{\nu^3\epsilon_H^3}{6\sigma_b^2},\frac{\epsilon_g^2}{2\sqrt{3}}\min\left(\frac{1}{K_H},{\frac{1}{\sqrt{\sigma_b}}}\right)\right) \ge|\mathcal{S}_{succ}|\tau\kappa\min\left(\epsilon_g^2,\epsilon_H^3\right),
	\end{align}
	where $\kappa=\min\left( \frac{\nu^3}{6\sigma_b^2},\frac{1}{2\sqrt{3}} \min\left(\frac{1}{K_H},\frac{1}{\sqrt{\sigma_b}}\right) \right)$. Let $\mathcal{S}_{fail}$ be the set of unsuccessful iterations and $T$ be the total iterations of the algorithm. Then we have $\sigma_T=\sigma_0\gamma^{|\mathcal{S}_{fail}|-|\mathcal{S}_{succ}|}$. 
	Combining it with $\sigma_T\le\sigma_b$ from Lemma \ref{lem_sigma_bound}, we have
	\begin{align}
		\nonumber
		&\sigma_T=\sigma_0\gamma^{|\mathcal{S}_{fail}|-|\mathcal{S}_{succ}|}\le\sigma_b\\
		\nonumber
		\Longrightarrow&\log\left(\gamma^{|\mathcal{S}_{fail}|-|\mathcal{S}_{succ}|}\right)\le\log\left(\frac{\sigma_b}{\sigma_0}\right)\\
		\nonumber
		\Longrightarrow&\left({|\mathcal{S}_{fail}|-|\mathcal{S}_{succ}|}\right)\le\frac{\log(\sigma_b/\sigma_0)}{\gamma}\\
		\Longrightarrow&|\mathcal{S}_{fail}|\le\frac{\log(\sigma_b/\sigma_0)}{\log{\gamma}}+|\mathcal{S}_{succ}|.
		\label{eq_fail_iteration_bound}
	\end{align}
	Finally, combining Eqs. (\ref{eq_succ_iteration_bound}), (\ref{eq_fail_iteration_bound}), we have
	\begin{align}
		\label{eq_total_iteration_bound}
		T=|\mathcal{S}_{fail}|+|\mathcal{S}_{succ}|&\le\frac{\log(\sigma_b/\sigma_0)}{\log{\gamma}}+2|\mathcal{S}_{succ}| \\
		&\le\frac{\log(\sigma_b/\sigma_0)}{\log{\gamma}}+\frac{2\left(\hat{f}_0(\bm{0}_{\bm{x}_0})-\hat{f}_{min}\right)}{\tau\kappa}\max\left(\epsilon_g^{-2},\epsilon_H^{-3}\right).
		\nonumber
	\end{align}
	This completes the proof.
\end{proof}

\subsection{Main Proof of Corollary  \ref{coro_1}}
\label{app_theorem2C}

\begin{proof}  
	
	Under the given assumptions, when Theorem \ref{theorem2} holds, Algorithm \ref{alg_inexact_rtr_arc} returns an $\left( \epsilon_g,\epsilon_H\right)$-optimal solution in $ T=\mathcal{O}\left(\max\left(\epsilon_g^{-2},\epsilon_H^{-3}\right)\right)$ iterations. 
	Also, according  to Theorem  \ref{theorem1}, at an iteration, Condition \ref{cond_approximate_grad_hess_bound} is satisfied with a probability $(1-\delta)$, where the probability $(1-\delta)$ at the current iteration can be independently achieved by selecting proper subsampling sizes  for the approximate gradient and Hessian.
	Let  $E$ be the event that Algorithm \ref{alg_inexact_rtr_arc} returns an $\left( \epsilon_g,\epsilon_H\right)$-optimal solution and $E_i$ be the event that Condition \ref{cond_approximate_grad_hess_bound} is satisfied at iteration $i$. 
	According on Theorem  \ref{theorem2},  when event $E$ happens, it requires Condition \ref{cond_approximate_grad_hess_bound} to be satisfied for all the iterations, thus we have
	\begin{equation}
		{\rm Pr}(E) = \prod_{i=1}^{T}{\rm Pr}(E_i) =(1-\delta)^{T}=(1-\delta)^{\mathcal{O}\left(\max\left(\epsilon_g^{-2},\epsilon_H^{-3}\right)\right)}.
	\end{equation}
	This completes the proof.
\end{proof}

\section{Appendix: Theorem  \ref{theorem3} and Corollary  \ref{coro_2} }
\label{app_theorem3}

\subsection{Supporting Lemmas for Theorem  \ref{theorem3}}
\label{app_theorem3A}

The proof of Theorem  \ref{theorem3} builds on an existing lemma, which we restate as follows.
\begin{lem}(Sufficient Function Decrease (Lemma 3.3 in \cite{cartis2011adaptive})) Suppose the solution $\bm{\eta}_k$ satisfies Assumption \ref{assu_cg}, then we have
	\begin{equation}
		\begin{split}
			\hat{m}_k(\bm{0}_{\bm{x}_k})-\hat{m}_k(\bm{\eta}_k)\ge\frac{\sigma_k}{6}\|\bm{\eta}_k\|_{\bm{x}_k}^3.
		\end{split}
		\label{eq_sufficient_function_decrease}
	\end{equation}
	\label{lem_sufficient_function_decrease}
\end{lem}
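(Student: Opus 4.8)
The plan is to obtain the bound by a direct algebraic substitution, exploiting that both relations in Assumption \ref{assu_cg} are \emph{equalities/inequalities in the same three scalar quantities} that appear in the model decrease. Since $\hat{m}_k(\mathbf{0}_{\mathbf{x}_k}) = f(\mathbf{x}_k)$, the definition of $\hat{m}_k$ in Eq. (\ref{sub_obj}) gives the closed form
\begin{equation}
	\hat{m}_k(\mathbf{0}_{\mathbf{x}_k})-\hat{m}_k(\bm{\eta}_k) = -\langle\mathbf{G}_k,\bm{\eta}_k\rangle_{\mathbf{x}_k}-\tfrac{1}{2}\langle\bm{\eta}_k,\mathbf{H}_k[\bm{\eta}_k]\rangle_{\mathbf{x}_k}-\tfrac{\sigma_k}{3}\|\bm{\eta}_k\|_{\mathbf{x}_k}^3.
\end{equation}
So the target quantity is already a fixed linear combination of $\langle\mathbf{G}_k,\bm{\eta}_k\rangle_{\mathbf{x}_k}$, $\langle\bm{\eta}_k,\mathbf{H}_k[\bm{\eta}_k]\rangle_{\mathbf{x}_k}$, and $\|\bm{\eta}_k\|_{\mathbf{x}_k}^3$, and my whole task is to pin down these three numbers using Assumption \ref{assu_cg}.

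The first step is to eliminate the gradient inner product. Solving the stationarity relation in Eq. (\ref{eq_sufficient_step0}) for $\langle\mathbf{G}_k,\bm{\eta}_k\rangle_{\mathbf{x}_k} = -\langle\bm{\eta}_k,\mathbf{H}_k[\bm{\eta}_k]\rangle_{\mathbf{x}_k}-\sigma_k\|\bm{\eta}_k\|_{\mathbf{x}_k}^3$ and substituting it above collapses the expression to
\begin{equation}
	\hat{m}_k(\mathbf{0}_{\mathbf{x}_k})-\hat{m}_k(\bm{\eta}_k) = \tfrac{1}{2}\langle\bm{\eta}_k,\mathbf{H}_k[\bm{\eta}_k]\rangle_{\mathbf{x}_k}+\tfrac{2\sigma_k}{3}\|\bm{\eta}_k\|_{\mathbf{x}_k}^3.
\end{equation}
The second step is to apply the remaining relation Eq. (\ref{eq_sufficient_step}), i.e.\ $\langle\bm{\eta}_k,\mathbf{H}_k[\bm{\eta}_k]\rangle_{\mathbf{x}_k}\ge-\sigma_k\|\bm{\eta}_k\|_{\mathbf{x}_k}^3$, to lower-bound the Hessian term by $-\tfrac{\sigma_k}{2}\|\bm{\eta}_k\|_{\mathbf{x}_k}^3$. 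Collecting coefficients yields $\left(\tfrac{2}{3}-\tfrac{1}{2}\right)\sigma_k\|\bm{\eta}_k\|_{\mathbf{x}_k}^3 = \tfrac{\sigma_k}{6}\|\bm{\eta}_k\|_{\mathbf{x}_k}^3$, which is exactly Eq. (\ref{eq_sufficient_function_decrease}).

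Honestly there is no serious obstacle here: the result is a two-line consequence of the two defining relations of the approximate global minimizer, and the only arithmetic is the fraction $\tfrac{2}{3}-\tfrac{1}{2}=\tfrac{1}{6}$. The one point I would flag for care is the case split encoded by $\delta$ in Eq. (\ref{sub_obj}): when $\|\mathbf{G}_k\|_{\mathbf{x}_k}<\epsilon_g$ the algorithm zeroes $\mathbf{G}_k$, so the gradient term simply vanishes and the identity above degenerates gracefully, meaning the argument is uniform across both branches and no separate treatment is needed. I would also note that the lemma is stated for the solution $\bm{\eta}_k$ satisfying Assumption \ref{assu_cg}, which by Lemma \ref{lem_lanczos_sol_property} always holds for the Lanczos solver, so the decrease guarantee is available wherever it is invoked in the proof of Theorem \ref{theorem3}.
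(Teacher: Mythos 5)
Your proof is correct, and it coincides with the standard argument: the paper itself does not prove this lemma but restates it as Lemma 3.3 of \cite{cartis2011adaptive}, and your two-step substitution (eliminate $\langle\mathbf{G}_k,\bm\eta_k\rangle_{\mathbf{x}_k}$ via the stationarity equality in Eq.~(\ref{eq_sufficient_step0}), then lower-bound the curvature term via Eq.~(\ref{eq_sufficient_step}) to get $\tfrac{2}{3}-\tfrac{1}{2}=\tfrac{1}{6}$) is precisely the proof given in that reference. Your remark that the $\delta=0$ branch degenerates gracefully (indeed, with $\mathbf{G}_k=\mathbf{0}_{\mathbf{x}_k}$ the two conditions force equality, $\hat{m}_k(\mathbf{0}_{\mathbf{x}_k})-\hat{m}_k(\bm\eta_k)=\tfrac{\sigma_k}{6}\|\bm\eta_k\|_{\mathbf{x}_k}^3$) is also correct, so no gap remains.
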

The proof also needs another lemma that we develop as below.
\begin{lem}(Sufficiently Long Steps) Suppose Condition \ref{cond_approximate_grad_hess_bound} and Assumptions 1, 2, 3, 4, 5 hold. If $\delta_g\le\delta_H\le\kappa_\theta\epsilon_g$,  Algorithm \ref{alg_non_linear_tCG} returns a solution $\bm{\eta}_k$ such that
	\begin{equation}
		\begin{split}
			\|\bm{\eta}_k\|_{\bm{x}_k}\ge\kappa_s\sqrt{\epsilon_g},
		\end{split}
		\label{eq_lemma_step_lower_bound}
	\end{equation}
	when $\|\mathbf{G}_{k+1}\|_{\bm{x}_{k+1}}\ge\epsilon_g$ for $k>0$ and when the inner stopping criterion of Eq. (\ref{eq_stepsize_stop_additional}) is used. 
	Here $\kappa_s=\min\left(1/\sqrt{(L_H+2\sigma_b+\frac{\epsilon_g}{3}+\frac{L_l}{3})},1/\sqrt{\frac{5L_H}{3}+\frac{10\sigma_b}{3}+\frac{11L_l}{9}}\right)$ with $L_l > 0$ and $\sigma_b = \max\left(\sigma_0,2\gamma L_H\right)$.
	\label{lemma_step_lower_bound}
\end{lem}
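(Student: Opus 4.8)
The goal is to establish the lower bound $\|\bm\eta_k\|_{\mathbf{x}_k} \ge \kappa_s\sqrt{\epsilon_g}$ whenever the next subsampled gradient is still large, $\|\mathbf{G}_{k+1}\|_{\mathbf{x}_{k+1}} \ge \epsilon_g$. The plan is to upper-bound $\|\mathbf{G}_{k+1}\|_{\mathbf{x}_{k+1}}$ by a quadratic function of $\|\bm\eta_k\|_{\mathbf{x}_k}$ plus lower-order error terms, and then invert the resulting inequality $\epsilon_g \le C\|\bm\eta_k\|_{\mathbf{x}_k}^2$ to read off $\kappa_s = 1/\sqrt{C}$. Since the claim is specific to the conjugate gradient solver and to $k>0$, the argument will lean on the residual recursion of Algorithm \ref{alg_non_linear_tCG}.

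First I would express $\mathbf{G}_{k+1}$ through the model gradient at the returned step. Using the CG residual recursion together with its first-order expansion (Eqs. (\ref{eq_lem_cg_converege_1})--(\ref{eq_lem_cg_converege_2})), the terminal residual satisfies $\mathbf{G}_{k+1} \approx \mathbf{G}_k + \mathbf{H}_k[\bm\eta_k]$, the second-order Taylor remainder along the retraction curve being controlled by a gradient-Lipschitz constant $L_l$ and contributing a term of order $L_l\|\bm\eta_k\|_{\mathbf{x}_k}^2$. Since $\nabla_{\bm\eta}\hat m_k(\bm\eta_k) = \mathbf{G}_k + \mathbf{H}_k[\bm\eta_k] + \sigma_k\|\bm\eta_k\|_{\mathbf{x}_k}\bm\eta_k$, I would write $\mathbf{G}_k + \mathbf{H}_k[\bm\eta_k] = \nabla_{\bm\eta}\hat m_k(\bm\eta_k) - \sigma_k\|\bm\eta_k\|_{\mathbf{x}_k}\bm\eta_k$ and pass to norms, so that $\sigma_k\|\bm\eta_k\|_{\mathbf{x}_k}^2$ appears and, via Lemma \ref{lem_sigma_bound}, is bounded by $\sigma_b\|\bm\eta_k\|_{\mathbf{x}_k}^2$. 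The remaining subsampling discrepancy (the gradient mismatch at $\mathbf{x}_{k+1}$ and the gap between $\mathbf{H}_k$ and the true pullback Hessian) is absorbed through Condition \ref{cond_approximate_grad_hess_bound} and the restricted Lipschitz-Hessian bound (\ref{eq_restricted_lipschitz_hessian_2}), which supply the $L_H\|\bm\eta_k\|_{\mathbf{x}_k}^2$ term together with the low-order terms $\delta_g$ and $\delta_H\|\bm\eta_k\|_{\mathbf{x}_k}$.

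Next I would invoke the inner stopping criterion (\ref{eq_stepsize_stop_additional}), i.e. Assumption \ref{assu_g}, which gives $\|\nabla_{\bm\eta}\hat m_k(\bm\eta_k)\|_{\mathbf{x}_k} \le \kappa_\theta\min(1, \|\bm\eta_k\|_{\mathbf{x}_k})\|\mathbf{G}_k\|_{\mathbf{x}_k}$, and split into the two cases $\|\bm\eta_k\|_{\mathbf{x}_k} \le 1$ and $\|\bm\eta_k\|_{\mathbf{x}_k} > 1$ according to which branch of $\min(1, \cdot)$ is active. In each case the factor $\min(1, \|\bm\eta_k\|_{\mathbf{x}_k})$ is what lets me collapse the model-gradient contribution and the first-order errors into a single quadratic $C\|\bm\eta_k\|_{\mathbf{x}_k}^2$: when $\|\bm\eta_k\|_{\mathbf{x}_k} > 1$ the linear-in-$\|\bm\eta_k\|_{\mathbf{x}_k}$ terms are dominated by quadratic ones, producing the larger constant $\tfrac{5L_H}{3}+\tfrac{10\sigma_b}{3}+\tfrac{11L_l}{9}$, whereas for $\|\bm\eta_k\|_{\mathbf{x}_k} \le 1$ the low-order terms are bounded by constants and absorbed into the left-hand side $\epsilon_g$ using $\delta_g \le \delta_H \le \kappa_\theta\epsilon_g$ and $\kappa_\theta \le 1/6$, producing $L_H + 2\sigma_b + \tfrac{\epsilon_g}{3}+\tfrac{L_l}{3}$. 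Taking $\kappa_s$ as the minimum of the two reciprocal square roots then yields (\ref{eq_lemma_step_lower_bound}).

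I expect the main obstacle to be the precise constant bookkeeping rather than any conceptual leap. Two points need particular care. First, the factor $\|\mathbf{G}_k\|_{\mathbf{x}_k}$ carried by the stopping criterion must be eliminated; I would do this by re-using the residual identity $\mathbf{G}_k \approx \mathbf{G}_{k+1} - \mathbf{H}_k[\bm\eta_k]$ so that every term is re-expressed through $\epsilon_g$ and $\|\bm\eta_k\|_{\mathbf{x}_k}$, and this is exactly the step through which $\epsilon_g$ itself surfaces in the Case-1 constant. Second, the two Lipschitz-type errors must be kept cleanly separated: the Hessian-Lipschitz constant $L_H$ coming from (\ref{eq_restricted_lipschitz_hessian_2}) and the gradient-Lipschitz constant $L_l$ coming from the first-order approximation built into the CG residual update (Eq. (\ref{eq_lem_cg_converege_1})), so that they land in the coefficients exactly as stated. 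The restriction to the CG solver and to $k>0$ enters precisely here, since the relation between $\mathbf{G}_{k+1}$ and the model gradient is furnished by the residual recursion of Algorithm \ref{alg_non_linear_tCG}.
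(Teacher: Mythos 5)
Your overall scaffolding --- upper-bound $\|\mathbf{G}_{k+1}\|_{\mathbf{x}_{k+1}}$ by a quadratic in $\|\bm\eta_k\|_{\mathbf{x}_k}$, invoke the stopping criterion (\ref{eq_stepsize_stop_additional}), split on which branch of $\min(1,\|\bm\eta_k\|_{\mathbf{x}_k})$ is active, and finish with $\delta_g\le\delta_H\le\kappa_\theta\epsilon_g$ and $\kappa_\theta\le 1/6$ --- is exactly the paper's structure. The gap is in the bridge you use to connect $\mathbf{G}_{k+1}$ to the model gradient. You ground it in the CG residual recursion and the first-order Taylor expansion of Eqs.~(\ref{eq_lem_cg_converege_1})--(\ref{eq_lem_cg_converege_2}), i.e.\ $\mathbf{G}_{k+1}\approx\mathbf{G}_k+\mathbf{H}_k[\bm\eta_k]$. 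That relation is a heuristic the paper itself only states with ``$\approx$'' and never quantifies: the recursion involves Hessians $\mathbf{H}_k^{i-1}$ evaluated at the \emph{inner} iterates $\mathbf{x}_k^{i-1}$ rather than at $\mathbf{x}_k$, and, more importantly, $\mathbf{G}_{k+1}$ in the lemma is the freshly subsampled gradient at the new outer iterate $\mathbf{x}_{k+1}=R_{\mathbf{x}_k}(\bm\eta_k)$, which is a different random object from the terminal CG residual. A lemma that feeds into the iteration-complexity proof of Theorem \ref{theorem3} cannot rest on this unquantified approximation. The paper's proof is solver-agnostic and avoids it entirely: it lower-bounds $\|\nabla\hat m_k(\bm\eta_k)\|_{\mathbf{x}_k}$ by $\left\|\mathcal{P}_{\bm\eta_k}^{-1}{\rm grad}\hat f_k(\bm\eta_k)\right\|_{\mathbf{x}_k}-\delta_g-\delta_H\|\bm\eta_k\|_{\mathbf{x}_k}-\left(\tfrac{L_H}{2}+\sigma_b\right)\|\bm\eta_k\|_{\mathbf{x}_k}^2$ using Condition \ref{cond_approximate_grad_hess_bound}, Eq.~(\ref{eq_restricted_lipschitz_hessian_2}) and Lemma \ref{lem_sigma_bound}, then passes to $\mathbf{G}_{k+1}$ via norm preservation of the parallel transport and $\left\|{\rm grad}\hat f_k(\bm\eta_k)\right\|\ge\|\mathbf{G}_{k+1}\|_{\mathbf{x}_{k+1}}-\delta_g$, all of which are exact consequences of the stated assumptions.

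The second concrete problem is your elimination of $\|\mathbf{G}_k\|_{\mathbf{x}_k}$ and your attribution of $L_l$. In the paper, $L_l$ is the gradient-Lipschitz constant from Lemma 3.8 of \cite{kasai2018riemannian}, entering through the \emph{linear} bound $\left\|\mathcal{P}_{\bm\eta_k}^{-1}{\rm grad}\hat f_k(\bm\eta_k)-{\rm grad}f(\mathbf{x}_k)\right\|_{\mathbf{x}_k}\le L_l\|\bm\eta_k\|_{\mathbf{x}_k}$, which is used to control the factor $\|\mathbf{G}_k\|_{\mathbf{x}_k}$ carried by the stopping criterion: $\|\mathbf{G}_k\|_{\mathbf{x}_k}\le\delta_g+L_l\|\bm\eta_k\|_{\mathbf{x}_k}+\left\|\mathcal{P}_{\bm\eta_k}^{-1}{\rm grad}\hat f_k(\bm\eta_k)\right\|_{\mathbf{x}_k}$. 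You instead attribute $L_l$ to a quadratic Taylor remainder of the CG residual update (no such bound exists in the paper; the quadratic remainder there is governed by $L_H$), and you propose eliminating $\|\mathbf{G}_k\|_{\mathbf{x}_k}$ via $\mathbf{G}_k\approx\mathbf{G}_{k+1}-\mathbf{H}_k[\bm\eta_k]$, which would bring in $K_H$ from Assumption \ref{assu_hessian_norm_bound} rather than $L_l$. With either substitution your bookkeeping cannot reproduce the stated constants $\kappa_s=\min\left(1/\sqrt{L_H+2\sigma_b+\frac{\epsilon_g}{3}+\frac{L_l}{3}},\,1/\sqrt{\frac{5L_H}{3}+\frac{10\sigma_b}{3}+\frac{11L_l}{9}}\right)$; in particular the $\frac{\epsilon_g}{3}$ in the first constant does not arise from re-expressing $\|\mathbf{G}_k\|_{\mathbf{x}_k}$, as you suggest, but from $\delta_H\|\bm\eta_k\|_{\mathbf{x}_k}\le\kappa_\theta\epsilon_g\|\bm\eta_k\|_{\mathbf{x}_k}\le\frac{\epsilon_g}{6}\|\bm\eta_k\|_{\mathbf{x}_k}^2$ in the case $\|\bm\eta_k\|_{\mathbf{x}_k}\ge1$. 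To repair the proof, replace the residual-recursion bridge with the exact chain through ${\rm grad}\hat f_k(\bm\eta_k)$ and import $L_l$ via the gradient-Lipschitz bound; the rest of your argument then goes through as planned.
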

\begin{proof}
	By differentiating the approximate model, we have 
	\begin{align}
		\nonumber
		&\|\nabla\hat{m}_k(\bm{\eta}_{k})\|_{\bm{x}_k}=\bigg|\bigg|\mathbf{G}_k+\mathbf{H}_k[\bm{\eta}_{k}]+\sigma_k\|\bm{\eta}_{k}\|_{\bm{x}_k}\bm{\eta}_{k}\bigg|\bigg|_{\bm{x}_k} \\
		\nonumber
		=&\left\|\mathcal{P}_{\bm{\eta}_{k}}^{-1}\text{grad}\hat{f}_k(\bm{\eta}_{k})+\mathbf{G}_k-\text{grad}f(\bm{x}_k)+\mathbf{H}_k[\bm{\eta}_{k}]-\nabla^2\hat{f}_k(\bm{0}_{\bm{x}_k})[\bm{\eta}_{k}]\right. \\
		\nonumber
		&\left.+\text{grad}f(\bm{x}_k)+\nabla^2\hat{f}_k(\bm{0}_{\bm{x}_k})[\bm{\eta}_{k}]-\mathcal{P}_{\bm{\eta}_{k}}^{-1}\text{grad}\hat{f}_k(\bm{\eta}_{k})+\sigma_k\|\bm{\eta}_{k}\|_{\bm{x}_k}\bm{\eta}_{k}\right\|_{\bm{x}_k} \\
		\nonumber
		\ge&\left\|\mathcal{P}_{\bm{\eta}_{k}}^{-1}\text{grad}\hat{f}_k(\bm{\eta}_{k})\right\|_{\bm{x}_k}- \left\| \mathbf{G}_k-\text{grad}f(\bm{x}_k)\right\|_{\bm{x}_k}- \left\|\mathbf{H}_k[\bm{\eta}_{k}]-\nabla^2\hat{f}_k(\bm{0}_{\bm{x}_k})[\bm{\eta}_{k}]\right\|_{\bm{x}_k}\\
		\nonumber
		&- \left\|\mathcal{P}_{\bm{\eta}_{k}}^{-1}\text{grad}\hat{f}_k(\bm{\eta}_{k})-\text{grad}f(\bm{x}_k)-\nabla^2\hat{f}_k(\bm{0}_{\bm{x}_k})[\bm{\eta}_{k}] \right\|_{\bm{x}_k} - \sigma_k\|\bm{\eta}_{k}\|_{\bm{x}_k}^2\\
		\label{eq_lemma_proof_step_lower_bound_1}
		\ge&\norm{\mathcal{P}_{\bm{\eta}_{k}}^{-1}\text{grad}\hat{f}_k(\bm{\eta}_{k})}_{\bm{x}_{k}}-\delta_g-\delta_H\|\bm{\eta}_{k}\|_{\bm{x}_k}-\frac{L_H}{2}\|\bm{\eta}_{k}\|_{\bm{x}_k}^2-\sigma_b\|\bm{\eta}_{k}\|_{\bm{x}_k}^2,
	\end{align}
	where the first inequality follows from the triangle inequality and the second inequality from 
	Eqs. (\ref{eq_restricted_lipschitz_hessian_2}),  (\ref{eq_approximate_grad_hess_bound1}) and (\ref{eq_approximate_grad_hess_bound2}). 
	Additionally, from Lemma 3.8 in \cite{kasai2018riemannian}, we have
	\begin{equation}
		\begin{split}
			\norm{\text{grad}f_k(\bm{x}_k)}_{\bm{x}_{k}}&-\norm{\mathcal{P}_{\bm{\eta}_{k}}^{-1}\text{grad}\hat{f}_k(\bm{\eta}_{k})}_{\bm{x}_{k}}\\
			&\le\norm{\mathcal{P}_{\bm{\eta}_{k}}^{-1}\text{grad}\hat{f}_k(\bm{\eta}_{k})-\text{grad}f(\bm{x}_k)}_{\bm{x}_k}\le L_l\norm{\bm{\eta}_{k}}_{\bm{x}_{k}},
		\end{split}
		\label{eq_lemma_proof_step_lower_bound_taylor}
	\end{equation}
	where $L_l>0$ is a constant. Then, we have
	\begin{align}
		\nonumber
		\|\mathbf{G}_k\|_{\bm{x}_k}\le &\; \|\mathbf{G}_k-\text{grad}f(\bm{x}_k)\|_{\bm{x}_k}+\|\text{grad}f(\bm{x}_k)\|_{\bm{x}_k} \\ 
		\le &\;\delta_g+L_l\|\bm{\eta}_{k}\|_{\bm{x}_k}+\norm{\mathcal{P}_{\bm{\eta}_{k}}^{-1}\text{grad}\hat{f}_k(\bm{\eta}_{k})}_{\bm{x}_{k}},
		\label{eq_lemma_proof_step_lower_bound_2}
	\end{align}
	with the first inequality following from the triangle inequality and the second from Eqs. (\ref{eq_approximate_grad_hess_bound1}),  (\ref{eq_approximate_grad_hess_bound2}) and  (\ref{eq_lemma_proof_step_lower_bound_taylor}). Then, by combining Eqs. (\ref{eq_stepsize_stop_additional}),  (\ref{eq_lemma_proof_step_lower_bound_1}) and (\ref{eq_lemma_proof_step_lower_bound_2}), with $\theta_k:=\kappa_\theta\min(1,\left\|\bm\eta_k^i\right\|_{\mathbf{x}_k})$ we obtain
	\begin{align}
		&\norm{\mathcal{P}_{\bm{\eta}_{k}}^{-1}\text{grad}\hat{f}_k(\bm{\eta}_{k})}_{\bm{x}_{k}}-\delta_g-\delta_H\|\bm{\eta}_{k}\|_{\bm{x}_k}-\left(\frac{L_H}{2}+\sigma_b\right)\|\bm{\eta}_{k}\|_{\bm{x}_k}^2 \\
		\nonumber
		\le&\; \|\nabla\hat{m}_k(\bm{\eta}_{k})\|_{\bm{x}_k}\le\theta_k\|\mathbf{G}_k\|_{\bm{x}_k} \\ 
		\nonumber
		\le&\; \theta_k\left(\delta_g+L_l\|\bm{\eta}_{k}\|_{\bm{x}_k}+\norm{\mathcal{P}_{\bm{\eta}_{k}}^{-1}\text{grad}\hat{f}_k(\bm{\eta}_{k})}_{\bm{x}_{k}}\right).
	\end{align}
	This results in
	\begin{align}
		\label{ieq}
		&\left(\frac{L_H}{2}+\sigma_b\right)\|\bm{\eta}_{k}\|_{\bm{x}_k}^2+\left(\delta_H+\theta_k L_l\right)\|\bm{\eta}_{k}\|_{\bm{x}_k}+\left(1+\theta_k\right)\delta_g \\
		\nonumber
		\ge&\;\left(1-\theta_k\right)\norm{\text{grad}\hat{f}_k(\bm{\eta}_{k})}_{\bm{x}_{k+1}}\ge\left(1-\theta_k\right)\left(\|\mathbf{G}_{k+1}\|_{\bm{x}_{k+1}}-\delta_g\right).
	\end{align}
	Subsequently, it has
	\begin{equation}
		\label{eq_lemma_proof_step_lower_bound_3}
		\left(\frac{L_H}{2}+\sigma_b\right)\|\bm{\eta}_{k}\|_{\bm{x}_k}^2+\left(\delta_H+\theta_k L_l\right)\|\bm{\eta}_{k}\|_{\bm{x}_k}+2\delta_g \ge\left(1-\theta_k\right)\left(\|\mathbf{G}_{k+1}\|_{\bm{x}_{k+1}}\right).
	\end{equation}
	In the above derivation, we use the property of the parallel transport that preserves the length of the transported vector. 
	The last inequality in Eq. (\ref{ieq}) is based on 
	Eqs. (\ref{eq_approximate_grad_hess_bound1}) and  the triangle inequality.
	
	Now, we consider the following two cases.
	(i) If $\|\bm{\eta}_{k}\|_{\bm{x}_k}\ge 1$, from Eq. (\ref{eq_stepsize_stop_additional}) we have $\theta_k=\kappa_\theta$, and therefore 
	\begin{equation}
		\left(1-\kappa_\theta\right)\left(\|\mathbf{G}_{k+1}\|_{\bm{x}_{k+1}}\right)-2\delta_g\le\left(\frac{L_H}{2}+\sigma_b+\delta_H+\kappa_\theta L_l\right)\|\bm{\eta}_{k}\|_{\bm{x}_k}^2. 
	\end{equation}
	This then gives
	\begin{equation}
		\|\bm{\eta}_{k}\|_{\bm{x}_k}^2\ge\frac{\left(1-\kappa_\theta\right)\left(\|\mathbf{G}_{k+1}\|_{\bm{x}_{k+1}}\right)-2\delta_g}{\frac{L_H}{2}+\sigma_b+\delta_H+\kappa_\theta L_l}\ge\frac{\frac{1}{2}\epsilon_g}{\frac{L_H}{2}+\sigma_b+\frac{\epsilon_g}{6}+\frac{L_l}{6}},
		\label{eq_lemma_proof_step_lower_bound_4}
	\end{equation}
	where the last inequality holds because $\delta_g\le\delta_H\le\kappa_{\theta}\epsilon_g$ and $\kappa_{\theta}<\frac{1}{6}$.
	(ii) If $\|\bm{\eta}_{k}\|_{\bm{x}_k}< 1$, then $\theta_k=\kappa_\theta\|\bm{\eta}_{k}\|_{\bm{x}_k}<\kappa_\theta$. Given Eq. (\ref{eq_lemma_proof_step_lower_bound_2}) and $\delta_g\le\delta_H\le\kappa_{\theta}\epsilon_g$, it has
	\begin{equation}
		\delta_H\le\kappa_{\theta}\epsilon_g\le\kappa_\theta\|\mathbf{G}_k\|_{\bm{x}_k}\le\kappa_\theta\left(\delta_H+L_l\|\bm{\eta}_{k}\|_{\bm{x}_k}+\|\mathbf{G}_{k+1}\|_{\bm{x}_{k+1}}\right).
		\label{eq_lemma_proof_step_lower_bound_5}
	\end{equation}
	Then, we have
	\begin{align}
		&\left(1-\theta_k\right)\left(\|\mathbf{G}_{k+1}\|_{\bm{x}_{k+1}}\right)-2\delta_g \\
		\nonumber
		\le & \left(\frac{L_H}{2}+\sigma_b\right)\|\bm{\eta}_{k}\|_{\bm{x}_k}^2+\left(\delta_H+\theta_k L_l\right)\|\bm{\eta}_{k}\|_{\bm{x}_k} \\
		\nonumber
		\le&\left(\frac{L_H}{2}+\sigma_b\right)\|\bm{\eta}_{k}\|_{\bm{x}_k}^2+\left(\frac{\kappa_\theta }{1-\kappa_\theta}+\kappa_\theta\right)L_l\|\bm{\eta}_{k}\|_{\bm{x}_k}^2+\frac{\kappa_\theta \|\mathbf{G}_{k+1}\|_{\bm{x}_{k+1}}}{1-\kappa_\theta},  
	\end{align}
	which results in
	\begin{equation}
		\label{eq_lemma_proof_step_lower_bound_6}
		\|\bm{\eta}_{k}\|_{\bm{x}_k}^2\ge\frac{	\left(1-\kappa_\theta-\frac{\kappa_\theta}{1-\kappa_\theta}\right)\|\mathbf{G}_{k+1}\|_{\bm{x}_{k+1}}-2\delta_g}{\frac{L_H}{2}+\sigma_b+\left(\frac{\kappa_\theta }{1-\kappa_\theta}+\kappa_\theta\right)L_l}\ge\frac{\frac{3}{10}\epsilon_g}{\frac{L_H}{2}+\sigma_b+\frac{11}{30}L_l}.
	\end{equation}
	This completes the proof.
\end{proof}

\subsection{Main Proof of Theorem  \ref{theorem3}}
\label{app_theorem3B}

Let $\sigma_b = \max\left(\sigma_0,2\gamma L_H\right)$, $\sigma_{min}=\min \left(\sigma_k\right)$ for $k\ge0$ and $\mathcal{S}_{succ}^1$ be the set of successful iterations such that $\|\mathbf{G}_{k+1}\|_{\bm{x}_{k+1}}\ge\epsilon_g$ for $k\in\mathcal{S}_{succ}^1$. As $\hat{f}_k(\bm{\eta}_k)$ is monotonically decreasing, we have
\begin{align}
	\hat{f}_0(\bm{0}_{\bm{x}_0})-\hat{f}_{min}&\ge\sum_{k=0}^{\infty}\left(\hat{f}_k(\bm{0}_{\bm{x}_k})-\hat{f}_k(\bm{\eta}_k)\right)\ge\sum_{k\in\mathcal{S}_{succ}^1}\left(\hat{f}_k(\bm{0}_{\bm{x}_k})-\hat{f}_k(\bm{\eta}_k)\right) \\
	\nonumber
	&\ge\tau\sum_{k\in\mathcal{S}_{succ}^1}\left(\hat{m}_k(\bm{0}_{\bm{x}_k})-\hat{m}_k(\bm{\eta}_k)\right)  \ge\tau|\mathcal{S}_{succ}^1|\min\left(\frac{\nu^3\epsilon_H^3}{6\sigma_k^2},\frac{\sigma_{min}}{6}\|\bm{\eta}_k\|_{\bm{x}_k}^3\right) \\
	\nonumber
	&\ge\tau|\mathcal{S}_{succ}^1|\min\left(\frac{\nu^3\epsilon_H^3}{6\sigma_b^2},\frac{\epsilon_\sigma\kappa_s^3\epsilon_g^{\frac{3}{2}}}{6}\right)  \ge\tau\kappa_1|\mathcal{S}_{succ}^1|\min\left(\epsilon_g^{\frac{3}{2}},\epsilon_H^3\right),
	\label{eq_succ_iteration_bound_optimal}
\end{align}
where $\kappa_1=\frac{1}{6} \min\left(\frac{\nu^3}{\sigma_b^2},\epsilon_\sigma\kappa_s^3\right)$. The fourth inequality follows from Eqs. (\ref{eq_eigen_point}) and (\ref{eq_sufficient_function_decrease}), while the fifth from Eq. (\ref{eq_lemma_step_lower_bound}). 

Let $\mathcal{S}_{succ}^2$ be the set of successful iterations such that $\|\mathbf{G}_{k+1}\|_{\bm{x}_{k+1}}<\epsilon_g$ and $\lambda_{min}(\mathbf{H}_{k+1})<-\epsilon_H$ for $k\in\mathcal{S}_{succ}^2$. Then there is an iteration $t\in\mathcal{S}_{succ}^2$ in which $\|\mathbf{G}_{t}\|_{\bm{x}_{t}}\ge\epsilon_g$ and $\|\mathbf{G}_{t+1}\|_{\bm{x}_{t+1}}<\epsilon_g$. Thus, we have
\begin{equation}
	\hat{f}_0(\bm{0}_{\bm{x}_0})-\hat{f}_{min}\ge\sum_{k=t}^{\infty}\left(\hat{f}_k(\bm{0}_{\bm{x}_k})-\hat{f}_k(\bm{\eta}_k)\right) \ge\hat{f}_0(\bm{0}_{\bm{x}_0})-\hat{f}_t(\bm{\eta}_t)+\sum_{k\in\mathcal{S}_{succ}^2}\left(\hat{f}_k(\bm{0}_{\bm{x}_k})-\hat{f}_k(\bm{\eta}_k)\right),
\end{equation}
and this results in 
\begin{align}
	\nonumber
	\hat{f}_t(\bm{\eta}_t)-\hat{f}_{min}&\ge\sum_{k\in\mathcal{S}_{succ}^2}\left(\hat{f}_k(\bm{0}_{\bm{x}_k})-\hat{f}_k(\bm{\eta}_k)\right) \ge\tau\sum_{k\in\mathcal{S}_{succ}^2}\left(\hat{m}_k(\bm{0}_{\bm{x}_k})-\hat{m}_k(\bm{\eta}_k)\right) \\
	&\ge\tau|\mathcal{S}_{succ}^2|\frac{\nu^3\epsilon_H^3}{6\sigma_k^2} \ge\tau|\mathcal{S}_{succ}^2|\frac{\nu^3\epsilon_H^3}{6\sigma_b^2}\ge\tau\kappa_2|\mathcal{S}_{succ}^2|\epsilon_H^3,
	\label{eq_succ_iteration_bound_optimal_2}
\end{align}
where $\kappa_2=\frac{\nu^3}{6\sigma_b^2}$. The second and third inequalities follow from Eq. (\ref{eq_sigma_bound}) and Eq. (\ref{eq_eigen_point}), respectively. Then, the bound for the total number of successful iterations is
\begin{align}
	\nonumber
	|\mathcal{S}_{succ}|=&\; |\mathcal{S}_{succ}^1| + |\mathcal{S}_{succ}^2| + 1\\
	\nonumber
	\le&\; \frac{\hat{f}_0(\bm{0}_{\bm{x}_0})-\hat{f}_{min}}{\tau\kappa_1}\max \left(\epsilon_g^{-\frac{3}{2}},\epsilon_H^{-3}\right)+\frac{\hat{f}_t(\bm{\eta}_t)-\hat{f}_{min}}{\tau\kappa_2}\epsilon_H^{-3} + 1 \\
	\le&\; \left(\frac{\hat{f}_0(\bm{0}_{\bm{x}_0})-\hat{f}_{min}}{\tau\kappa_1}+\frac{\hat{f}_t(\bm{\eta}_t)-\hat{f}_{min}}{\tau\kappa_2}\right)\max \left(\epsilon_g^{-\frac{3}{2}},\epsilon_H^{-3}\right)+1,
	\label{eq_succ_total_iteration_bound_optimal}
\end{align}
where the extra iteration corresponds to the final successful iteration of Algorithm \ref{alg_inexact_rtr_arc} with $\lambda_{min}(\mathbf{H}_{k+1})\ge-\epsilon_H$. Then, similar to Eq. (\ref{eq_total_iteration_bound}), we have the improved iteration bound for Algorithm \ref{alg_inexact_rtr_arc} given as
\begin{equation}
	T=|\mathcal{S}_{fail}|+|\mathcal{S}_{succ}|\le\frac{\log(\sigma_b/\sigma_0)}{\log{\gamma}}+2|\mathcal{S}_{succ}| \le\frac{\log(\sigma_b/\sigma_0)}{\log{\gamma}}+2C\max\left(\epsilon_g^{-\frac{3}{2}},\epsilon_H^{-3}\right) + 2,
	\label{eq_total_iteration_bound_optimal}
\end{equation}
where $C=\frac{\hat{f}_0(\bm{0}_{\bm{x}_0})-\hat{f}_{min}}{\tau\kappa_1} + \frac{\hat{f}_t(\bm{\eta}_t)-\hat{f}_{min}} {\tau\kappa_2}$.
This completes the proof.
\qed

\subsection{Main Proof of Corollary  \ref{coro_2}}
\label{app_theorem3C}

Although this follows exactly the same way as to prove Corollary  \ref{coro_1}, we repeat it here for the convenience of readers.

\begin{proof}  
	Under the given assumptions, when Theorem \ref{theorem3} holds, Algorithm \ref{alg_inexact_rtr_arc} returns an $\left( \epsilon_g,\epsilon_H\right)$-optimal solution in $ T=\mathcal{O}\left(\max\left(\epsilon_g^{-\frac{3}{2}},\epsilon_H^{-3}\right)\right)$ iterations. 
	Also, according  to Theorem  \ref{theorem1}, at an iteration, Condition \ref{cond_approximate_grad_hess_bound} is satisfied  with a probability $(1-\delta)$, where the probability $(1-\delta)$ at the current iteration can be independently achieved by selecting proper subsampling sizes for the approximate gradient and Hessian.
	Let  $E$ be the event that Algorithm \ref{alg_inexact_rtr_arc} returns an $\left( \epsilon_g,\epsilon_H\right)$-optimal solution and $E_i$ be the event that Condition \ref{cond_approximate_grad_hess_bound} is satisfied at iteration $i$. 
	According on Theorem  \ref{theorem3}, when event $E$ happens, it requires Condition \ref{cond_approximate_grad_hess_bound} to be satisfied for all the iterations, thus we have
	\begin{equation}
		{\rm Pr}(E) = \prod_{i=1}^{T}{\rm Pr}(E_i) =(1-\delta)^{T}=(1-\delta)^{\mathcal{O}\left(\max\left(\epsilon_g^{-\frac{3}{2}},\epsilon_H^{-3}\right)\right)}.
	\end{equation}
	This completes the proof.
\end{proof}

\end{document}